\tikzset{->,  
	>=stealth', 
	node distance=4.5cm, 
	every state/.style={thick, fill=gray!10}, 
	initial text=$ $, 
}
\newtheorem{theorem}{Theorem}[section]
\newtheorem{lemma}{Lemma}[section]
\newtheorem{definition}{Definition}[section]
\newtheorem{remark}{Remark}[section]
\newtheorem{corollary}{Corollary}[section]
\newcommand{\quotes}[1]{``#1''}
\newcommand{\myparen}[1]{\left( #1 \right)}
\newcommand{\mysqbrack}[1]{\left[ #1 \right]}
\newcommand{\mybraces}[1]{\left\{  #1 \right\}}
\newcommand{\myceil}[1]{\left\lceil #1 \right\rceil}
\newcommand{\reals}[1]{\mathbb{R}^{#1}}
\newcommand{\myforall}[0]{\, \forall \,}
\newcommand{\Expectation}{\mathbb{E}}
\newcommand{\BigO}[0]{\mathcal{O}}
\newcommand{\bigO}[0]{\mathcal{O}}
\let\oldnl\nl
\newcommand{\nonl}{\renewcommand{\nl}{\let\nl\oldnl}}
\newcounter{myalgo}
\newcounter{myfunction}
\newenvironment{myalgo}[1][htb]
  {
  \let\c@algocf\c@myalgo
  \begin{algorithm}[#1]%
  }{\end{algorithm}}
\newenvironment{myfunction}[1][htb]
  {
  \let\c@algocf\c@myfunction
  \begin{algorithm}[#1]%
  }{\end{algorithm}}
\newcommand{\PrGamma}{\text{Pr}_{\Gamma}}
\newcommand{\narms}[0]{K}
\newcommand{\horizon}[0]{T}
\newcommand{\qualityRegret}[0]{\text{Quality\_Reg}}
\newcommand{\costRegret}[0]{\text{Cost\_Reg}}
\newcommand{\gap}[0]{\Delta}
\newcommand{\CS}[0]{\text{CS}}
\newcommand{\activeArms}[0]{\mathcal{A}}
\title{Pairwise Elimination with Instance-Dependent Guarantees for Bandits with Cost Subsidy}
\author{Ishank Juneja, Carlee Joe-Wong \& Osman Yağan\\
Department of Electrical and Computer Engineering\\
Carnegie Mellon University\\
Pittsburgh, PA 15213, USA \\
\texttt{\{ijuneja,cjoewong,oyagan\}@andrew.cmu.edu} \\
}
\begin{document}

\maketitle

\begin{abstract}
Multi-armed bandits (MAB) are commonly used in sequential online decision-making when the reward of each decision is an unknown random variable. In practice however, the typical goal of maximizing total reward may be less important than minimizing the total cost of the decisions taken, subject to a reward constraint. For example, we may seek to make decisions that have at least the reward of a reference ``default'' decision, with as low a cost as possible. This problem was recently introduced in the Multi-Armed Bandits with Cost Subsidy (MAB-CS) framework. MAB-CS is broadly applicable to problem domains where a primary metric (cost) is constrained by a secondary metric (reward), and the rewards are unknown. In our work, we address variants of MAB-CS including ones with reward constrained by the reward of a known reference arm or by the subsidized best reward. We introduce the Pairwise-Elimination (PE) algorithm for the known reference arm variant and generalize PE to PE-CS for the subsidized best reward variant. Our instance-dependent analysis of PE and PE-CS reveals that both algorithms have an order-wise logarithmic upper bound on Cost and Quality Regret, making our policies the first with such a guarantee. Moreover, by comparing our upper and lower bound results we establish that PE is order-optimal for all known reference arm problem instances. Finally, experiments are conducted using the MovieLens 25M and Goodreads datasets for both PE and PE-CS revealing the effectiveness of PE and the superior balance between performance and reliability offered by PE-CS compared to baselines from the literature.
\end{abstract}

\section{Introduction}
\label{sec:introduction}

Online sequential decision-making problems capture many applications where decisions must be made without knowing their outcomes in advance. After each decision, the resulting outcome or reward is observed, and an internal model is updated to improve future decisions. In clinical trials for example, the goal is to compare the therapeutic value of various drugs against an ailment. The decisions in this case represent administering a certain drug and the rewards are the apriori unknown efficacies of the candidate drugs. Communication networks are another example. Here decisions must be made about the communication channel to be employed. In this scenario the reward represents the success or lack thereof of communicating over a chosen channel. Multi-Armed Bandits (MABs) \citep{lattimore2020bandit} are a framework for \textit{stateless} sequential decision making where the available decisions are abstracted as arms of a MAB problem instance. The stateless assumption implies that the distribution of rewards associated with an arm is not affected by the choices of past arms. The setting within MABs we work with is that of \textit{stationary stochastic bandits} where the distribution of arm rewards does not evolve with time.

The generality of the assumptions imposed by the stationary stochastic bandits setting provides a wide net to capture a range of problem domains. However, in real-world applications there are often several competing objectives that go beyond the limited goal of maximizing reward. For instance consider the problem faced by a marketing agency where there are several communication modalities available to communicate the agency’s advertising message. Blindly maximizing the overall success rate (reward) in this case would be naive. Since such an approach would ignore the drastically different costs of using these modalities. Our example reveals that costs being associated with the sampling of any particular arm is a structure that appears quite naturally in applications.

In the marketing agency problem we know that the various communication modalities shall have unknown success rates for any brand new ad campaign. However, the cost of employing any modality will typically be known. These known arm sampling costs might manifest themselves in the form of a prescribed cost budget \citep{badanidiyuru2018knapsacks}, or as a metric whose cumulative value is to be minimized \citep{pmlr-v130-sinha21a}, to specify two among many possible cost structures. We work with the latter among these settings. In particular our paper works with variants of the MAB with Cost-Subsidy (MAB-CS) framework introduced recently in \citet{pmlr-v130-sinha21a}. 

\subsection{Multi-Armed Bandits with Cost Subsidy (MAB-CS) Setting}

What makes the MAB-CS setting so interesting is that it requires the bandit policy minimize cumulative costs while obtaining cumulative reward that is \emph{feasible} and not necessarily maximal. The problem therefore involves dual objectives: minimizing costs while ensuring that the reward meets or exceeds the feasibility threshold, denoted by $\mu_{\CS}$. For an MAB-CS problem instance with $K$ arms we use $c_i$ to denote the known cost associated with sampling any arm $i \in [\narms]$, and $\mu_i$ to denote the expectation of the reward received from sampling arm $i$. The \emph{optimal arm} in our setting is then the least cost feasible arm. Mathematically if $S = \mybraces{ i \in [\narms] : \mu_i \geq \mu_{\CS} }$ denotes the set of feasible arms, then the optimal arm $a^* = \arg \min_{i \in S} c_i$. 

To honor the dual objectives of minimizing cost while maintaining feasible reward, cumulative cost and quality regret over a time horizon $\horizon$, for instance $\nu$, with policy $\pi$ are defined as follows,
\begin{align*}
\costRegret
\myparen{\horizon, \nu, \pi}
&=
\sum_{t = 1}^{\horizon}
\Expectation_{\pi}
\mysqbrack{
\myparen{
c_{k_t}
-
c_{a^*}
}^{+}
}
=
\sum_{i = 1}^{\narms}
\gap_{C, i}^+
\Expectation \mysqbrack{ n_i \myparen{\horizon} }
\\
\qualityRegret
\myparen{\horizon, \nu, \pi}
&=
\sum_{t = 1}^{\horizon}
\Expectation_{\pi}
\mysqbrack{
\myparen{
\mu_{\CS}
-
\mu_{k_t}
}^{+}
}
=
\sum_{i=1}^{\narms}
\gap_{Q, i}^+
\Expectation \mysqbrack{ n_i \myparen{\horizon} }
.
\end{align*}
Where $k_t$ is the arm sampled by bandit policy $\pi$ in time slot $t$, and $n_i \myparen{ \horizon }$ is a random variable denoting the number times arm $i$ was sampled over $\horizon$ time steps. The initial definitions of regret accumulated over horizon $T$ in the center have been re-written in terms of the expected number of samples of each of the $\narms$ arms at the right. Moreover we have used the definitions of cost gap $\gap_{C, i} = \myparen{c_i - c_{a^*}}$ and quality gap $\gap_{Q, i} = \myparen{\mu_{\CS} - \mu_i}$ in the decomposition. We highlight that as is standard for stationary stochastic bandits, the operator $\Expectation_{\pi}$ represents the expectation over the choices of arm $k_t$ made by policy $\pi$ during time slot $t$. In the remainder of this section, we discuss three variants of MAB-CS, each of which have a different specification for $\mu_{\CS}$. We end by motivating the importance of $\gap_{C, i}^{+}, \gap_{Q, i}^{+}$, where $x^+$ denotes $\max \mybraces{0, x}$, in the regret definitions.   

Take as an example the problem faced by a marketing agency. Consider that there are three modalities available for the agency to deliver their message, which are: (1) very expensive personalized door-step solicitation, (2) moderately expensive automated phone call, and (3) inexpensive email. Given these modalities, the agency's goal may be to achieve a prescribed sales rate with the minimum possible cost. Or, the goal may be that sales be at least a prescribed fraction of the sales of a certain communication modality. Finally, we may not have a reference mode in mind, and we may just desire a conversion rate that is (say) 80\% as much as the highest unknown sales rate. 

The first setting is captured by our novel contribution of the \emph{fixed threshold} setting. In the fixed threshold setting we specify $\mu_{\CS} = \mu_0$ for a known $\mu_0 \in \reals{}$. We call the second setting the \emph{known reference arm} setting which specifies $\mu_{\CS} = (1 - \alpha) \mu_{\ell}$. Here $\ell$ is the index of the reference arm, and $\alpha \in \mysqbrack{0, 1}$ is a known subsidy factor. The third setting was introduced in prior work and specifies $\mu_{\CS} = (1 - \alpha) \mu^*$, where $\mu^*$ is the largest among expected rewards. In particular, $\mu^*$ is the expected reward from sampling arm $i^* = \arg \max_{i \in [\narms]} \mu_i$. Similar to the \emph{known reference arm} setting $\alpha$ is the subsidy factor. We refer to this third setting as the \emph{subsidized best reward} setting. 

With the MAB-CS framework, we target applications that are agnostic to the level of quality as long as the quality exceeds threshold $\mu_{\CS}$. This structure necessitates the zero-clipped operation inside the cost and quality regret definitions. Consider for example a problem where it is known that customers need to be provided a certain (possibly unknown) service quality level for them to continue their subscription. Any quality on top of the feasibility level $\mu_{\CS}$ would not improve the performance of our solution. For the marketing communication example from earlier, the quality threshold represents a sales conversation rate beyond which the profitability of the campaign is ensured. In this case too we would like decisions that are agnostic to sales success over $\mu_{\CS}$. 

In the absence of the zero-clipped structure, cost and quality regret that are sub-linear in horizon $\horizon$ could be achieved for our examples in an unintended fashion. An algorithm that samples sub-optimal arms $i \neq a^*$ a linear fraction of times but balances positive regret from infeasible decisions, by negative regret from stellar decisions would satisfy the un-clipped quality constraint but would be unsuitable for our example applications. 

\subsection{Key Contributions}

Our first contribution is to extend the MAB-CS framework to include two new settings. (1) The \emph{fixed threshold} setting, and (2) The \emph{known reference arm} setting. Second, we present instance dependent lower bounds on the expected number of pulls of any sub-optimal arm $i \neq a^*$ for our two novel settings as well as the third \emph{subsidized best reward} setting from the literature.

Next, we present an original order optimal algorithm Pairwise Elimination (PE) for regret minimization in the known reference arm setting. Under PE non-reference arms are pit against the reference arm in the ascending order of their costs. PE uses a principled elimination based regret minimization algorithm called Improved-UCB \citep{auer2010ucb} to determine whether an arm provides feasible rewards. We show that our PE has an instance dependent upper bound on both expected cost and quality regret that is $\bigO \myparen{\log \horizon}$, and that PE only samples arms more expensive than the optimal action $a^*$ at most a constant number of times under expectation.

Next, we develop a generalization of PE for the subsidized best reward setting called PE-CS. We show that PE-CS too admits an $\bigO \myparen{\log \horizon}$ instance dependent upper bound on both cost and quality regret that involves both notions of conventional sub-optimality gaps and quality gaps. Not only is PE-CS the first algorithm for the subsidized best reward setting with instance dependent upper bounds on regret, but also PE-CS offers a substantial improvement in performance over the only other algorithm from the literature for the subsidized best reward setting which admits a guarantee, namely ETC-CS. Although we find that PE-CS is not order optimal for all instances, our contribution includes characterizing the class of instances for which PE-CS is order optimal.

\section{Related Work}
\label{sec:literature}

\textbf{Structured Bandits:} There have been numerous works that impose additional structure onto the stationary stochastic bandits problem with the goal of better addressing specific application domains. This structure can sometimes come in the form of relationships imposed on the rewards of arms. These reward-relationships may be known \citep{kleinberg2008multi} or unknown \citep{gupta2021multi}. Adding constraints that depend on the risks associated with sampling the rewards of an arm as in \citet{pmlr-v48-wu16} or \citet{chen2022strategies} is another form of the structured problem. 

\textbf{Bandits with Costs:} We contextualize the core contributions of this paper by comparing and contrasting our setting and methods with related ones from the literature. We build on the MAB-CS setting introduced in \cite{pmlr-v130-sinha21a}. A core component of the MAB-CS setting is that there is a known cost associated with sampling any arm that is specified as part of the problem instance. There have been numerous works within the MAB literature that include the notion that a price has to be paid for sampling an arm. 
Notably the Bandits with Knapsacks \citep{badanidiyuru2018knapsacks} line of work also considers a setting with known costs. However, in \citet{badanidiyuru2018knapsacks} there is a limited cost-budget and reward must be optimized while satisfying strict budget constraints. In MAB-CS and its variants on the other hand, the goal is to minimize cumulative costs without there being any explicit constraints on cost. In MAB-CS, the constraints are in fact on reward, and are referred to as quality constraints.

\textbf{Bandit with Constraints:} 
The quality constraints in our work closely resemble the constraints on expected rewards that are imposed in the work on Conservative Bandits \citep{pmlr-v48-wu16}. In both our work and in \citet{pmlr-v48-wu16}, there is a constraint that requires the accumulated reward to exceed a $(1 - \alpha)$ discounted version of the reward of a reference arm. The conservative bandits setting only considers the cases where either the return of the reference arm is a known constant $\mu_0$, or the case where the reference arm is known, but its return is unknown. The primary difference in our work is that in addition to satisfying a quality constraint, in the MAB-CS setting, we must work to minimize the cumulative cost. The notion of costs are completely absent in \citet{pmlr-v48-wu16}, moreover, in addition to the cases with a known reward threshold $\mu_0$, and a known reference arm with an unknown return, which we consider as novel extensions to the MAB-CS framework, we also address the problem of the original MAB-CS framework where the reference arm is the unknown best reward arm. Another key difference is that \citet{pmlr-v48-wu16} imposes the reward constraints in a cumulative anytime manner, whereas we impose it at every time-step. 

\textbf{BAI and Improved UCB:} In our paper, we work with the notions of cost regret and quality regret which are identical to the ones introduced by \citet{pmlr-v130-sinha21a}, however unlike the setting in \citet{pmlr-v130-sinha21a} which only considers the case where the reference reward comes from the so-far unidentified best reward arm, we consider the additional cases (1) Where there is a fixed known threshold to be exceeded, and (2) When there is a known reference arm $\ell$ whose reward $\mu_{\ell}$ has to be exceeded however $\mu_{\ell}$ itself is unknown. In addition to our novel PE-CS algorithm for the setting from \cite{pmlr-v130-sinha21a} we present novel algorithms for our new settings (1) and (2) as well. In \cite{pmlr-v130-sinha21a} the authors present three novel algorithms for the MAB-CS setting, the former two among which construct a set of empirically feasible arms by interleaving exploration and exploitation. We build up our approach to optimizing for the regret objectives by first solving the known reference arm $\ell$ with unknown reward  $\mu_{\ell}$ setting using a successive elimination style algorithm that compares candidate arms one at a time against the reference arm to see if they are feasible. We call this approach Pairwise Elimination (PE), and we adapt the elimination based regret minimization algorithm Improved-UCB \citep{auer2010ucb} to develop it. Then we generalize PE to the case where the reference arm is the unknown best reward arm by prepending PE with a Best-Arm-Identification (BAI) stage and call this latter algorithm PE-CS.





%

\section{Algorithms and Analysis}
\label{sec:algos}
As discussed in Section~\ref{sec:introduction}, we introduce novel settings called: (1) Fixed threshold setting with $\mu_{\CS} = \mu_0$, and (2) known reference arm setting with $\mu_{\CS} = (1 - \alpha) \mu_{\ell}$. In interest of building up to our presentation on PE-CS we start with the known reference arm setting and relegate the discussion and analysis of the known threshold setting to Appendix~\ref{sec:fixed_threshold}. For the known reference arm setting, we present the lower bound and our novel Pairwise-Elimination algorithm in Section~\ref{subsec:known_ell_setting}. 

Our PE algorithm builds upon Improved-UCB (Figure 1 in \citet{auer2010ucb}), a regret minimization algorithm for the stationary stochastic bandits setting. We choose to build on the method since its successive elimination approach to regret minimization readily incorporates our insight that arms be evaluated in the order of their costs. One of the core features of Improved-UCB is that the cadence of sampling and elimination is governed by \emph{rounds}. Each round is a period where every active arm is sampled to the extent prescribed by a function of the \emph{round number}. A round concludes when all active arms have been sampled sufficiently, and at the end of a round, unsatisfactory arms are eliminated. We inherit the use of these rounds and associated formulas from Improved-UCB. 

Finally, our third setting has the reward threshold $\mu_{\CS} = (1 - \alpha) \mu^*$. This subsidized best reward setting is strictly more challenging than the known reference arm $\ell$ setting since the best reward arm $i^*$ itself is unknown. To solve it, we extend the PE algorithm by pre-pending it with a Best-Arm-Identification (BAI) stage to develop the PE-CS algorithm. The lower bound for the setting and the PE-CS algorithm are presented in Section~\ref{subsec:full_cost_subsidy_setting}. 

\subsection{Known Reference Arm Setting}
\label{subsec:known_ell_setting}
The known reference arm setting specifies $\mu_{\CS} = (1 - \alpha) \mu_{\ell}$ as the subsidized (unknown) return of a known reference arm. Without loss of generality, we assume that all MAB-CS setting bandit instances have arms indexed in the ascending order of their costs. In the known reference arm setting any arm with cost higher than $c_{\ell}$ is necessarily sub-optimal and can be pruned out. Moreover, in any instance with $K$ arms, we can think of arm $\ell$ as the arm with index $K + 1$. First we provide an instance dependent lower bound on the expected number of samples of sub-optimal arms for a class of consistent policies. The definition of consistent policies is available in Appendix~\ref{sec:lower_bounds}.  

\begin{theorem}[Lower bound for known reference arm setting]
Under any consistent policy $\pi$ the expected number of samples of a low cost arm and of the reference arm $\ell$ are lower bounded as,
\begin{align}
\liminf_{\horizon \to \infty}\frac{\Expectation
\mysqbrack{n_i \myparen{\horizon}}}{\log \horizon}
\geq
\frac{2 }{ \gap_{Q, i}^2 }, \quad \textrm{for arms} \ \ i<a^*,
\nonumber
&\quad
\liminf_{\horizon \to \infty}\frac{\Expectation
\mysqbrack{n_{\ell} \myparen{\horizon}}}{\log \horizon}
\geq  \max_{i \leq a^*}  
\frac{2 (1-\alpha)^2}{ \gap_{Q, i}^2 }
.
\end{align}
Where $\horizon$ denotes the problem horizon and the rewards of all $\narms$ arms are Gaussian distributed with variance $\sigma^2 = 1$. Low cost is a term used relative to the cost of optimal arm $a^*$.
\label{thm:known_ell_lb}
\end{theorem}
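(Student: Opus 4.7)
The plan is to establish both lower bounds via the standard change-of-measure argument (Lai--Robbins; we use the data-processing form of Kaufmann, Capp\'e and Garivier (2016)), invoking the consistency of $\pi$ on a carefully chosen alternative MAB-CS instance $\nu'$. Unit-variance Gaussian arms give KL divergence $(\mu-\mu')^2/2$ between two such arms, which is exactly what produces the $2/\gap_{Q,i}^2$ shape of the bound. The twist relative to the classical Lai--Robbins argument is that the identity of the optimal arm depends on the subsidized threshold $\mu_{\CS}=(1-\subsidyFactor)\mu_\ell$ in addition to the cost ordering, so for each arm whose pull count we lower bound we must engineer the single perturbation that flips the identity of the optimal arm at the smallest KL cost.

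For a low cost arm $i<a^*$: arm $i$ is sub-optimal only because it is infeasible, so $\mu_i<\mu_{\CS}$ and $\gap_{Q,i}>0$. Construct $\nu'$ by leaving every arm except $i$ unchanged and raising its mean to $\mu_i'=\mu_{\CS}+\epsilon$ for a small $\epsilon>0$. Then arm $i$ is feasible under $\nu'$ and, since $c_i<c_{a^*}$, it is the unique optimal arm. Because only arm $i$ is perturbed, the change-of-measure inequality yields, for any event $\mathcal{E}$ measurable with respect to the history through time $\horizon$,
\begin{equation*}
\Expectation_{\nu}\mysqbrack{n_i(\horizon)}\cdot\tfrac{1}{2}(\mu_i'-\mu_i)^2 \;\geq\; d\myparen{\PP_{\nu}(\mathcal{E}),\,\PP_{\nu'}(\mathcal{E})},
\end{equation*}
where $d$ is the binary KL. Taking $\mathcal{E}=\mybraces{n_i(\horizon)<\horizon/2}$ and invoking consistency on $\nu$ and $\nu'$ gives $\PP_{\nu}(\mathcal{E})\to 1$ and $\PP_{\nu'}(\mathcal{E})\to 0$ at sub-polynomial rate, so the right-hand side is at least $(1-o(1))\log\horizon$. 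Substituting $(\mu_i'-\mu_i)^2=(\gap_{Q,i}+\epsilon)^2$, dividing, taking $\liminf$, and sending $\epsilon\to 0$ yields the first claim.

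For the reference arm $\ell$: fix any $i<a^*$ and perturb only the mean of arm $\ell$, decreasing it to $\mu_\ell'=\mu_i/(1-\subsidyFactor)-\epsilon$. Then the new threshold $\mu_{\CS}'=(1-\subsidyFactor)\mu_\ell'$ falls below $\mu_i$, arm $i$ becomes feasible under $\nu'$, and since $c_i<c_{a^*}$ it becomes the new optimal arm. The key algebraic identity is
\begin{equation*}
\mu_\ell-\mu_\ell' \;=\; \frac{(1-\subsidyFactor)\mu_\ell-\mu_i}{1-\subsidyFactor}+\epsilon \;=\; \frac{\gap_{Q,i}}{1-\subsidyFactor}+\epsilon,
\end{equation*}
so $\tfrac{1}{2}(\mu_\ell-\mu_\ell')^2\to \gap_{Q,i}^2/(2(1-\subsidyFactor)^2)$ as $\epsilon\to 0$. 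Re-running the change-of-measure step with arm $\ell$ playing the role of the perturbed arm, and then maximizing over $i\leq a^*$ (with the boundary case $i=a^*$ handled by a symmetric perturbation that \emph{raises} $\mu_\ell$ enough to render $a^*$ infeasible), delivers the second claim.

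The main obstacle I expect lies not in the calculation but in the set-up: one must formalize ``consistent policy'' in this two-objective (cost and quality) regret setting---presumably by requiring sub-polynomial expected Cost-Regret \emph{and} Quality-Regret on every MAB-CS instance---and then verify that each alternative $\nu'$ constructed above lies in the class of instances on which consistency is assumed, so that the consistency bound on $\Expectation_{\nu'}\mysqbrack{n_j(\horizon)}$ for $j\neq i$ really is available. Once that is pinned down, the Kaufmann--Capp\'e--Garivier machinery applies verbatim and everything reduces to the Gaussian KL evaluations shown above.
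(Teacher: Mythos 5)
Your proposal is correct and follows essentially the same route as the paper: a change-of-measure/data-processing argument under consistency, with exactly the same single-arm perturbations (raise $\mu_i$ just above $(1-\alpha)\mu_\ell$ for $i<a^*$; lower $\mu_\ell$ to just below $\mu_i/(1-\alpha)$, or raise it above $\mu_{a^*}/(1-\alpha)$ for the $i=a^*$ case), and the Gaussian KL $(\mu-\mu')^2/2$ yielding the constants, including the $(1-\alpha)^2$ factor. The only small wrinkle is that after lowering $\mu_\ell$ the new optimal arm need not be arm $i$ itself but only some $j\in\{1,\ldots,i\}$ (a cheaper arm may also become feasible); the paper's proof takes the distinguishing statistic to be $n_j(T)$ for that $j$, which leaves the KL cost on arm $\ell$, and hence your bound, unchanged.
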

The proof of Theorem~\ref{thm:known_ell_lb} (available in Appendix~\ref{sec:lower_bounds}) uses analytic techniques and arguments similar to those used to establish lower bounds for the classical multi-armed bandit setting \citep{garivier2019explore}. We now discuss our PE (Algorithm~\ref{algo:pe}) for the known reference arm setting. Under this setting, quality regret is calibrated against the expected reward of the reference arm $\ell$ which we denote $\mu_{\ell}$. For jointly optimizing cost and quality regret, we take an approach where the feasibility of arms is evaluated in the order of the costs of the arms: cheapest first. This insight motivates a pairwise comparison between the reference arm $\ell$ and the non-reference candidate arms. 
\begin{myfunction}
\SetAlgoNlRelativeSize{0}
\DontPrintSemicolon
\SetKwProg{Fn}{Function}{:}{\EndFn}
\nonl
\Fn{
PE(
\mbox{$\bm{n}$: Sample Vector},
\mbox{$\bm{\hat{\mu}}$: Empirical Means}, 
\mbox{$\bm{\omega}$: Round Numbers}, 
\mbox{$T$: Horizon}, 
\mbox{$i$: Episode}, 
\mbox{$\ell$: Reference Arm}, 
\mbox{$\alpha$: Subsidy Factor} 
)
}
{
$\tilde{\Delta} \gets 
2^{ - \omega_i }$\;
$\tau \gets 
\myceil{ \frac{2 \log (T \tilde{\gap}^2) }{\tilde{\gap}^2} }$
\;
\For{$k \in \{ i, \ell \}$}{
    \If{$n_k < \tau$}{
        \Return{$k, \bm{\omega}, i$}
        \tcp*{$k_t = k$, round numbers $\omega$ and ep. $i$ unchanged}
    }
}
$\beta \gets 
\sqrt{\frac{ \log (T \tilde{\gap}^2 ) }
{ 2 \tau }}$
\;
\If{ $(1 - \alpha)
\myparen{
    \hat{\mu}_{\ell} + \beta
}
<
\hat{\mu}_{i} - \beta$}{
    \Return{$i, \bm{\omega}, \text{None}$}
    \tcp*{Declare $i$ as winner, set episode to None}
}
\ElseIf{$\hat{\mu}_{i} + \beta < 
(1 - \alpha) 
\myparen{
\hat{\mu}_{\ell} - \beta
}$}{
    \Return{$i + 1, \bm{\omega}, i + 1$}
    \tcp*{Sample next candidate arm, Rounds $\omega$ unchanged, Update episode to that of next candidate arm}
}
\Else{
    $\omega_{i} \gets \omega_{i} + 1 $
    \tcp*{Increment round only for arm $i$ being evaluated}
    \Return{$i, \bm{\omega}, i$}
    \tcp*{Move to next round within same episode}
}
}
\caption{Pairwise Elimination Function $\text{PE}()$}
\label{algo:pe_function}
\end{myfunction}
\begin{myalgo}
\SetAlgoNlRelativeSize{0}
\DontPrintSemicolon
\Inputs{
Bandit Instance $\nu$, 
Horizon $T$,
Reference Arm $\ell$,
Subsidy Factor $\alpha$
.
}
\Initialize{
Samples $n_k = 0$, 
Empirical Means $\hat{\mu}_k = 0$, 
Current Rounds $\omega_k = 0$,
$\myforall k \in [\narms] \cup \mybraces{\ell}$,
PE Episode $i = 1$,
Time $t = 1$
.
}
\While{$t \leq T$}{
    \If{
    $i \notin \{\text{None}, \ell\} $
    }{
        $k_t, \bm{\omega}, i \gets \text{PE} (\bm{n}, \bm{\hat{\mu}}, \bm{\omega}, T, i, \ell, \alpha) $
        \tcp*{receive arm to be sampled, updated round numbers, and updated episode number}
    }
    \Else
    {
    $k_t \gets k_{t-1}$
    \tcp*{sample winning arm for remaining budget}
    }    
$
\bm{\hat{\mu}}
(t + 1),
\bm{n} (t + 1),
t
\gets
\text{sample\_and\_update}
(k_t, \bm{\hat{\mu}}(t), \bm{n}(t), t)
$
\tcp*{in Appendix \ref{sec:appendix_algos}}
}
\caption{Pairwise Elimination (PE) for a known reference arm $\ell$}
\label{algo:pe}
\end{myalgo}

PE tracks and updates three bookkeeping variables. The number of times each arm has been sampled $\bm{n}$, the empirical mean reward sampled from each arm $\bm{\hat{\mu}}$, and the round ongoing\footnote{Although the round number $\omega_k$ achieved by an arm $k$ can be uniquely determined from its number of samples $n_k$, we track them separately for a more lucid presentation.} by each arm $\bm{\omega}$. For PE we asses candidate arms in the ascending order of their cost by assigning an \emph{episode} to each candidate arm. This is represented by the main loop in Algorithm~\ref{algo:pe} in which $i$ is used to refer to both the index of the arm being evaluated for feasibility and to the episode number associated with that arm. PE(\hspace{0.6mm}) keeps getting invoked until all $\narms$ candidate arms are evaluated and we reach episode $\ell$, or until when a lower cost feasible arm has been identified.

Inside the PE(\hspace{0.6mm}) subroutine the sample prescription $\tau$ is computed using the current round number $\omega_i$ associated with candidate arm $i$ (line 2). Once the sample prescription $\tau$ is met for both arm $i$ and arm $\ell$, we check for elimination. Since no samples are ever discarded, episodes that are further downstream will re-use samples of arm $\ell$ from prior episodes when $n_\ell > \tau$. If candidate arm $i$ is able to eliminate arm $\ell$ then the optimal arm has been identified. Else if arm $\ell$ is able to eliminate arm $i$, then the episode is incremented and we proceed to evaluating the next cheapest arm. If no elimination occurs, then we simply increment the round number $\omega_i$. For the PE algorithm, we show the upper bound on cumulative cost and quality regret stated in Theorem~\ref{thm:pe_upper_bound}.
\begin{theorem}[Instance dependent upper bound on cumulative cost and quality regret for PE]
For bandit instance $\nu$, over horizon $\horizon$, the expected cumulative cost regret $\Expectation \mysqbrack{\costRegret \myparen{\horizon, \nu} }$ and quality regret $\Expectation \mysqbrack{\qualityRegret \myparen{\horizon, \nu}}$ of the PE algorithm are upper bounded respectively as,
\begin{align*}
\underbrace{
\myparen{
1 +
\max_{i \leq a^*}
\frac{32 \log \myparen{
\horizon \gap_{Q, i}^2}}{\gap_{Q, i}^2}
}
\gap_{C, \ell}^+
}_{\substack{\text{Contribution from arm $\ell$ under} \\ \text{nominal termination in PE episode } a^*}}
+
\underbrace{
\myparen{
\sum_{i = 1}^{a^*}
\frac{43}{ \gap_{Q, i}^2 }
}
\gap_{C, \ell}^+
}_{\substack{\text{Contribution from arm $\ell$ under} \\ \text{mis-termination in PE episode } \leq a^*}}
+
\underbrace{\frac{43}
{\gap_{Q, a^*}^2}
\max_{a^* < i \leq \ell}
\gap_{C, i}^+
}_{\substack{\text{Contribution from episodes } > a^* \\ \text{in case of mis-termination during ep } a^*}}
,
\end{align*}
\begin{align*}
\underbrace{
\sum_{i = 1}^{a^* - 1}
\myparen{
\gap_{Q, i} + \frac{32 \log\myparen{ \horizon \gap_{Q, i}^2} }{ \gap_{Q, i} }
}
}_{\substack{\text{Contribution from arms $i < a^*$ under} \\ \text{nominal termination in PE episode } a^*}}
\underbrace{
+
\sum_{i=1}^{a^* - 1}
\frac{43}{\gap_{Q, i}}
}_{\substack{\text{Contribution from arms $i < a^*$ under} \\ \text{mis-termination in PE episode } \leq a^*}}
+
\underbrace{
\frac{43}
{\gap_{Q, a^*}^2}
\max_{a^* < i < \ell}
\gap_{Q, i}^{+}
}_{\substack{\text{Contribution from episodes } > a^* \\ \text{in case of mis-termination during ep } a^*}}
.
\end{align*}
\label{thm:pe_upper_bound}
\end{theorem}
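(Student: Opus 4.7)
The plan is to decompose PE's cost and quality regret episode by episode, and within each episode split the analysis into a "good event'' $G_i$ in which PE makes the correct pairwise decision and its complement $\bar G_i$ in which PE mis-terminates. Because PE inherits the sample-doubling round structure of Improved-UCB, the template runs parallel to \citet{auer2010ucb}; the two novelties are that the gap driving concentration is the \emph{quality} gap $\gap_{Q,i} = \mu_{\CS} - \mu_i$ with $\mu_{\CS} = (1-\alpha)\mu_{\ell}$, and that a mis-termination at any episode $i \leq a^*$ has knock-on effects on downstream episodes that must be controlled separately.

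First I would analyze the good event. Define $G_i$ for $i\leq a^*$ as the event that by the first round with $\tilde{\gap} := 2^{-\omega} \leq \gap_{Q,i}/2$, the Hoeffding-type intervals around $\hat\mu_i$ and $(1-\alpha)\hat\mu_\ell$ separate enough to force the correct pairwise decision (arm $\ell$ eliminates arm $i$ when $i<a^*$, arm $a^*$ eliminates arm $\ell$ when $i=a^*$). Under $G_i$, the episode terminates at or before this round, so after summing the per-round prescription $\myceil{2\log(\horizon\tilde{\gap}^2)/\tilde{\gap}^2}$ geometrically over the at-most $\log_2(2/\gap_{Q,i})$ preceding rounds, the samples of both arm $i$ and arm $\ell$ incurred in episode $i$ are bounded by $32\log(\horizon\gap_{Q,i}^2)/\gap_{Q,i}^2$ (the leading $\gap_{Q,i}$ in the quality-regret bound comes from the $+1$ in the ceiling). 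Since arm $\ell$'s samples carry over across episodes, its total usage in the all-good scenario is the maximum, not the sum, which gives the first cost-regret term $\max_{i\leq a^*}(32\log(\horizon\gap_{Q,i}^2)/\gap_{Q,i}^2)\,\gap_{C,\ell}^+$; the samples of arm $i$ for $i<a^*$ stay within episode $i$ and (noting $\gap_{C,i}^+=0$ for $i<a^*$, so these arms contribute only to quality regret) produce the analogous quality-regret sum.

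Next I would bound $\PP(\bar G_i)$ using the round-adaptive confidence radius $\beta=\sqrt{\log(\horizon\tilde{\gap}^2)/(2\tau)}$, a Hoeffding bound at the sample count $\tau$, and a union bound over the $\bigO(\log(1/\gap_{Q,i}))$ rounds, yielding $\PP(\bar G_i)=\bigO(1/(\horizon\gap_{Q,i}^2))$. For $i<a^*$, mis-termination wrongly declares arm $i$ feasible and pins the sampler there for the rest of the horizon, contributing at most $\horizon\cdot\PP(\bar G_i)\cdot\gap_{Q,i} = \bigO(1/\gap_{Q,i})$ to quality regret and $\horizon\cdot\PP(\bar G_i)\cdot\gap_{C,\ell}^+ = \bigO(\gap_{C,\ell}^+/\gap_{Q,i}^2)$ to cost regret (the arm-$\ell$ samples accumulated before the bad decision remain on the books); summing over $i$ supplies the second (mis-termination) terms in both bounds. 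Mis-termination at episode $a^*$ (arm $a^*$ wrongly eliminated) pushes PE into episodes $i>a^*$, each of which still terminates correctly in $\bigO(\log\horizon)$ expected samples with high probability; the worst-case downstream gap $\max_{a^*<i\leq\ell}\gap_{C,i}^+$ (respectively $\max_{a^*<i<\ell}\gap_{Q,i}^+$) then multiplies the gating factor $\PP(\bar G_{a^*})\cdot\horizon = \bigO(1/\gap_{Q,a^*}^2)$ to produce the third term in each bound.

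The hard part will be the third term: I must argue carefully that once PE enters a downstream episode $i>a^*$ erroneously, the effective gap governing the contribution is $\gap_{Q,a^*}$ (via the gating probability) rather than the gap of the newly active arm, and that the $\horizon$ multiplier from the worst-case regret-per-step cancels cleanly with the $1/\horizon$ in the concentration bound to yield a horizon-free constant. Recovering the explicit constant $43$, rather than a generic $\bigO(1)$, requires keeping track of ceilings and the exact geometric-sum prefactors through the Improved-UCB round calculus, but is conceptually routine once the event decomposition and Hoeffding bounds are in place.
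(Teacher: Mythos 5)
Your proposal follows essentially the same route as the paper's proof: an episode-wise good/bad event decomposition keyed to the first round with $2^{-\omega} < \gap_{Q,i}/2$, Hoeffding plus a union bound over rounds giving mis-termination probabilities of order $1/(\horizon\gap_{Q,i}^2)$ (so the $\horizon$ multiplier under bad events cancels), the observation that arm $\ell$'s samples under nominal execution are the maximum rather than the sum over episodes, and gating all episodes beyond $a^*$ by the episode-$a^*$ failure probability — which is exactly how the paper obtains the three terms via its lemmas bounding samples of low-cost arms, the reference arm, and high-cost arms. The paper additionally formalizes the "episode reached / terminal episode $Z$" bookkeeping with explicit set-theoretic lemmas and a run-out-of-samples event $E_i$, but these are implementation details of the same strategy you describe.
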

There are two dimensions to the execution of PE being nominal. First, within an episode the worse quality arm should be eliminated after a reasonable amount of sampling. Second is that execution across episodes should terminate in episode $a^*$ (with arm $\ell$ eliminated). To prove Theorem~\ref{thm:pe_upper_bound} we separately bound the expected number of samples of arms with cost lower and higher than the optimal action $a^*$, and the reference arm $\ell$. To bound these samples we condition on the desirable and anticipated events at both the intra and inter episode levels and show that the probability of these desirable events not occurring is small. The details are available in Appendix~\ref{sec:pe}. 

Overall, we find that not only does PE achieve cost and quality regret that are $\bigO \myparen{\log \horizon}$, but also PE matches up to constant factors the lower bound on cost and quality regret implied by Theorem~\ref{thm:known_ell_lb}. To see this clearly, we breakdown Theorem~\ref{thm:pe_upper_bound} by contribution. For cost regret, the $\bigO \myparen{ \log \horizon }$ dependence on arm $\ell$ is contained in the first term, and the dependence on higher cost arms is a constant captured by the third. For quality regret the first term captures the $\bigO \myparen{ \log \horizon }$ dependence on low cost arms, and the entire contribution of higher cost arms is captured by a constant.       

\textbf{Practical Extensions of PE.} We highlight that although PE makes comparisons between the candidate arms and the reference arm in a pairwise manner, samples of the reference arm are re-used across episodes. Since during most episodes the samples accrued shall be limited to ones of the candidate arm undergoing evaluation, this sample reuse is a key feature of the PE algorithm and endows it with good sample efficiency. In PE as presented in Algorithm~\ref{algo:pe}, during an arbitrary episode evaluating the candidacy of arm $i$, the reference arm $\ell$ shall only ever have to be sampled if the samples of arm $i$ start to exceed the samples of reference arm $\ell$ that were already available. In practice, we can implement another version of PE called \emph{asymmetric-PE}. Asymmetric-PE allows for a mismatch between the number of samples of arms $i$ and $\ell$ that go into computing the exploration bonus terms $\beta$. The details of the variant and an example comparing it to vanilla PE are available in Appendix~\ref{sec:asymmetric_pe}. 

\subsection{Subsidized Best Reward Setting}
\label{subsec:full_cost_subsidy_setting}
\begin{theorem}[Lower bound for subsidized best reward setting]
Under any consistent policy $\pi$, the expected number of samples of low cost arms, high cost arms, and the best reward arm $i^*$ are lower bounded respectively as, 
\begin{align*}
&\liminf_{\horizon \to \infty}\frac{\Expectation
\mysqbrack{n_i \myparen{\horizon}}}{\log \horizon}
\geq
\frac{2 }{ \gap_{Q, i}^2 }, \,\, i < a^*
.
\quad
\liminf_{\horizon \to \infty}\frac{\Expectation
\mysqbrack{n_i \myparen{\horizon}}}{\log \horizon}
\geq
\frac{2 }{ (\frac{\mu_{a^*}}{1-\alpha}-\mu_i)^2 }, i > a^*, i \neq i^*
.
\\
&\liminf_{\horizon \to \infty}
\frac{\Expectation \mysqbrack{n_{i^*} \myparen{\horizon}}}
{\log \horizon}
\geq 
2 (1-\alpha)^2 \max \left \{ \frac{1 }{\Delta_{Q,a^*}^2},  \max_{i < a^*} \frac{1 }{\Delta_{Q,i}^2} \mathbf{1}[\min_{i < a^*} \Delta_{Q,i} \leq (1-\alpha) \Delta_{\textrm{min}}] \right\}
.
\end{align*}
Where $\Delta_{\textrm{min}} = \mu^* - \max_{i \neq i^*} \mu_i$ is the smallest conventional gap, $\horizon$ denotes the horizon and the rewards of all $\narms$ arms are Gaussian distributed with $\sigma^2 = 1$.
\label{thm:subsidized_best_reward_lb}
\end{theorem}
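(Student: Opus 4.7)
The plan is to apply the canonical change-of-measure lower bound for consistent bandit policies (Lai--Robbins, in the transportation form used by \citet{garivier2019explore}). For any consistent policy and any alternative Gaussian instance $\nu'$ whose MAB-CS optimal arm differs from $a^*(\nu)$, one has $\sum_k \Expectation_{\nu}\mysqbrack{n_k(\horizon)} \cdot KL(\nu_k,\nu'_k) \geq (1+o(1))\log \horizon$, where for unit-variance Gaussians $KL(\Normal(\mu,1),\Normal(\mu',1)) = (\mu-\mu')^2/2$. Whenever $\nu'$ differs from $\nu$ in only one arm $k$, this collapses to $\Expectation\mysqbrack{n_k(\horizon)} \geq (2+o(1))\log \horizon / (\mu_k-\mu'_k)^2$, and each of the three bounds is proved by exhibiting an appropriate one-arm perturbation that achieves the minimal admissible mean change.

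For the low-cost case ($i<a^*$), I would raise only $\mu_i$ to $\mu_{\CS}+\epsilon$; because no other mean changes, the threshold $\mu_{\CS}=(1-\alpha)\mu^*$ is unaltered, arm $i$ becomes feasible in $\nu'$, and being cheaper than $a^*$ it supplants $a^*$ as the MAB-CS optimal arm. The perturbation has magnitude $\gap_{Q,i}+\epsilon$, and $\epsilon\to 0$ yields the $2/\gap_{Q,i}^2$ bound. For the high-cost case ($i>a^*$ and $i\neq i^*$), I would raise only $\mu_i$ to $\mu_{a^*}/(1-\alpha)+\epsilon$; feasibility of $a^*$ gives $\mu_{a^*}/(1-\alpha)\geq \mu^*$, so arm $i$ becomes the new best-reward arm in $\nu'$ and the updated threshold $\mu_{a^*}+(1-\alpha)\epsilon$ exceeds $\mu_{a^*}$, making $a^*$ infeasible and hence not optimal. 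The perturbation is $\mu_{a^*}/(1-\alpha)-\mu_i+\epsilon$, producing the claimed $2/(\mu_{a^*}/(1-\alpha)-\mu_i)^2$ bound.

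For the $i^*$ bound I would use two separate perturbations of $\mu_{i^*}$ and combine them via the $\max$. (a) Raise $\mu_{i^*}$ to $\mu_{a^*}/(1-\alpha)+\epsilon$: the threshold grows past $\mu_{a^*}$, so $a^*$ becomes infeasible in $\nu'$; the perturbation equals $\mu_{a^*}/(1-\alpha)-\mu^*+\epsilon=(\mu_{a^*}-\mu_{\CS})/(1-\alpha)+\epsilon$, and $\epsilon\to 0$ yields the $2(1-\alpha)^2/\gap_{Q,a^*}^2$ term. (b) For any $i<a^*$ with $\gap_{Q,i}\leq (1-\alpha)\Delta_{\textrm{min}}$, lower $\mu_{i^*}$ to $\mu_i/(1-\alpha)-\epsilon$. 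This value is at least $\mu^*-\Delta_{\textrm{min}}$, which is exactly the condition encoded by the indicator and ensures that $i^*$ remains the best-reward arm in $\nu'$; the threshold drops to $\mu_i-(1-\alpha)\epsilon$, arm $i$ becomes feasible, and since $c_i<c_{a^*}$ the MAB-CS optimal arm of $\nu'$ is no longer $a^*$. The perturbation is $\mu^*-\mu_i/(1-\alpha)+\epsilon=\gap_{Q,i}/(1-\alpha)+\epsilon$, and optimizing over admissible $i$ yields the second term inside the $\max$.

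The main obstacle, which distinguishes the argument from the classical Lai--Robbins derivation, is not the change-of-measure inequality itself but verifying in each construction that the alternative $\nu'$ is a genuine MAB-CS alternative, i.e.\ that $a^*(\nu')\neq a^*$. Because the feasibility threshold $\mu_{\CS}=(1-\alpha)\mu^*$ is itself determined by the arm means, any perturbation of $\mu_{i^*}$ (or any change that moves the maximum mean) simultaneously shifts the threshold and can flip the feasibility status of other arms, forcing careful bookkeeping. This is exactly why the indicator $\mathbf{1}\mysqbrack{\min_{i<a^*}\gap_{Q,i}\leq (1-\alpha)\Delta_{\textrm{min}}}$ enters the $i^*$ bound: without that condition, any single-arm decrease of $\mu_{i^*}$ large enough to make a low-cost infeasible arm feasible would simultaneously cost $i^*$ its status as the best-reward arm, so the one-arm change-of-measure would no longer apply and the resulting multi-arm perturbation would have strictly larger aggregate $KL$.
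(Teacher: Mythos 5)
Your proposal is correct and follows essentially the same route as the paper: a one-arm Gaussian change of measure via the fundamental inequality of \citet{garivier2019explore}, with the same three perturbations (raising a cheap arm just above the threshold, raising an expensive arm above $\mu_{a^*}/(1-\alpha)$, and the two-sided perturbation of $\mu_{i^*}$ whose downward direction is gated by the indicator condition). The only cosmetic difference is that the paper does not insist that $i^*$ remain the best-reward arm after lowering $\mu_{i^*}$: it writes the new threshold as $(1-\alpha)\max\{\mu_{i^*}',\mu_2\}$ and uses $\mu_j \geq (1-\alpha)\mu_2$ to conclude feasibility of the cheap arm, which also covers the boundary case $\min_{i<a^*}\Delta_{Q,i}=(1-\alpha)\Delta_{\textrm{min}}$ where your claim that $i^*$ stays on top would fail for any $\epsilon>0$, though the conclusion is unaffected.
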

The proof of Theorem~\ref{thm:subsidized_best_reward_lb} closely parallels the proof of Theorem~\ref{thm:known_ell_lb} barring two salient differences. First, in Theorem~\ref{thm:subsidized_best_reward_lb} there is a non-trivial lower bound on the number of samples of high-cost arms. This is because unlike the known reference arm setting the arm $i^*$ and its reward $\mu^*$ are unidentified. Second, the lower bound on samples of $i^*$ is not only a function of quality gaps $\Delta_{Q, i}, i \leq a^*$, but also depends on the relationship between $\min_{i \leq a^*} \Delta_{Q, i}$ and the smallest conventional gap $\Delta_{\textrm{min}}$. Next we discuss the extension of PE to the subsidized best reward setting to develop PE-CS. 
\begin{myalgo}
\SetAlgoNlRelativeSize{0}
\DontPrintSemicolon
\Inputs{
Bandit Instance $\nu$, 
Horizon $T$, 
Subsidy Factor $\alpha$.
}
\Initialize{
Samples $n_k = 0$, 
Empirical Means $\hat{\mu}_k = 0$, 
Current Rounds $\omega_k = 0$ 
$\myforall k \in [\narms]$,
BAI Active Arms $\activeArms = [K]$, 
Arm $\ell = \text{None}$,
PE Episode $i = 1$,
\mbox{Time $t = 1$} 
.
}
\While{$t \leq T$}{
    \If{$\text{len}(\activeArms) > 1$}{
        $k_t, \bm{\omega}, \activeArms \gets \text{BAI} 
        (\bm{n}, \bm{\hat{\mu}}, \bm{\omega}, T, \activeArms) $
        \tcp*{receive arm to be sampled, updated round numbers, and updated active arms}
        \If{$\text{len}(\activeArms) = 1$}{
            $\ell \gets \activeArms[0]$
            \tcp*{set $\ell$ to be identified best arm}
            \textbf{continue}
            \tcp*{ignore sample recommendation $k_t$}
        }
    }
    \ElseIf{$i \notin \{\text{None}, \ell\} $}{
        $k_t, \bm{\omega}, i \gets \text{PE} (\bm{n}, \bm{\hat{\mu}}, \bm{\omega}, T, i, \ell, \alpha) $
        \tcp*{receive arm to be sampled, updated round numbers, and updated episode number}       
    }
    \Else{
        $k_t \gets k_{t - 1}$
        \tcp*{sample winning arm for remaining budget}
    }
    $\bm{\hat{\mu}} (t + 1), \bm{n} (t + 1), t \gets \text{sample\_and\_update}(k_t, \bm{\hat{\mu}} (t), \bm{n} (t), t)
    $
    \tcp*{in Appendix \ref{sec:appendix_algos}}
}
\caption{Pairwise Elimination for Cost Subsidy Problem (PE-CS).}
\label{algo:cs_pe}
\end{myalgo}

When solving the subsidized best reward setting we face the additional challenge that the arm $i^*$ is unidentified. To solve this problem, PE-CS (Algorithm~\ref{algo:cs_pe}) comprises two stages, the \emph{BAI stage} and the \emph{PE stage}. The BAI(\hspace{0.6mm}) subroutine uses the same \emph{round number} based cadence of sampling and elimination leveraged in PE(\hspace{0.6mm}) and is specified in Appendix~\ref{sec:appendix_algos}, Function~\ref{algo:my_bai}. Under BAI(\hspace{0.6mm}) once the set of active arms collapses to a single arm, sampling decisions are passed over to the PE stage. 

The PE stage in the PE-CS algorithm works identically to the PE algorithm described in Algorithm~\ref{algo:pe}. In fact, due to the modular and phased nature of PE-CS we use precisely the same subroutine Function~\ref{algo:pe_function} for PE(\hspace{0.6mm}). For the PE-CS algorithm, there is all the more reason to make best use of the accumulated samples of the reference arm since not only can $n_\ell$ accumulate during episodes of the PE stage, but also can accrue in the BAI stage where the empirical best reward arm $\ell$ is the last surviving and therefore most sampled arm. In passing control from the BAI stage to the PE stage, a key role is played by the ongoing rounds $\bm{\omega}$. Used in both BAI(\hspace{0.6mm}) and PE(\hspace{0.6mm}), $\bm{\omega}$ is responsible for tracking the round up to which any arm has been sampled at any point in time. Once the time comes for passing over control from BAI(\hspace{0.6mm}) to PE(\hspace{0.6mm}), the PE stage is able to pick up sampling and elimination checks in any of its pairwise comparison episodes right where the BAI stage left-off. For PE-CS we show upper bounds on expected cumulative cost and quality regret in Theorem~\ref{thm:pe_cs_upper_bound}. 
\begin{theorem}[Instance dependent upper bound on cumulative cost and quality regret for PE-CS]
For bandit instance $\nu$, over horizon $\horizon$, the expected cumulative cost regret $\Expectation \mysqbrack{\costRegret \myparen{\horizon, \nu} }$ and quality regret $\Expectation \mysqbrack{\qualityRegret \myparen{\horizon, \nu}}$ of the PE-CS algorithm are upper bounded respectively as,
\begin{align*}
&
\gap_{C, i^*}^{+}
\myparen{
1 +
\max_{\Delta \in \bm{\Delta}}
\mybraces{
\frac{32 \log \myparen{\horizon \gap^2} }{ \gap^2 }
}
}
+
\gap_{C, i^*}^{+}
\myparen{ 
\sum_{i=1}^{a^* - 1}
\frac{43}{\gap_{Q, i}^2}
+
\myparen
{
\frac{32}{\gap_{a^*}^2}
+
\frac{43}{\gap_{Q, a^*}^2}
}
}
\\
&\quad
+
\max_{i > a^*, i \in [\narms]}
\gap_{C, i}^{+}
\myparen{
\frac{32}{\gap_{a^*}^2}
+
\frac{43}{\gap_{Q, a^*}^2}
}
+
\sum_{i > a^*, i \in [\narms] \setminus \mybraces{i^*}}
\gap_{C, i}^{+}
\myparen{
1 +
\frac{32 \log \myparen{\horizon \gap_i^2} }{\gap_i^2}
}
\\
&\quad +
\max_{i \in [\narms]}
\gap_{C, i}^{+}
\myparen{
\frac{11}{\gap_{\textrm{min}}^2}
+
\sum_{j \neq i^*}
\frac{32}{\gap_j^2}
}
.
\\
\intertext{\rule{\linewidth}{0.4pt}}
%
%
%
%
&
\sum_{i = 1}^{a^* - 1}
\myparen{
\gap_{Q, i}
+
\frac{32 \log\myparen{ \horizon \gap_{Q, i}^2 }}{ \gap_{Q, i} }
}
+
\sum_{i = 1}^{a^* - 1}
\frac{43}{\gap_{Q, i}}
+
\max_{i > a^*, i \in [\narms]}
\gap_{Q, i}^+
\myparen{
\frac{32}{\gap_{a^*}^2}
+
\frac{43}{\gap_{Q, a^*}^2}
}
\\
&\quad
+
\sum_{i > a^*, i \in [\narms] \setminus \mybraces{i^*}}
\gap_{Q, i}^+
\myparen{
1 
+
\frac{32 \log \myparen{\horizon \gap_i^2} }{\gap_i^2}
}
+
\max_{i \in [\narms]} 
\gap^+_{Q, i}
\myparen{
\frac{11}{\gap_{\textrm{min}}^2}
+
\sum_{j \neq i^*}
\frac{32}{\gap_j^2}
}
.
\end{align*}
Where $\Delta_i = \mu^* - \mu_i$ are conventional gaps, $\gap_{\textrm{min}} = \mu^* - \max_{i \neq i^*} \mu_i$ is the smallest conventional gap, and $\bm{\Delta} = \mybraces{\gap_{\textrm{min}}} \cup \mybraces{\gap_{Q, j}}_{j \leq a^*}$ is defined to bound samples of arm $i^*$ under expectation.
\label{thm:pe_cs_upper_bound}
\end{theorem}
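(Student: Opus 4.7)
The plan is to decompose the expected sample count of each arm across the two phases of Algorithm~\ref{algo:cs_pe}, namely the BAI stage and the PE stage, and then translate those sample bounds into cost/quality regret by multiplying with $\gap_{C,i}^+$ and $\gap_{Q,i}^+$ respectively. First I would bound the expected number of samples of each suboptimal arm $j \neq i^*$ during the BAI stage by reusing the Improved-UCB analysis: on the good event that Hoeffding-type confidence intervals of width $\beta$ never fail for the relevant round, arm $j$ is eliminated as soon as its round-number prescribes $\tau \gtrsim 2\log(T\gap_j^2)/\gap_j^2$ samples, yielding the $\sum_{j \neq i^*} 32/\gap_j^2$ contributions visible in the last summand of both bounds. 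The failure probability of this good event over all arms and rounds is controlled by a standard union bound, giving the additive $11/\gap_{\textrm{min}}^2$ term that corresponds to the worst case where BAI mis-identifies $\ell \neq i^*$ and every remaining pull is potentially suboptimal.

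Next, conditioned on the nominal event that BAI terminates with $\ell = i^*$, the PE stage is driven by Function~\ref{algo:pe_function} on an instance whose reference arm happens to equal $i^*$. I would invoke Theorem~\ref{thm:pe_upper_bound} essentially as a black box, replacing $\gap_{C,\ell}^+$ with $\gap_{C,i^*}^+$ throughout. The subtle point is that sample reuse across stages only increases $n_{i^*}$ relative to the PE-only analysis, so the intra-episode elimination arguments carry through unchanged: within episode $i \leq a^*$, candidate arm $i$ gets at most $\lceil 2\log(T\gap_{Q,i}^2)/\gap_{Q,i}^2 \rceil$ samples on the good event. The maximization $\max_{\gap \in \bm{\Delta}}\{32\log(T\gap^2)/\gap^2\}$ arises precisely because $n_{i^*}$ is bounded by the binding gap governing its sampling, which can be either $\gap_{\textrm{min}}$ (driving BAI termination) or any $\gap_{Q,j}$ with $j \leq a^*$ (driving a PE episode), mirroring the structure of the matching lower bound in Theorem~\ref{thm:subsidized_best_reward_lb}.

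The contributions from high-cost arms $i > a^*$, $i \neq i^*$ in the PE stage require an extra layer of conditioning on inter-episode events. Under the nominal inter-episode event, PE terminates in episode $a^*$ with $i^*$ correctly eliminated, so no episode $> a^*$ is ever entered and such arms accrue only their BAI-stage samples of $1 + 32\log(T\gap_i^2)/\gap_i^2$, which is the penultimate summand. Under the complementary mis-termination event during some episode $\leq a^*$, I would bound the probability of this event by the same Hoeffding-style failure bound used in the proof of Theorem~\ref{thm:pe_upper_bound}, which contributes the $43/\gap_{Q,i}^2$ terms and the $\max_{i > a^*} \gap_{C,i}^+ (32/\gap_{a^*}^2 + 43/\gap_{Q,a^*}^2)$ term reflecting the possibility that an arm with index $> a^*$ is declared the winner and pulled for the remainder of the horizon. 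The quality regret bound follows identically, substituting $\gap_{Q,i}^+$ for $\gap_{C,i}^+$ arm by arm, and noting that for $i < a^*$ the first summand collects both the unavoidable nominal contribution $\gap_{Q,i}$ (one pull per round) and the logarithmic contribution $32\log(T\gap_{Q,i}^2)/\gap_{Q,i}$.

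The main obstacle will be the careful bookkeeping of three nested good events—per-round Hoeffding concentration in BAI, per-round concentration in each PE episode, and nominal inter-episode termination—and ensuring that their failure probabilities aggregate into the stated constants via union bounds without double counting. In particular, the coupling between BAI mis-identification (where $\ell$ may end up being a non-$i^*$ arm) and the subsequent PE stage run with a wrong reference must be treated as a single catastrophic event whose cost is absorbed into the final $\max_{i \in [\narms]} \gap_{C,i}^+ (11/\gap_{\textrm{min}}^2 + \sum_{j \neq i^*} 32/\gap_j^2)$ term rather than propagated through the tighter PE analysis. Once this separation is in place, the remaining arithmetic is a direct, if tedious, repetition of the bounding techniques developed for PE in Appendix~\ref{sec:pe}, and I would defer the full calculation there.
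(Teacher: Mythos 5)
Your overall architecture matches the paper's: partition on the BAI outcome (proper identification $\ell = i^*$ versus an improper/unverified outcome), absorb the improper case into a term of the form $T \cdot \Pr(\text{bad BAI}) \cdot \max_i \Delta_{C,i}^+$, and, conditioned on the proper outcome, rerun the PE analysis of Theorem~\ref{thm:pe_upper_bound} with $i^*$ playing the role of $\ell$. However, there is a genuine gap in the step where you invoke Theorem~\ref{thm:pe_upper_bound} ``essentially as a black box,'' arguing that sample reuse across stages ``only increases $n_{i^*}$'' so the intra-episode elimination arguments ``carry through unchanged.'' That claim fails precisely for episode $a^*$. The PE proof conditions on the elimination check being performed round by round up to the round $\rho_{a^*}$ determined by $\Delta_{Q,a^*}$; but in PE-CS the BAI stage may already have advanced both $a^*$ and the reference arm past that round whenever $\Delta_{a^*} < \Delta_{Q,a^*}$ (so that $\sigma_{a^*} > \rho_{a^*}$ and the inherited round $\Sigma_{a^*}$ can exceed $\rho_{a^*}$), in which case none of the checks the PE good event $G_{2,a^*} \cap G_{3,a^*}$ relies on ever take place. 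The paper handles this by augmenting the good event for episode $a^*$ with a new clause ($G_{4,a^*}$: the reference is eliminated at the single inherited round $\Sigma_{a^*}$), redefining $B_{a^*}$ accordingly, and bounding the new failure mode by $32/(T\Delta_{a^*}^2)$. That is exactly the origin of the $32/\Delta_{a^*}^2$ additions sitting next to $43/\Delta_{Q,a^*}^2$ in the second and third terms of the theorem; your proposal displays these terms but supplies no mechanism that would produce them, and a verbatim black-box application of Theorem~\ref{thm:pe_upper_bound} would not.

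Two smaller points. First, in your opening paragraph you attribute the $\sum_{j \neq i^*} 32/\Delta_j^2$ piece of the final summand to the nominal BAI-stage sample counts; in the paper those nominal counts are the fourth term $\sum_{i>a^*, i\neq i^*} \Delta_{C,i}^+\bigl(1 + 32\log(T\Delta_i^2)/\Delta_i^2\bigr)$, while the entire last summand is $T\cdot\Pr(\beta)\cdot\max_i\Delta_{C,i}^+$ with $\Pr(\beta) \le 11/(T\Delta_{\min}^2) + \sum_{j\neq i^*} 32/(T\Delta_j^2)$ (probability that some arm survives its prescribed BAI round while $i^*$ is still active, or that $i^*$ is itself eliminated); your closing paragraph states this correctly, so reconcile the two. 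Second, the paper's conditioning is actually a three-way partition $\Gamma$, $F$, $\beta$, where $F$ covers the horizon expiring mid-BAI; this case is benign (samples are still capped by $\tau_{\sigma_i}$) but it must appear in the partition for the iterated-expectation step to be exhaustive, and your two-way ``proper vs catastrophic'' split should be adjusted to include it.
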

We bound the expected number of samples of any arm under PE-CS by first conditioning on the outcome of the BAI-stage being either proper ($\ell = i^*$) or improper ($\ell \neq i^*$). The phased nature of PE-CS admits a modular analysis where conditioned on the outcome of BAI being proper, the bounds on samples shown in Theorem \ref{thm:pe_upper_bound} for PE hold with slight modifications. In particular the three leading terms in the PE-CS bounds correspond directly (in order) to the three terms in the PE bounds for cost and quality regret. The fourth term in both the PE-CS cost and quality regret bounds corresponds to samples of high cost arms accrued during the BAI stage and the fifth term is the constant contribution to regret from an improper BAI outcome. Proof is available in Appendix~\ref{sec:pe_cs}.

Finally, as with Theorem~\ref{thm:pe_upper_bound}, we see that both the expected cost and quality regret are bounded by a quantity that is $\bigO \myparen{\log \horizon}$. On comparing the upper bounds in Theorem~\ref{thm:pe_cs_upper_bound} to the regret lower bounds implied by Theorem~\ref{thm:subsidized_best_reward_lb} we find a gap between the two. This gap arises from the class of bandit instances where precisely identifying arm $i^*$ using BAI detracts from the actual goal of sampling cheap feasible arms. A more detailed discussion is available in Appendix~\ref{sec:lower_bounds}.

\section{Experiments}
\label{sec:experiments}

In Section~\ref{sec:algos} we presented the merits of the PE and PE-CS algorithms in the \emph{known reference arm} and \emph{subsidized best reward} MAB-CS settings. While the presentation so far has been based exclusively on theoretical bounds on cost and quality regret here we complement our theoretical analysis with a study of the empirical performance of our methods\footnote{Code available at \url{https://github.com/ishank-juneja/bandits-with-costs}}. In particular, we compare PE and PE-CS with baselines from \citet{pmlr-v130-sinha21a} on problem instances derived from real-world datasets. The real-world datasets we use are MovieLens 25M \citep{10.1145/2827872} and Goodreads \citep{wan2018item}. Next we describe how we make use of these datasets for our experiments.

The MovieLens 25M dataset consists of 25 million ratings for 62,000 movies rated by 162,000 users. It is a popular dataset for studying the performance of recommendation systems. The movie ratings in the dataset are from numerous users and on a 5 point scale. Moreover, every movie for which ratings are available in the dataset is tagged with one or more genres. The Goodreads dataset already comes organized into overlapping genres with a single genre containing between 36,514 and 335,449 books. Each book is associated with at least one numeric review, and the number of reviews associated with the books tagged with a certain genre range between 150 thousand and 3.5 million.

Our Goodreads and MovieLens experiments simulate a scenario where a book subscription service or movie streaming website attempt to recommend books and movies to their users. The available selection is organized into genres and the service deploys a recommendation system to decide which genre of content be served to its user. Whenever a genre is chosen by the recommendation system, a random book or movie tagged with that genre is drawn with replacement. Under these conditions, bandit arms become a natural abstraction for genres. The cost associated with pulling the arm corresponding to a certain genre is simply the average of the royalties that must be paid to the authors or producers for every new reader of a book or streamer of a movie. Since royalty data is unavailable as part of either two datasets, we draw the costs associated with sampling any bandit arm (genre) to lie uniformly at random between 0 and 1. For every genre we first obtain the mean 5-point scale rating of all books or movies tagged with that genre and then divide this rating by 5 so that it lies between 0 and 1. We then treat this fractional rating as the expected reward return from that genre. Through this process we end up with bandit instances consisting of 20 arms for MovieLens and 8 arms for Goodreads the details of which are available in Appendix~\ref{sec:supplement_experiments}.

In experiments discussed in Section~\ref{sec:experiments} we plot the summed together values of the cost and quality regret. The goal in all MAB-CS settings is to converge onto sampling optimal arm $a^*$. To do so reliably we must explore sufficiently but eventually wane exploration. Looking at trends in summed regret allows us to compare the merit of various methods in achieving this goal. A tangential yet interesting goal is to understand the \emph{trade offs} between cost and quality regret. To better showcase the performance of our methods we relegate trade offs to supplemental experiments in Appendix~\ref{sec:supplement_experiments}.  

\subsection{Evaluating Our Pairwise Elimination (PE) Algorithm}
\label{sec:pe_experiments}
\begin{figure}
    \centering
    \includegraphics[width=0.99\linewidth]{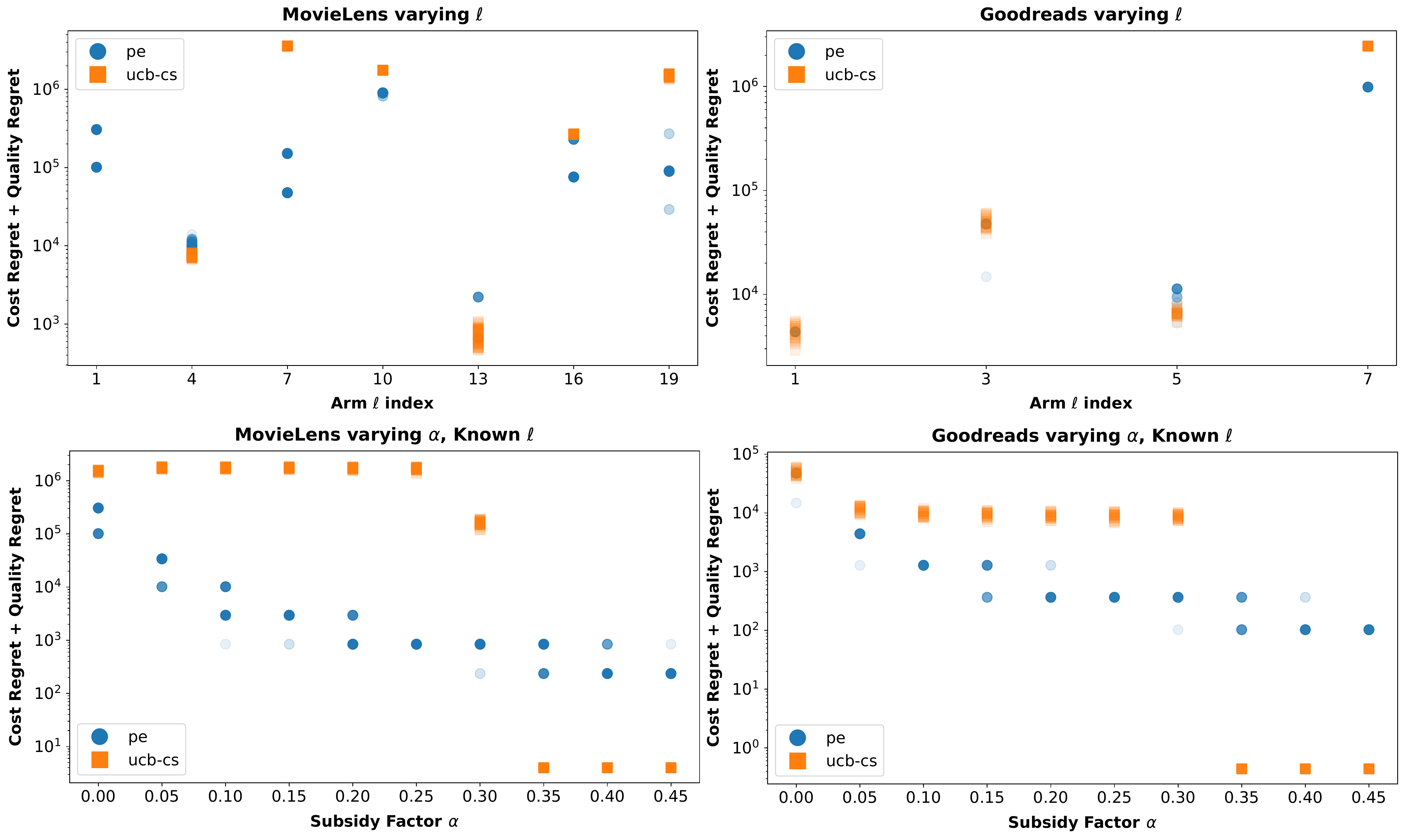}
    \caption{Fig. 1(a) varies the index $\ell$ for MovieLens. Fig. 1(b) does the same for Goodreads ($\alpha = 0$ in both). Fig. 1(c) fixes $\ell = 11$ and varies $\alpha$ for MovieLens while Fig. 1(d) fixes $\ell = 4$ for Goodreads and varies $\alpha$. Data points represent terminal regret at $\horizon = $5M and each data point represents the outcome from an experiment. There are 25 such independent runs for each algorithm. There is no inherent notion of a reference arm in either dataset, so $\ell$ is picked arbitrarily.}
    \label{fig:pe_combined}
\end{figure}
To understand the effectiveness of PE empirically, we compare PE to a natural variant of the UCB-CS algorithm from \citet{pmlr-v130-sinha21a}. In the specification of UCB-CS (Algorithm \ref{algo:cs_ucb}, Appendix \ref{sec:appendix_algos}), the target reference arm (whose reward determines $\mu_{\CS}$) is the best reward arm $i^*$. UCB-CS estimates the index of arm $i^*$ as the arm with the largest UCB-index in any time-slot. To develop a comparison with PE for the known reference arm setting, we simply replace this estimate with the known index of the reference arm while retaining the rest of the Algorithm. We call this variant as UCB-CS Known $\ell$ and compare it to PE on MovieLens and Goodreads in Figure~\ref{fig:pe_combined}. From the summed regret results in Figure~\ref{fig:pe_combined} we find that our approach outperforms UCB-CS invariably for smaller $\alpha$ and when the reference arm lies in the cheaper half of all available arms. We highlight that UCB-CS lacks any performance guarantees and is susceptible to linear regret, as demonstrated in Appendix~\ref{sec:supplement_experiments}, Figure~\ref{fig:movie_lens_experiment_pe}. Overall, we contend that UCB-CS is unreliable and typically performs worse than PE.

\subsection{Evaluating Our PE-CS (Pairwise Elimination Cost Subsidy) Algorithm}
\label{sec:pe_cs_experiments}

\begin{figure}
    \centering
    \includegraphics[width=0.99\linewidth]{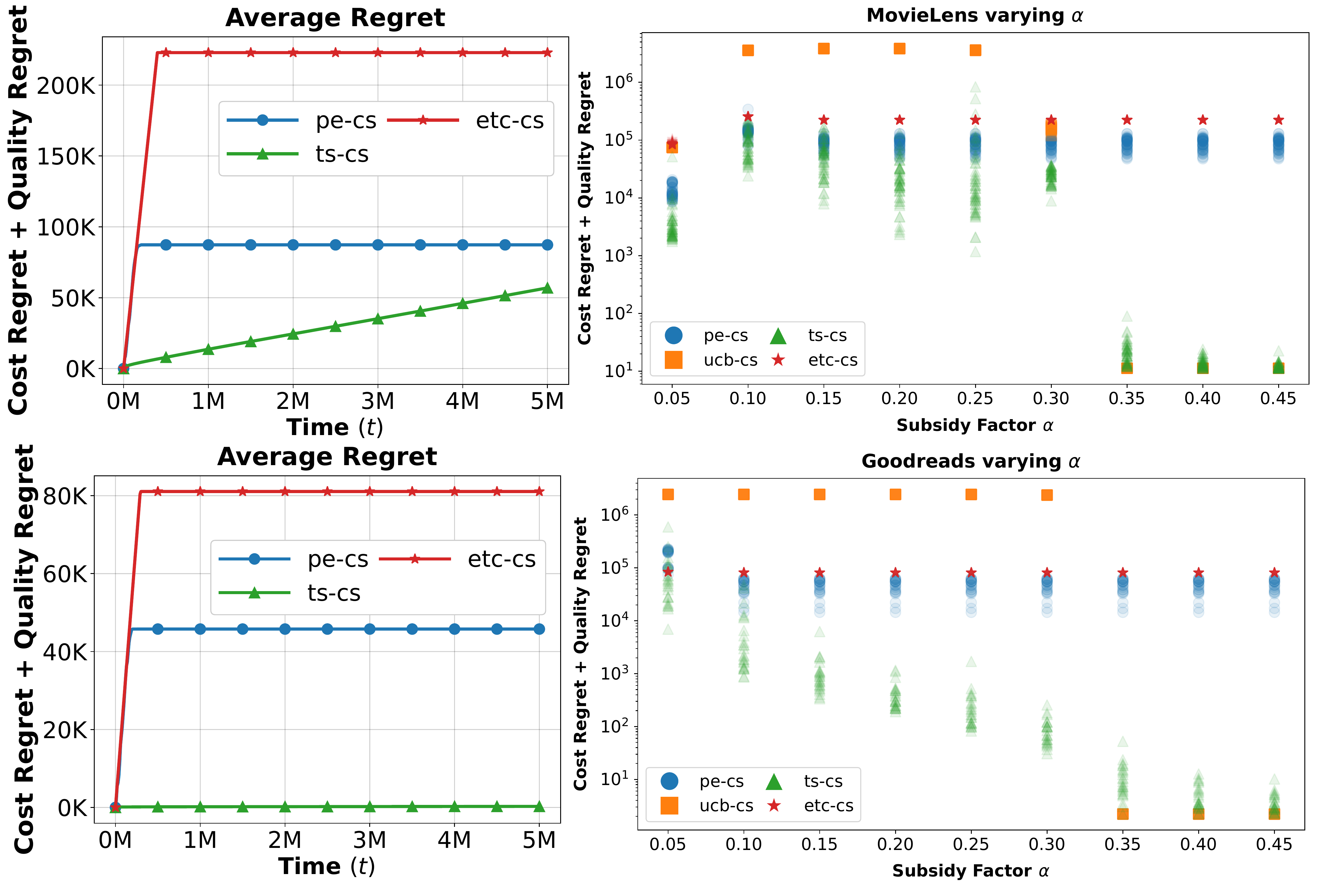}
    \caption{Fig. 2(a) shows the regret trend for MovieLens and Fig. 2(c) does the same for Goodreads. Both are for $\alpha=0.25$. Similar to Fig. 1(c) and 1(d), Fig. 2(b) and 2(d) show the terminal regret trend ($\horizon =$5M, 50 runs of each algorithm). UCB-CS is omitted from (a) and (c) as its regret was orders of magnitude worse as can be seen from Fig. 2(b), 2(d).}
    \label{fig:pecs_combined}
\end{figure}

In Section~\ref{sec:algos}, we saw that PE-CS admitted logarithmic instance dependent guarantees on expected cumulative cost and quality regret. The only algorithm for the full cost-subsidy problem from the literature that has an upper bound guarantee on expected cumulative regret is the ETC-CS algorithm from \citet{pmlr-v130-sinha21a}. Moreover their work also prescribes the UCB-CS and TS-CS algorithms which are approaches to solving the subsidized best reward problem that interleave exploration and exploitation but lack any performance guarantees. The three algorithms ETC-CS, UCB-CS, and TS-CS comprise all the algorithms from the literature and are specified in Appendix~\ref{sec:appendix_algos}. We compare PE-CS against all three of these approaches in Figure~\ref{fig:pecs_combined}. 

ETC-CS consists of a pure exploration phase where each arm is sampled $\BigO \myparen{\horizon^{2/3}}$ times followed by an exploitation phase. As the only other algorithm with a regret guarantee ETC-CS is our primary competitor. Moreover PE-CS also outperforms the UCB-CS baseline and the latter algorithm has a linear regret trend arising from a persistent mis-identification of the best action. Although we find that the average of the regret for TS-CS is lower than PE-CS, a closer examination of the regret trend (Fig. \ref{fig:pecs_combined}(a)) reveals the problem with the performance of TS-CS. While initially it takes PE-CS more exploration to lock onto the best action, it does so in a consistent and reliable way and once it does, there is no further incremental regret. Whereas for TS-CS, while interleaving exploration and exploitation leads to lower regret at the outset, there is a distinct slow-but steady upward trend in regret observed for the method.

\section{Acknowledgments}
\label{sec:ack}

This research was generously supported by National Science Foundation (NSF) grant CNS-2103024, NSF grant CCF-2007834, Office of Naval Research (ONR) Grant N00014-23-1-2275, Cargenie Institute of Technology Dean's Fellowship and the CyLab Security and Privacy Lab at Carnegie Mellon University.

\bibliography{references/main}
\bibliographystyle{iclr2025_conference}

\appendix
\label{sec:appendix}

\allowdisplaybreaks
\newpage
\section{Supplement to Experiments}
\label{sec:supplement_experiments}

\begin{figure}[ht]
    \centering
    \includegraphics[width=0.9\linewidth]{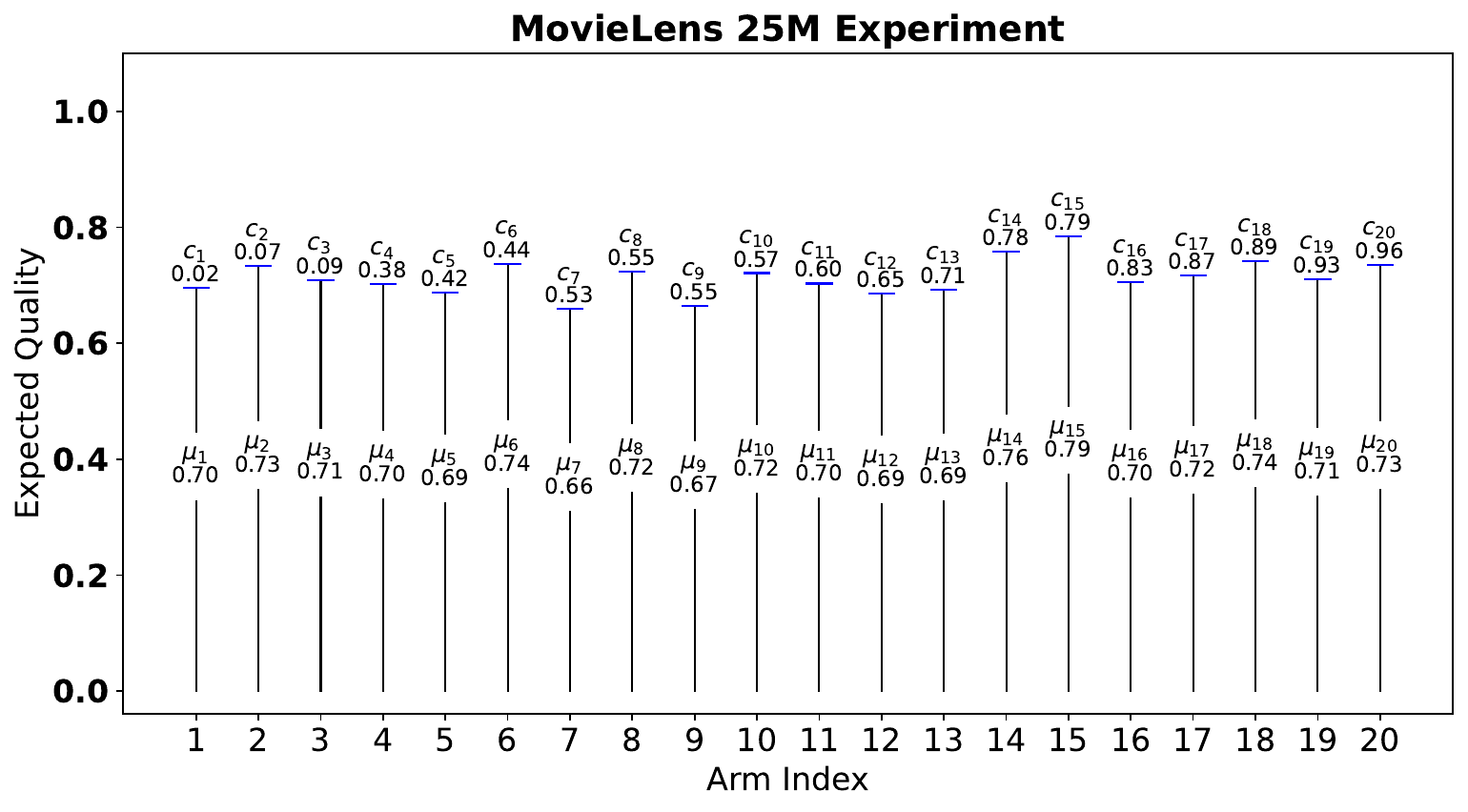}
    \caption{Problem instance for MovieLens 25M experiments}
    \label{fig:movie_lens_instance}
\end{figure}

\begin{figure}[ht]
    \centering
    \includegraphics[width=0.9\linewidth]{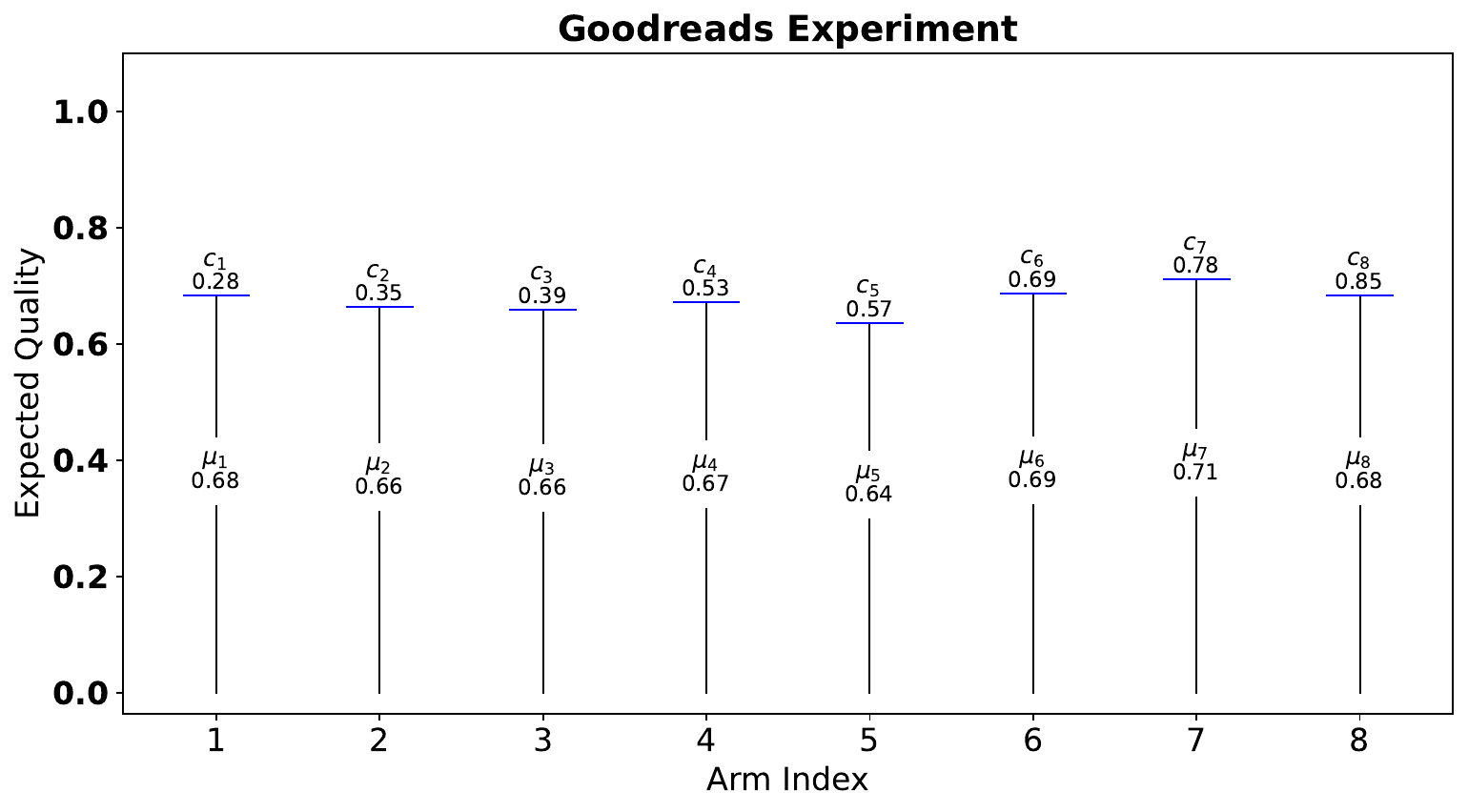}
    \caption{Problem instance for Goodreads experiment}
    \label{fig:good_reads_instance}
\end{figure}

\subsection*{Extended Commentary to Experiments Section}

A close examination of the Algorithm descriptions for UCB-CS and TS-CS reveals the source of their unreliability. Both these algorithms work by constructing a set of empirically feasible arms and choose to sample the cheapest arm in this empirical set. This approach is vulnerable to a feasible arm being consistently excluded as a result of an initially poor performance. In our PE-CS on the other hand there is a systematic comparison between the arms where they are evaluated in the order of their costs and eliminated only when they have been sampled to be excluded with sufficient confidence.

To take a closer look at the sensitivity of PE-CS and all three baselines, we perform an additional synthetic experiment in the known reference arm setting which we call the toy experiment. For the toy experiment, we create a family of 12 subsidized best reward problem instances each with four arms. Then we run all four algorithms over 50 independent runs of each instance. The expected reward of the first arm in the instance varies uniformly in the range 0.6 through 0.93 over the 12 instances in the family. Since the optimal return in all instances is $\mu^* = 0.95$ and the subsidy factor is $\alpha = 0.2$, the reward threshold $\mu_{\CS}$ for all the instances is $0.8 \times 0.95 = 0.76$. 

We plot the results from the toy experiment on a scatter plot. On the y-axis is the summed terminal cost and quality regret (in $\log$ scale) and on the x-axis is the value of the varying expected return of the first arm of the instance family. Firstly, we find that on almost all instances, PE-CS performs either similar to or better than our primary comparator ETC-CS. Among the 12 instances tested here, the ones where the return of an arm is close to $\mu_{\CS} = 0.76$ are the most challenging. We find that most runs of UCB-CS and several runs of TS-CS are unsuccessful at satisfactorily solving the $\mu_1 = 0.78$ case. Moreover, from Figure~\ref{fig:toy_experiment} we conclude that among the four algorithms tested here, PE-CS arguably offers the best balance between performance and reliability.  

\subsection*{Trade off between cost and quality regret}

\begin{figure}[ht]
    \centering
    \includegraphics[width=0.99\linewidth]{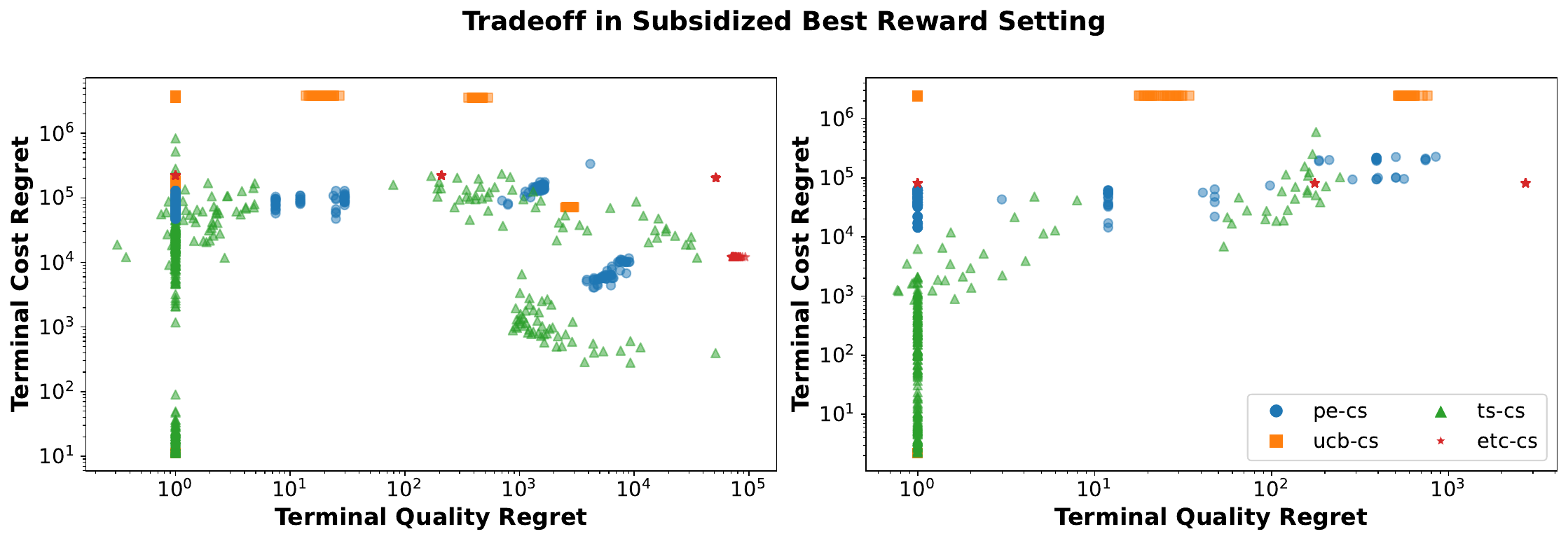}
    \caption{Trade off between cost and quality regret for subsidized best reward setting with MovieLens (left) and Goodreads (right). The data used for visualization remain the same as the experiment in Figure~\ref{fig:pecs_combined}, Section~\ref{sec:experiments}.}
    \label{fig:tradeoff_fcs}
\end{figure}

\begin{figure}[ht]
    \centering
    \includegraphics[width=0.99\linewidth]{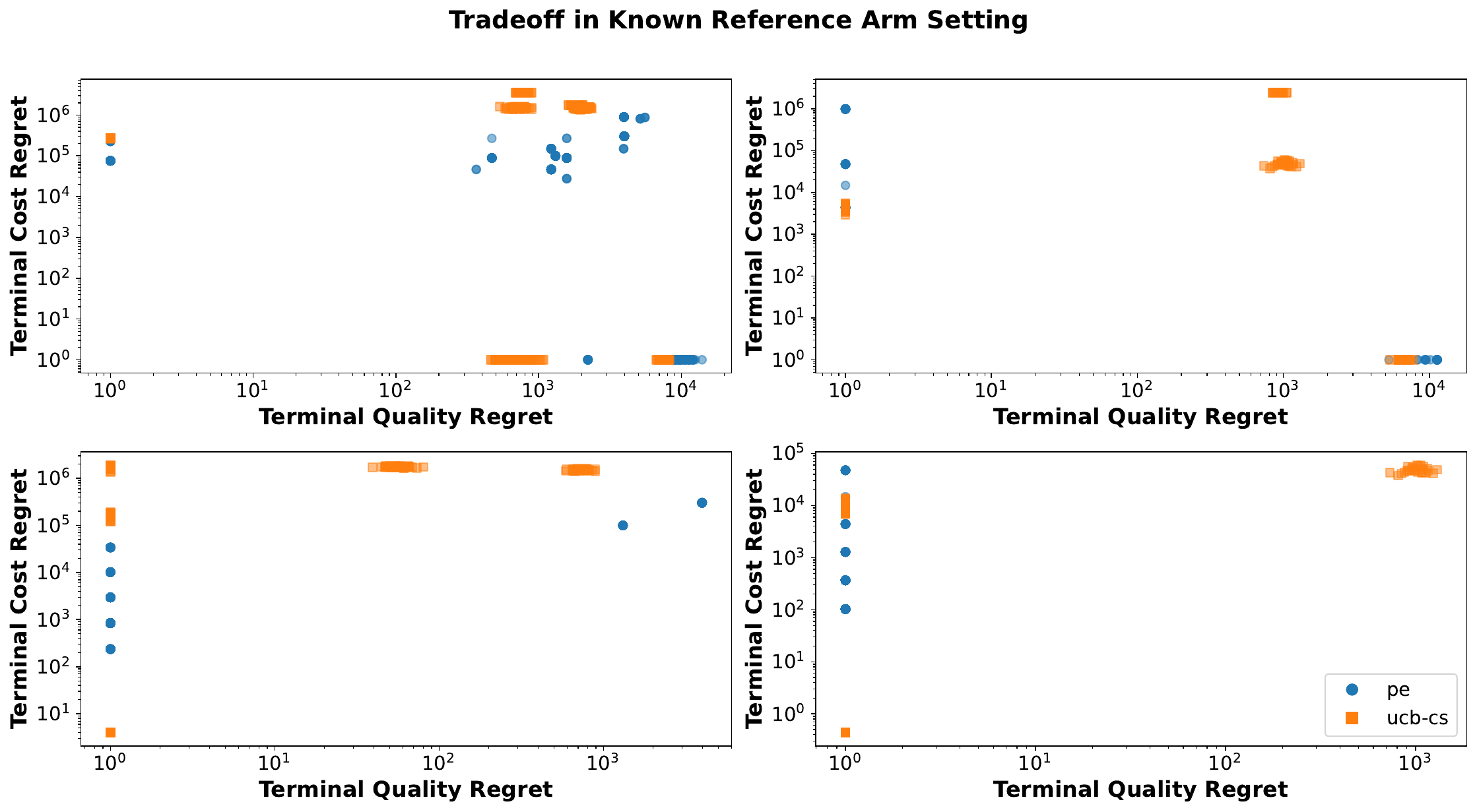}
    \caption{Trade off between cost and quality regret for known reference arm setting with MovieLens (left) and Goodreads. The data used for visualization remain the same as the experiment in Figure~\ref{fig:pe_combined}, Section~\ref{sec:experiments}. Top two plots we use the data from the varying $\ell$ in known reference arm setting, and bottom two we use the data from the varying $\alpha$ with arbitrary fixed $\ell$ in known reference arm setting.}
    \label{fig:tradeoff_known_ell}
\end{figure}

\begin{figure}
    \centering
    \includegraphics[width=0.6\linewidth]{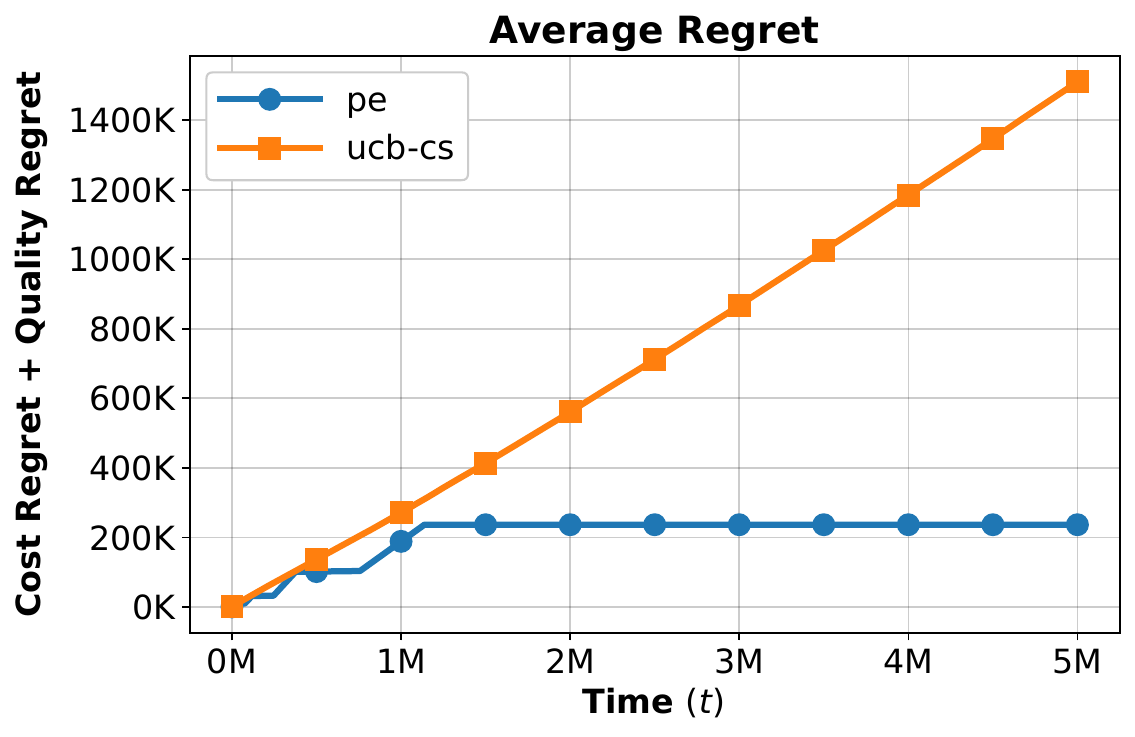}
    \caption{Fixed $\ell$ MovieLens Experiment. In the experiment: $\ell$ = 11, $\mu_{\ell} = 0.703$, $\mu_{\CS} = 0.668$.}
    \label{fig:movie_lens_experiment_pe}
\end{figure}

\begin{figure}
    \centering
    \includegraphics[width=0.8\linewidth]{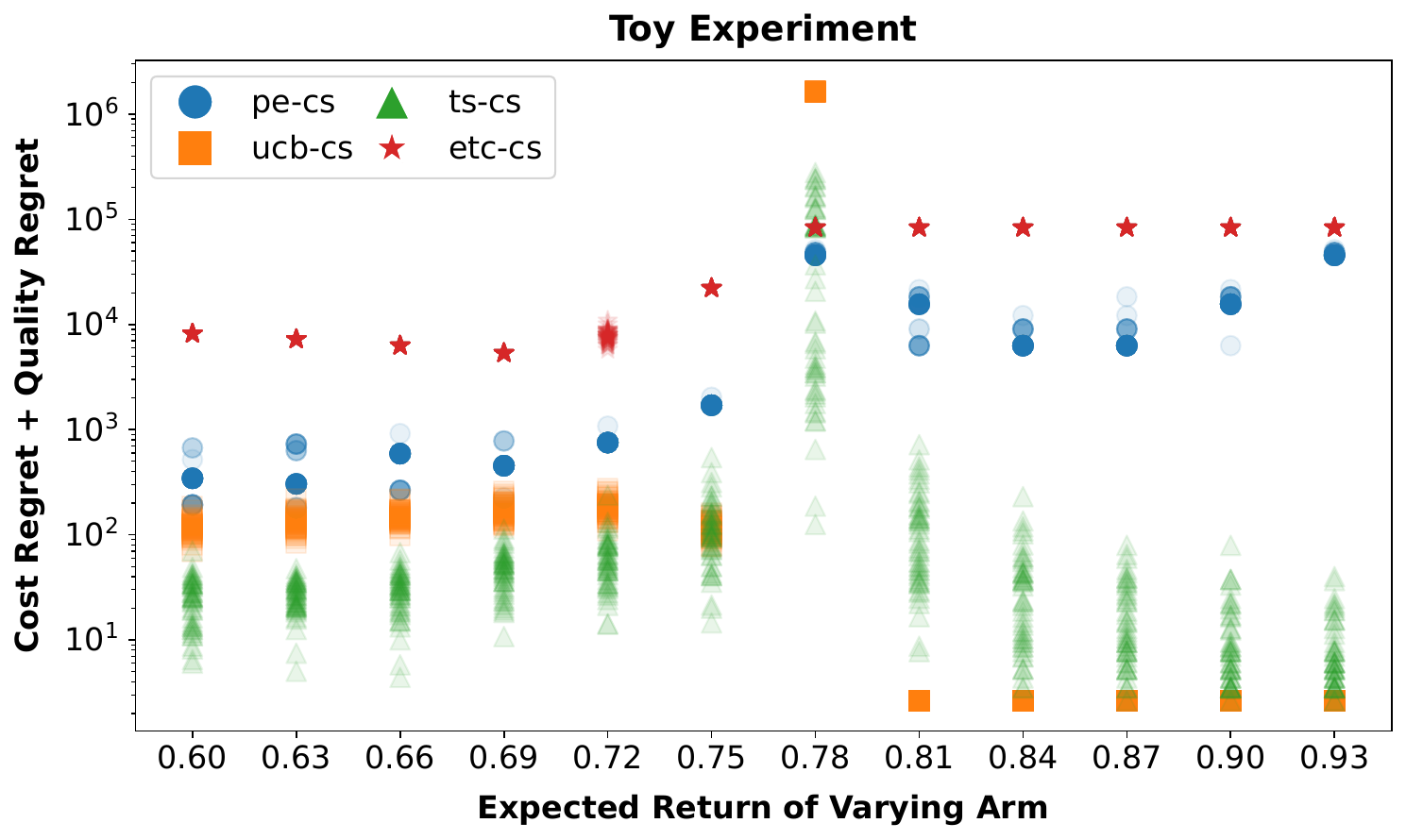}
    \caption{Toy Experiment performed using a family of four armed bandit instances. Expected rewards: $\mu_1 = 0.6, \ldots, 0.93, \mu_2 = 0.81, \mu_3 = 0.95, \mu_4 = 0.8$, and costs: $c_1 = 0.05, c_2 = 0.9, c_3 = 0.9, c_4 = 1.0$, subsidy factor: $\alpha=0.2$.}
    \label{fig:toy_experiment}
\end{figure}

\clearpage

\section{Appendix Algorithms}
\label{sec:appendix_algos}

In this Section we collect all the Algorithm blocks relevant to our work that couldn't find space in the main paper. First we present the sample\_and\_update(\hspace{0.6mm}) subroutine used in Algorithm~\ref{algo:pe} and Algorithm~\ref{algo:cs_pe} and also used in the baselines in the current Appendix section.
\begin{myfunction}
\SetAlgoNlRelativeSize{0}
\DontPrintSemicolon
\SetAlgoVlined
\nonl
\SetKwProg{Fn}{Function}{:}{\EndFn}
\Fn{
sample\_and\_update(
$k_t$: Chosen Arm, 
$\bm{\hat{\mu}}$: Empirical Means, 
$\bm{n}$: Sample Vector, 
$t$: Time-Step
)}{
    $r_t \gets \text{sample} (k_t) $
    \tcp*{sample reward for arm $k_t$}
    $ \hat{\mu}_{k_t} (t + 1) 
    \gets 
    (\hat{\mu}_{k_t} (t) \times n_{k_t} (t) + r_t) 
    / (n_{k_t} (t) + 1) $
    \tcp*{update mean}
    $ n_{k_t} (t + 1) \gets n_{k_t} (t) + 1$
    \tcp*{update number of samples}
    \Return{$\bm{\hat{\mu}} (t+1), \bm{n} (t + 1), t + 1$} \tcp*{return updated means, samples, and time-step}
}
\caption{Sample and Update Function to update variables in-place and return them}
\end{myfunction}

Next we have the specification for the sub-routine BAI(\hspace{0.6mm}) used in PE-CS Algorithm~\ref{algo:cs_pe}.
\begin{myfunction}
\SetAlgoNlRelativeSize{0}
\DontPrintSemicolon
\SetKwProg{Fn}{Function}{:}{\EndFn}
\nonl
\Fn{
BAI(
    $\bm{n}$: Sample Vector,
    $\bm{\hat{\mu}}$: Empirical Means,
    $\bm{\omega}$: Round Numbers,  
    $T$: Horizon, 
    $\activeArms$: Active Arms 
)
}
{
$\tilde{\Delta} \gets 2^{\,-\negthinspace\max_{i \in [\narms]} \omega_i}$\;
$\tau \gets \myceil{ \frac{ 2 \log \myparen{ T \tilde{\Delta}^2 } }{ \tilde{\Delta}^2 } }$\;
\For{$k \in \activeArms$}{
\If{$n_k < \tau$}{
    \Return{$k, \bm{\omega}, \activeArms$}
    \tcp*{next arm to be sampled, unchanged $\omega$ and $\activeArms$}
}
}
$\beta \gets \sqrt{\frac{\log \myparen{ T \tilde{\Delta}^2 } }{ 2 \tau }} $ \;
\For{$ i \in \activeArms $}{
    $\mu^{\text{UCB}}_i, \mu^{\text{LCB}}_i \gets \hat{\mu}_i + \beta, \hat{\mu}_i - \beta$\;
}
$\activeArms^+ \gets \{  i \in \activeArms: \mu^{\text{UCB}}_i \geq \max_{j \in \activeArms} \mu^{\text{LCB}}_j \}$
\tcp*{update set of active arms}
$k_t \gets \text{Uniform} \myparen{\activeArms^+} $
\tcp*{tentatively, the next arm to be sampled}
\For{$ i \in \activeArms^+ $}{
    $\omega_i \gets \omega_i + 1$
    \tcp*{increment round number for still active arms}
}
\Return{$k_t, \bm{\omega}, \activeArms^+$}
}
\caption{Best Arm Identification $\text{BAI}()$}
\label{algo:my_bai}
\end{myfunction}

\begin{myalgo}
\SetAlgoNlRelativeSize{0}
\DontPrintSemicolon
\Inputs{
Bandit instance $\nu$, 
Cost Vector $\mathbf{c}$, 
Horizon $T$, 
Exploration Budget $\tau$.
}
\Initialize{
Empirical means $\hat{\mu}_k = 0$, 
Number of Samples $n_k = 0$,
$\myforall k \in [\narms]$.
}
\While{$ t \leq K \tau $}{
    $k_t \gets t \mod \narms$\;  
    $\bm{\hat{\mu}} (t+1), \bm{n} (t+1), t \gets \text{sample\_and\_update}(k_t, \bm{\hat{\mu}} (t), \bm{n} (t), t)$
}
\While{$ K \tau < t \leq T $}{
    \For{$i \in [\narms]$}{
        $\beta_i (t) \gets \sqrt{\frac{2 \log T}{n_i (t)}}$\;
        $\mu^{\text{UCB}}_i \gets \min\mybraces{\hat{\mu}_i (t) + \beta_i (t), 1}$\;
        $\mu^{\text{LCB}}_i \gets \max\mybraces{\hat{\mu}_i (t) - \beta_i (t), 0}$\;
    }
    $\ell_t \gets \arg \max_{i \in [\narms]} \mu^{\text{LCB}}_i (t)$\;
    $\text{Feas} (t) \gets \mybraces{ i : \mu^{\text{UCB}}_i (t) \geq (1 - \alpha) \mu^{\text{LCB}}_{\ell_t} (t) } $\;
    $k_t \gets \arg \min_{i \in \text{Feas} (t)} c_i $\;
    $\bm{\hat{\mu}} (t+1), \bm{n} (t+1), t \gets \text{sample\_and\_update}(k_t, \bm{\hat{\mu}} (t), \bm{n} (t), t)$
}
\caption{Cost-Subsidized Explore-Then-Commit (ETC-CS)}
\label{algo:cs_etc}
\end{myalgo}

\begin{myalgo}
\SetAlgoNlRelativeSize{0}
\DontPrintSemicolon
\Inputs{
Bandit Instance $\nu$, 
Cost Vector $\mathbf{c}$, 
Subsidy Factor $\alpha$, 
Beta Priors and Binomial Likelihood (Bernoulli Rewards).
}
\Initialize{
Successes $S_k = 0$, 
Failures $F_k = 0$
$\myforall k \in [\narms]$.
}
\While{$ t \leq K$}{
    $k_t \gets t$\;
    $r_t \gets \texttt{sample} \myparen{ k_t } $\;
    $S_{k_t} (t + 1) \gets S_{k_t} (t) + r_t $\;
    $F_{k_t} (t + 1) \gets F_{k_t} (t) + r_t $\;
    $t \gets t + 1$\;
    }
    \While{$ K < t \leq \horizon $}{
        \For{$i \in [\narms]$}{
            $\theta_i (t) \sim \text{Beta} 
            \myparen{S_i (t) + 1, F_i (t) + 1} $ \;
        }
        $\ell_t \gets \arg \max_{i \in [\narms]} \theta_i (t)$\;
        $\text{Feas} (t) \gets \mybraces{ i : \theta_i (t) \geq (1 - \alpha) \theta_{\ell_t} (t) } $\;
        $k_t \gets \arg \min_{ i \in \text{Feas} (t) } c_i $\;
        $r_t \gets \texttt{sample} \myparen{k_t}$\;
        $S_{k_t} (t + 1) \gets S_{k_t} (t) + r_t $\;
        $F_{k_t} (t + 1) \gets F_{k_t} (t) + \myparen{ 1 - r_t } $\;
    }
\caption{Cost-Subsidized Thompson Sampling with Beta Priors (TS-CS)}
\label{algo:cs_ts}
\end{myalgo}
\begin{myalgo}
\SetAlgoNlRelativeSize{0}
\DontPrintSemicolon
\Inputs{
Bandit Instance $\nu$, 
Cost Vector $\mathbf{c}$, 
Horizon $T$, 
Subsidy factor $\alpha$.
}
\Initialize{
Empirical means $\hat{\mu}_k = 0$, 
Number of Samples $n_k = 0$
$\myforall k \in [\narms]$.
}
\While{$ t \leq \narms $}{
    $k_t \gets t$\;
    $\bm{\hat{\mu}} (t+1), \bm{n} (t+1), t \gets \text{sample\_and\_update}(k_t, \bm{\hat{\mu}} (t), \bm{n} (t), t)$
}
\While{$ \narms < t \leq T $}{
    \For{$i \in [\narms]$}{
        $\beta_i (t) \gets \sqrt{\frac{2 \log T}{n_i (t)}}$\;
        $\mu^{\text{UCB}}_i \gets \min \mybraces{\hat{\mu}_i (t) + \beta_i (t), 1}$\;
    }
    $\ell_t \gets \arg \max_{i \in [K]} \mu^{\text{UCB}}_i (t)$\;
    $\text{Feas} (t) \gets 
      \mybraces{ i \in [K] : \mu^{\text{UCB}}_i \geq (1 - \alpha) \times \mu^{\text{UCB}}_{\ell_t} } $\;
    $k_t \gets \arg \min_{i \in \text{Feas} (t)} c_i $\;
    $\bm{\hat{\mu}}(t+1), \bm{n}(t+1), t \gets \text{sample\_and\_update}(k_t, \bm{\hat{\mu}}(t), \bm{n}(t), t)$
}
\caption{Cost-Subsidized UCB (UCB-CS)}
\label{algo:cs_ucb}
\end{myalgo}
Next, we provide a precise specification of algorithms from prior work that we compare our methods against. In particular these are the ETC-CS, TS-CS, and UCB-CS Algorithms introduced by \cite{pmlr-v130-sinha21a} and specified in Algorithms \ref{algo:cs_etc}, \ref{algo:cs_ts}, and \ref{algo:cs_ucb} respectively.
For the upper bound on regret provided in \cite{pmlr-v130-sinha21a} to hold, we require the exploration budget $\tau$ to satisfy $\tau = c \myparen{\frac{\horizon}{\narms}}^{\frac{2}{3}}$ where $c$ is some unspecified proportionality constant. Based on a few trial runs and examples we pick $c=5$ since in the average case, the above seemed to give the best performance overall for the ETC-CS approach.

\newpage

\section{Asymmetric Pairwise Elimination}
\label{sec:asymmetric_pe}

\begin{figure}
    \centering
    \includegraphics[width=0.6\linewidth]{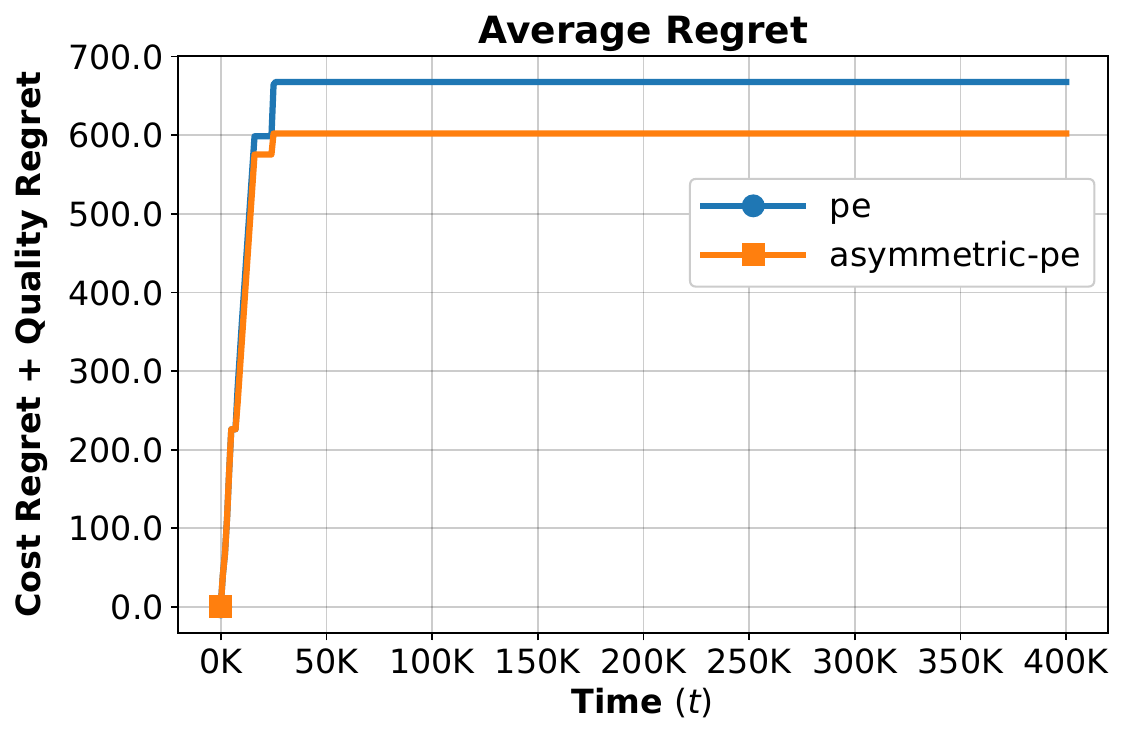}
    \caption{An experiment that illustrates potential savings in regret from the asymmetric-pe optimization. Bandit instance with Expected rewards: 
    $\mu_1 = 0.74, \mu_2 = 0.5, \mu_3 = 0.8, \mu_4 = 0.75$, and costs: $c_1 = 0.15, c_2 = 0.2, c_3 = 0.21, c_4 = 0.25$, subsidy factor: $\alpha=0.0$, $\ell = 3$.
    }
    \label{fig:asymm_supremacy_exp}
\end{figure}

Algorithms \ref{algo:pe} and \ref{algo:cs_pe} as described in Section \ref{sec:algos} use Function \ref{algo:pe_function} $\text{PE}()$ as a sub-routine. While in $\text{PE}()$ the round number $\omega_i$ is used to determine the stipulated number of samples for both the candidate arm $i$ and the reference arm $\ell$, this does not have to be the case. By the time we commence episode $i$ to evaluate the candidacy of arm $i$, we would have already accrued numerous samples of arm $\ell$. In particular, we denote the number of samples of arm $\ell$ as $n_\ell (t) = \tau_{\omega_{\ell}}$, where the vector $\bm{\omega}$, first introduced in Section \ref{sec:algos}, is a vector recording highest round up to which the samples of each arm have evolved. Consequently, $\omega_\ell$ is the highest round number reached for the samples of reference arm $\ell$.

Based on this observation about the greater progressed round number $\omega_\ell$ we create a variant of PE called Asymmetric-PE in Function \ref{algo:asymmetric_pe_function} that replaces Function \ref{algo:pe_function} in Algorithm \ref{algo:pe}.
\begin{myfunction}
\SetAlgoNlRelativeSize{0}
\DontPrintSemicolon
\SetKwProg{Fn}{Function}{:}{\EndFn}
\nonl
\Fn{
Asymmetric\_PE(
$\bm{\hat{\mu}}$: Empirical Means, 
$\ell$: Reference Arm, 
$\bm{n}$: Sample Vector, 
$T$: Horizon, 
$\bm{\omega}$: Round Numbers, 
$i$: Episode, 
$\alpha$: Subsidy Factor,
$\kappa$: Max Round Deviation
)
}
{
\For{$k \in \{ i, \ell \}$}{
    $\tilde{\Delta}_k 
    \gets 
    2^{ - \min \mybraces{\omega_i + \kappa,  \omega_k} }$
    \tcp*{when $k = i$ we use $\omega_i$, if $k = \ell$, we use the smaller of $\omega_\ell$ or $\omega_i + \kappa$.}
    $\tau_k \gets 
    \myceil{ \frac{2 \log (T \tilde{\gap}_k^2) }{\tilde{\gap}_k^2} }$
    \;
    \If{$n_k < \tau_k$}{
        \Return{$k, \bm{\omega}, i$}
        \tcp*{$k_t = k$, round numbers $\omega$ and ep. $i$ unchanged}
    }
    $\beta_k \gets 
    \sqrt{\frac{ \log (T \tilde{\gap}_k^2 ) }
    { 2 \tau_k }}$
    \;
}
\If{ $(1 - \alpha)
\myparen{
    \hat{\mu}_{\ell} + \beta_\ell
}
<
\hat{\mu}_{i} - \beta_i$}{
    \Return{$i, \bm{\omega}, \text{None}$}
    \tcp*{Declare $i$ as winner, set episode to None}
}
\ElseIf{
$\hat{\mu}_{i} + \beta_i < 
(1 - \alpha) 
\myparen{
\hat{\mu}_{\ell} - \beta_{\ell}
}$}{
    \Return{$i + 1, \bm{\omega}, i + 1$}
    \tcp*{Sample next candidate arm, Rounds $\omega$ unchanged, Update episode to that of next candidate arm}
}
\Else{
    $\omega_{i} \gets \omega_{i} + 1 $
    \tcp*{Always increment round of arm $i$}
    $\omega_{\ell} \gets \max \mybraces{\omega_i, \omega_{\ell}} $
    \tcp*{$\omega_\ell$ unchanged until $\omega_i > \omega_\ell$. After which $\omega_\ell$ tracks $\omega_i$}
    \Return{$i, \bm{\omega}, i$}
    \tcp*{Sample arm $i$, Stay in same episode}
}
}
\caption{Asymmetric Pairwise Elimination Function $\text{Asymmetric\_PE}()$}
\label{algo:asymmetric_pe_function}
\end{myfunction}

Asymmetric-PE described in Function \ref{algo:asymmetric_pe_function} has all the same inputs that conventional PE did. In addition, it has an input $\kappa$ called the Maximum round deviation. While we have no bound on how much larger $\omega_\ell$ is compared to $\omega_i$, when it comes to inferring the gap $\tilde{\Delta}_\ell$ corresponding to arm $\ell$ in Line 2 of Function \ref{algo:asymmetric_pe_function}, we restrict the round number we use to be at most $\kappa$ larger than the round $\omega_i$. 

Effectively, the use of the further advanced round number $\omega_\ell$ trades performance for tightness of the upper bound (as we shall see in the analysis of Section \ref{sec:pe}). We get an improvement in performance since $\beta_{\ell} \leq \beta_i$ potentially leading to a resolution in lines 7 or 9 of Function \ref{algo:asymmetric_pe_function} with a smaller value of $\beta_i$ thereby requiring fewer samples of candidate arm $i$.

\clearpage

\section{Preliminaries}
\label{sec:preliminaries}

In this section of the appendix, we collate the preliminary results required for the analysis of our cost-subsidy framework algorithms that follow in the forthcoming sections. The results stated without proof are standard results from the MAB literature.

\begin{definition}[Subgaussian Random Variable]
    \label{def:subgaussian}
    We say that $X$ is  $\sigma$-subgaussian if for any $\epsilon \geq 0$,
    \begin{align}
        \Pr \myparen{ X - \Expectation \mysqbrack{X} \geq \epsilon } &\leq 
        \exp{\myparen{\frac{-\epsilon^2}{2 \sigma^2}}}.
    \end{align}
\end{definition}

\begin{lemma}[Bounded random variables are Subgaussian, Example 5.6(c) in \cite{lattimore2020bandit}]
    If Random Variable $ X \in \mysqbrack{a, b} $ almost surely, then $X$ is $\frac{b - a}{2}$ subgaussian.
\end{lemma}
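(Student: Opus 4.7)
The plan is to prove the bound by the classical Chernoff-Hoeffding argument. First I would reduce to the zero-mean case via $Y := X - \Expectation[X]$, which lies almost surely in an interval of length $b - a$; the goal then reduces to showing $\Pr(Y \geq \epsilon) \leq \exp(-\epsilon^2/(2\sigma^2))$ with $\sigma = (b - a)/2$, which is exactly the bound required by Definition~\ref{def:subgaussian}.

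The engine of the proof is Hoeffding's moment-generating-function lemma: for every $\lambda \in \mathbb{R}$,
\[
\Expectation\mysqbrack{e^{\lambda Y}} \leq \exp\bigl(\lambda^2 (b-a)^2 / 8\bigr).
\]
To derive this I would use convexity of $y \mapsto e^{\lambda y}$ to upper bound $e^{\lambda y}$ on the support interval by the affine interpolant between its values at the two endpoints. Taking expectations and invoking $\Expectation[Y] = 0$ would cancel the term linear in $y$, leaving an expression depending only on $\lambda$ and $b - a$. After the change of variables $u = \lambda(b - a)$ and $p = (\Expectation[X] - a)/(b - a) \in [0,1]$, the remaining task is to verify $L(u) := -p u + \log(1 - p + p e^u) \leq u^2/8$. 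Since $L(0) = L'(0) = 0$ and a short direct computation gives $L''(u) = q(1 - q)$ with $q := p e^u/(1 - p + p e^u) \in (0,1)$, a Taylor expansion with remainder concludes.

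Given the MGF bound, the remainder is the Chernoff trick. For any $\lambda > 0$, Markov's inequality yields
\[
\Pr(Y \geq \epsilon) \leq e^{-\lambda \epsilon}\, \Expectation\mysqbrack{e^{\lambda Y}} \leq \exp\bigl(-\lambda \epsilon + \lambda^2 (b-a)^2/8\bigr),
\]
and optimizing the exponent over $\lambda > 0$ at $\lambda^\star = 4\epsilon/(b-a)^2$ gives $\exp(-2\epsilon^2/(b-a)^2) = \exp(-\epsilon^2/(2\sigma^2))$ as required.

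The only genuinely delicate step is the second-derivative bound $L''(u) = q(1 - q) \leq 1/4$ inside Hoeffding's lemma, which rests on the elementary maximization of $q(1-q)$ over $(0,1)$; the rest of the argument is routine convexity, Markov's inequality, and single-variable calculus.
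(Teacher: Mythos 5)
Your proof is correct: reducing to the centered variable, applying Hoeffding's MGF lemma $\Expectation\mysqbrack{e^{\lambda Y}} \leq \exp\myparen{\lambda^2 (b-a)^2/8}$ (via the convexity/interpolation bound and the $L''(u)=q(1-q)\leq 1/4$ estimate), and then optimizing the Chernoff bound at $\lambda^{\star} = 4\epsilon/(b-a)^2$ yields exactly the one-sided tail $\exp\myparen{-2\epsilon^2/(b-a)^2} = \exp\myparen{-\epsilon^2/(2\sigma^2)}$ demanded by the paper's Definition of subgaussianity with $\sigma = (b-a)/2$. The paper itself states this lemma without proof as a standard result (citing Example 5.6(c) of Lattimore and Szepesv\'ari), and the standard derivation there is the same Hoeffding-lemma-plus-Chernoff argument you give, so your proposal matches the intended route.
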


\begin{lemma}[Hoeffding Bound, Section 5.4 in \cite{lattimore2020bandit}]
    Let $X_1, X_2, \ldots, X_n$ be $n$ independent random variables, each bounded within the interval $[a, b]$ : $a \leq X_i \leq b$. The empirical mean of these variables is given by,
    \begin{align}
        \Bar{X} &= \frac{1}{n} \sum_{i=1}^{n} X_i.
    \end{align}
    Then Hoeffding's inequality states that,
    \begin{align}
        \Pr \myparen{ \Bar{X} - \Expectation \mysqbrack{ X } \geq t } \leq \exp \myparen{ -\frac{2 n t^2}{ (b - a)^2 } }.
    \end{align}
    \label{lemma:hoeffding_bound}
\end{lemma}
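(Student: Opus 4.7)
The plan is to use the standard Chernoff--Markov route, reducing the tail bound on the empirical mean to a bound on a moment generating function (MGF) of a sum of independent bounded random variables, and then close the argument with Hoeffding's lemma followed by an optimization over the free parameter.

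First, I would introduce the centered sum $S_n = \sum_{i=1}^n (X_i - \Expectation[X_i])$ and observe that the event $\{\Bar{X} - \Expectation[X] \geq t\}$ is identical to $\{S_n \geq nt\}$. For any $\lambda > 0$, monotonicity of $y \mapsto e^{\lambda y}$ together with Markov's inequality gives
$$
\Pr(S_n \geq nt) \;=\; \Pr\bigl(e^{\lambda S_n} \geq e^{\lambda n t}\bigr) \;\leq\; e^{-\lambda n t}\, \Expectation\bigl[e^{\lambda S_n}\bigr].
$$
Independence of the $X_i$ then lets me factor the MGF as $\Expectation[e^{\lambda S_n}] = \prod_{i=1}^n \Expectation[e^{\lambda(X_i - \Expectation[X_i])}]$.

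Second, the central ingredient is Hoeffding's lemma: for any mean-zero random variable $Y$ supported on an interval of length $b-a$, $\Expectation[e^{\lambda Y}] \leq \exp(\lambda^2 (b-a)^2/8)$. Since each centered summand $X_i - \Expectation[X_i]$ lies in an interval of length $b-a$ almost surely, applying the lemma termwise and multiplying yields
$$
\Pr(S_n \geq nt) \;\leq\; \exp\!\left( -\lambda n t + \tfrac{n \lambda^2 (b-a)^2}{8} \right).
$$
The right-hand side is a convex quadratic in $\lambda$; its unconstrained minimizer is $\lambda^{*} = 4 t / (b-a)^2 > 0$, and substituting back gives the advertised bound $\exp\!\bigl(-2 n t^2 / (b-a)^2\bigr)$.

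The main obstacle is Hoeffding's lemma itself, which is not entirely trivial. The cleanest route is to study the cumulant generating function $\psi(\lambda) = \log \Expectation[e^{\lambda Y}]$ of a centered random variable $Y \in [a,b]$: one checks $\psi(0) = 0$ and $\psi'(0) = \Expectation[Y] = 0$, while $\psi''(\lambda)$ equals the variance of $Y$ under the exponentially tilted measure $d\tilde{\PP}/d\PP \propto e^{\lambda Y}$. Since a random variable supported in an interval of length $b-a$ has variance at most $(b-a)^2/4$ (Popoviciu's inequality, provable by maximizing over two-point distributions), a second-order Taylor expansion of $\psi$ about $0$ yields $\psi(\lambda) \leq \lambda^2 (b-a)^2/8$. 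Everything else is routine algebra and the optimization in $\lambda$.
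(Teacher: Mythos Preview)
Your proof is correct and follows the standard Chernoff--Hoeffding argument. The paper does not actually prove this lemma; it is listed in the Preliminaries section among ``results stated without proof'' and is cited directly from Section~5.4 of \cite{lattimore2020bandit}, so there is no paper proof to compare against.
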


\begin{lemma}[Iterated expectation over mutually exclusive and exhaustive events]
Let $X$ be any integrable random variable over probability space $\myparen{ \Omega, \mathcal{F}, \Pr }$, and let $\mybraces{E_i}_{i=1}^{n}$ be a collection of mutually exclusive and exhaustive measurable events. That is $\bigcup_{i=1}^{n} E_i = \Omega$ and $E_i \cap E_j = \phi, \myforall i, j \in [n], i \neq j $. Then the following identity holds,
\begin{align}
    \Expectation
    \mysqbrack{
        X
    }
    &=
    \sum_{i=1}^{n}
    \Expectation
    \mysqbrack{
        X \mid E_i
    }
    \Pr \myparen{E_i}
    .
\end{align}
As a special case if the events are just some $E$ and its complement $E^c$, then,
\begin{align}
    \Expectation
    \mysqbrack{
        X
    }
    &=
    \Expectation
    \mysqbrack{
        X \mid E
    }
    \Pr \myparen{E}
    +
    \Expectation
    \mysqbrack{
        X \mid E^c
    }
    \Pr \myparen{E^c}
    .
\end{align}
\label{lemma:iterated_expectation_lemma}
\end{lemma}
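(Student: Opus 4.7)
The plan is to prove the law of total expectation by decomposing $X$ along the partition using indicator functions, applying linearity of expectation to the resulting finite sum, and then invoking the elementary definition of conditional expectation given an event. The argument is purely manipulative, so I will organize it in three short steps.

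First, because $\{E_i\}_{i=1}^{n}$ is mutually exclusive and exhaustive, the indicators satisfy $\sum_{i=1}^{n} \mathbf{1}_{E_i}(\omega) = 1$ for every $\omega \in \Omega$. Multiplying by $X$ yields the pointwise identity $X = \sum_{i=1}^{n} X \mathbf{1}_{E_i}$. Integrability of $X$ together with the domination $|X \mathbf{1}_{E_i}| \leq |X|$ ensures each summand is integrable, so linearity of expectation over a finite collection gives $\mathbb{E}[X] = \sum_{i=1}^{n} \mathbb{E}[X \mathbf{1}_{E_i}]$.

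Second, I convert each term $\mathbb{E}[X \mathbf{1}_{E_i}]$ into $\mathbb{E}[X \mid E_i]\, \Pr(E_i)$. For indices with $\Pr(E_i) > 0$, this is immediate from the elementary definition $\mathbb{E}[X \mid E_i] := \mathbb{E}[X \mathbf{1}_{E_i}] / \Pr(E_i)$, rearranged. For indices with $\Pr(E_i) = 0$, the left side vanishes since $X \mathbf{1}_{E_i} = 0$ almost surely, and the right side is taken to be $0$ under the standard convention of reading $\mathbb{E}[X \mid E_i]\, \Pr(E_i)$ as $0$ whenever $\Pr(E_i) = 0$, so the per-index identity still holds. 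Summing over $i \in [n]$ produces the stated general identity, and the special case is obtained instantaneously by taking $n = 2$ with $E_1 = E$ and $E_2 = E^c$, since $\{E, E^c\}$ is trivially a partition of $\Omega$.

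There is essentially no obstacle to overcome; the only point worth flagging is the treatment of null-probability events, which is a matter of convention rather than a genuine technical difficulty. Because this lemma is a standard preliminary used repeatedly in the later appendix proofs of the PE and PE-CS regret bounds (e.g., for conditioning on ``proper BAI outcome'' versus its complement), the brevity and cleanness of the derivation are what matter, and the indicator-decomposition proof above supplies both.
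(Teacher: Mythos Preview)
Your proof is correct and complete. It differs from the paper's argument: the paper defines the sub-$\sigma$-algebra $\mathcal{G}$ generated by the partition $\{E_i\}$ and invokes the tower property $\mathbb{E}[X] = \mathbb{E}[\mathbb{E}[X \mid \mathcal{G}]]$, then identifies $\mathbb{E}[X \mid \mathcal{G}]$ with the simple function $\sum_i \mathbb{E}[X \mid E_i]\mathbf{1}_{E_i}$. Your indicator-decomposition route is more elementary and self-contained, requiring only linearity of expectation and the event-level definition of conditional expectation rather than the measure-theoretic tower property; it also handles the $\Pr(E_i)=0$ case explicitly, which the paper's proof leaves implicit. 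The paper's version is terser but presupposes more machinery; for a preliminaries section supporting the later PE and PE-CS analyses, either is entirely adequate.
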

\begin{proof}
    Define a sub $\sigma$-algebra of $\mathcal{F}$, $\mathcal{G} = \mybraces{\phi, E_1, E_2, \ldots, E_n, \Omega}$. Then,
    \begin{align}
        \Expectation
        \mysqbrack{
            X
        }
        &=
        \Expectation
        \mysqbrack{
            \Expectation
            \mysqbrack{
                X
                \mid
                \mathcal{G}
            }
        }
        \text{ (Because $\mathcal{G} \subset \mathcal{F}$)}\\
        &=
        \sum_{i=1}^{n}
        \Expectation
        \mysqbrack{
            X \mid E_i
        }
        \Pr \myparen{E_i}.
    \end{align}
\end{proof}

\begin{lemma}[Expectation is at most equal to larger of the conditioned expectations]
    Let $X$ be any integrable random variable over probability space $\myparen{ \Omega, \mathcal{F}, \Pr }$, and let $\mybraces{E_i}_{i=1}^{n}$ be a collection of mutually exclusive and exhaustive measurable events. That is $\bigcup_{i=1}^{n} E_i = \Omega$ and $E_i \cap E_j = \phi, \myforall i, j \in [n], i \neq j $. Then,
    \begin{align}
        \Expectation
        \mysqbrack{
            X
        }
        &\leq
        \max_{i \in [n]} 
        \mybraces
        {
        \Expectation
            \mysqbrack{
                X \mid E_i
        }
        }.
    \end{align}
    \label{lemma:expectation_less_than_max_lemma}
\end{lemma}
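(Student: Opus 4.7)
The plan is to invoke the iterated expectation identity (Lemma \ref{lemma:iterated_expectation_lemma}) and then observe that the resulting decomposition is a convex combination of the conditional expectations. Since any convex combination of real numbers is at most the largest among them, the bound follows in essentially one line.

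Concretely, I would proceed as follows. First, apply Lemma \ref{lemma:iterated_expectation_lemma} to the partition $\{E_i\}_{i=1}^n$ to write
\begin{align*}
\Expectation \mysqbrack{X}
&=
\sum_{i=1}^{n}
\Expectation \mysqbrack{X \mid E_i}
\Pr \myparen{E_i}.
\end{align*}
Second, upper bound each conditional expectation by the maximum:
\begin{align*}
\Expectation \mysqbrack{X}
&\leq
\max_{j \in [n]}
\mybraces{
\Expectation \mysqbrack{X \mid E_j}
}
\sum_{i=1}^{n}
\Pr \myparen{E_i}.
\end{align*}
Third, use the fact that $\{E_i\}_{i=1}^n$ is mutually exclusive and exhaustive so that $\sum_{i=1}^{n} \Pr(E_i) = \Pr(\Omega) = 1$, which yields the claimed bound.

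There is no real obstacle in this argument; the content of the lemma is entirely contained in the preceding iterated expectation identity together with the elementary fact that a convex combination is dominated by its maximum entry. The only thing to be slightly careful about is that some $\Pr(E_i)$ may equal zero, in which case the corresponding conditional expectation is not well defined; however such terms contribute zero to the sum and can be dropped without affecting the inequality (equivalently, one restricts the maximum to indices $i$ with $\Pr(E_i) > 0$, which is harmless since the reverse direction of the bound is not claimed).
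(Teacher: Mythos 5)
Your proposal is correct and matches the paper's own proof essentially verbatim: both apply the iterated expectation identity over the partition and then bound the resulting convex combination by its maximum term using $\sum_i \Pr(E_i)=1$. The extra remark about indices with $\Pr(E_i)=0$ is a harmless refinement the paper omits.
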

\begin{proof}
    Lemma \ref{lemma:expectation_less_than_max_lemma} can be considered a Corollary to Lemma \ref{lemma:iterated_expectation_lemma} as is illustrated by the following proof,
    \begin{align}
        \Expectation
        \mysqbrack{
            X
        }
        &=
        \sum_{i=1}^{n}
        \Expectation
        \mysqbrack{
            X
            \mid
            E_i
        }
        \Pr
        \myparen{ E_i }
        \text{ (From the proof of Lemma \ref{lemma:iterated_expectation_lemma}).}
        \\
        &\leq
        \myparen{
        \sum_{i=1}^n
        \Pr
        \myparen{ E_i }
        }
        \cdot
        \myparen{
        \max_{i \in [n]}
        \mybraces{
        \Expectation
        \mysqbrack{
            X
            \mid
            E_i
        }
        }
        }
        \\
        &=
        \max_{i \in [n]}
        \mybraces{
            \Expectation
            \mysqbrack{
                X
                \mid
                E_i
            }
        }.
    \end{align}
\end{proof}

\begin{lemma}[Regret Decomposition Lemma, Lemma 4.5 in \cite{lattimore2020bandit}]
    For any policy $\pi$ and stochastic bandit environment $\nu$ with $\narms$ arms, for horizon $\horizon$, the Expected Cumulative Regret $\text{Reg}_{\pi} \myparen{T, \nu}$ of policy $\pi$ in $\nu$ satisfies, 
    \begin{align}
        \Expectation
        \mysqbrack
        {
            \text{Reg}_{\pi} \myparen{T, \nu}
        }
        &=
        \sum_{i \in [\narms]}
        \gap_i
        \Expectation
        \mysqbrack{
            n_i (\horizon)
        }.
    \end{align}
    This result may be trivially generalized to other notions of regret where the gap determining the incremental regret due to arm $i$ is some arbitrary $\gap_{X, i}$. In this case, the regret decomposition shall be,
    \begin{align}
        \Expectation
        \mysqbrack{
            \text{Reg}^X_{\pi} \myparen{T, \nu}
        }
        &=
        \sum_{i \in [\narms]}
        \gap_{X, i}
        \Expectation
        \mysqbrack{
            n_i (\horizon)
        }.
        \label{eq:regret_decomposition_generic}
    \end{align}
    In particular in our problem we have Cost and Quality regret which are,
    \begin{align}
        \Expectation \mysqbrack{ \costRegret(\horizon, \nu) }
        &= 
        \sum_{i \in [\narms]} \gap_{C, i}
        \Expectation \mysqbrack{ n_i \myparen{\horizon} }
        \label{eq:regret_decomposition_lemma_on_cost_regret}\\
        \Expectation \mysqbrack{ \qualityRegret(\horizon, \nu) } 
        &= \sum_{i \in [\narms]} \gap^{+}_{Q, i}
        \Expectation \mysqbrack{ n_i \myparen{\horizon} }.
        \label{eq:regret_decomposition_lemma_on_quality_regret}   
\end{align}
\label{lemma:regret_decomposition_lemma}
\end{lemma}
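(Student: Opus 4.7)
My plan is to rewrite the time-indexed sum defining cumulative regret as an arm-indexed sum via indicator functions, and then invoke linearity of expectation. Starting from the pathwise expression $\text{Reg}_\pi(T,\nu) = \sum_{t=1}^{T}(\mu^* - \mu_{k_t})$, I would expand $\mu^* - \mu_{k_t} = \sum_{i=1}^{K} \Delta_i \, \mathbb{I}[k_t = i]$ using that exactly one arm is pulled each round. Swapping the two finite sums and using $n_i(T) = \sum_{t=1}^{T} \mathbb{I}[k_t = i]$ then yields the pathwise identity
\[
\text{Reg}_\pi(T,\nu) = \sum_{i=1}^{K} \Delta_i \, n_i(T).
\]
Taking expectation under the policy-induced measure and pulling the deterministic gaps $\Delta_i$ through the finite sum gives the stated decomposition.

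For the generalized identity (\ref{eq:regret_decomposition_generic}) with an arbitrary per-arm gap $\Delta_{X,i}$, the same manipulation applies verbatim after substituting $\Delta_{X,i}$ in place of $\Delta_i$. For the Cost and Quality Regret specializations (\ref{eq:regret_decomposition_lemma_on_cost_regret})--(\ref{eq:regret_decomposition_lemma_on_quality_regret}) we set $\Delta_{X,i} = \Delta_{C,i}^+$ and $\Delta_{X,i} = \Delta_{Q,i}^+$ respectively. Importantly, the nonnegative clipping $(\cdot)^+$ is already internal to the definitions given in Section~\ref{sec:introduction}, so it does not interact with the rearrangement: at each time step, the deterministic identity $(c_{k_t} - c_{a^*})^+ = \sum_{i=1}^K \Delta_{C,i}^+ \, \mathbb{I}[k_t = i]$ holds because only a single indicator is active, and likewise for the quality version $(\mu_{\text{CS}} - \mu_{k_t})^+ = \sum_{i=1}^K \Delta_{Q,i}^+ \, \mathbb{I}[k_t = i]$.

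There is essentially no substantive obstacle here: the entire argument reduces to a finite-sum rearrangement followed by linearity of expectation, and all random objects involved ($n_i(T) \leq T$ and the bounded indicators $\mathbb{I}[k_t=i]$) are trivially integrable, so no convergence or measurability subtleties arise. The lemma is foundational but purely combinatorial; its role in our paper is to reduce the cost and quality regret bounds to per-arm sample-count bounds $\mathbb{E}[n_i(T)]$, which is precisely the reduction that Theorems~\ref{thm:pe_upper_bound} and \ref{thm:pe_cs_upper_bound} exploit when they separately control $\mathbb{E}[n_i(T)]$ for low-cost arms, high-cost arms, and the reference arm.
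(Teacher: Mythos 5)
Your argument is correct and is exactly the canonical proof of this lemma: the paper itself states it without proof as a standard result (Lemma 4.5 of Lattimore and Szepesv\'ari), and your indicator-based rearrangement $\sum_t (\cdot) = \sum_i \gap_{X,i}\, n_i(\horizon)$ followed by linearity of expectation is precisely that standard argument, with the clipped gaps posing no difficulty since exactly one indicator fires per time step and the gaps are deterministic. Your use of $\gap_{C,i}^{+}$ for cost regret is the right reading (it matches the definition in Section~\ref{sec:introduction} and the later use in the proof of Theorem~\ref{thm:pe_upper_bound}), even though the lemma as printed writes $\gap_{C,i}$ without the clip.
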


\clearpage

\section{Instance Dependent Lower Bounds for MAB-CS}
\label{sec:lower_bounds}
\section*{Lower Bounds for the Known Reference Arm Setting}

We consider the case where the reference arm, denoted by arm $\ell$, is known. Let  $\nu = \{\nu_i\}_{i=1,\ldots, K, \ell}$ denote the vector of reward distributions associated with all arms $i=1, \ldots, K, \ell$, where the index of the reference arm can be thought of as $\ell = K + 1$. Let  
$\mathcal{M}$ denote the {\em model} under consideration that is a collection of all possible instances $\nu$. Recall that without loss of generality we assume the arm indices are cost ordered, i.e., $c_1 < c_2 < \cdots < c_K < c_{\ell}$, and that $a^*$ denotes the optimal action defined as the cheapest arm whose mean reward meets the {\em feasibility} threshold $\mu_{\textrm{CS}}$. In the known reference arm setting, the feasibility threshold is defined as $\mu_{\textrm{CS}} =  (1-\alpha) \mu_{\ell}$ for some $\alpha \in [0,1)$ where $\mu_{\ell}$ is the mean reward of the reference arm $\ell$, which yields $a^* =\arg \min_{\{i: \mu_i \geq (1-\alpha) \mu_{\ell}\} }{c_i}$.  

\begin{definition}[Consistent Policies]
A policy $\pi$ is consistent if for all bandit instances $\nu$ and for all arms $i \neq a^*$,
$\Expectation \mysqbrack{ n_i \myparen{\horizon} } = o(T^{\gamma})$ for all $0 < \gamma \leq 1$.
\end{definition}

\begin{definition}[The key quantity $D_{\textrm{inf}}$]
Let \textrm{KL} denote the Kullback-Leibler divergence between two probability distributions. Given a distribution $\nu_i \in \mathcal{M}$ and a real number $x$, we define
\begin{align*}
D_{\textrm{inf}}(\nu_i, x) = \inf \left\{ \textrm{KL} (\nu_i, \nu_i'): \ \ \nu_i' \in \mathcal{M} \ \ \ \textrm{ and  } \ \  {E}[\nu_i'] > x  \right\},
\end{align*}
where ${E}[\nu_i']$ denotes the mean of the distribution $\nu_i'$. 
We also need a slightly modified version of this key quantity defined as follows:
\begin{align*}
\tilde{D}_{\textrm{inf}}(\nu_i, x) = \inf \left\{ \textrm{KL} (\nu_i, \nu_i'): \ \ \nu_i' \in \mathcal{M} \ \ \ \textrm{ and  } \ \  {E}[\nu_i'] \leq x  \right\}.
\end{align*}
Although not always explicitly stated, we will be using ${D}_{\textrm{inf}}(\nu_i, x)$ for distributions $\nu_i$ with $\mu_i<x$ and $\tilde{D}_{\textrm{inf}}(\nu_i, x)$ for distributions $\nu_i$ with $\mu_i > x$.  
\end{definition}

Proofs of both Theorem \ref{thm:pe_lower_bound_ref_arm} and Theorem \ref{thm:pe_lower_bound_low_cost} follow arguments similar to those used in establishing the well-known lower bounds for the classical multi-armed bandit setting. Here, we adopt the approach presented in \cite{garivier2019explore} that is based on the following fundamental inequality (\cite{garivier2019explore}[F]): For any $\nu, \nu'$,
\begin{align}
\sum_{i \in \{1,\ldots,K, \ell\}} 
\mathbb{E}_{\nu}[n_i(T)] \textrm{KL}(\nu_i, \nu_i') 
&\geq 
\textrm{kl}(\mathbb{E}_{\nu}[Z],\mathbb{E}_{\nu'}[Z])
\label{eq:fundamental_lower_bound}
\end{align}
where \textrm{kl} denotes the Kullback-Leibler divergence for Bernoulli distributions, i.e.,
\[
\forall p,q \in [0,1]^2, \qquad \textrm{kl} (p,q) = p \log\frac{p}{q}+(1-p)\log \frac{1-p}{1-q}
,
\]
and where $Z$ is any random variable with values in $[0,1]$ (that is measurable with respect to the probability space that all reward random variables  are defined on). 

We will use (\ref{eq:fundamental_lower_bound}) with $Z=\frac{n_i(T)}{T}$ for some arm $i$, and will also rely on the bound
\begin{align}
    \textrm{kl} (p,q) =p \log\frac{p}{q}+(1-p)\log \frac{1-p}{1-q} \geq (1-p) \log \frac{1}{1-q} - \log 2 
    \label{eq:kl_bound}
\end{align}

\begin{theorem}[Lower bound for the number of pulls of {\em low-cost} arms]
For any bandit instance $\nu$ over horizon $\horizon$,  for all consistent strategies, we have 
\begin{align*}
\liminf_{\horizon \to \infty}\frac{\Expectation_\nu
\mysqbrack{n_i \myparen{\horizon}}}{\log \horizon}
\geq \frac{1}{D_{\textrm{inf}}(\nu_i, (1-\alpha) \mu_{\ell})}, \qquad \textrm{for arms} \ \ i=1, \ldots, a^*-1. 
\end{align*}
\label{thm:pe_lower_bound_low_cost}
\end{theorem}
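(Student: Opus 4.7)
The plan is to apply the change-of-measure machinery set up in the excerpt (specifically inequality~(\ref{eq:fundamental_lower_bound}) together with the Pinsker-type bound~(\ref{eq:kl_bound})) to a judiciously chosen alternative instance that makes the sub-optimal arm $i$ look optimal. Fix $i \in \{1, \ldots, a^*-1\}$ in the instance $\nu$. Since $i < a^*$ and arms are cost-ordered, we have $c_i < c_{a^*}$ and $\mu_i < (1-\alpha)\mu_\ell$. The strategy is: for every alternative marginal distribution $\nu_i'$ with $E[\nu_i'] > (1-\alpha)\mu_\ell$, build the alternative instance $\nu' = (\nu_1, \ldots, \nu_{i-1}, \nu_i', \nu_{i+1}, \ldots, \nu_K, \nu_\ell)$ that differs from $\nu$ only in the $i$-th coordinate. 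Under $\nu'$, arm $i$ is feasible and cheaper than every other feasible arm, so the unique optimal action in $\nu'$ is $i$ itself.

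Next, I would instantiate~(\ref{eq:fundamental_lower_bound}) with the $[0,1]$-valued statistic $Z = n_i(T)/T$. Because $\nu$ and $\nu'$ share all marginals except at coordinate $i$, all divergence terms on the left-hand side vanish except the one for arm $i$, leaving
\begin{align*}
\mathbb{E}_\nu[n_i(T)] \cdot \textrm{KL}(\nu_i, \nu_i') \;\geq\; \textrm{kl}\bigl(\mathbb{E}_\nu[Z], \mathbb{E}_{\nu'}[Z]\bigr).
\end{align*}
Consistency of $\pi$ is then used twice: in $\nu$, arm $i$ is sub-optimal, so $\mathbb{E}_\nu[n_i(T)] = o(T^\gamma)$ and hence $\mathbb{E}_\nu[Z] \to 0$; in $\nu'$, arm $i$ is the optimal action, so $\mathbb{E}_{\nu'}[n_j(T)] = o(T^\gamma)$ for every $j \neq i$, whence $T - \mathbb{E}_{\nu'}[n_i(T)] = o(T^\gamma)$ and $1 - \mathbb{E}_{\nu'}[Z] = o(T^{\gamma - 1})$ for any $\gamma \in (0,1)$.

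Applying~(\ref{eq:kl_bound}) with $p = \mathbb{E}_\nu[Z]$ and $q = \mathbb{E}_{\nu'}[Z]$, and substituting the two limits from the previous step, gives
\begin{align*}
\textrm{kl}(p,q) \;\geq\; (1 - o(1)) \log \frac{1}{1-q} - \log 2 \;\geq\; (1 - o(1))(1-\gamma) \log T - O(1).
\end{align*}
Dividing both sides by $\log T$ and taking $\liminf_{T \to \infty}$ yields $\liminf_T \mathbb{E}_\nu[n_i(T)]/\log T \geq (1-\gamma)/\textrm{KL}(\nu_i,\nu_i')$, and since $\gamma \in (0,1)$ was arbitrary the $(1-\gamma)$ factor disappears. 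Finally, taking the infimum over all admissible $\nu_i'$ with $E[\nu_i'] > (1-\alpha)\mu_\ell$ produces the claimed bound with $D_{\textrm{inf}}(\nu_i, (1-\alpha)\mu_\ell)$ in the denominator.

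The only mildly delicate point is verifying that arm $i$ is the \emph{unique} optimal action in the constructed $\nu'$, so that consistency forces every other arm (including the reference arm $\ell$) to be pulled sub-polynomially often under $\nu'$; this is immediate from cost-ordering provided we allow $E[\nu_i']$ strictly above $(1-\alpha)\mu_\ell$, which is exactly the feasible set in the definition of $D_{\textrm{inf}}$. Everything else is a routine translation of the classical Lai--Robbins / Garivier--M\'enard--Stoltz lower-bound argument to the cost-subsidy feasibility threshold, and no new concentration inequalities are needed beyond~(\ref{eq:fundamental_lower_bound})--(\ref{eq:kl_bound}).
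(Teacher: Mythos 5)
Your proposal is correct and follows essentially the same route as the paper's proof: the same single-coordinate modification $\nu'$ making arm $i$ the unique optimal action, the same instantiation of the fundamental inequality with $Z = n_i(T)/T$, the same use of consistency in both instances, the Pinsker-type bound, and the final optimization over $\nu_i'$ yielding $D_{\textrm{inf}}(\nu_i,(1-\alpha)\mu_\ell)$. No gaps to report.
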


\proof{
Given any bandit instance $\nu$ and any arm $i=1, \ldots, a^*-1$, we consider a modified instance $\nu'$ where $i$ is the unique optimal action: $\nu_j' = \nu_j$ for all $j \in \{1,\ldots, K, \ell\} - \{i\}$ and $\nu_i'$ is any distribution in $\mathcal{M}$ such that its expectation $\mu_i'$ satisfies $\mu_i' \geq  (1-\alpha) \mu_{\ell}$. The key idea here is that any consistent policy should have $\mathbb{E}_{\nu}[n_i(T)] = o(T^{\gamma})$ while 
$\mathbb{E}_{\nu'}[n_i(T)] = T-o(T^{\gamma})$. We now apply the fundamental inequality (\ref{eq:fundamental_lower_bound}) with $Z=\frac{n_i(T)}{T}$. Noting that all \textrm{KL} terms except that for arm $i$ is zero on the left hand side and using (\ref{eq:kl_bound}), we get
\begin{align}
\mathbb{E}_{\nu}[n_i(T)]\textrm{KL}(\nu_i, \nu_i') \
&\geq \
\textrm{kl}
\myparen{
\frac{\mathbb{E}_{\nu}[n_i(T)]}{T},
\frac{\mathbb{E}_{\nu'}[n_i(T)]}{T}
}
\nonumber
\\
&\geq  
\left(1-\frac{\mathbb{E}_{\nu}[n_i(T)]}{T}\right) 
\log \left(\frac{T}{T- \mathbb{E}_{\nu'}[n_i(T)]} \right) 
- \log 2
.
\end{align}
Given that arm $i \neq a^*$ in instance $\nu$ and $i = a^*$ in $\nu'$, any consistent policy must have 
$\mathbb{E}_{\nu}[n_i(T)] = o(T^{\gamma})$ and 
$\mathbb{E}_{\nu'}[n_i(T)] = T-o(T^{\gamma})$ which leads to,
\[
\liminf_{\horizon \to \infty} 
\frac{1}{\log T}\left(1-\frac{\mathbb{E}_{\nu}[n_i(T)]}{T}
\right) 
\log 
\left(\frac{T}{T- \mathbb{E}_{\nu'}[n_i(T)]} \right) 
\geq 
1-\gamma,
\]
for all $0 < \gamma \leq 1$. Thus, we get
  \begin{align}
   \liminf_{\horizon \to \infty}\frac{\Expectation_\nu
\mysqbrack{n_i \myparen{\horizon}}}{\log \horizon}
\geq \frac{1}{ \textrm{KL}(\nu_i, \nu_i')}
 \end{align}
 and taking the supremum in the right-hand side over all distributions  $\mu_i' \geq  (1-\alpha) \mu_{\ell}$, we get the bound of Theorem \ref{thm:pe_lower_bound_low_cost}.
} 

\begin{remark}
The line of arguments used in the proof given above would not yield a similar lower bound for high-cost arms, i.e., arms $i=a^*+1, \ldots, K$. This is because even if the reward distribution of an arm $i \in \{a^*+1, \ldots, K\}$ is modified to make its mean exceed the feasibility threshold $(1-\alpha)\mu_{\ell}$, arm $a^*$ would remain to be the optimal action and thus we would not be able to state that $\mathbb{E}_{\nu'}[n_i(T)] = T-o(T^{\gamma})$. This is consistent with the upper bounds established in Lemma \ref{lemma:expected_samples_of_high_cost_arm_upper_bound} for our PE algorithm where the number of samples for high-cost arms was seen to be upper bounded by a $O(1)$ term. In fact, the upper bounds for the PE algorithm will be seen to {\em match within a constant factor} the lower bounds for {\em all} sub-optimal arms; see Remark \ref{remark:D_inf} below. 
\end{remark}

\begin{theorem}[Lower bound for the number of pulls of the reference arm $\ell$]
For any bandit instance $\nu$ over horizon $\horizon$,  for all consistent strategies, we have 
\begin{align}
\liminf_{\horizon \to \infty}\frac{\Expectation
\mysqbrack{n_{\ell} \myparen{\horizon}}}{\log \horizon}
\geq \max\left\{\frac{1}{{D}_{\textrm{inf}}\left(\nu_{\ell}, \frac{\mu_{a^*}}{1-\alpha}\right)},\max_{i=1, \ldots, a^*-1 }\frac{1}{\tilde{D}_{\textrm{inf}}\left(\nu_{\ell}, \frac{\mu_i}{1-\alpha}\right)}\right\}.
\label{eq:pe_lower_bound_ref_arm}
\end{align}
\label{thm:pe_lower_bound_ref_arm}
\end{theorem}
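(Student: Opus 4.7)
The plan is to apply the change-of-measure / fundamental inequality technique used in the proof of Theorem \ref{thm:pe_lower_bound_low_cost}, but now perturbing only the reference arm's distribution $\nu_\ell$ so that the identity of the optimal action $a^*$ flips under the modified instance. Fix the instance $\nu$. I will construct two families of alternative instances $\nu'$, one for each of the two terms inside the maximum in (\ref{eq:pe_lower_bound_ref_arm}); in both families $\nu'$ coincides with $\nu$ on every arm except $\ell$, so only the $k=\ell$ summand on the LHS of (\ref{eq:fundamental_lower_bound}) survives.

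For the first term, let $\nu_\ell' \in \mathcal{M}$ be any distribution with mean $\mu_\ell' > \mu_{a^*}/(1-\alpha)$. The new feasibility threshold then satisfies $(1-\alpha)\mu_\ell' > \mu_{a^*}$, so $a^*$ is infeasible in $\nu'$ and the optimal action becomes some arm $j > a^*$. For the second term, fix $i \in \{1,\ldots,a^*-1\}$ and take $\nu_\ell' \in \mathcal{M}$ with mean $\mu_\ell' < \mu_i/(1-\alpha)$; the threshold now satisfies $(1-\alpha)\mu_\ell' < \mu_i$, so arm $i$ becomes feasible and, being strictly cheaper than $a^*$, becomes the new optimal action.

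Next I would apply (\ref{eq:fundamental_lower_bound}) with the test statistic $Z = n_{a^*}(T)/T \in [0,1]$. By consistency of $\pi$, $\mathbb{E}_{\nu}[n_{a^*}(T)] = T - o(T^\gamma)$ in the original instance (where $a^*$ is optimal), while $\mathbb{E}_{\nu'}[n_{a^*}(T)] = o(T^\gamma)$ in each constructed $\nu'$ (where $a^*$ is sub-optimal). Invoking the Bernoulli-KL inequality $\mathrm{kl}(p,q) \geq p \log(1/q) - \log 2$, dividing through by $\log T$, and sending first $T \to \infty$ and then $\gamma \downarrow 0$, we obtain
\begin{align*}
\liminf_{T \to \infty} \frac{\mathbb{E}_{\nu}[n_\ell(T)]}{\log T} \cdot \mathrm{KL}(\nu_\ell, \nu_\ell') \;\geq\; 1.
\end{align*}
Taking the supremum of $1/\mathrm{KL}(\nu_\ell, \nu_\ell')$ over the admissible $\nu_\ell'$ in the first construction yields the bound $1/D_{\textrm{inf}}(\nu_\ell, \mu_{a^*}/(1-\alpha))$; the same step in the second construction yields $1/\tilde{D}_{\textrm{inf}}(\nu_\ell, \mu_i/(1-\alpha))$. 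Since every admissible alternative supplies a valid lower bound on the same $\liminf$, we may take the maximum over the two constructions and over $i \in \{1,\ldots,a^*-1\}$ to obtain (\ref{eq:pe_lower_bound_ref_arm}).

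The main conceptual hurdle is the choice of test statistic. The natural first guess $Z = n_\ell(T)/T$ fails because arm $\ell$ is sub-optimal under \emph{both} $\nu$ and $\nu'$, so both $\mathbb{E}_{\nu}[Z]$ and $\mathbb{E}_{\nu'}[Z]$ vanish under consistency and the Bernoulli-KL on the RHS of (\ref{eq:fundamental_lower_bound}) collapses to $0$, giving a trivial bound. Using $Z = n_{a^*}(T)/T$ instead — a $\Theta(1)$ versus $o(1)$ discriminator between $\nu$ and $\nu'$ — is what converts the flip in identity of the optimal action into a logarithmic lower bound on $\mathbb{E}_{\nu}[n_\ell(T)]$. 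Edge cases, namely $a^* = 1$ (which makes the second maximum vacuous) or the admissible class $\{\nu_\ell' \in \mathcal{M} : \mu_\ell' > \mu_{a^*}/(1-\alpha)\}$ being empty, are handled by the usual convention $D_{\textrm{inf}} = +\infty$, which renders the corresponding term trivially true.
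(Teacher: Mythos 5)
Your proposal is correct and uses the same change-of-measure machinery as the paper: perturb only $\nu_{\ell}$ (upward past $\mu_{a^*}/(1-\alpha)$ for the first term, downward below $\mu_i/(1-\alpha)$ for each $i<a^*$ for the second), apply (\ref{eq:fundamental_lower_bound}), and take suprema over admissible alternatives. The one real difference is the test statistic: the paper sets $Z = n_j(T)/T$ where $j$ is the \emph{new} optimal action in $\nu'$ (so $\mathbb{E}_{\nu}[Z]\to 0$, $\mathbb{E}_{\nu'}[Z]\to 1$ and (\ref{eq:kl_bound}) applies directly), whereas you set $Z = n_{a^*}(T)/T$ and use $\mathrm{kl}(p,q)\geq p\log(1/q)-\log 2$, which follows from (\ref{eq:kl_bound}) via the symmetry $\mathrm{kl}(p,q)=\mathrm{kl}(1-p,1-q)$. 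Your choice is arguably cleaner: all you need is that $a^*$ ceases to be optimal in $\nu'$, so you never have to identify the new optimum — and indeed your side claim that arm $i$ itself ``becomes the new optimal action'' in the second construction is not quite accurate (an even cheaper arm may become feasible at the lowered threshold, which is why the paper only asserts the new optimum lies in $\{1,\ldots,i\}$ before choosing its $Z$), but your argument never relies on it. The only piece missing is the case $a^*=\ell$, i.e., no arm $1,\ldots,K$ is feasible: there the first construction does not flip optimality (raising $\mu_{\ell}$ only raises the threshold, so $\ell$ remains optimal), your discriminator degenerates, and neither your $a^*=1$ remark nor the $D_{\textrm{inf}}=+\infty$ convention covers it; the paper disposes of it in one sentence by noting that consistency forces $n_{\ell}(T)=T-o(T^{\gamma})$, which makes the bound trivial, and you should add the same observation.
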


\proof{
Proof of  Theorem \ref{thm:pe_lower_bound_ref_arm} follows similar arguments to that of Theorem \ref{thm:pe_lower_bound_low_cost} but requires a more subtle application of the fundamental inequality (\ref{eq:fundamental_lower_bound}). We will first establish  the second part of (\ref{eq:pe_lower_bound_ref_arm}) (associated with modifications that make an arm $i < a^*$ optimal) followed by the first part (that arises from modifications that make an arm $i> a^*$ optimal). 

Consider any bandit instance $\nu$ with arm $a^*$ being the optimal action. If $a^*=\ell$, then any consistent policy must choose it at least $T-o(T^{\gamma})$ times and the bound given in the Theorem follows immediately. Next, we assume $a^* \neq \ell$. Consider any arm $i=1, \ldots, a^*-1$ and note that we must have $\mu_i < \mu_{a^*}$ for all such arms. Now, consider a modified instance $\nu'$, where $\nu_j' = \nu_j$ for all $j = 1,\ldots, K$ and $\nu_{\ell}'$ is any distribution in $\mathcal{M}$ such that its expectation $\mu_{\ell}'$ satisfies $(1-\alpha) \mu_{\ell}' \leq \mu_{i}$. This means that arm $i$ is feasible in the modified instance $\nu'$, and given that it has lower cost than $a^*$, $a^*$ is no longer the optimal action. In fact, in $\nu'$, the optimal action is guaranteed to be among the  arms $\{1, \ldots, i\}$. Let $j \in \{1, \ldots, i\}$ be the new optimal action. We will apply the fundamental inequality (\ref{eq:fundamental_lower_bound}) with $Z=\frac{n_j(T)}{T}$ and note that 
 any consistent policy should have $\mathbb{E}_{\nu}[n_j(T)] = o(T^{\gamma})$ while 
 $\mathbb{E}_{\nu'}[n_j(T)] = T-o(T^{\gamma})$. Proceeding similarly with the proof of Theorem  \ref{thm:pe_lower_bound_low_cost}, we  get 
 \begin{align}
 \mathbb{E}_{\nu}[n_{\ell}(T)] 
 \textrm{KL}(\nu_{\ell}, \nu_{\ell}') 
 &\geq 
 \textrm{kl}
 \myparen{
 \frac{\mathbb{E}_{\nu}[n_j(T)]}{T},
 \frac{\mathbb{E}_{\nu'}[n_j(T)]}{T}
 }
 \nonumber
 \\
 & \geq  \left(1-\frac{\mathbb{E}_{\nu}[n_j(T)]}{T}\right) \log \left(\frac{T}{T- \mathbb{E}_{\nu'}[n_j(T)]} \right) - \log 2.
 \end{align}
}
Noting that $\mathbb{E}_{\nu}[n_j(T)] = o(T^{\gamma})$ and 
 $\mathbb{E}_{\nu'}[n_j(T)] = T-o(T^{\gamma})$ and using similar arguments as in the proof of Theorem \ref{thm:pe_lower_bound_low_cost} we get
   \begin{align}
   \liminf_{\horizon \to \infty}\frac{\Expectation_\nu
\mysqbrack{n_{\ell} \myparen{\horizon}}}{\log \horizon}
\geq \frac{1}{ \textrm{KL}(\nu_{\ell}, \nu_{\ell}')}
 \end{align}
 Taking the supremum in the right-hand side over all distributions  $\nu_{\ell}' \in \mathcal{M}$ with $\mu_{\ell}' \leq \frac{\mu_{i}}{1-\alpha}$, we get 
 \begin{align}
   \liminf_{\horizon \to \infty}\frac{\Expectation_\nu
\mysqbrack{n_{\ell} \myparen{\horizon}}}{\log \horizon}
\geq \frac{1}{\tilde{D}_{\textrm{inf}}\left(\nu_{\ell}, \frac{\mu_i}{1-\alpha}\right)}
 \end{align}
 Finally, since these arguments can be applied for any arm $i=1, \ldots, a^*-1$, we get
 \begin{align}
 \liminf_{\horizon \to \infty}\frac{\Expectation
\mysqbrack{n_{\ell} \myparen{\horizon}}}{\log \horizon}
\geq \max_{i=1, \ldots, a^*-1 }\frac{1}{\tilde{D}_{\textrm{inf}}\left(\nu_{\ell}, \frac{\mu_i}{1-\alpha}\right)}.
\label{eq:lower_bound_part1}
\end{align}

To complete the proof of (\ref{eq:pe_lower_bound_ref_arm}), we now consider a modified distribution that would make $a^*$ sub-optimal. Consider any bandit instance $\nu$ with arm $a^* \neq \ell $ being the optimal action.
Consider a modified instance $\nu'$, where $\nu_j' = \nu_j$ for all $j = 1,\ldots, K$ and $\nu_{\ell}'$ is any distribution in $\mathcal{M}$ such that its expectation $\mu_{\ell}'$ satisfies $(1-\alpha)\mu_{\ell}' > \mu_{a^*}$. With this modification, the arm $a^*$ is no longer feasible and thus not optimal. The optimal action should then be in the set $\{a^*+1,\ldots, K, \ell \}$. Let $j \in \{a^*+1,\ldots, K, \ell\}$ be the new optimal action. We will apply the fundamental inequality (\ref{eq:fundamental_lower_bound}) with $Z=\frac{n_j(T)}{T}$ and note that 
 any consistent policy should have $\mathbb{E}_{\nu}[n_j(T)] = o(T^{\gamma})$ (since in the original instance $a^* \neq j$ was optimal) while 
 $\mathbb{E}_{\nu'}[n_j(T)] = T-o(T^{\gamma})$. Using the same ideas as before, we then get
 \begin{align}
   \liminf_{\horizon \to \infty}\frac{\Expectation_\nu
\mysqbrack{n_{\ell} \myparen{\horizon}}}{\log \horizon}
\geq \frac{1}{ \textrm{KL}(\nu_{\ell}, \nu_{\ell}')}
 \end{align}
 and taking the supremum in the right-hand side over all distributions  $\nu_{\ell}' \in \mathcal{M}$ with $\mu_{\ell}' > \frac{\mu_{a^*}}{1-\alpha}$, we get 
 \begin{align}
   \liminf_{\horizon \to \infty}\frac{\Expectation_\nu
\mysqbrack{n_{\ell} \myparen{\horizon}}}{\log \horizon}
\geq \frac{1}{{D}_{\textrm{inf}}\left(\nu_{\ell}, \frac{\mu_{a^*}}{1-\alpha}\right)}
\label{eq:lower_bound_part2}
 \end{align}
 Finally, we combine \ref{eq:lower_bound_part1} and \ref{eq:lower_bound_part2} to complete the proof of \ref{eq:pe_lower_bound_ref_arm}.

\begin{remark}
When the reward distributions are Gaussian, i.e., the family $\mathcal{M}$ of distributions contain all Gaussian distributions with a common variance $\sigma^2$, the key terms $D_{\textrm{inf}}(\nu_i, x)$ and  $\tilde{D}_{\textrm{inf}}(\nu_i, x)$ take the form of 
\begin{align}
    D_{\textrm{inf}}(\nu_i, x) = \tilde{D}_{\textrm{inf}}(\nu_i, x)= \frac{(\mu_i-x)^2}{2\sigma^2}.
\end{align}
More generally, it was shown in \cite{lattimore2020bandit}    that 
$D_{\textrm{inf}}(\nu_i, x)= O((\mu_i-x)^2)$ for most distribution families.
Using this with
$\sigma^2=1$, we obtain from Theorem \ref{thm:pe_lower_bound_low_cost} and Theorem \ref{thm:pe_lower_bound_ref_arm} that
\begin{align}
   \liminf_{\horizon \to \infty}\frac{\Expectation
\mysqbrack{n_i \myparen{\horizon}}}{\log \horizon}
\geq
      \frac{2 }{ \gap_{Q, i}^2 }, \qquad \textrm{for arms} \ \ i=1, \ldots, a^*-1.  
    \nonumber
    .
\end{align}
and
\begin{align}
        \liminf_{\horizon \to \infty}\frac{\Expectation
\mysqbrack{n_{\ell} \myparen{\horizon}}}{\log \horizon}
\geq  \max_{i \leq a^*}  
      \frac{2 (1-\alpha)^2}{ \gap_{Q, i}^2 }
        \nonumber
        .
\end{align}
where $\Delta_{Q,i}$ is defined as before, i.e., 
$\Delta_{Q,i} = (1-\alpha) \mu_{\ell}-\mu_i$. 
 These results match within a constant factor the upper bounds of our PE algorithm established in Lemmas \ref{lemma:samples_of_low_cost_unsatisfactory_arm} and \ref{lemma:bound_on_samples_of_reference_ell}
 showing that the PE algorithm is order-wise optimal. 
\label{remark:D_inf}
\end{remark}

\section*{Lower Bounds for the Subsidized Best Reward Setting}

We now present lower bounds for the subsidized best reward setting. The core ideas in establishing these results are the same with those used in the {\em known reference arm} case. Namely, to establish a lower bound on the number of pulls needed from a given arm, we are seeking a modification in that arm's reward distribution that leads to a change in the optimal arm, i.e., arm $a^*$ being no longer optimal. Among the modified distributions that change the optimal arm, the one that has the minimum KL-divergence with the original  distribution leads to the tightest lower bound on that arm's number of pulls. We will repeatedly use this argument in the following discussion but the repetitive parts of the proofs are omitted  for brevity. 

Let $\mathcal{M}$ denote the {\em model} under consideration that contains all bandit instances $\nu$, where $\nu$ specifies the reward distributions $\nu_i$ of all arms $i=1, \ldots, K$. Recall that the arm indices are cost ordered, i.e., 
$c_1 < \cdots < c_{a^*} < \cdots < c_{i^*}< \cdots < c_K$, where $a^*$ is the optimal action and $i^*$ is the arm with the largest mean reward; namely
$i^* = \arg \max_{i \in \{1, \ldots, K\}} {\mu_i}$ and
$a^* =\arg \min_{\{i: \mu_i \geq (1-\alpha)\mu^*\}} {c_i} $ where $\mu^*=\max_{i \in \{1, \ldots, K\}} {\mu_i} = \mu_{i^*}$. The definition of a consistent policy and key quantities $D_{\textrm{inf}}$, $\tilde{D}_{\textrm{inf}}$ remain as in the previous section. 

\subsubsection*{Lower bound on the number of samples for arms $i < a^*$}

\begin{theorem}[Lower bound for the number of pulls of {\em low-cost} arms]
For any bandit instance $\nu$ over horizon $\horizon$,  for all consistent strategies, we have 
\begin{align*}
\liminf_{\horizon \to \infty}\frac{\Expectation_\nu
\mysqbrack{n_i \myparen{\horizon}}}{\log \horizon}
\geq \frac{1}{D_{\textrm{inf}}(\nu_i, (1-\alpha)\mu^*))}, \qquad \textrm{for arms} \ \ i=1, \ldots, a^*-1. 
\end{align*}
\label{thm:pe-cs_lower_bound_low_cost}
\end{theorem}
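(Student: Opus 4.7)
The plan is to closely mirror the proof of Theorem~\ref{thm:pe_lower_bound_low_cost} (known reference arm setting) but adapted to the subsidized best reward setting. The key conceptual shift is that here the feasibility threshold is $(1-\alpha)\mu^*$, where $\mu^*$ is the unknown maximum reward, rather than $(1-\alpha)\mu_\ell$ for a fixed reference arm. Fortunately, for low-cost arms $i < a^*$, we can construct a modification that makes arm $i$ the new optimum without having to perturb any arm other than $i$ itself.

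I would proceed as follows. Fix a bandit instance $\nu$ and an arm $i \in \{1, \ldots, a^*-1\}$. By definition of $a^*$, arm $i$ is infeasible in $\nu$, i.e., $\mu_i < (1-\alpha)\mu^*$. Construct a modified instance $\nu'$ with $\nu_j' = \nu_j$ for all $j \neq i$ and $\nu_i' \in \mathcal{M}$ any distribution with expectation $\mu_i' > (1-\alpha)\mu^*$. The crucial observation is that in $\nu'$ arm $i$ becomes feasible: if $\mu_i' \leq \mu^*$, then the maximum reward is still $\mu^*$ and the threshold is unchanged; if $\mu_i' > \mu^*$, then the new maximum is $\mu_i'$ itself and $\mu_i' \geq (1-\alpha)\mu_i'$ holds trivially for $\alpha \in [0,1)$. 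Either way, in $\nu'$ arm $i$ is feasible and has cost strictly less than $c_{a^*}$, so the new optimal action $a'^*$ lies in $\{1, \ldots, i\}$.

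Next, I would apply the fundamental inequality (\ref{eq:fundamental_lower_bound}) with $Z = n_{a'^*}(T)/T$. Since only $\nu_i$ is modified, all KL terms on the left-hand side vanish except the one corresponding to arm $i$, yielding
\begin{align*}
\mathbb{E}_\nu[n_i(T)]\,\textrm{KL}(\nu_i,\nu_i')
\;\geq\;
\textrm{kl}\!\left(\frac{\mathbb{E}_\nu[n_{a'^*}(T)]}{T},\frac{\mathbb{E}_{\nu'}[n_{a'^*}(T)]}{T}\right).
\end{align*}
Consistency of $\pi$ gives $\mathbb{E}_\nu[n_{a'^*}(T)] = o(T^\gamma)$ (since $a'^* \neq a^*$ under $\nu$) and $\mathbb{E}_{\nu'}[n_{a'^*}(T)] = T - o(T^\gamma)$ (since $a'^*$ is optimal under $\nu'$). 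Using bound~(\ref{eq:kl_bound}) and dividing through by $\log T$, I would then take $\liminf_{T \to \infty}$ to obtain $\liminf_T \mathbb{E}_\nu[n_i(T)]/\log T \geq 1/\textrm{KL}(\nu_i, \nu_i')$.

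Finally, taking the supremum of $1/\textrm{KL}(\nu_i,\nu_i')$ over all $\nu_i' \in \mathcal{M}$ with $\mu_i' > (1-\alpha)\mu^*$ yields the desired bound $1/D_{\textrm{inf}}(\nu_i,(1-\alpha)\mu^*)$. The specialization to Gaussian rewards with $\sigma^2=1$ (cf.\ Remark~\ref{remark:D_inf}) then gives the statement of Theorem~\ref{thm:subsidized_best_reward_lb} for low-cost arms, namely $2/\Delta_{Q,i}^2$. The only subtle point, and what I expect to be the main obstacle, is verifying that $a'^*$ is well-defined and distinct from $a^*$ in $\nu'$, i.e., that the modification strictly changes the optimal arm; the case analysis above ($\mu_i' \lessgtr \mu^*$) handles this cleanly, which is why no perturbation of the identity of $i^*$ is needed for this particular part of the lower bound.
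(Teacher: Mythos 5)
Your proposal is correct and follows essentially the same route as the paper: modify only $\nu_i$ so that $\mu_i' > (1-\alpha)\mu^*$, observe via the same case analysis ($\mu_i'$ above or below $\mu^*$) that arm $i$ becomes feasible and hence $a^*$ is no longer optimal, apply the fundamental change-of-measure inequality with consistency, and take the supremum to obtain $D_{\textrm{inf}}(\nu_i,(1-\alpha)\mu^*)$. The only cosmetic difference is that you hedge with a new optimum $a'^* \in \{1,\ldots,i\}$, whereas in fact $a'^* = i$ exactly (the threshold in $\nu'$ is never smaller than $(1-\alpha)\mu^*$, so arms $j<i$ remain infeasible), which is what the paper asserts when it calls $i$ the unique best action; either way the consistency argument goes through unchanged.
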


\proof{
For arms cheaper than the optimal action $a^*$, i.e., arms $i=1,\ldots, a^*-1$, our approach is to modify their reward distribution so that their mean reward is now larger than the feasibility threshold $(1-\alpha)\mu^*$. Since these arms have a lower cost than $a^*$,  $a^*$ will no longer be optimal in the modified instance, paving the way to establishing a lower bound for the number of arms $i=1,\ldots, a^*-1$. Formally, 
given any bandit instance $\nu$ and any arm $i=1, \ldots, a^*-1$, we consider a modified instance $\nu'$ where $i$ is the unique best action: $\nu_j' = \nu_j$ for all $j \in \{1,\ldots, K\} - \{i\}$ and $\nu_i'$ is any distribution in $\mathcal{M}$ such that its expectation $\mu_i'$ satisfies $\mu_i' > (1-\alpha)\mu_{i^*}$. In the modified instance $\nu'$, the feasibility threshold is either $(1-\alpha)\mu_{i^*}$ as before, or it changes to $(1-\alpha)\mu_i'$ by virtue of arm $i$ being the arm with the largest mean reward. In either case, $\mu_i'$ exceeds this threshold and since $c_i < c_{a^*}$, arm $a^*$ becomes a sub-optimal action. This proves that 
  \begin{align}
   \liminf_{\horizon \to \infty}\frac{\Expectation_\nu
\mysqbrack{n_i \myparen{\horizon}}}{\log \horizon}
\geq \frac{1}{ \textrm{KL}(\nu_i, \nu_i')}
 \end{align}
and taking the supremum in the right-hand side over all distributions  $\nu_i' \in \mathcal{M}$ with $\mu_i'>(1-\alpha)\mu^*$, we get the bound of Theorem \ref{thm:pe-cs_lower_bound_low_cost}.
}

\begin{remark}
 For arms $i=1, \ldots, a^*-1$, recall that the quality gaps are defined as  
$\gap_{Q, i}= (1-\alpha)\mu^*-\mu_i$.
Recalling Remark \ref{remark:D_inf}, we see from 
Theorem \ref{thm:pe-cs_lower_bound_low_cost} 
 that when the reward distributions are Gaussian with $\sigma^2=1$
\begin{align}
   \liminf_{\horizon \to \infty}\frac{\Expectation
\mysqbrack{n_i \myparen{\horizon}}}{\log \horizon}
\geq
      \frac{2 }{ \gap_{Q, i}^2 }, \qquad \textrm{for arms} \ \ i=1, \ldots, a^*-1
    \nonumber
    .
\end{align}
matching the upper bound of our PE-CS algorithm given in Lemma \ref{lemma:pe_cs_low_cost_arms_bound}.
\end{remark}


\subsubsection*{Lower bound on the number of samples for arms $i > a^*, i \neq i^*$}

\begin{theorem}[Lower bound for the number of pulls of {\em high-cost} arms]
For any bandit instance $\nu$ over horizon $\horizon$,  for all consistent strategies, we have 
\begin{align*}
\liminf_{\horizon \to \infty}\frac{\Expectation_\nu
\mysqbrack{n_i \myparen{\horizon}}}{\log \horizon}
\geq \frac{1}{D_{\textrm{inf}}\left(\nu_i, \frac{\mu_{a^*}}{(1-\alpha)}\right)}, \qquad \textrm{for arms} \ \ i \in \{a^*+1, \ldots, K\}-\{i^*\}. 
\end{align*}
\label{thm:pe-cs_lower_bound_high_cost}
\end{theorem}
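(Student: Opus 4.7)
\noindent\textbf{Proof proposal for Theorem \ref{thm:pe-cs_lower_bound_high_cost}.} The plan is to follow the same template used for Theorems \ref{thm:pe_lower_bound_low_cost}, \ref{thm:pe_lower_bound_ref_arm}, and \ref{thm:pe-cs_lower_bound_low_cost}: build a perturbed instance $\nu'$ that differs from $\nu$ only in the reward distribution of the target arm $i$, verify that the perturbation changes the optimal action, and then invoke the fundamental inequality \eqref{eq:fundamental_lower_bound} together with the elementary bound \eqref{eq:kl_bound}. Concretely, fix any $i \in \{a^*+1,\ldots,K\} \setminus \{i^*\}$ and let $\nu_i'$ be an arbitrary distribution in $\mathcal{M}$ whose mean satisfies $\mu_i' > \frac{\mu_{a^*}}{1-\alpha}$, while $\nu_j' = \nu_j$ for all $j \neq i$.

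The key structural step---the one that has no direct analogue in the previous proofs---is to justify why the threshold $\frac{\mu_{a^*}}{1-\alpha}$ is the right choice. Since $a^*$ is feasible in $\nu$, we have $\mu_{a^*} \geq (1-\alpha)\mu^*$, hence $\mu_i' > \frac{\mu_{a^*}}{1-\alpha} \geq \mu^*$. Thus in $\nu'$ arm $i$ is the unique best-reward arm, the feasibility threshold becomes $(1-\alpha)\mu_i' > \mu_{a^*}$, and arm $a^*$ becomes infeasible. Moreover, every arm $j < a^*$ was already infeasible in $\nu$ (i.e.\ $\mu_j < (1-\alpha)\mu^*$) and remains so under the strictly higher threshold in $\nu'$. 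Consequently the new optimal action in $\nu'$, call it $j^\dagger$, must lie in $\{a^*+1,\ldots,K\}$ and is in particular distinct from $a^*$.

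With this change of optimal action in hand, the rest is mechanical. Apply \eqref{eq:fundamental_lower_bound} with $Z = n_{j^\dagger}(T)/T$; since only arm $i$'s distribution was modified, the left-hand side collapses to $\mathbb{E}_\nu[n_i(T)]\,\textrm{KL}(\nu_i,\nu_i')$. Consistency yields $\mathbb{E}_\nu[n_{j^\dagger}(T)] = o(T^\gamma)$ (since $j^\dagger$ is sub-optimal in $\nu$) and $\mathbb{E}_{\nu'}[n_{j^\dagger}(T)] = T - o(T^\gamma)$ (since $j^\dagger$ is optimal in $\nu'$) for every $\gamma \in (0,1]$, so the bound \eqref{eq:kl_bound} gives $\textrm{kl}(\cdot,\cdot) \geq (1-\gamma)\log T - \log 2$ asymptotically. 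Dividing by $\log T$, sending $T \to \infty$, and then $\gamma \to 0$ yields $\liminf_T \mathbb{E}_\nu[n_i(T)]/\log T \geq 1/\textrm{KL}(\nu_i,\nu_i')$; taking the supremum over admissible $\nu_i'$ turns $\textrm{KL}(\nu_i,\nu_i')$ into the infimum defining $D_{\textrm{inf}}$ and completes the proof.

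The only place this argument is genuinely different from the prior theorems---and hence the main potential obstacle---is the structural step above, because in the subsidized-best-reward setting the feasibility threshold itself depends on the unknown reward distributions. A naive perturbation that merely pushes $\mu_i'$ past $\mu^*$ is not enough: as long as $(1-\alpha)\mu_i' \leq \mu_{a^*}$, arm $a^*$ remains feasible and, being cheaper, still optimal, so the consistency-based argument would yield no bound. This is precisely why the sharper threshold $\mu_i' > \frac{\mu_{a^*}}{1-\alpha}$ appears in $D_{\textrm{inf}}$. The hypothesis $i \neq i^*$ is also used here, since if $i = i^*$ then inflating $\mu_i$ only strengthens the feasibility of $a^*$; that case instead calls for a downward perturbation and is handled separately by the bound on $n_{i^*}(T)$.
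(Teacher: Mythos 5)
Your proposal is correct and follows essentially the same route as the paper: perturb only arm $i$'s distribution to a mean $\mu_i' > \mu_{a^*}/(1-\alpha)$, use feasibility of $a^*$ in $\nu$ to get $\mu_i' \geq \mu^*$ so that arm $i$ becomes the highest-mean arm and the raised threshold $(1-\alpha)\mu_i'$ renders $a^*$ infeasible (hence sub-optimal), then apply the fundamental inequality with the new optimal arm and take the supremum over admissible $\nu_i'$ to obtain $D_{\textrm{inf}}$. Only your closing aside is off: for $i = i^*$ an upward perturbation does work, since raising $\mu_{i^*}$ raises the feasibility threshold and can make $a^*$ infeasible --- this is exactly the paper's bound (\ref{eq:best_arm_lower_bound1}) in Theorem \ref{thm:pe-cs_lower_bound_best_arm} --- so $i^*$ is excluded from the present statement only because it is treated separately, not because inflating its mean would strengthen the feasibility of $a^*$.
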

\proof{
For arms more expensive than $a^*$ but with mean reward less than $\mu^*$, the only modification we can make in their reward distribution to render $a^*$ sub-optimal would be to increase their mean reward to a level where they will be the highest-mean arm and $(1-\alpha)$ times their new mean exceeds $\mu_{a^*}$ (so that $a^*$ is no longer feasible and thus can not be optimal). 
Formally, given any bandit instance $\nu$ and any arm $i \in \{a^* + 1, \ldots, K\} -\{i^*\}$, we consider a modified instance $\nu'$ such that $\nu_j' = \nu_j$ for all $j \in \{1,\ldots, K\} - \{i\}$ and $\nu_i'$ is any distribution in $\mathcal{M}$ such that its expectation $\mu_i'$ satisfies $(1-\alpha) \mu_i' > \mu_{a^*}$. In the modified instance $\nu'$, we have
$ \mu_i' > \mu_{a^*}/(1-\alpha) \geq \mu_{i^*}$ with the second inequality following from $\mu_{a^*} \geq (1-\alpha)  \mu_{i^*}$ since $a^*$ was feasible in the original instance. Thus, in the modified instance arm $i$ has the highest mean and the feasibility threshold becomes $(1-\alpha)\mu_i'$ which exceeds the mean reward $\mu_{a^*}$ of arm $a^*$ by construction. Thus, in the modified instance $\nu'$, arm $a^*$ is no longer the optimal action leading to 
  \begin{align}
   \liminf_{\horizon \to \infty}\frac{\Expectation_\nu
\mysqbrack{n_i \myparen{\horizon}}}{\log \horizon}
\geq \frac{1}{ \textrm{KL}(\nu_i, \nu_i')}
 \end{align}
and taking the supremum in the right-hand side over all distributions  $\nu_i' \in \mathcal{M}$ with $\mu_i'> \frac{\mu_{a^*}}{1-\alpha}$, we get the bound of Theorem \ref{thm:pe-cs_lower_bound_low_cost}.
}

\begin{remark}
  As before, when reward distributions are Gaussian with $\sigma^2=1$, Theorem \ref{thm:pe-cs_lower_bound_high_cost} gives  
  \begin{align}
   \liminf_{\horizon \to \infty}\frac{\Expectation
\mysqbrack{n_i \myparen{\horizon}}}{\log \horizon}
\geq
      \frac{2 }{ (\frac{\mu_{a^*}}{1-\alpha}-\mu_i)^2 }, \qquad \textrm{for arms} \ \ i \in \{a^*+1, \ldots, K\}-\{i^*\}
    \nonumber
    .
\end{align}
The corresponding upper bound for the number of samples from the high-cost arms for our PE-CS algorithm, given in Lemma \ref{lemma:pe_cs_high_cost_arms_bound}, scales as $\frac{1}{\Delta_i^2}$. Since arm $a^*$ is feasible, we have
\[
\frac{\mu_{a^*}}{1-\alpha}-\mu_i \ > \ \mu^* - \mu_i = \Delta_i.
\]
This suggests that there is potential for improvement in the PE-CS algorithm regarding the number of samples drawn from high-cost arms. For instance, if the two arms with the highest mean rewards have a small gap, it may not be critical to determine which one is superior (making $\Delta_i$ less relevant), provided that there exists a cheaper arm whose mean reward significantly exceeds the feasibility threshold corresponding to either of the two  arms in contention to be the one with the largest mean reward. 
\end{remark}

\subsubsection*{Lower bound on the number of samples for arm $i^*$}

\begin{theorem}[Lower bound on the expected number of samples of the best arm]
For any bandit instance $\nu$ over horizon $\horizon$,  for all consistent strategies, we have 
\begin{align}
\liminf_{
\horizon \to \infty}
\frac{\Expectation_\nu
\mysqbrack{n_{i^*} \myparen{\horizon}}}{\log \horizon}
\geq \frac{1}{D_{\textrm{inf}}\left(\nu_{i^*}, \frac{\mu_{a^*}}{(1-\alpha)}\right)}. 
\label{eq:best_arm_lower_bound1}
\end{align}
Also, with $\mu_j = \max_{i \in \{1,\ldots, a^*-1\}} \mu_i$ and $\mu_2 = \max_{i \in \{1,\ldots, K\}-{i^*}} \mu_i$, we have
\begin{align}
\liminf_{\horizon \to \infty}\frac{\Expectation_\nu
\mysqbrack{n_{i^*} \myparen{\horizon}}}{\log \horizon}
\geq \frac{1}{\tilde{D}_{\textrm{inf}}\left(\nu_{i^*}, \frac{\mu_{j}}{(1-\alpha)}\right)} 
\qquad \textrm{if} 
\qquad \mu_j 
\geq (1-\alpha)\mu_2
.
\label{eq:best_arm_lower_bound2}
\end{align}
\label{thm:pe-cs_lower_bound_best_arm}
\end{theorem}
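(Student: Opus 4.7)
The plan is to prove each half of the theorem by constructing a modification of the single reward distribution $\nu_{i^*}$ that renders arm $a^*$ sub-optimal in the modified instance $\nu'$, and then applying the fundamental inequality~(\ref{eq:fundamental_lower_bound}) in precisely the manner used for Theorems~\ref{thm:pe_lower_bound_low_cost} and~\ref{thm:pe_lower_bound_ref_arm}. Because only $\nu_{i^*}$ is altered, the left-hand side of~(\ref{eq:fundamental_lower_bound}) collapses to $\mathbb{E}_\nu[n_{i^*}(T)]\,\textrm{KL}(\nu_{i^*},\nu_{i^*}')$. Choosing $Z = n_k(T)/T$ for any arm $k$ that is sub-optimal under $\nu$ but optimal under $\nu'$, the consistency of the policy $\pi$ yields $\mathbb{E}_\nu[n_k(T)] = o(T^{\gamma})$ and $\mathbb{E}_{\nu'}[n_k(T)] = T - o(T^{\gamma})$. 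Combining with the bound~(\ref{eq:kl_bound}) on $\textrm{kl}(p,q)$ produces the logarithmic lower bound $\liminf_T \mathbb{E}_\nu[n_{i^*}(T)]/\log T \geq 1/\textrm{KL}(\nu_{i^*},\nu_{i^*}')$; taking the supremum over all admissible $\nu_{i^*}'$ converts the KL divergence into the $D_{\textrm{inf}}$ or $\tilde{D}_{\textrm{inf}}$ quantity on the right-hand side. The only work left is to exhibit, in each part, an admissible modification and to verify that $a^*$ is indeed sub-optimal in $\nu'$.

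For~(\ref{eq:best_arm_lower_bound1}) I would \emph{raise} the mean of arm $i^*$: pick any $\nu_{i^*}' \in \mathcal{M}$ with $\mu_{i^*}' > \mu_{a^*}/(1-\alpha)$. Feasibility of $a^*$ in $\nu$ gives $\mu_{a^*}/(1-\alpha) \geq \mu^* = \mu_{i^*}$, so $\mu_{i^*}' > \mu_{i^*}$ and arm $i^*$ remains the maximum-mean arm in $\nu'$. Consequently the new feasibility threshold $(1-\alpha)\mu_{i^*}' > \mu_{a^*}$ makes $a^*$ infeasible, so the new optimal action $k$ necessarily satisfies $k \neq a^*$. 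The machinery above, with the supremum taken over all $\nu_{i^*}'$ having $\mu_{i^*}' > \mu_{a^*}/(1-\alpha)$, delivers $1/D_{\textrm{inf}}(\nu_{i^*},\mu_{a^*}/(1-\alpha))$. The degenerate case $a^*=i^*$ gives $\mathbb{E}_\nu[n_{i^*}(T)] = T-o(T^{\gamma})$ directly and the bound holds vacuously.

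For~(\ref{eq:best_arm_lower_bound2}) I would instead \emph{lower} the mean of arm $i^*$: pick any $\nu_{i^*}' \in \mathcal{M}$ with $\mu_{i^*}' \leq \mu_j/(1-\alpha)$. Here the identity of the maximum-mean arm in $\nu'$ is no longer guaranteed to be $i^*$; it is whichever of $\mu_{i^*}'$ and $\mu_2$ is larger, making the new feasibility threshold either $(1-\alpha)\mu_{i^*}'$ or $(1-\alpha)\mu_2$. The stated hypothesis $\mu_j \geq (1-\alpha)\mu_2$, combined with the choice $\mu_{i^*}' \leq \mu_j/(1-\alpha)$, yields $\mu_j \geq (1-\alpha)\max\{\mu_{i^*}',\mu_2\}$, so arm $j$ is feasible in $\nu'$ in both cases. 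Since $c_j < c_{a^*}$, this displaces $a^*$ from optimality; the new optimal action $k$ inherits $c_k \leq c_j < c_{a^*}$, hence $k<a^*$ and $\mathbb{E}_\nu[n_k(T)] = o(T^{\gamma})$ as required. The same argument then yields $1/\tilde{D}_{\textrm{inf}}(\nu_{i^*},\mu_j/(1-\alpha))$ after taking the supremum over admissible $\nu_{i^*}'$.

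The principal obstacle---and the source of the auxiliary hypothesis in~(\ref{eq:best_arm_lower_bound2})---is keeping track of the maximum-mean arm under the downward modification. Without the condition $\mu_j \geq (1-\alpha)\mu_2$, shrinking $\mu_{i^*}$ aggressively could promote the arm attaining $\mu_2$ to the top, inflating the feasibility threshold to $(1-\alpha)\mu_2 > \mu_j$ and leaving $a^*$ still feasible; the modification would then fail to render $a^*$ sub-optimal and the whole reduction would collapse. Everything else is a verbatim reuse of the arguments already carried out for Theorems~\ref{thm:pe_lower_bound_low_cost} and~\ref{thm:pe_lower_bound_ref_arm}, which is why I would relegate those steps to a brief reference rather than re-derive them in detail.
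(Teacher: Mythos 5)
Your proposal is correct and follows essentially the same route as the paper's proof: raise $\mu_{i^*}$ above $\mu_{a^*}/(1-\alpha)$ for the first bound, lower it below $\mu_j/(1-\alpha)$ under the hypothesis $\mu_j \geq (1-\alpha)\mu_2$ for the second, then apply the fundamental inequality with the new optimal action and take the supremum over admissible $\nu_{i^*}'$, with the same explanation of why the auxiliary condition is needed. Your explicit verification that $i^*$ remains the maximum-mean arm (via feasibility of $a^*$) and your handling of the degenerate case $a^* = i^*$ are small refinements the paper leaves implicit, but the argument is the same.
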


\proof{
For the arm $i^*$ that has the largest mean reward, establishing a lower bound is more subtle since we need to consider modifications in two different manners. First, we shall consider modifications to its reward distribution to make its mean larger such that arm $a^*$ is no longer feasible and thus not optimal. 
Formally, given any bandit instance $\nu$, we consider a modified instance $\nu'$ such that $\nu_i' = \nu_i$ for all $i \in \{1,\ldots, K\} - \{i^*\}$ and $\nu_{i^*}'$ is any distribution in $\mathcal{M}$ such that its expectation $\mu_{i^*}'$ satisfies $(1-\alpha) \mu_{i^*}' > \mu_{a^*}$. In the modified instance $\nu'$, arm $i^*$ is still the one with the largest mean reward and the feasibility threshold has changed to $(1-\alpha) \mu_{i^*}'$. Since $\mu_{a^*} < (1-\alpha) \mu_{i^*}'$ by construction, arm $a^*$ is no longer feasible and thus can not be optimal. 
This leads to 
  \begin{align}
   \liminf_{\horizon \to \infty}\frac{\Expectation_\nu
\mysqbrack{n_{i^*} \myparen{\horizon}}}{\log \horizon}
\geq \frac{1}{ \textrm{KL}(\nu_{i^*}, \nu_{i^*}')}
 \end{align}
and taking the supremum in the right-hand side over all distributions  $\nu_{i^*}' \in \mathcal{M}$ with $\mu_{i^*}'> \frac{\mu_{a^*}}{1-\alpha}$, we get the first bound \ref{eq:best_arm_lower_bound1} of Theorem \ref{thm:pe-cs_lower_bound_best_arm}. 

Next, we consider ways of reducing the mean reward of  arm $i^*$ that would render $a^*$ sub-optimal by making one of the cheaper arms $i=1, \ldots, a^*-1$ feasible. 
 Here, we shall notice that by reducing the mean reward of arm $i^*$, we can reduce the feasibility threshold to no lower than $(1-\alpha)\mu_2$ where $\mu_2$ is the second largest mean reward among all arms, i.e., $\mu_2 = \max_{i \in \{1,\ldots, K\}-\{i^*\}} \mu_i$. 
 Even then, this modification would make $a^*$ sub-optimal {\em only if} one of the arms  $i=1, \ldots, a^*-1$ (that all have smaller cost than $a^*$) has a mean reward that would make it feasible at this threshold. Namely, with $\mu_j = \max_{i \in \{1,\ldots, a^*-1\}} \mu_i$ (i.e., $\mu_j$ is the largest mean reward among arms $1, \ldots, a^*-1$, and second largest among arms $1, \ldots, a^*$), this would be possible only if $\mu_j \geq (1-\alpha)\mu_2$. If $\mu_j < (1-\alpha)\mu_2$, then no matter how much the mean reward of arm $i^*$ is reduced, the arms $i=1,\ldots, a^*-1$ would not be feasible  (with feasibility threshold being at least $(1-\alpha)\mu_2$), and arm $a^*$ would remain optimal. In such cases, the only lower bound we can establish for the number of samples of arm $i^*$ is that given by (\ref{eq:best_arm_lower_bound1}). 
 
 Now, consider any bandit instance $\nu$ for which 
 $\mu_j \geq (1-\alpha)\mu_2$. Then, consider a modified instance $\nu'$ such that $\nu_i' = \nu_i$ for all $i \in \{1,\ldots, K\} - \{i^*\}$ and $\nu_{i^*}'$ is any distribution in $\mathcal{M}$ such that its expectation $\mu_{i^*}'$ satisfies $ (1-\alpha) \mu_{i^*}' \leq \mu_j$. Since arm $j$ was not feasible in the original instance $\nu$, we must have   $ (1-\alpha) \mu_{i^*} > \mu_j$, so $\mu_{i^*}' < \mu_{i^*}$. In the modified instance $\nu'$, the arm with the largest mean reward is either still  $i^*$ or it becomes the arm 
 with mean $\mu_2$. Thus, the  
 feasibility threshold changes to $(1-\alpha) \max\{\mu_{i^*}',\mu_2\}$. By construction $\mu_j \geq  (1-\alpha) \mu_{i^*}'$ and due to our initial assumption on the instance $\nu$ we have 
 $\mu_j \geq (1-\alpha)\mu_2$ (which holds in the instance $\nu'$ since the distribution of all arms but $i^*$ remain the same). Thus, we get 
 $\mu_j \geq (1-\alpha) \max\{\mu_{i^*}',\mu_2\}$, meaning that the arm with mean reward $\mu_j$ is feasible in  instance $\nu'$. Since this arm is in $\{1, \ldots, a^*-1\}$, its cost is less than the cost of arm $a^*$ meaning that arm $a^*$ is not optimal in $\nu'$; instead the arm with mean reward $\mu_j$ is optimal.
This leads to 
  \begin{align}
   \liminf_{\horizon \to \infty}\frac{\Expectation_\nu
\mysqbrack{n_{i^*} \myparen{\horizon}}}{\log \horizon}
\geq \frac{1}{ \textrm{KL}(\nu_{i^*}, \nu_{i^*}')}
 \end{align}
and taking the supremum in the right-hand side over all distributions  $\nu_{i^*}' \in \mathcal{M}$ with $\mu_{i^*}' \leq \frac{\mu_{j}}{1-\alpha}$, we get the second bound \ref{eq:best_arm_lower_bound2} of Theorem \ref{thm:pe-cs_lower_bound_best_arm}. 
}

\begin{remark}
 As before, when reward distributions are Gaussian, Theorem \ref{thm:pe-cs_lower_bound_best_arm} gives us 
  \begin{align*}
   \liminf_{\horizon \to \infty}\frac{\Expectation
\mysqbrack{n_{i^*} \myparen{\horizon}}}{\log \horizon}
& \geq 
\max \left \{ \frac{2 (1-\alpha)^2 }{((1-\alpha) \mu_{i^*} - \mu_{a^*})^2}, \mathbf{1}[\mu_j \geq (1-\alpha)\mu_2] \frac{2 (1-\alpha)^2 }{((1-\alpha) \mu_{i^*} - \mu_{j})^2} \right\}
\\
& 
= 
2 (1-\alpha)^2 \max \left \{ \frac{1 }{\Delta_{Q,a^*}^2},  \max_{i < a^*} \frac{1 }{\Delta_{Q,i}^2} \mathbf{1}[\mu_j \geq (1-\alpha)\mu_2] \right\}
\\
& 
= 
2 (1-\alpha)^2 \max \left \{ \frac{1 }{\Delta_{Q,a^*}^2},  \max_{i < a^*} \frac{1 }{\Delta_{Q,i}^2} \mathbf{1}[\min_{i < a^*} \Delta_{Q,i} \leq (1-\alpha) \Delta_{\textrm{min}}] \right\}
\end{align*}
where 
$\Delta_{Q,i} = (1-\alpha) \mu^* - \mu_{i}$ is the quality gap of arm $i$, $\Delta_{\textrm{min}} = \mu^* - \mu_2$ is the smallest of the regular gaps, and the last equality in the indicator function follows from the fact that 
\[
\min_{i < a^*} \Delta_{Q,i} = (1-\alpha) \mu^* - \mu_{j} = (1-\alpha) \Delta_{\textrm{min}} + (1-\alpha)\mu_2 - \mu_j
\]
\end{remark}

\begin{remark}
The lower bound established in   Theorem \ref{thm:pe-cs_lower_bound_best_arm}, particularly the need to consider the case 
$\mu_j \geq (1-\alpha)\mu_2$ separately can be explained as follows. 
For a policy to determine that $a^*$ is the optimal action, it needs to be able to decide that arms ${1, \ldots, a^*-1}$ are not feasible but arm $a^*$ is. For the latter, the gap between $\mu_{a^*}$ and the feasibility threshold $(1-\alpha)\mu_{i^*}$ is important and affects the number of samples needed from the arm $i^*$, and this is seen in the bound (\ref{eq:best_arm_lower_bound1}). For the former, i.e., to decide that arms $1, \ldots, a^*-1$ are not feasible, 
a policy may not need any samples from arm $i^*$ if it can determine their non-feasibility through another arm. For example, if the mean reward of arms  $1, \ldots, a^*-1$ are all less than $(1-\alpha)\mu_2$, then they can be determined to be not feasible based on the samples from the arm with mean reward $\mu_2$ (i.e., the arm with the second largest mean reward among all). This is what gives rise to (\ref{eq:best_arm_lower_bound1}) being the only  lower bound on the samples from arm $i^*$ when $\mu_j < (1-\alpha)\mu_2$. On the other hand, when  $\mu_j \geq (1-\alpha)\mu_2$, then at least one arm in $1, \ldots, a^*-1$ can not be deemed non-feasible by comparing it with the arm with mean reward $\mu_2$ and samples from $i^*$ would be necessary. The number of samples needed from $i^*$ would be tied to the gaps between the mean reward of arms $1, \ldots, a^*-1$ and the feasibility threshold $(1-\alpha)\mu^*$ and the smallest of these gaps is given by 
$(1-\alpha)\mu_{i^*} - \mu_j$ which is consistent with the bound (\ref{eq:best_arm_lower_bound2}).
\end{remark}

\begin{remark}
We now compare the lower bound
established in Theorem \ref{thm:pe-cs_lower_bound_best_arm} with the upper bound for our PE-CS algorithm given in Lemma \ref{lemma:pe_cs_best_arm_bound}. We can see that if $\mu_j \geq (1-\alpha)\mu_2$, the dominant gap term in the established lower bound will be given by $\min_{i \leq  a^*} |\Delta_{Q,i}|$, which in this case is also smaller than $\Delta_{\textrm{min}}=\mu^*-\mu_2$. Thus, the upper bound of the PE-CS algorithm given in  Lemma \ref{lemma:pe_cs_best_arm_bound} matches the lower bound in this case within  a  constant factor. When $\mu_j < (1-\alpha)\mu_2$, as also explained in the previous remark, the lower bound on the number of samples from $i^*$ is  only governed by $\Delta_{Q,a^*}$, while the PE-CS upper bound depends additionally on
$\min_{i< a^*} \Delta_{Q,i}$ and $\Delta_{\textrm{min}}$.  
\end{remark}

\section*{Lower Bounds for the Fixed Threshold Setting}

Finally, we provide lower bounds for the fixed threshold setting where $\mu_\CS = \mu_0$ and $\mu_0 > 0, \mu_0 \in \reals{}$ is a known threshold. The result is stated in Theorem \ref{thm:pe_lower_bound_low_cost} and the definitions for various terms remain the same as those at the beginning of Appendix \ref{sec:lower_bounds}. The key difference is that the optimal arm $a^*$ is now $a^* = \arg \min_{i: \mu_i \geq \mu_0} c_i$. Further, in the fixed threshold setting, there are only low-cost arms and high-cost arms and no special reference arm.   

\begin{theorem}[Lower bound for the number of pulls of \emph{low-cost} arms]
For any bandit instance $\nu$ over horizon $T$, for all consistent strategies, we have
\begin{align*}
\liminf_{T \rightarrow \infty}
\frac{\Expectation \mysqbrack{ n_i (T) }}{\log T}
&\geq
\frac{1}{D_{\textrm{inf}} \myparen{\nu_i, \mu_0}}
,
\quad
\text{for arms } i = 1, \ldots, a^* - 1.
\end{align*}
\label{thm:ft_lower_bound_low_cost}
\end{theorem}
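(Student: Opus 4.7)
The proof adapts the change-of-measure argument used for Theorem \ref{thm:pe_lower_bound_low_cost}, and in fact becomes strictly simpler since the feasibility threshold $\mu_0$ is a known constant that does not depend on any arm's mean. My starting point is the fundamental inequality \eqref{eq:fundamental_lower_bound} of \cite{garivier2019explore}: for any two instances $\nu, \nu' \in \mathcal{M}$ and any $[0,1]$-valued random variable $Z$,
\[
\sum_{k=1}^{K} \Expectation_{\nu} \mysqbrack{n_k(T)} \, \textrm{KL}(\nu_k, \nu_k') \; \geq \; \textrm{kl}\myparen{\Expectation_{\nu}\mysqbrack{Z},\, \Expectation_{\nu'}\mysqbrack{Z}}.
\]

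First, I would fix an arbitrary low-cost arm $i \in \{1, \ldots, a^*-1\}$ in the original instance $\nu$; since $i$ is not optimal in $\nu$ and $c_i < c_{a^*}$, we must have $\mu_i < \mu_0$. I would then construct a modified instance $\nu'$ by altering only $\nu_i$ to some $\nu_i' \in \mathcal{M}$ with mean $\mu_i' > \mu_0$, leaving every other arm's distribution untouched. Under $\nu'$, arm $i$ becomes feasible, and since $c_i$ is strictly smaller than $c_{a^*}$ (hence smaller than the cost of every other originally feasible arm), arm $i$ is the unique optimal action in $\nu'$. This is where the argument is notably cleaner than in the known reference arm setting: because $\mu_0$ is an external constant, perturbing $\nu_i$ alone cannot alter the feasibility status of any other arm, and no separate case analysis is needed.

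Next, I would apply the fundamental inequality with $Z = n_i(T)/T$. Only arm $i$'s distribution differs between $\nu$ and $\nu'$, so the left-hand side collapses to $\Expectation_{\nu}\mysqbrack{n_i(T)} \, \textrm{KL}(\nu_i, \nu_i')$. Invoking the bound \eqref{eq:kl_bound} on $\textrm{kl}(p,q)$ yields
\[
\Expectation_{\nu}\mysqbrack{n_i(T)} \, \textrm{KL}(\nu_i, \nu_i') \; \geq \; \myparen{1 - \tfrac{\Expectation_{\nu}\mysqbrack{n_i(T)}}{T}} \log \myparen{\tfrac{T}{T - \Expectation_{\nu'}\mysqbrack{n_i(T)}}} - \log 2.
\]
Consistency then gives $\Expectation_{\nu}\mysqbrack{n_i(T)} = o(T^\gamma)$ (arm $i$ is sub-optimal in $\nu$) and $\Expectation_{\nu'}\mysqbrack{n_i(T)} = T - o(T^\gamma)$ (arm $i$ is optimal in $\nu'$). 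Dividing by $\log T$ and passing to the $\liminf$ in the usual fashion yields $\liminf_{T\to\infty} \Expectation_{\nu}\mysqbrack{n_i(T)}/\log T \geq 1/\textrm{KL}(\nu_i, \nu_i')$.

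Finally, since the choice of $\nu_i' \in \mathcal{M}$ with $\mu_i' > \mu_0$ was arbitrary, I would take the supremum of the right-hand side over all such perturbations—equivalently, the infimum of $\textrm{KL}(\nu_i, \nu_i')$—to obtain $\liminf_{T\to\infty} \Expectation_{\nu}\mysqbrack{n_i(T)}/\log T \geq 1/D_{\textrm{inf}}(\nu_i, \mu_0)$, as claimed. I do not anticipate any genuine obstacle; the only point requiring attention is to explicitly observe that the admissible perturbations $\nu_i'$ indeed render $i$ the unique optimal action, which is immediate in the fixed-threshold case.
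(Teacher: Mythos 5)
Your proposal is correct and follows essentially the same route as the paper's proof: the same single-arm modification $\nu_i'$ with mean exceeding $\mu_0$ (making the cheap arm $i$ optimal), the same application of the fundamental inequality with $Z = n_i(T)/T$ together with consistency, and the same final supremum yielding $1/D_{\textrm{inf}}(\nu_i,\mu_0)$. Your observation that the fixed threshold makes the argument cleaner (no other arm's feasibility can change) is exactly the simplification implicit in the paper's proof.
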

\begin{proof}
The proof follows along similar lines to the proof of Theorem \ref{thm:pe_lower_bound_low_cost}. Given any bandit instance $\nu$ and any arm $i = 1, \ldots, a^* - 1$, we construct a modified instance $\nu'$. The instance $\nu'$ satisfies $\nu'_j = \nu_j, \myforall j \in [\narms] \setminus \mybraces{i}$. Since $i < a^*$ is the optimal action, for instance $\nu'$, for the reward distribution $\nu'_i$ of arm $i$ we must have that $\nu'_i$ be drawn from set $\mathcal{M}$ and the expected reward $\mu'_i \geq \mu_0$ so in the new environment the optimal arm changes. As in the proof of Theorem \ref{thm:pe_lower_bound_low_cost} we shall leverage that any consistent policy must simultaneously satisfy $\Expectation_{\nu} \mysqbrack{ n_i (T) } = o \myparen{ T^{\gamma} }$ and $\Expectation_{\nu'} \mysqbrack{ n_i (T) } = T - o \myparen{ T^{\gamma} } \myforall \gamma \in (0, 1]$. Selecting $Z = \frac{n_i (T)}{T}$ and following the steps in the proof of Theorem \ref{thm:pe_lower_bound_low_cost} precisely we get,
\begin{align*}
\liminf_{T \rightarrow \infty} 
\frac{\Expectation_{\nu} \mysqbrack{n_i (T)} }{\log T}
&\geq
\frac{1}{\text{KL} \myparen{ \nu_i, \nu_i' }}
\\
&\geq
\frac{1}{D_{\textrm{inf}} \myparen{\nu_i, \mu_0}}
\quad
\text{ (taking supremum over all permissible $\nu'_i$)}
.
\end{align*}

\begin{remark}[No lower bound for high cost arms]
As remarked earlier for the known reference arm setting, there is no lower bound corresponding to Theorem \ref{thm:ft_lower_bound_low_cost} for high cost arms $i = a^* + 1, \ldots, K$. In the modified instance $\nu'$ these arms shall not become optimal even if $\mu_i \geq \mu_0$ since they are more expensive than action $a^*$. This fact prevents developing a lower bound for their samples.
\end{remark}

\begin{remark}[Lower bound in terms of gap $\Delta_{Q, i}$]
As worked out in previous remarks, when $\mathcal{M}$ is the family of Gaussian reward distributions with variance $\sigma^2 = 1$, we have $ D_{\textrm{inf}} (\nu_i, x) = \frac{\myparen{\mu_i - x}^2}{2}$ and therefore,
\begin{align*}
\liminf_{T \rightarrow \infty} 
\frac{\Expectation \mysqbrack{ n_i (T) }}{\log T}
&\geq
\frac{2}{\myparen{ \mu_i - \mu_0 }^2}
=
\frac{2}{\Delta_{Q, i}^2}
,
\quad
\text{for arms }
i = 1, \ldots, a^* - 1
.
\end{align*}
We shall later see in Appendix \ref{sec:fixed_threshold}, the lower bound matches the upper bound for our FT-UCB algorithm designed for this setting.
\end{remark}
\end{proof}

\clearpage

\section{Analysis for PE in the Known Reference Arm Setting}
\label{sec:pe}
In this Section we build up to the proof of Theorem \ref{thm:pe_upper_bound} by upper bounding the expected number of samples of low-cost infeasible arms, the reference arm $\ell$, and of high-cost arms under the Asymmetric-PE setting with maximum round-deviation $\kappa$. We then particularize these results to the $\kappa = 0$ case corresponding to conventional PE to obtain an upper bound on the expected regret for Algorithm \ref{algo:pe} PE.   

\subsection{Definitions and Setup Required for Analysis}

As discussed in the main paper, the PE algorithm is inspired by the Improved-UCB successive elimination approach where sampling of arms occurs in un-interrupted batches called rounds. In Improved-UCB, a set of active arms is maintained and at the end of every round, arms in the active set are re-tested for their candidacy using an elimination criteria. Since in Pairwise-Elimination, we inherit the un-interrupted round based sampling scheme and elimination-criteria first used in Improved-UCB, to prove Theorem \ref{thm:pe_upper_bound} we build on the analysis from \cite{auer2010ucb}.

\begin{definition}[Round $\rho_i$]
For $i \leq a^*$, define round number $\rho_i$ to be,
\begin{align}
\rho_i
&= 
\min 
\mybraces{ \omega_i \, | \, \tilde{\gap}_{\omega_i} < 
\frac{| \gap_{Q, i} |}{2}}.
\label{eq:round_mj}
\end{align}
\label{def:round_mj}
\end{definition}
Intuitively, $\rho_i$ is the PE round number during episode $i$ by which we expect to either identify low-cost infeasible arm $i < a^*$ as being unsuitable or identify the best action $a^*$ as being suitable.

\begin{definition}[Samples associated with a round $\tau$]
From Function \ref{algo:asymmetric_pe_function}, we know that for any arm the required number of samples to be drawn by round $\omega_i$ is given by,
\begin{align}
    \tau_{\omega_i} &= \myceil {\frac{2 \log \myparen{ \horizon \tilde{\gap}^2_{\omega_i} } }{ \tilde{\gap}^2_{\omega_i} } }.
    \label{eq:per_round_exploration_budget_tau}
\end{align}
\label{def:per_round_exploration_budget_tau}
\end{definition}

\begin{table}[ht]
\centering
\caption{Probabilistic Events Descriptions and Symbols for PE Analysis}
\begin{tabular}{|c|p{10cm}|}
\hline
\textbf{Symbol} & \textbf{Event Description} \\[0.3em]
\hline
$G_{1, i}, \myforall i \leq a^*$ & Episode $i$ is executed to evaluate arm $i$ \\[0.3em]
\hline
$G_{2, i}, \myforall i < a^*$ & Arm $i$ is eliminated by arm $\ell$ by when round $\omega_i = \rho_i$, during episode $i$ \\[0.3em]
\hline
$G_{2, a^*}$ & Arm $\ell$ is eliminated by arm $a^*$ by when $\omega_{a^*} = \rho_{a^*}$, during episode $a^*$ \\[0.3em]
\hline
$G_{3, i}, \myforall i < a^*$ & Arm $\ell$ is not eliminated by arm $i$ by when round $\omega_i = \rho_i - 1$, during episode $i$ \\[0.3em]
\hline
$G_{3, a^*}$ & Arm $a^*$ is not eliminated by arm $\ell$ by when round $\omega_{a^*} = \rho_{a^*} - 1$, during episode $a^*$ \\[0.3em]
\hline
$E_i, \myforall i \leq a^*$ & Available samples ran out during episode $i$ before the sampling for round $\rho_i$ could conclude and before an arm could be eliminated \\[0.3em]
\hline
\end{tabular}
\label{table:events}
\end{table}

\begin{definition}[Samples between a time interval $n_i (t_1, t_2)$]
For the forthcoming analysis we introduce notation $n_i \myparen{t_1, t_2}$ for the random variable denoting the number of samples of arm $i$ accrued between time steps $t_1$ and $t_2$ both inclusive. 
\label{def:samples_bw_t_1_t_2}
\end{definition}

\begin{definition}[Final time-step in episode $i$]
We use $t_i$ to denote the final time-step in episode $i$.   
\label{def:final_time_step_in_episode}
\end{definition}
Combining definitions \ref{def:samples_bw_t_1_t_2} and \ref{def:final_time_step_in_episode} the variable $n_{\ell} (1, t_i)$ denotes the number of samples of reference arm $\ell$ accrued by the end of episode $i$. At the outset of our analysis, we define a large collection of probabilistic events needed for developing Theorem \ref{thm:pe_upper_bound}'s intermediate results in Table \ref{table:events}. Each event in Table \ref{table:events} is a subset of the sample space $\Omega$ associated with a run of PE. In the descriptions of events in Table \ref{table:events}, when we say that an elimination event occurs \textit{by a round}, we are including the round being mentioned. For example: \quotes{Arm $i$ is eliminated by round $\rho_i$} means that the arm $i$ was eliminated in round $\omega_i$ such that $\omega_i \leq \rho_i$. 
\begin{remark}
In the event descriptions, whenever we refer to the reference arm $\ell$ available to the algorithm, we are really referring to the action whose expected return is $(1 - \alpha) \mu_\ell$, where $\alpha$ is the subsidy factor.
\label{remark:reference_arm_is_subsidized_remark}
\end{remark}
Arm $\ell$ not being eliminated by arm $i$ during round $\rho_i$ is covered by $G_{2, i}$, hence the rounds range up to $\rho_i - 1$ in the definition of $G_{3, i}$. Unlike the episodes $i < a^*$ where the nominal outcome is for the reference arm to win over the candidate arm, for episode $a^*$, nominally the arm $a^*$ shall be the winner and therefore the events $G_{2, a^*}, G_{3, a^*}$ are defined separately to account for this reality.

Lastly, we define compound event $G_i$ using events in Table \ref{table:events} as,
\begin{align}
    G_i = G_{1, i} \cap \myparen{\myparen{G_{2, i} \cap G_{3, i}} \cup E_i}.
    \label{eq:Gi_definition}
\end{align}
In English the event $G_i, i \leq a^*$ is the event that episode $i$ occurs (event $G_{1, i}$) and that either an accurate and timely elimination of one arm by another is made (event $G_{2, i} \cap G_{3, i}$), or we run out of samples before a decision could be made and prior to the conclusion of round $\rho_i$ (event $E_i$). Conditioning the samples $n_i (\horizon)$ for PE on event $G_i$ gives us the following key result,
\begin{align}
    \Pr
    \myparen{
        n_i (\horizon) 
        \leq
        \tau_{\rho_i}   
        \mid G_i 
    }
    &=
    1
    \label{eq:upper_bound_conditioned_on_Gi}
    .
\end{align}
We leverage Equation \ref{eq:upper_bound_conditioned_on_Gi} in conjunction with Lemma \ref{lemma:iterated_expectation_lemma} to prove Theorem \ref{thm:pe_upper_bound}. To use this procedure, we require Lemma \ref{lemma:partition_with_Gi} that partitions the probability space $\Omega$ into mutually exclusive and exhaustive events including $G_i$.
\begin{lemma}[Partition of $\Omega$ with $G_i$]
    The events $G_i$, $B_{1, i} = G_{1, i}^c$, and $B_i = G_{1, i} \cap \myparen{ G_{2, i}^c \cup G_{3, i}^c} \cap E_i^c$ are exhaustive and pairwise exclusive $\myforall i \leq a^*$.
    \label{lemma:partition_with_Gi}
\end{lemma}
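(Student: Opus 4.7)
The plan is to verify the two required properties (pairwise disjointness and exhaustiveness) by pure set algebra, treating $G_{1,i}$, the compound event $A := G_{2,i} \cap G_{3,i}$, and $E_i$ as the basic building blocks. Since $B_{1,i} = G_{1,i}^c$ while both $G_i \subseteq G_{1,i}$ and $B_i \subseteq G_{1,i}$, disjointness of $B_{1,i}$ from the other two is immediate, and exhaustiveness reduces to showing $G_i \cup B_i = G_{1,i}$. Writing things in terms of $A$ gives the compact forms
\begin{equation*}
G_i = G_{1,i} \cap (A \cup E_i), \qquad B_i = G_{1,i} \cap A^c \cap E_i^c,
\end{equation*}
where I have used De Morgan to rewrite $G_{2,i}^c \cup G_{3,i}^c = A^c$.

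For pairwise exclusivity of $G_i$ and $B_i$, I would intersect the two displayed expressions and distribute:
\begin{equation*}
G_i \cap B_i = G_{1,i} \cap \bigl[(A \cap A^c \cap E_i^c) \cup (E_i \cap A^c \cap E_i^c)\bigr] = G_{1,i} \cap [\emptyset \cup \emptyset] = \emptyset.
\end{equation*}
Disjointness of $B_{1,i}$ from each of $G_i$ and $B_i$ then follows from $G_i, B_i \subseteq G_{1,i}$ while $B_{1,i} \cap G_{1,i} = \emptyset$.

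For exhaustiveness, since $B_{1,i}$ already accounts for the complement $G_{1,i}^c$, it suffices to verify $G_i \cup B_i \supseteq G_{1,i}$. Factoring out $G_{1,i}$ and using the tautology $(A \cup E_i) \cup (A^c \cap E_i^c) = \Omega$ (any outcome either lies in $A \cup E_i$ or, failing both, lies in $A^c \cap E_i^c$) yields $G_i \cup B_i = G_{1,i} \cap \Omega = G_{1,i}$, and therefore $G_i \cup B_i \cup B_{1,i} = G_{1,i} \cup G_{1,i}^c = \Omega$.

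There is no real obstacle here: the lemma is a purely set-theoretic bookkeeping statement whose only subtlety is remembering to apply De Morgan to $G_{2,i}^c \cup G_{3,i}^c$ so that $B_i$ can be written as the complement inside $G_{1,i}$ of the \emph{combined} desirable/termination event $A \cup E_i$ that defines $G_i$. Once written in this form, disjointness and exhaustiveness are one-line distributive computations. The only modeling check worth making explicit is that the events $G_{1,i}$, $G_{2,i}$, $G_{3,i}$, and $E_i$ are all well-defined subsets of the sample space $\Omega$ of a single run of PE (so that complements and intersections are unambiguous), which is clear from the descriptions in Table~\ref{table:events}.
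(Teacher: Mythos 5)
Your proof is correct and uses essentially the same elementary set algebra as the paper: the paper verifies the identity $B_{1,i} \cup B_i = G_i^c$ via distributivity and De Morgan, which is exactly the computation you perform when you show $G_i \cap B_i = \emptyset$ and $G_i \cup B_i = G_{1,i}$ after writing $B_i = G_{1,i} \cap (A \cup E_i)^c$ with $A = G_{2,i} \cap G_{3,i}$. The only difference is organizational (direct verification of disjointness and exhaustiveness versus identifying the union of the bad events with $G_i^c$), so no further comment is needed.
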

\begin{proof}
    We can prove that the events stated in Lemma \ref{lemma:partition_with_Gi} are mutually exclusive and exhaustive by showing that $G_i^c = B_{1, i} \cup B_i$. Since $G_i$ and $G_i^c$ are exhaustive showing so will show that the three events are exhaustive. Moreover, since $G_i$ and $G_i^c$ are mutually exclusive, and since $B_{1, i}$ and $B_i$ are mutually exclusive by construction, we would also have all the events being pairwise mutually exclusive in addition to being exhaustive.
    \begin{align}
        B_{1, i} 
        \cup
        B_i
        &= 
        G_{1, i}^c
        \cup
        \myparen{
        G_{1, i}
        \cap
        \myparen{
            G_{2, i}^c
            \cup
            G_{3, i}^c
        }
        \cap
        E_i^c
        }
        \\
        &=
        \myparen{
        G_{1, i}^c
        \cup
        G_{1, i}
        }
        \cap
        \myparen{
        G_{1, i}^c
        \cup
        \myparen{
        \myparen{
            G_{2, i}^c
            \cup
            G_{3, i}^c
        }
        \cap
        E_i^c
        }
        }
        \quad
        \text{ ($\cup$ distributes over $\cap$)}
        \\
        &=
        G_{1, i}^c
        \cup
        \myparen{
        \myparen{
            G_{2, i}^c
            \cup
            G_{3, i}^c
        }
        \cap
        E_i^c
        }
        \\
        &=
        G_i^c
        \label{eq:final_in_Gi_parition_lemma_proof}
        .
    \end{align}
    Where Equation \ref{eq:final_in_Gi_parition_lemma_proof} follows from the expression obtained using De Morgan's laws for $G_i^c$ using the definition of $G_i$ in Equation \ref{eq:Gi_definition}.
\end{proof}

\subsubsection*{Bound samples for the case $i < a^*$}

\begin{lemma}[Bound on the number of samples of a low-cost infeasible arm under Pairwise-Elimination]
When the maximum round deviation $\kappa = 0$, the expected number of samples of a low-cost arm with index $i < a^*$ over horizon $\horizon$ is upper bounded by,
\begin{align}
    \Expectation \mysqbrack{ n_i \myparen{\horizon} }
    &<
    1 + \frac{32 \log\myparen{ \horizon \gap_{Q, i}^2} }{ \gap_{Q, i}^2 }
    + \frac{43}{ \gap_{Q, i}^2 }
    \nonumber
    .
\end{align}
\label{lemma:samples_of_low_cost_unsatisfactory_arm}
\end{lemma}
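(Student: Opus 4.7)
The plan is to apply the iterated-expectation identity of Lemma \ref{lemma:iterated_expectation_lemma} to the partition $\{G_i, B_{1,i}, B_i\}$ from Lemma \ref{lemma:partition_with_Gi}, writing
\[
\Expectation \mysqbrack{n_i(\horizon)} = \Expectation \mysqbrack{n_i(\horizon) \mid G_i} \Pr(G_i) + \Expectation \mysqbrack{n_i(\horizon) \mid B_{1,i}} \Pr(B_{1,i}) + \Expectation \mysqbrack{n_i(\horizon) \mid B_i} \Pr(B_i),
\]
and bounding the three terms separately. On $B_{1,i} = G_{1,i}^c$, episode $i$ is never invoked and arm $i$ is never sampled, so $n_i(\horizon) = 0$. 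On $G_i$, Equation \ref{eq:upper_bound_conditioned_on_Gi} gives $n_i(\horizon) \leq \tau_{\rho_i}$, so it remains only to bound $\tau_{\rho_i}$. By Definition \ref{def:round_mj} and the halving rule $\tilde{\Delta}_{\omega + 1} = \tilde{\Delta}_{\omega}/2$, we have $\tilde{\Delta}_{\rho_i} \geq \gap_{Q, i}/4$ together with $\tilde{\Delta}_{\rho_i} < \gap_{Q, i}/2$, so substituting into Definition \ref{def:per_round_exploration_budget_tau} yields $\tau_{\rho_i} \leq 1 + 32 \log \myparen{\horizon \gap_{Q, i}^2} / \gap_{Q, i}^2$, which gives the first two summands of the claim.

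For the remaining summand, the rough bound $n_i(\horizon) \leq \horizon$ reduces the $B_i$ contribution to showing $\horizon \Pr(B_i) \leq 43 / \gap_{Q, i}^2$. I would split $\Pr(B_i) \leq \Pr(G_{2,i}^c \cap G_{1,i} \cap E_i^c) + \Pr(G_{3,i}^c \cap G_{1,i} \cap E_i^c)$ and control each via concentration of the empirical means. For the first term, the failure of the elimination test $\hat{\mu}_i + \beta_{\rho_i} < (1 - \alpha)(\hat{\mu}_\ell - \beta_{\rho_i})$ at round $\rho_i$ rearranges to $(\hat{\mu}_i - \mu_i) + (1 - \alpha)(\mu_\ell - \hat{\mu}_\ell) \geq \gap_{Q, i} - (2 - \alpha) \beta_{\rho_i}$, and since $\beta_{\rho_i} \leq \tilde{\Delta}_{\rho_i}/2 < \gap_{Q, i}/4$ the right-hand side is at least $\gap_{Q, i}/2$. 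A union bound combined with Hoeffding's inequality (Lemma \ref{lemma:hoeffding_bound}) applied to each empirical mean with deviation $\gap_{Q, i}/4$ and $\tau_{\rho_i}$ samples then yields a per-round failure probability of order $1 / \myparen{\horizon \tilde{\Delta}_{\rho_i}^2}$. The event $G_{3, i}^c$ is handled analogously by a union bound over rounds $m \leq \rho_i - 1$ using the same Hoeffding-based per-round estimate.

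The main obstacle will be the constant bookkeeping in the $B_i$ term. Each rearranged elimination failure must be decoupled into separate deviations of $\hat{\mu}_i$ and $\hat{\mu}_\ell$, the Hoeffding exponent must absorb the $\log \myparen{\horizon \tilde{\Delta}_m^2}$ factor so that the per-round failure probability scales like $1 / \myparen{\horizon \tilde{\Delta}_m^2}$, and the geometric sum $\sum_{m \leq \rho_i} 1/\tilde{\Delta}_m^2$ must be truncated using $\tilde{\Delta}_{\rho_i}^{-2} \leq 16 / \gap_{Q, i}^2$ to arrive at the advertised constant $43$. Combining the $G_i$ and $B_i$ contributions then delivers the stated upper bound.
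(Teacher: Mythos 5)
Your proposal is correct and mirrors the paper's own proof: the same conditioning on the partition $\{G_i, B_{1,i}, B_i\}$ from Lemma~\ref{lemma:partition_with_Gi}, the same bound $\tau_{\rho_i} < 1 + 32\log(\horizon\gap_{Q,i}^2)/\gap_{Q,i}^2$ via $\gap_{Q,i}/4 \leq \tilde{\gap}_{\rho_i} < \gap_{Q,i}/2$, and the same Hoeffding-plus-union-bound control of $\horizon\Pr(B_i)$, with the mis-elimination event summed geometrically over rounds $\rho < \rho_i$ to yield the $43/\gap_{Q,i}^2$ constant. The only cosmetic difference is that you argue via the contrapositive of the elimination test while the paper states the two concentration clauses and shows they force elimination; the bounds obtained are identical.
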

\begin{proof}
The expected number of pulls $\Expectation \mysqbrack{n_i \myparen{ \horizon }}$ are bound by using the Iterated Expectation Lemma \ref{lemma:iterated_expectation_lemma} and conditioning on the event collection $G_i, B_{1, i}, B_i$ which are mutually exclusive and exhaustive per Lemma \ref{lemma:partition_with_Gi}.
\begin{align}
    \Expectation 
    \mysqbrack{ n_i \myparen{\horizon} } 
    &= 
    \Expectation 
    \mysqbrack{ n_i \myparen{\horizon} \mid G_i } 
    \cdot 
    \Pr \myparen{ G_i } 
    + 
    \Expectation 
    \mysqbrack{ n_i \myparen{\horizon} \mid B_{1, i} } \cdot 
    \Pr \myparen{ B_{1, i}  } 
    \nonumber
    \\
    &\quad
    + 
    \Expectation 
    \mysqbrack{ n_i \myparen{\horizon} \mid B_{i} } \cdot 
    \Pr \myparen{ B_{i}  } 
    \\
    &\leq 
    \Expectation 
    \mysqbrack{ n_i \myparen{\horizon} \mid G_i } 
    + 
    \horizon \cdot \Pr \myparen{ B_i  }
    \label{eq:bound_on_low_cost_unsatisfactory_arm_samples}
    \\
    &= 
    \Expectation 
    \mysqbrack{ n_i \myparen{\horizon} \mid G_i } 
    + 
    \horizon \cdot \Pr \myparen{ G_{1, i} \cap \myparen{ G_{2, i}^c \cup G_{3, i}^c} \cap E_i^c }
    \quad
    \text{ ($B_i$ from Lemma \ref{lemma:partition_with_Gi})}
    \\
    &\leq
    \Expectation 
    \mysqbrack{ n_i \myparen{\horizon} \mid G_i } 
    + 
    \horizon 
    \cdot 
    \Pr 
    \myparen{ 
    G_{1, i} \cap \myparen{ G_{2, i}^c \cup G_{3, i}^c} 
    }
    \\
    &=
    \Expectation 
    \mysqbrack{ n_i \myparen{\horizon} \mid G_i } 
    + 
    \horizon 
    \cdot 
    \Pr 
    \myparen{ 
    \myparen{G_{1, i} \cap G_{2, i}^c}  
    \cup
    \myparen{G_{1, i} \cap G_{3, i}^c} 
    }
    \text{ (distributivity of $\cap$)}
    \\
    &\leq
    \Expectation 
    \mysqbrack{ n_i \myparen{\horizon} \mid G_i } 
    + 
    \horizon 
    \cdot 
    \Pr  
    \myparen{
    B_{2, i}
    \cup
    B_{3, i}
    }
    \quad
    \text{ (simplifying notation)}
    \\
    &=
    \Expectation 
    \mysqbrack{ n_i \myparen{\horizon} \mid G_i } 
    + 
    \horizon 
    \cdot 
    \myparen{
    \Pr  
    \myparen{B_{2, i}}  
    +
    \Pr
    \myparen{B_{3, i}}
    }
    \quad
    \text{ (using the union bound)}
    \label{eq:union_bound_applied_to_Bi}
    .
\end{align}
Where Equation \ref{eq:bound_on_low_cost_unsatisfactory_arm_samples} follows from the fact that $ n_i \myparen{\horizon} \mid B_{1, i} = 0$ since there can be no samples of arm $i$ if episode $i$ never occurs, and we define $G_{1, i} \cap G_{2, i}^c$ and $G_{1, i} \cap G_{3, i}^c$ as $B_{2, i}$ and $B_{3, i}$ respectively for notational simplicity. 

First we bound the number of samples of arm $i$ conditioned on the good event $G_i$. Since during episode $i < a^*$, we either make the correct decision of eliminating arm $i$ by episode $\omega_i = \rho_i$ as captured by $G_{2, i} \cap G_{3, i}$. Alternatively, under $G_i$ we run out of samples as captured by $E_i$. In either case we will not have more than $\tau_{\rho_i}$ samples of arm $i$, where $\tau_{\rho_i}$ is given by,
\begin{align}
\tau_{\rho_i} 
&= 
\left\lceil\frac{2 \log (\horizon \tilde{\gap}^2_{\rho_i})}{\tilde{\gap}^2_{\rho_i}}
\right\rceil.
\\
\intertext{By construction of the round $\rho_i$, for all $i \leq a^*$, we have,}
\frac{|\gap_{Q, i}|}{4} 
&\leq 
\tilde{\gap}_{\rho_i} < \frac{|\gap_{Q, i}|}{2}. 
\label{eq:bounds_on_delta_tilde} 
\\
\intertext{Plugging in the bounds in Equation \ref{eq:bounds_on_delta_tilde},}
\tau_{\rho_i} 
&\leq 
\left\lceil \frac{32 \log\left( \frac{\horizon \gap_{Q, i}^2}{ 4 } \right)}{ \gap_{Q, i}^2 } \right\rceil \\
&< 
1 + \frac{32 \log(\horizon \gap_{Q, i}^2 )}{ \gap_{Q, i}^2 }. 
\label{eq:round_number_sample_bound}
\\
\intertext{Therefore, we have,}
\Expectation[n_i(\horizon) \mid G_i] 
&\leq 
\tau_{\rho_i} < 1 + \frac{32 \log(\horizon \gap_{Q, i}^2) }{\gap_{Q, i}^2}. \label{eq:bound_on_i_samples_under_Gi}
\end{align}

Next we bound $\Pr \myparen{ B_{2, i} }$ and $\Pr \myparen{ B_{3, i} }$ in order. Since $B_{2, i} = G_{1, i} \cap G_{2, i}^c$, from the specification of $G_{2, i}^c$ and the fact that intersecting with $G_{1, i}$ puts us in the sub-space of $\Omega$ where episode $i$ occurs, $B_{2, i} \myforall i < a^*$ is the event: \quotes{Arm $i$ is \textbf{not} eliminated by arm $\ell$ by when round $\omega_i = \rho_i$, during episode $i$}. Similarly $B_{3, i} \myforall i < a^*$ is the event that \quotes{Arm $\ell$ \textbf{is eliminated} by arm $i$ by when round $\omega_i = \rho_i - 1$, during episode $i$}

Along the lines of the proof composition in \cite{auer2010ucb} for the Improved UCB algorithm, we construct three clauses on the empirical returns of arms $i$ and $\ell$ in Equations \ref{eq:clause_on_emp_j}, \ref{eq:clause_on_emp_ell}, and \ref{eq:clause_on_emp_ell_2}.
\begin{align}
    \hat{\mu}_i 
    &\leq 
    \mu_i + \sqrt{\frac{\log \myparen{\horizon \tilde{\gap}^2_{\omega_i}} }{2 \tau_{\omega_i}} }
    \label{eq:clause_on_emp_j}
    \\
    \hat{\mu}_{\ell} 
    &\geq 
    \mu_{\ell} - \sqrt{\frac{\log \myparen{ \horizon \tilde{\gap}^2_{\omega_i} }}{2 \tau_{\omega_i}} }
    \label{eq:clause_on_emp_ell}
    \\
    \hat{\mu}_{\ell} 
    &\geq 
    \mu_{\ell} - \sqrt{\frac{\log \myparen{ \horizon \tilde{\gap}^2_{\omega_\ell} }}{2 \tau_{\omega_\ell}} }.
    \label{eq:clause_on_emp_ell_2}
\end{align}
Clauses \ref{eq:clause_on_emp_j} and \ref{eq:clause_on_emp_ell} holding when $\omega_i = \rho_i$ lead to the elimination of arm $i$ by arm $\ell$ as is shown in the work that follows,
\begin{align}
    \sqrt{\log \myparen{ \horizon \tilde{\gap}_{\rho_i}^2 } / 2 \tau_{\rho_i} } 
    \leq
    \tilde{\gap}_{\rho_i} / 2
    <
    \gap_{Q, i} / 4.
    \label{eq:ordering_between_gap_and_buffer_low_cost}
\end{align}
Therefore,
\begin{align}
    \hat{\mu}_i + 
    \sqrt{
        \frac{\log \myparen{\horizon \tilde{\gap}_{\rho_i}^2 }}{2 \tau_{\rho_i}} 
    }
    &\leq
    \mu_i + 
    2 \sqrt{ \frac{\log \myparen{\horizon \tilde{\gap}_{\rho_i}^2 } }
    { 2 \tau_{\rho_i} } } \text{ (From clause \ref{eq:clause_on_emp_j}, and $\omega_i = \rho_i$ ) }\\
    &< \mu_i + \gap_{Q, i} - 2 \sqrt{ \frac{ \log \myparen{ \horizon \tilde{\gap}^2_{\rho_i} } }{ 2 \tau_{\rho_i} } } 
    \text{ (From the ordering \ref{eq:ordering_between_gap_and_buffer_low_cost})}
    \\
    &= (1 - \alpha) \mu_{\ell} - 2 \sqrt{ \frac{ \log \myparen{\horizon \tilde{\gap}^2_{\rho_i}} }{ 2 \tau_{\rho_i} } }\\
    &\leq
    (1 - \alpha) \hat{\mu}_{\ell} - (1 - \alpha) \sqrt{ \frac{ \log \myparen{\horizon \tilde{\gap}^2_{\rho_i}} }{ 2 \tau_{\rho_i} } }
    \text{ (From clause \ref{eq:clause_on_emp_ell}, and $\omega_i = \rho_i$ ) }
    \\
    &\leq
    (1 - \alpha) \hat{\mu}_{\ell} 
    - 
    (1 - \alpha) \sqrt{ \frac{ \log \myparen{\horizon \tilde{\gap}^2_{\omega_\ell}} }{ 2 \tau_{\omega_\ell} } }
    \text{ (Since $\omega_\ell \geq \omega_i$)}.
    \label{eq:final_step_in_illustrating_elimination_criteria_clause_satisfied}
\end{align}
Here, Equation \ref{eq:final_step_in_illustrating_elimination_criteria_clause_satisfied} is the criteria for arm $i$ being eliminated by arm $\ell$ in PE. We upper bound the probability of the arm $i$ not being eliminated by union bounding the probability of the complements of the Clauses \ref{eq:clause_on_emp_j} and \ref{eq:clause_on_emp_ell} using Lemma \ref{lemma:hoeffding_bound} (Hoeffding's Inequality). In addition, we include the bound on the complement of Clause \ref{eq:clause_on_emp_ell_2} which shall be useful later in bounding $\Pr \myparen{ B_{3, i}  }$.
\begin{align}
    \Pr 
    \myparen{
        \hat{\mu}_i > \mu_i + 
        \sqrt{\frac{\log \myparen{\horizon \tilde{\gap}^2_{\omega_i}} }{2 \tau_{\omega_i}} }
    }
    &\leq \frac{1}{\horizon \tilde{\gap}_{\omega_i}^2 }
    \label{eq:hoeffding_clause_on_emp_j}
    \\
    \Pr
    \myparen{
        \hat{\mu}_{\ell} < \mu_{\ell} - 
        \sqrt{\frac{\log \myparen{ \horizon \tilde{\gap}^2_{\omega_i} }}{2 \tau_{\omega_i}} }
    }
    &\leq 
    \exp
    \myparen{
        -\frac{\tau_{\omega_\ell}}{\tau_{\omega_i}}
        \log \myparen{ \horizon \tilde{\gap}_{\omega_i}^2 }
    }
    \leq
    \frac{1}{\horizon \tilde{\gap}_{\omega_i}^2 }
    \text{  (Since $\tau_{\omega_\ell} \geq \tau_{\omega_i}$)}
    \label{eq:hoeffding_clause_on_emp_ell}
    \\
    \Pr
    \myparen{
        \hat{\mu}_{\ell} < \mu_{\ell} - 
        \sqrt{\frac{\log \myparen{ \horizon \tilde{\gap}^2_{\omega_\ell} }}{2 \tau_{\omega_\ell}} }
    }
    &\leq \frac{1}{ \horizon \tilde{\gap}_{\omega_\ell}^2 }
    \leq
    \frac{4^\kappa}{ \horizon \tilde{\gap}_{\omega_i}^2 }
    \text{  (Since $\omega_\ell \leq \omega_i + \kappa$, and $\tilde{\gap}_m = 2^{-m}$)}
    .
    \label{eq:hoeffding_clause_on_emp_ell_2}
\end{align}
If either of the two clauses \ref{eq:clause_on_emp_j} or \ref{eq:clause_on_emp_ell} are violated, then elimination of arm $i$ will not occur. Therefore, we can bound,
\begin{align}
    \Pr 
    \myparen{ B_{2, i} }
    &\leq 
    \Pr 
    \myparen{
        \hat{\mu}_i > \mu_i + 
        \sqrt{\frac{\log \myparen{\horizon \tilde{\gap}^2_{\omega_i}} }{2 \tau_{\omega_i}} }
    }
    +
    \Pr
    \myparen{
        \hat{\mu}_{\ell} < \mu_{\ell} - 
        \sqrt{\frac{\log \myparen{ \horizon \tilde{\gap}^2_{\omega_i} }}{2 \tau_{\omega_i}} }
    }
    \\
    &\leq
    \frac{2}{\horizon \tilde{\gap}^2_{\omega_i} }.
    \label{eq:probability_of_j_not_being_eliminated}
\end{align}
Plugging in round number $\omega_i = \rho_i$ in Equation \ref{eq:probability_of_j_not_being_eliminated}, and then plugging in the lower bound on $\tilde{\gap}_{\rho_i}$ from Ordering  \ref{eq:bounds_on_delta_tilde}, we have,
\begin{align}
    \Pr 
    \myparen{ B_{2, i} }
    &\leq 
    \frac{2}{\horizon \tilde{\gap}^2_{\rho_i} }
    \leq
    \frac{32}{\horizon \gap_{Q, i}^2 }.
    \label{eq:B_2_i_prob_bound}
\end{align}
Finally, we wish to bound $\Pr \myparen{ B_{3, i} }$ . Say that the actual elimination of arm $\ell$ by arm $i$ occurs in some round $\omega_i = \rho < \rho_i$. To bound the probability of this clause of the event $G_i^c$, we note that the clauses in Equations \ref{eq:clause_on_emp_j} and \ref{eq:clause_on_emp_ell_2} holding simultaneously preclude arm $\ell$ from being removed by arm $i$ regardless of the round number $\rho$ in question. Therefore, using the results in Equations \ref{eq:hoeffding_clause_on_emp_j} and \ref{eq:hoeffding_clause_on_emp_ell_2}, the probability of a round $\rho$, where $\ell$ is removed, existing, 
can be found by plugging in $\omega_i = \rho$, and is upper bounded by $\frac{4^{\kappa} + 1}{\horizon \tilde{\gap}_{\rho}^2 }$\footnote{Because for any events $A, B, \text{and } C, \,\, \Pr \myparen{A \cap B} \leq \Pr \myparen{C^c} \implies \Pr \myparen{C} \leq \Pr \myparen{A^c \cup B^c} $.}. While there is no definitive round number associated with $\rho$, from the clause itself we know that we must have $\rho < \rho_i$.
\begin{align}
    \Pr
    \myparen{ B_{3, i}  }
    &\leq
    \sum_{\rho = 0}^{\rho_i - 1} 
    \frac{4^\kappa + 1}{\horizon \tilde{\gap}^2_{\rho}}
    =
    \sum_{\rho = 0}^{\rho_i - 1}
    \frac{\myparen{ 4^\kappa + 1 } \cdot 4^{\rho}}{\horizon}
    \text{ (Using $\Tilde{\gap}_{m} = 2^{-m}$)}
    \\
    &< 
    \frac{4^\kappa + 1}{3 \horizon} \cdot 4^{\rho_i} \text{ (Using the formula for the sum of a Geometric Series)}\\
    &= 
    \frac{4^\kappa + 1}{3 \horizon \tilde{\gap}_{\rho_i}^2} \text{ (Since $\tilde{\gap}_m = 2^{-m}$)}\\
    &= 
    \frac{16 \myparen{4^\kappa + 1}}{3 \horizon 
    \gap_{Q, i}^2} \text{ (Because $\tilde{\gap}_{\rho_i} \geq \frac{\gap_{Q, i}}{4}$)}
    \label{eq:sum_for_ell_being_eliminated}
    \\
    &<
    \frac{11}{\horizon \gap_{Q, i}^2} \text{ (Since we impose in Lemma \ref{lemma:samples_of_low_cost_unsatisfactory_arm})}.
    \label{eq:B_3_i_prob_bound}
\end{align}
Plugging in the bounds in Expressions \ref{eq:bound_on_i_samples_under_Gi}, \ref{eq:B_2_i_prob_bound} and \ref{eq:B_3_i_prob_bound} into Equation \ref{eq:union_bound_applied_to_Bi} we get the overall bound on the number of samples stated in Lemma \ref{lemma:samples_of_low_cost_unsatisfactory_arm}.
\end{proof}

\subsubsection*{Bounding samples of reference arm $\ell$}

Let the random variable $Z$ denote the final episode in the run of PE. Then $\mybraces{Z = z}$ constitutes a measurable event in the sample space $\Omega$ indicating that the terminal PE episode was $z$. This construction lets us establish \ref{lemma:bound_on_samples_of_reference_ell}.
\begin{lemma}[Bound on Samples of Reference arm $\ell$ under Pairwise-Elimination]
Samples of reference arm $\ell$ emanate from the episodes of candidate arms being compared to arm $\ell$. When $\kappa = 0$, we show the bound,
\begin{align}
    \Expectation \mysqbrack{ n_{\ell} (\horizon) }
    &<
    1 +
    \max_{i \leq a^*}   
    \frac{32 \log \myparen{
    \horizon \gap_{Q, i}^2}}{\gap_{Q, i}^2}
    +
    \sum_{i = 1}^{a^*}
    \frac{43}{ \gap_{Q, i}^2 }
    +
    \Expectation 
    \mysqbrack{n_{\ell} (\horizon) \mid \mybraces{Z > a^*} }
    \Pr \myparen{ Z > a^*}
    \nonumber
    .
\end{align}
Where $Z$ is the terminal episode of PE.
\label{lemma:bound_on_samples_of_reference_ell}
\end{lemma}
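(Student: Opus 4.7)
The plan is to reduce the bound on $\Expectation \mysqbrack{n_\ell(\horizon)}$ to a per-episode analysis that reuses the event partition developed in Lemma~\ref{lemma:samples_of_low_cost_unsatisfactory_arm}. First, I would condition on the terminal episode $Z$ and separate the contributions of $Z \leq a^*$ and $Z > a^*$,
\begin{align*}
\Expectation \mysqbrack{ n_\ell(\horizon) } = \Expectation \mysqbrack{ n_\ell(t_Z)\,\mathbf{1}\{Z \leq a^*\} } + \Expectation \mysqbrack{ n_\ell(\horizon) \mid Z > a^* }\,\Pr \myparen{ Z > a^* },
\end{align*}
using that once PE terminates no further samples of $\ell$ are drawn, so $n_\ell(\horizon) = n_\ell(t_Z)$ on the event $\{Z \leq a^*\}$. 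The second term matches the last piece of the lemma statement, so all the work lies in bounding the first term.

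For that first term, I would introduce the global bad event $\mathcal{B} = \bigcup_{i \leq a^*} B_i$ and split by $\mathcal{B}$ versus $\mathcal{B}^c$. On $\mathcal{B}^c$, every episode $i$ that was actually reached satisfies $G_i$ (because $B_{1,i}$ fails automatically for reached episodes and $B_i$ is excluded), so the round index attained by the candidate arm obeys $\omega_i^{\mathrm{final}} \leq \rho_i$. Since samples of $\ell$ are re-used across episodes and $\tau_m$ is non-decreasing in $m$ in the relevant regime, we get the running-max identity $n_\ell(t_Z) = \tau_{\max_{i \leq Z} \omega_i^{\mathrm{final}}} \leq \max_{i \leq a^*} \tau_{\rho_i}$. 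Substituting the bound $\tau_{\rho_i} < 1 + 32 \log \myparen{\horizon \gap_{Q,i}^2} / \gap_{Q,i}^2$ from Equation~(\ref{eq:round_number_sample_bound}) yields the first term $1 + \max_{i \leq a^*} 32 \log \myparen{\horizon \gap_{Q,i}^2}/\gap_{Q,i}^2$ of the target bound.

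On $\mathcal{B}$, I would use the trivial per-step bound $n_\ell(t_Z) \leq \horizon$ together with the union bound $\Pr \myparen{ \mathcal{B} } \leq \sum_{i \leq a^*} \Pr \myparen{ B_i }$. Since $B_i \subseteq B_{2,i} \cup B_{3,i}$, the Hoeffding-based bounds developed during the proof of Lemma~\ref{lemma:samples_of_low_cost_unsatisfactory_arm} give $\Pr \myparen{ B_i } \leq 43 / \myparen{ \horizon \gap_{Q,i}^2 }$, so the $\mathcal{B}$ contribution is at most $\sum_{i \leq a^*} 43 / \gap_{Q,i}^2$. Combining the $\mathcal{B}^c$ and $\mathcal{B}$ contributions with the untouched $Z > a^*$ remainder reproduces the stated bound.

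The main obstacle will be extending the $\Pr \myparen{ B_i }$ bound to $i = a^*$, since that episode is structurally different from episodes $i < a^*$: the nominal outcome requires $a^*$ to eliminate $\ell$ rather than the reverse, so the elimination criterion whose failure defines $B_{2,a^*}$ and $B_{3,a^*}$ swaps the roles of candidate and reference. I would therefore re-instantiate the concentration argument behind clauses (\ref{eq:clause_on_emp_j})--(\ref{eq:clause_on_emp_ell_2}) with $i$ and $\ell$ exchanged and verify that the Hoeffding bounds combine again to $43 / \myparen{ \horizon \gap_{Q,a^*}^2 }$. A minor auxiliary point worth explicitly noting is the monotonicity of $\tau_m$ in $m$ in the regime $\horizon \tilde{\Delta}_m^2 > 1$, which validates the step $\max_{i \leq a^*} \tau_{\rho_i} = \tau_{\max_{i \leq a^*} \rho_i}$ used above.
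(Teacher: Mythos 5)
Your proposal is correct, and its skeleton matches the paper's proof: bound the samples of $\ell$ on the nominal outcome by $\max_{i \leq a^*} \tau_{\rho_i}$ via sample re-use, charge each deviating episode a probability of order $1/\horizon$, and leave the $\mybraces{Z > a^*}$ remainder untouched. Where you genuinely differ is in the bookkeeping of the bad event. The paper constructs the compound event $G$ of Equation~\ref{eq:definition_of_G} and then needs two auxiliary set-theoretic results, Lemma~\ref{lemma:some_bad_event_in_trace} and the induction in Lemma~\ref{lemma:key_lemma_for_ref_ell_upper_bound}, to establish $\Pr\myparen{G^c \cap \mybraces{Z \leq a^*}} \leq \sum_{i \leq a^*} \Pr\myparen{B_i}$ in Lemma~\ref{lemma:bound_on_prob_G_complement}. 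You instead set $\mathcal{B} = \bigcup_{i \leq a^*} B_i$ and note that on $\mathcal{B}^c$ every \emph{executed} episode $i$ automatically satisfies $G_i$, since $G_i^c = B_{1,i} \cup B_i$ (Lemma~\ref{lemma:partition_with_Gi}) and $B_{1,i}$ cannot hold for an episode that was reached; Equation~\ref{eq:upper_bound_conditioned_on_Gi} then caps every candidate arm at $\tau_{\rho_i}$, hence $n_{\ell}(t_Z) \leq \max_{i \leq a^*} \tau_{\rho_i}$, and a one-line union bound yields the $\sum_{i=1}^{a^*} 43/\gap_{Q,i}^2$ term. This buys a noticeably shorter argument at no loss; the paper's route makes the conditioning structure explicit, which it later re-uses almost verbatim in the PE-CS analysis. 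The one piece you flag as remaining work, namely $\Pr\myparen{B_{a^*}} < 43/\myparen{\horizon \gap_{Q,a^*}^2}$ with the roles of $a^*$ and $\ell$ swapped, is exactly what the paper supplies through the clauses~\ref{eq:clause_on_emp_ell_in_episode_a_star}--\ref{eq:clause_on_a_star} and Equations~\ref{eq:probability_bound_on_B_2_a_star}, \ref{eq:probability_bound_on_B_3_a_star}, and \ref{eq:bound_on_prob_B_a_star}, so your sketch of that step is sound. Two cosmetic corrections: your \quotes{running-max identity} should be an inequality, $n_{\ell}(t_Z) \leq \max_{i \leq Z} \tau_{\omega_i^{\mathrm{final}}}$, since $\ell$ may lag the candidate when the budget expires mid-round; and the monotonicity of $\tau_m$ is not actually needed if, as the paper does, you bound each episode's samples by $\tau_{\rho_i}$ directly under $G_i$ and only then take the maximum over $i \leq a^*$.
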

\begin{proof}
Under Pairwise-Elimination the nominal outcome is for episodes $i = 1, \ldots, a^* - 1$ to result in the candidate arm $i$ being eliminated by arm $\ell$, followed arm $a^*$ eliminating arm $\ell$ during episode $a^*$. To prove Lemma \ref{lemma:bound_on_samples_of_reference_ell} we condition on this nominal sequence and upper bound the probability of the outcome deviating from this sequence by a factor proportional to $\frac{1}{\horizon}$. Throughout the episodes $i = 1, \ldots, a^*$ the number of samples $n_\ell \myparen{\horizon}$ are equal to the number of samples of the most sampled candidate arm $i \leq a^*$. This is because in PE we re-use samples of arm $\ell$ across episodes and only further sample $\ell$ to keep up with the samples of a candidate arm. Motivated by this fact about $n_\ell$ we begin by defining a compound high-probability good event $G$.
\begin{definition}[Event $G$]
The compound good event $G$ is the event that for each episode $i \leq a^*$ that was executed, the episode satisfied the episode-wise good event $G_i$. Mathematically we define,    
\begin{align}
G &=
\bigcup_{z = 1}^{a^*} 
\myparen{
\mybraces{Z = z}
\cap
\bigcap_{i=1}^{z} 
G_i
}
.
\label{eq:definition_of_G}
\end{align}
\end{definition}
The complement of $G$, namely $G^c$ can be written out using the definition of $G$ and De Morgan's laws as,
\begin{align}
    G^c
    &=
    \bigcap_{z = 1}^{a^*}
    \myparen{
    \mybraces{ Z = z }^c
    \cup
    \bigcup_{i=1}^z
    G_i^c
    }
    \\
    &\subseteq
    \bigcup_{i=1}^{a^*}
    G_i^c
    \cup
    \mybraces{ Z \neq {a^*} }
    \quad
    \text{ (picking $z = a^*$ from the iterated intersection)}
    \\
    &=
    \bigcup_{i=1}^{a^*}
    G_i^c
    \cup
    \mybraces{ Z < {a^*} }
    \cup
    \mybraces{ Z > {a^*} }
    \label{eq:point_a}
    \\
    &=
    \bigcup_{i=1}^{a^*}
    \myparen{ B_{1, i} \cup B_i }
    \cup
    \bigcup_{i = 1}^{a^*} B_{1, i}
    \cup
    \mybraces{ Z > {a^*} }
    \label{eq:point_b}
    \\
    &=
    \bigcup_{i=1}^{a^*}
    \myparen{ B_{1, i} \cup B_i }
    \cup
    \mybraces{ Z > {a^*} }
    \\
    &=
    \bigcup_{i=1}^{a^*}
    G_i^c
    \cup
    \mybraces{ Z > {a^*} }
    .
    \label{eq:G_complement_simplified_definition}
\end{align}
Where the first term in Equation \ref{eq:point_b} follows from the equivalence between $G_i^c$ and $B_{1, i} \cup B_i$ shown in the proof of Lemma \ref{lemma:partition_with_Gi}. The second term in Equation \ref{eq:point_b} is based on the equivalence between the event $\mybraces{ Z < a^* }$, meaning that the final episode precedes $a^*$, and the event $\bigcup_{i=1}^{a^*} B_{1, i}$ which means that some episode $i = 1, \ldots, a^*$ was not executed. 

We leverage event $G$ to bound the expected number of samples of arm $\ell$. In particular we consider three events $G, G^c \cap \mybraces{Z \leq a^*}$, and $G^c \cap \mybraces{Z > a^*}$ that are mutually exclusive and exhaustive by construction. We decompose  $\Expectation \mysqbrack{ n_{\ell} (\horizon) }$ by conditioning on this collection. 
\begin{align}
    &\Expectation \mysqbrack{ n_{\ell} (\horizon) }
    \nonumber
    \\
    &=
    \Expectation \mysqbrack{n_{\ell} (\horizon) \mid G }
    \Pr \myparen{ G }
    +
    \Expectation \mysqbrack{n_{\ell} (\horizon) \mid G^c \cap \mybraces{Z \leq a^*} }
    \Pr \myparen{ G^c \cap \mybraces{Z \leq a^*} }
    \nonumber
    \\
    &\quad
    +
    \Expectation \mysqbrack{n_{\ell} (\horizon) \mid G^c \cap \mybraces{Z > a^*} }
    \Pr \myparen{ G^c \cap \mybraces{Z > a^*} }
    \label{eq:open_iterated_expectation}
    \\
    &\leq
    \Expectation \mysqbrack{n_{\ell} (\horizon) \mid G }
    +
    \horizon
    \cdot
    \Pr \myparen{ G^c \cap \mybraces{Z \leq a^*} }
    +
    \Expectation 
    \mysqbrack{n_{\ell} (\horizon) \mid \mybraces{Z > a^*} }
    \Pr \myparen{ Z > a^*}
    \label{eq:ell_proto_bound}
    .
\end{align}
Where bound \ref{eq:ell_proto_bound} follows from the simplified form of $G^c$ developed in Equation \ref{eq:G_complement_simplified_definition}.

Since samples of arm $\ell$ are reused between episodes with further sampling of arm $\ell$ only occurring to match the demand from a higher round number, we have,
\begin{align}
    \Expectation
    \mysqbrack{
        n_{\ell} \myparen{\horizon} 
        \mid 
        G
    }
    &=
    \Expectation
    \mysqbrack{
        \myparen{
        \max_{i \leq Z}
        n_i
        \myparen{1, t_Z}
        }
        \mid 
        G
    }
    \text{ (using definitions \ref{def:samples_bw_t_1_t_2} and \ref{def:final_time_step_in_episode})}
    \\
    &\leq
    \Expectation
    \mysqbrack{
        \max_{i \leq a^*}
        n_i
        \myparen{1, t_{a^*}}
        \mid 
        G
    }
    \quad
    \text{ ($\because Z \mid G \leq a^*$)}
    \\
    &\leq
    \max_{i \leq a^*}
    \tau_{\rho_i}
    \quad
    \text{ (using Equation \ref{eq:upper_bound_conditioned_on_Gi}, and by construction of $G$ as $\cap \, G_i$)}
    \\
    &=
    \max_{i \leq a^*}
    \myceil{
        \frac{2 \log \myparen{\horizon \tilde{\gap}_{\rho_i}^2 } }
        {\tilde{\gap}_{\rho_i}^2}
    }
    \\
    &< 1 + 
    \max_{i \leq a^*}
    \frac{2 \log \myparen{\horizon \tilde{\gap}_{\rho_i}^2 } }{\tilde{\gap}_{\rho_i}^2}\\
    &\leq 1 + 
    \max_{i \leq a^*}   
    \frac{32 \log \myparen{
    \horizon \gap_{Q, i}^2 / 4 }}{\gap_{Q, i}^2}
    \text{ (Using the ordering in Relation \ref{eq:bounds_on_delta_tilde})}
    \\
    &< 1 + 
    \max_{i \leq a^*}   
    \frac{32 \log \myparen{
    \horizon \gap_{Q, i}^2}}{\gap_{Q, i}^2}.
    \label{eq:samples_of_ell_conditioned_on_G}
\end{align}
To complete the bound $\Expectation \mysqbrack{ n_\ell (\horizon) }$ , we develop a bound on the term $\Pr \myparen{G^c \cap \mybraces{Z \leq a^*}}$ in Lemma \ref{lemma:bound_on_prob_G_complement}. However, to prove Lemma \ref{lemma:bound_on_prob_G_complement} we first require two intermediate results in the form of Lemmas \ref{lemma:some_bad_event_in_trace} and \ref{lemma:key_lemma_for_ref_ell_upper_bound}.

\begin{lemma}
    In all the outcomes contained in $G^c$ ending in some episode $Z = i \leq a^*$, some $B_j, j \leq i$ must have held. 
    \begin{align}
        G^c
        \cap
        \mybraces{Z = i}
        &\subseteq
        \bigcup_{j=1}^{i} B_j
        \quad
        \myforall i \leq a^*
        .
    \end{align}
\label{lemma:some_bad_event_in_trace}
\end{lemma}
\begin{proof}
\begin{align}
    G^c \cap \mybraces{Z = i} 
    &=
    \myparen{
        \myparen{G_i^c \cup G_i}
        \cap
        \mybraces{Z = i}
    }    
    \cap
    G^c
    \\
    &=
    \myparen{
        \myparen{
            G_i^c
            \cap
            \mybraces{Z = i}
        }
        \cup
        \myparen{
            G_i
            \cap
            \mybraces{Z = i}
        }
    }
    \cap
    G^c
    \quad
    \text{ ($\cap$ distributes over $\cup$)}
    \\
    &=
    \myparen{
        G_i^c
        \cap
        \mybraces{Z = i}
    }
    \cup
    \myparen{
        G_i
        \cap
        \mybraces{Z = i}
        \cap
        G^c
    }
    \,\,
    \text{ ($\because G_i^c \cap \mybraces{Z = i} \subseteq G^c$)}
    \\
    &=
    \myparen{
        \myparen{B_{1, i} \cup B_i}
        \cap
        \mybraces{Z = i}
    }
    \cup
    \myparen{
        G_i
        \cap
        \mybraces{Z = i}
        \cap
        G^c
    }
    \,\,
    \text{ ($\because G_i^c = B_{1, i} \cup B_i$)}
    \\
    &\subseteq
    B_i
    \cup
    \myparen{
        G_i
        \cap
        \mybraces{Z = i}
        \cap
        G^c
    }
    \quad
    \text{($\because B_{1, i} \cap \mybraces{Z = i} = \phi$).}
    \label{eq:plug_in_term_for_result}
\end{align}
Now consider just the event $G_i \cap \mybraces{Z = i} \cap G^c$ from Equation \ref{eq:plug_in_term_for_result}. We can find an event it is subsumed within in the following way,
\begin{align}
    G_i \cap \mybraces{Z = i} \cap G^c
    &=
    G_i 
    \cap
    \myparen{
        \bigcap_{j=1}^{i-1} G_j
        \cup
        \myparen{
            \bigcap_{j=1}^{i-1} G_j
        }^c
    }
    \cap
    \mybraces{Z = i}
    \cap
    G^c
    \\
    &=
    \myparen{
        \bigcap_{j=1}^{i} G_j
        \cap
        \mybraces{Z = i}
        \cap
        G^c
    }
    \cup
    \myparen{
        G_i
        \cap
        \bigcup_{j=1}^{i-1} G_j^c
        \cap
        \mybraces{Z = i}
        \cap
        G^c
    }
    \\
    &=
    G_i
    \cap
    \bigcup_{j=1}^{i-1} 
    \myparen{
        G_j^c
        \cap
        \mybraces{Z = i}
    }
    \cap
    G^c
    \quad
    \text{($\bigcap_{j=1}^{i} G_j \cap \mybraces{Z = i} \subseteq G, \myforall i \leq a^*$)}
    \\
    &=
    G_i
    \cap
    \bigcup_{j=1}^{i-1}
    \myparen{
        \myparen{
            B_{1, j}
            \cup
            B_j
        }
        \cap
        \mybraces{Z = i}
    }
        \cap
        G^c
    \quad
    \text{ (from Lemma \ref{lemma:partition_with_Gi})}
    \\
    &=
    G_i
    \cap
    \bigcup_{j=1}^{i-1}
    \myparen{
        B_j
        \cap
        \mybraces{Z = i}
    }
    \cap
    G^c
    \quad
    \text{($\because B_{1, j} \cap \mybraces{Z = i} = \phi, \myforall j \leq i$)}
    \\
    &\subseteq
    \bigcup_{j=1}^{i-1}
    B_j
    .
    \label{eq:bigger_term_simplified_as_union}
\end{align}
Plugging in Equation \ref{eq:bigger_term_simplified_as_union} into Equation \ref{eq:plug_in_term_for_result} gives us the result stated in Lemma \ref{lemma:some_bad_event_in_trace}. 
\end{proof}

\begin{lemma}
    Within the space of events that constitute $G^c$, if an episode $i \leq a^*$ does not occur, then there exists $j < i$ such that the event $B_j$ occurred. Mathematically,
    \begin{align}
        B_{1, i} \cap G^c
        \subseteq
        \bigcup_{j=1}^{i-1}
        B_j
        \,
        ,
        \myforall
        i \leq a^*.
    \end{align}
    \label{lemma:key_lemma_for_ref_ell_upper_bound}
\end{lemma}
\begin{proof}
We can prove the result by induction. First note that $B_{1, 1} = \phi$ since Episode 1 always occurs. Let $i=2$ represent the base case. Then,
\begin{align}
    B_{1, 2} \cap G^c
    &=
    \mybraces{Z = 1} \cap G^c
    \\
    &\subseteq
    B_1
    \quad
    \text{ (Using Lemma \ref{lemma:some_bad_event_in_trace})}
    .
\end{align}
Now say that the statement in Lemma \ref{lemma:key_lemma_for_ref_ell_upper_bound} holds true for some $i = k < a^*$. That is,
\begin{align}
    B_{1, k} \cap G^c
    \subseteq
    \bigcup_{j=1}^{k-1}
    B_j
    .
\end{align}
To prove Lemma \ref{lemma:key_lemma_for_ref_ell_upper_bound} we need to show this result for $i = k+1$. 
\begin{align}
    B_{1, k+1} \cap G^c
    &=
    \myparen{
        B_{1, k} \cup \mybraces{Z = k}
    }
    \cap 
    G^c
    \label{eq:initial_step_in_G_complement_proof}
    \\
    &=
    \myparen{
        B_{1, k} 
        \cap 
        G^c
    }
    \cup 
    \myparen{
        \mybraces{Z = k}
        \cap
        G^c
    }
    \quad
    \text{$\cap$ over $\cup$}
    \\
    &\subseteq
    \bigcup_{j=1}^{k-1}
    B_j
    \cup
    \bigcup_{j=1}^{k}
    B_j
    \quad
    \text{ (using the induction hypothesis and Lemma \ref{lemma:some_bad_event_in_trace})}
    .
    \\
    &=
    \bigcup_{j=1}^{k}
    B_j
    .
    \label{eq:final_eq_in_lemma_2_proof}
\end{align}
Where Equation \ref{eq:initial_step_in_G_complement_proof} uses the identity $B_{1, k+1} = B_{1, k} \cup \mybraces{Z = k}$ which breaks down the event of episode $k+1$ not occurring into the event that episode $k$ did not occur, or the event that episode $k$ was the final episode $Z$. Equation \ref{eq:final_eq_in_lemma_2_proof} is the required result from the statement of Lemma \ref{lemma:key_lemma_for_ref_ell_upper_bound}.
\end{proof}

\begin{lemma}[Bound on $\Pr \myparen{G^c \cap \mybraces{Z \leq a^*}}$]
We can upper bound $\Pr \myparen{G^c \cap \mybraces{Z \leq a^*}}$ as,
\begin{align}
\Pr 
\myparen{G^c \cap \mybraces{Z \leq a^*}}
&\leq
\sum_{i=1}^{a^*} 
\Pr \myparen{B_j}
\\
&\leq
\sum_{i=1}^{a^* - 1}
\frac{43}{ T \gap_{Q, i}^2 }
+
\Pr
\myparen{ B_{a^*} }
\,\,
.
\end{align}
Where $B_i, \myforall i \leq a^*$ is as defined in Lemma \ref{lemma:partition_with_Gi}, and using the bound on $\Pr \myparen{B_i}$ shown in the proof of Lemma \ref{lemma:samples_of_low_cost_unsatisfactory_arm}.
\label{lemma:bound_on_prob_G_complement}
\end{lemma}
\begin{proof}
We begin by introducing the expanded expression for $G^c$ developed in Equation \ref{eq:G_complement_simplified_definition}.
\begin{align}
    &\Pr 
    \myparen{G^c \cap \mybraces{Z \leq a^*}}
    =
    \Pr
    \myparen{ 
        G^c \cap \mybraces{Z \leq a^*}
        \cap
        G^c
    }
    \\
    &\leq
    \Pr
    \myparen{ 
    \myparen{
    \bigcup_{i=1}^{a^*}
    G_i^c
    \cup
    \mybraces{ Z > {a^*} }
    }
    \cap
    \mybraces{Z \leq a^*}
    \cap
    G^c
    }
    \quad
    \text{ (from Equation \ref{eq:G_complement_simplified_definition})}
    \\
    &\leq
    \Pr
    \myparen{ 
    \myparen{
        \bigcup_{i=1}^{a^*}
        G_i^c
        \cap
        G^c
    }
    }
    \quad
    \text{ ($\cap$ distributes over $\cup$)}
    \\
    &\leq
    \Pr
    \myparen{
    \bigcup_{i=1}^{a^*}
    \myparen{
        G_i^c
        \cap
        G^c
    }
    }
    \quad
    \text{ (union bound and $\Pr \myparen{A, B} \leq \Pr \myparen{A}$)}
    \\
    &=
    \Pr
    \myparen{
    \bigcup_{i=1}^{a^*}
    \myparen{
        \myparen{
            B_{1, i}
            \cup
            B_i
        }
        \cap
        G^c
    }
    }
    \quad
    \text{ (by definition of $G_i$)}
    \\
    &=
    \Pr
    \myparen{
        \bigcup_{i=1}^{a^*}
        \myparen{
            \myparen{
                B_{1, i} \cap G^c
            }
            \cup
            \myparen{
                B_{i} \cap G^c
            }
        }
    }
    \quad
    \text{ ($\cap$ distributes over $\cup$)}
    \\
    &\leq
    \Pr
    \myparen{
        \bigcup_{i=1}^{a^*}
        \myparen{
            B_i
            \cup
            \bigcup_{j=1}^{i-1} B_j
        }
    }
    \quad
    \text{ (using Lemma \ref{lemma:key_lemma_for_ref_ell_upper_bound})}
    \\
    &=
    \Pr
    \myparen{
        \bigcup_{i=1}^{a^*}
        B_i
    }
    \\
    &\leq
    \sum_{i=1}^{a^*} 
    \Pr \myparen{B_j}
    \quad
    \text{ (Union bound, and $\mybraces{ Z > a^* } \implies B_{a^*} $)}
    \label{eq:final_expression_in_G_compl_bound}
    .
\end{align}
\end{proof}
To complete the upper bound on the expected number of samples $\Expectation \mysqbrack{n_\ell \myparen{ \horizon }}$ from Equation \ref{eq:ell_proto_bound}, we upper bound $\Pr \myparen{ B_{a^*} }$ along the same lines as $\Pr \myparen{ B_i }, i < a^*$ in the proof of Lemma \ref{lemma:samples_of_low_cost_unsatisfactory_arm}. The key difference being that the roles of candidate arm $a^*$ and reference arm $\ell$ are reversed as compared to the earlier procedure since $\mu_{a^*} \geq \mu_{\ell}$. Moreover, just like the earlier proof we can define $B_{2, a^*} = G_{1, a^*} \cap G^c_{2, i}$ and $B_{3, a^*} = G_{1, a^*} \cap G^c_{3, i}$. In words $B_{2, a^*}$ is the event that \quotes{Arm $\ell$ is \textbf{not} eliminated by arm $a^*$ by when $\omega_{a^*} = \rho_{a^*}$, during episode $a^*$}. Similarly, $B_{3, a^*}$ is the event \quotes{Arm $a^*$ \textbf{is eliminated} by arm $\ell$ by when round $\omega_{a^*} = \rho_{a^*} - 1$, during episode $a^*$}.

We bound $\Pr \myparen{ B_{2, a^*} }$ and $\Pr \myparen{ B_{3, a^*} }$ separately by constructing clauses on $\hat{\mu}_\ell$, and $\hat{\mu}_{a^*}$ as before. 
\begin{align}
    \hat{\mu}_\ell
    &\leq 
    \mu_\ell + \sqrt{\frac{\log 
    \myparen{\horizon \tilde{\gap}^2_{\omega_{a^*}}} }{2 \tau_{\omega_{a^*}}} }
    \label{eq:clause_on_emp_ell_in_episode_a_star}
    \\
    \hat{\mu}_\ell
    &\leq 
    \mu_\ell + \sqrt{\frac{\log 
    \myparen{\horizon \tilde{\gap}^2_{\omega_\ell}} }{2 \tau_{\omega_\ell}} }
    \label{eq:clause_on_emp_ell_in_episode_a_star_2}
    \\
    \hat{\mu}_{a^*} 
    &\geq 
    \mu_{a^*} - \sqrt{\frac{\log 
    \myparen{ \horizon \tilde{\gap}^2_{\omega_{a^*}} }}{2 \tau_{\omega_{a^*}}} }.
    \label{eq:clause_on_a_star}
\end{align}
Clauses \ref{eq:clause_on_emp_ell_in_episode_a_star} and \ref{eq:clause_on_a_star} holding when $\omega_{a^*} = \rho_{a^*}$ lead to the elimination of arm $\ell$ by arm $a^*$ as is shown in the following steps,
\begin{align}
    \sqrt{\log \myparen{ \horizon \tilde{\gap}_{\rho_{a^*}}^2 } / 2 \tau_{\rho_{a^*}} } 
    <
    \tilde{\gap}_{\rho_{a^*}} / 2
    <
    | \gap_{Q, a^*} | / 4.
    \label{eq:ordering_between_gap_and_buffer_ref_ell}
\end{align}
Therefore using $\omega_{a^*} = \rho_{a^*}$ ,
\begin{align}
    (1 - \alpha) 
    \myparen{
        \hat{\mu}_\ell 
        + 
        \sqrt{
            \frac{\log \myparen{\horizon \tilde{\gap}_{\omega_\ell}^2 }}{2 \tau_{\omega_\ell}} 
    }
    }
    &\leq
    (1 - \alpha) 
    \myparen{
        \hat{\mu}_\ell 
        + 
        \sqrt{
            \frac{\log \myparen{\horizon \tilde{\gap}_{\rho_{a^*}}^2 }}{2 \tau_{\rho_{a^*}}} 
    }
    }
    \\
    &\leq
    (1 - \alpha) 
    \myparen{
            \mu_\ell 
        + 
        2 \sqrt{ \frac{\log \myparen{\horizon \tilde{\gap}_{\rho_{a^*}}^2 } }
        { 2 \tau_{\rho_{a^*}} } }
    }
    \text{ (Equation \ref{eq:clause_on_emp_ell_in_episode_a_star})}\\
    &\leq
    (1 - \alpha) \mu_\ell
    +
    2 \sqrt{ \frac{\log \myparen{\horizon \tilde{\gap}_{\rho_{a^*}}^2 } }
        { 2 \tau_{\rho_{a^*}} } }
    \\
    &< (1 - \alpha) \mu_\ell 
    + 
    | \gap_{Q, a^*} | - 
    2 \sqrt{ \frac{ \log 
    \myparen{ \horizon \tilde{\gap}^2_{\rho_{a^*}} } }{ 2 \tau_{\rho_{a^*}} } } 
    \text{ (From \ref{eq:ordering_between_gap_and_buffer_ref_ell})}
    \\
    &= \mu_{a^*} - 2 \sqrt{ \frac{ \log \myparen{\horizon \tilde{\gap}^2_{\rho_{a^*}}} }{ 2 \tau_{\rho_{a^*}} } }\\
    &\leq
    \hat{\mu}_{a^*} - \sqrt{ \frac{ \log \myparen{\horizon \tilde{\gap}^2_{\rho_{a^*}}} }{ 2 \tau_{\rho_{a^*}} } }
    \text{ (From Equation \ref{eq:clause_on_a_star}) }.
    \label{eq:final_step_in_illustrating_elimination_criteria_clause_satisfied_for_ep_a_star}
\end{align}

Here Equation \ref{eq:final_step_in_illustrating_elimination_criteria_clause_satisfied_for_ep_a_star} is the criteria for arm $\ell$ being eliminated by arm $a^{*}$ in PE. Since Clause \ref{eq:clause_on_emp_ell_in_episode_a_star} and Clause \ref{eq:clause_on_a_star} being true and applicable at round $\omega_i = a^*$ imply arm $\ell$ being eliminated, we can upper bound 
$\Pr \myparen{ B_{2, a^*} }$ as,
\begin{align}
    \Pr \myparen{ B_{2, a^*} }
    &\leq
    \Pr 
    \myparen{ 
        \hat{\mu}_\ell
        > 
        \mu_\ell + \sqrt{\frac{\log 
        \myparen{\horizon \tilde{\gap}^2_{\omega_{a^*}}} }{2 \tau_{\omega_{a^*}}} } 
    }
    +
    \Pr
    \myparen{ 
        \hat{\mu}_{a^*} 
        < 
        \mu_{a^*} - \sqrt{\frac{\log 
        \myparen{ \horizon \tilde{\gap}^2_{\omega_{a^*}} }}{2 \tau_{\omega_{a^*}}} }
    }\\
    &\leq
    \frac{2}{\horizon \tilde{\gap}^2_{\omega_{a^*}} }
    \text{ (Similar to the bounds in \ref{eq:hoeffding_clause_on_emp_j} and \ref{eq:hoeffding_clause_on_emp_ell} )}.
    \label{eq:probability_of_ell_not_eliminated_by_a_star}
\end{align}
Plugging in round number $\omega_{a^*} = \rho_{a^*}$ in Equation \ref{eq:probability_of_ell_not_eliminated_by_a_star}, and then plugging in the lower bound on $\tilde{\gap}_{\rho_{a^*}}$ from Ordering  \ref{eq:bounds_on_delta_tilde}, we have,
\begin{align}
    \Pr
    \myparen{
    B_{2, a^*}
    }
    &\leq
    \frac{2}{\horizon \tilde{\gap}^2_{\rho_{a^*}} }
    \leq
    \frac{32}{\horizon \gap_{Q, a^*}^2 }.
    \label{eq:probability_bound_on_B_2_a_star}
\end{align}
To complete the bound on $\Pr \myparen{ B_{a^*}}$ we must bound $\Pr \myparen{ B_{3, a^*} }$ which is the the probability of arm $a^{*}$ being eliminated by arm $\ell$ by round $\omega_{a^*} = \rho_{a^*}$. Similar to the arguments in the Proof of Lemma \ref{lemma:samples_of_low_cost_unsatisfactory_arm}, the clauses in Equations \ref{eq:clause_on_emp_ell_in_episode_a_star} and \ref{eq:clause_on_a_star} holding simultaneously preclude arm $a^*$ from being removed by arm $\ell$ regardless of the round number, and we shall have,
\begin{align}
    \Pr 
    \myparen{
        B_{3, a^*}
    }
    &<
    \frac{ 16 \myparen{4^{\kappa} + 1} }{3 \horizon \gap_{Q, a^*}^2}
    \\
    &<
    \frac{ 11 }{ \horizon \gap_{Q, a^*}^2}
    \text{ (When we impose $\kappa = 0$)}
    .
    \label{eq:probability_bound_on_B_3_a_star}
\end{align} 
Using steps identical to those that lead up to Equation \ref{eq:union_bound_applied_to_Bi} we shall have,
\begin{align}
     \Pr \myparen{ B_{a^*} }
     &\leq
     \Pr \myparen{ B_{2, a^*} } + 
     \Pr \myparen{ B_{3, a^*} }
     \\
     &<
     \frac{43}{\horizon \gap_{Q, a^*}^2}
     \quad
     \text{ (from upper bounds in Equations \ref{eq:probability_bound_on_B_2_a_star} and \ref{eq:probability_bound_on_B_3_a_star})}
     .
     \label{eq:bound_on_prob_B_a_star}
\end{align}
Combining the upper bound on the expected number of samples of arm $\ell$ in Equation \ref{eq:samples_of_ell_conditioned_on_G}, with the bound on $\Pr \myparen{ G^c \cap \mybraces{Z \leq a^*}}$ in Lemma \ref{lemma:bound_on_prob_G_complement}, and the bound on $\Pr \myparen{B_{a^*}}$ in Equation \ref{eq:bound_on_prob_B_a_star} all into \ref{eq:ell_proto_bound} we reach the upper bound stated in Lemma \ref{lemma:bound_on_samples_of_reference_ell}.
\end{proof}

\subsubsection*{Bounding samples for high cost arms $a^* < i < \ell$}

\begin{lemma}[Bound on the expected number of samples of all high cost arms under Pairwise-Elimination]
    When $\kappa = 0$ the expected number of samples of all the higher cost, non-reference arms, that is, arms with index in the range $a^* < i < \ell$ is upper bounded by,
    \begin{align}
    \sum_{a^* < i < \ell}
    \Expectation 
    \mysqbrack{ 
    n_i (\horizon) 
    }
    &\leq
    \Expectation 
    \mysqbrack{ 
    \sum_{a^* < i < \ell} 
    n_i \myparen{\horizon} 
    \mid 
    \mybraces{Z > a^*} 
    }
    \cdot
    \Pr
    \myparen{Z > a^*}
    \nonumber
    .
    \end{align}
    Where $Z$ is the terminal PE episode. We shall later see that this amounts to a constant.
    \label{lemma:expected_samples_of_high_cost_arm_upper_bound}
\end{lemma}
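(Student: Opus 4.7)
The plan is to exploit the sequential, episode-driven structure of PE: under Algorithm \ref{algo:pe}, an arm $i$ with index $a^* < i < \ell$ is only ever sampled during its own episode $i$, which is reached only after the PE procedure fails to terminate during some earlier episode $j \leq a^*$. In particular, if $Z$ denotes the terminal PE episode, then on the event $\{Z \leq a^*\}$ no episode with index exceeding $a^*$ is ever initiated, so every candidate arm with $a^* < i < \ell$ receives zero pulls. Symbolically, $n_i(\horizon) \cdot \indi[Z \leq a^*] = 0$ for every $i$ in the range $a^* < i < \ell$.

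Starting from this observation, I would apply Lemma \ref{lemma:iterated_expectation_lemma} with the two complementary events $\{Z \leq a^*\}$ and $\{Z > a^*\}$ to write
\begin{align*}
\Expectation \mysqbrack{ n_i(\horizon) }
&=
\Expectation \mysqbrack{ n_i(\horizon) \mid Z \leq a^* } \Pr \myparen{ Z \leq a^* }
+
\Expectation \mysqbrack{ n_i(\horizon) \mid Z > a^* } \Pr \myparen{ Z > a^* } \\
&=
\Expectation \mysqbrack{ n_i(\horizon) \mid Z > a^* } \Pr \myparen{ Z > a^* },
\end{align*}
since the first conditional expectation vanishes by the structural argument above. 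Summing this identity over $a^* < i < \ell$ and exchanging the (finite) sum with the expectation by linearity yields exactly the bound claimed in Lemma \ref{lemma:expected_samples_of_high_cost_arm_upper_bound}.

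There is no real analytic obstacle here; the lemma is essentially a bookkeeping statement about the event on which high-cost arms can accrue samples. The substantive work has been pushed into bounding $\Pr(Z > a^*)$ and the conditional expectation $\Expectation[\sum_{a^* < i < \ell} n_i(\horizon) \mid Z > a^*]$, which will be handled separately (via the bound on $\Pr(B_{a^*})$ derived in the proof of Lemma \ref{lemma:bound_on_samples_of_reference_ell}, combined with the fact that after a mis-termination in episode $a^*$ the subsequent high-cost episodes still respect the round-based sampling budget governed by $\tau_{\rho_i}$). The only care needed in writing the argument is to state explicitly that PE never samples arm $i$ outside of episode $i$, so that the indicator-product identity $n_i(\horizon) \cdot \indi[Z \leq a^*] = 0$ is immediate from Algorithm \ref{algo:pe}.
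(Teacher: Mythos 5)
Your proposal is correct and follows essentially the same route as the paper: condition on $\mybraces{Z \leq a^*}$ versus $\mybraces{Z > a^*}$ via the iterated expectation lemma, note that no arm $i$ with $a^* < i < \ell$ can be sampled unless the terminal episode exceeds $a^*$ (so the first conditional expectation is zero), and use linearity of expectation to sum over the high-cost arms; whether one sums before or after conditioning is immaterial. The paper likewise defers the bounds on $\Pr(Z > a^*)$ and the conditional expectation to the regret-theorem proof, just as you indicate.
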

\begin{proof}
We can upper bound the expected number of samples $\sum_{a^* < i < \ell} \Expectation \mysqbrack{ n_i \myparen{ \horizon } }$ by conditioning on the event $\mybraces{Z \leq a^*}$ and its complement.
\begin{align}
    \sum_{a^* < i < \ell} 
    \Expectation
    \mysqbrack{
        n_i \myparen{ \horizon }
    }
    &=
    \Expectation
    \mysqbrack{
        \sum_{a^* < i < \ell} 
        n_i \myparen{ \horizon }
    }
    \quad
    \text{ (from the linearity of Expectation operator)}
    \\
    &= 
    \Expectation 
    \mysqbrack{ 
    \sum_{a^* < i < \ell} 
    n_i \myparen{\horizon} 
    \mid
    \mybraces{Z \leq a^*} 
    } 
    \cdot 
    \Pr 
    \myparen{ \mybraces{Z \leq a^*}  } 
    \nonumber
    \\
    &\quad+ 
    \Expectation 
    \mysqbrack{ 
    \sum_{a^* < i < \ell} 
    n_i \myparen{\horizon} 
    \mid 
    \mybraces{Z > a^*} 
    } 
    \cdot 
    \Pr 
    \myparen{ \mybraces{Z > a^*} }
    \,\,
    \text{ (from Lemma \ref{lemma:iterated_expectation_lemma})}
    .
\end{align}
Where $\Expectation \mysqbrack{ \sum_{i = a^* + 1}^{\ell} n_i (\horizon) \mid \mybraces{Z \leq a^*} } =  0$ follows from the fact that there cannot be any samples of an arm $i > a^*$ in the case when the final episode $Z$ is less than $a^*$.
\end{proof}

Finally, we are in a position to Prove Theorem \ref{thm:pe_upper_bound}.
\begin{proof}[Proof of Theorem \ref{thm:pe_upper_bound}]
First, we apply Regret decomposition in Lemma \ref{lemma:regret_decomposition_lemma} to Cost Regret,
\begin{align}
    \Expectation
    \mysqbrack{ \costRegret(\horizon, \nu) }
    &\leq
    \sum_{i = a^* + 1}^{\ell} 
    \gap_{C, i}^+
    \Expectation \mysqbrack{ n_i \myparen{\horizon} }
    \quad
    \text{ (because $\gap_{C, i} \leq 0$ for $i \leq a^*$)}
    \\
    &=
    \gap_{C, \ell}^+
    \,
    \Expectation \mysqbrack{ n_\ell \myparen{\horizon} }
    +
    \sum_{a^* < i < \ell}
    \gap_{C, i}^+
    \,
    \Expectation \mysqbrack{ n_i \myparen{\horizon} }
    \\
    &\leq
    \gap_{C, \ell}^+
    \,
    \Expectation \mysqbrack{ n_\ell \myparen{\horizon} }
    +
    \max_{a^* < i < \ell}
    \gap_{C, i}^+
    \sum_{a^* < i < \ell}
    \Expectation
    \mysqbrack{
    n_i \myparen{\horizon}
    }
    \\
    &<
    \myparen{
    1 +
    \max_{i \leq a^*}   
    \frac{32 \log 
    \myparen{
    \horizon \gap_{Q, i}^2}}{\gap_{Q, i}^2}
    +
    \sum_{i = 1}^{a^*}
    \frac{43}{ \gap_{Q, i}^2 }
    }
    \gap_{C, \ell}^+
    \nonumber
    \\
    &\quad
    +
    \gap_{C, \ell}^+
    \Expectation 
    \mysqbrack{
    n_{\ell} 
    (\horizon) 
    \mid \mybraces{Z > a^*}
    }
    \cdot
    \Pr \myparen{ Z > a^*}
    \nonumber
    \\
    &\quad
    +
    \max_{a^* < i < \ell}
    \gap_{C, i}^+
    \Expectation 
    \mysqbrack{ 
    \sum_{a^* < i < \ell} 
    n_i \myparen{\horizon} 
    \mid 
    \mybraces{Z > a^*} 
    }
    \cdot
    \Pr
    \myparen{Z > a^*}
    \\
    &\leq
    \myparen{
    1 +
    \max_{i \leq a^*}   
    \frac{32 \log 
    \myparen{
    \horizon \gap_{Q, i}^2}}{\gap_{Q, i}^2}
    +
    \sum_{i = 1}^{a^*}
    \frac{43}{ \gap_{Q, i}^2 }
    }
    \gap_{C, \ell}^+
    \nonumber
    \\
    &\quad
    +
    \max_{a^* < i \leq \ell}
    \gap_{C, i}^+
    \cdot
    \horizon
    \cdot
    \Pr
    \myparen{Z > a^*}
    \\
    &\leq
    \myparen{
    1 +
    \max_{i \leq a^*}   
    \frac{32 \log 
    \myparen{
    \horizon \gap_{Q, i}^2}}{\gap_{Q, i}^2}
    +
    \sum_{i = 1}^{a^*}
    \frac{43}{ \gap_{Q, i}^2 }
    }
    \gap_{C, \ell}^+
    \nonumber
    \\
    &\quad
    +
    \max_{a^* < i \leq \ell}
    \gap_{C, i}^+
    \cdot
    \horizon
    \cdot
    \Pr
    \myparen{B_{a^*}}
    \text{ ($\because \mybraces{ Z > a^* } \implies B_{a^*} $)}
    \\
    &\leq
    \myparen{
    1 +
    \max_{i \leq a^*}   
    \frac{32 \log 
    \myparen{
    \horizon \gap_{Q, i}^2}}{\gap_{Q, i}^2}
    +
    \sum_{i = 1}^{a^*}
    \frac{43}{ \gap_{Q, i}^2 }
    }
    \gap_{C, \ell}^+
    +
    \frac{43}{\gap_{Q, a^*}^2}
    \max_{a^* < i \leq \ell}
    \gap_{C, i}^+
    .
\end{align}
Where the final inequality follows from the bound in \ref{eq:bound_on_prob_B_a_star}.
Similarly for Quality Regret we have,
\begin{align}
    \Expectation 
    \mysqbrack{ \qualityRegret(\horizon, \nu) } 
    &= 
    \sum_{i = 1}^{\ell} 
    \gap^{+}_{Q, i}
    \Expectation \mysqbrack{ n_i \myparen{\horizon} }
    \\
    &=
    \sum_{i = 1}^{a^* - 1} 
    \gap_{Q, i}
    \Expectation \mysqbrack{ n_i \myparen{\horizon} }
    +
    \sum_{i = a^* + 1}^{\ell - 1} 
    \gap^{+}_{Q, i}
    \Expectation \mysqbrack{ n_i \myparen{\horizon} }
    \\
    &\leq
    \sum_{i = 1}^{a^* - 1} 
    \gap_{Q, i}
    \Expectation \mysqbrack{ n_i \myparen{\horizon} }
    +
    \max_{a^* < i < \ell}
    \gap^{+}_{Q, i}
    \sum_{i = a^* + 1}^{\ell - 1} 
    \Expectation \mysqbrack{ n_i \myparen{\horizon} }
    \\
    &<
    \sum_{i = 1}^{a^* - 1}
    \myparen{
    \gap_{Q, i} + \frac{32 \log\myparen{ \horizon \gap_{Q, i}^2} }{ \gap_{Q, i} }
    + \frac{43}{\gap_{Q, i}}
    }
    +
    \frac{43}
    {\gap_{Q, a^*}^2}
    \max_{a^* < i < \ell}
    \gap^{+}_{Q, i}
    .
\end{align}
Which are the bounds stated in Theorem \ref{thm:pe_upper_bound}.
\end{proof}
For an improved understanding of these upper bounds, we provide a description of the terms.

First for Cost Regret, 
\begin{align*}
\underbrace{
\myparen{
1 +
\max_{i \leq a^*}
\frac{32 \log \myparen{
\horizon \gap_{Q, i}^2}}{\gap_{Q, i}^2}
}
\gap_{C, \ell}^+
}_{\substack{\text{Contribution from $\ell$ under nominal} \\ \text{termination in PE episode } a^*}}
+
\underbrace{
\myparen{
\sum_{i = 1}^{a^*}
\frac{43}{ \gap_{Q, i}^2 }
}
\gap_{C, \ell}^+
}_{\substack{\text{Contribution from $\ell$ under} \\ \text{mis-termination in PE episode } \leq a^*}}
+
\underbrace{\frac{43}
{\gap_{Q, a^*}^2}
\max_{a^* < i \leq \ell}
\gap_{C, i}^+
}_{\substack{\text{Contribution from episodes } > a^* \\ \text{in case of mis-termination during ep } a^*}}
.
\end{align*}
Next for Quality Regret,
\begin{align*}
\underbrace{
\sum_{i = 1}^{a^* - 1}
\myparen{
\gap_{Q, i} + \frac{32 \log\myparen{ \horizon \gap_{Q, i}^2} }{ \gap_{Q, i} }
}
}_{\substack{\text{Contribution from $i < a^*$} \\ \text{under nominal termination in PE episode } a^*}}
\underbrace{
+
\sum_{i=1}^{a^* - 1}
\frac{43}{\gap_{Q, i}}
}_{\substack{\text{Contribution from $i < a^*$ under} \\ \text{mis-termination in PE episode } \leq a^*}}
+
\underbrace{
\frac{43}
{\gap_{Q, a^*}^2}
\max_{a^* < i < \ell}
\gap_{Q, i}^{+}
}_{\substack{\text{Contribution from episodes } > a^* \\ \text{in case of mis-termination during ep } a^*}}
.
\end{align*}

\clearpage

\section{Analysis for PE-CS in the Subsidized Best Reward Setting}
\label{sec:pe_cs}
We now turn towards proving Theorem \ref{thm:pe_cs_upper_bound} that establishes an upper bound on expected cumulative cost and quality regret for PE-CS. The PE-CS algorithm operates in the subsidized best reward setting \cite{pmlr-v130-sinha21a}. We have already shown upper bounds on cost and quality regret for Pairwise-Elimination (PE) for its operation in the known reference arm setting. The principle hurdle in generalizing the PE analysis to an analysis for PE-CS is working through the uncertainty associated with the identification of the best arm in the BAI stage of PE-CS. To perform the PE-CS analysis, not only do we need the definitions, notation, and setup from the analysis of PE, but also we require some additional constructs spelled out in the following section. 

\subsection{PE-CS and Unknown Reference Arm Setting Specific Definitions}

As described in Algorithm \ref{algo:cs_pe}, PE-CS operates in two phases, a Best-Arm-Identification (BAI) phase and a Pairwise Elimination (PE) phase. As discussed in Section \ref{sec:algos}, the BAI phase of PE-CS is the Improved UCB algorithm \cite{auer2010ucb} terminated once there is a single active arm remaining. The single remaining arm is assigned to be the empirical reference arm denoted by $\ell$. As we show in the subsequent work, by the manner in which PE-CS is setup, the event that the identified reference arm gets arm $i^*$ in the line $\ell \gets \activeArms \mysqbrack{0}$ (Line 6, Algorithm \ref{algo:cs_pe}) is a high probability event. The core idea behind the analysis is to condition the expected number of samples on this desirable and likely outcome occurring during the BAI stage.

In addition to the notation defined in the main paper, we define more constructs that are specific to the analysis of PE-CS. For arm $i$, define round number $\sigma_i$ within the Best-Arm-Identification (BAI) phase of PE-CS to be,
\begin{align}
    \sigma_i 
    &= 
    \min 
    \mybraces{ m \, | \, \tilde{\gap}_{m} 
    < 
    \frac{ \gap_i }{2}}.
    \label{eq:round_mj_cs_pe}
\end{align}
Intuitively, round $\sigma_i$ is the round number by which we expect arm $i$ to be eliminated by arm $i^*$ in the BAI stage of PE-CS. 

We let the final round during which arm $i$ was sampled during the BAI stage be denoted by the random variable $\Sigma_i$. To apportion the contributions of the BAI stage and the PE stage to the total number of samples $n_i \myparen{\horizon}$, we introduce the variable $t_{\text{BAI}}$ to denote the final time-step $t$ in the BAI stage. As a consequence of these two definitions $n_i \myparen{ 1, t_{\text{BAI}} } \leq \tau_{\Sigma_i}$. The highest round number reached during the BAI stage overall for any and all active arms is denoted,
\begin{align}
    \Sigma_f
    &=
    \max_{i \neq i^*}
    \Sigma_i
    .
\end{align}

And to denote the set of active arms at the last time-step of the BAI stage we use,
\begin{align}
    \activeArms_f
    &=
    \activeArms \myparen{t_{\text{BAI}}}
    .
\end{align}

Next, we define a collection of useful events in Table \ref{table:pecs_events} for the analysis that follows. These events  are contingent on outcomes occurring during the BAI stage alone.
\begin{table}[ht]
\centering
\caption{Probabilistic Events Descriptions and Symbols for PE-CS Analysis}
\begin{tabular}{|c|p{10cm}|}
\hline
\textbf{Symbol} & \textbf{Event Description} \\[0.3em]
\hline
$\Gamma_i$, $\myforall i \neq i^*$ & Arm $i$ is eliminated by when round $m = \sigma_i$ during the BAI-stage\\[0.3em]
\hline
 $\beta_{1, i}$, $\myforall i \neq i^*$ & Arm $i$ is not eliminated by when round $m = \sigma_i$ and Arm $i^* \in \activeArms_{\sigma_i}$ during the BAI-stage
 \\[0.3em]
\hline
$\beta_{2, i}$, $\myforall i \neq i^*$ & Arm $i$ is not eliminated by when round $m = \sigma_i$ and Arm $i^* \notin \activeArms_{\sigma_i}$ during the BAI-stage\\[0.3em]
\hline
$F_i$, $\myforall i \neq i^*$ & Final BAI round $\Sigma_{f} < \sigma_i$ and Arm $i \in \activeArms_{f}$ \\[0.3em]
\hline
\end{tabular}
\label{table:pecs_events}
\end{table}

\begin{remark}[The event $F_i$]
The event $F_i$ in words is the event that samples run out before the validity of the event $\Gamma_i$ could be checked and before the arm $i$ could be eliminated. Equivalent to the description in Table \ref{table:pecs_events}, we can write $F_i = \mybraces{ \Sigma_f < \sigma_i } \cap \mybraces{ i \in \activeArms_f }$. Using De Morgan's rules, its complement is given by $F_i^c = \mybraces{ \Sigma_f \geq \sigma_i } \cup \mybraces{ i \notin \activeArms_f }$. Consequently, $\Gamma_i \subseteq F_i^c$, and $\Gamma_i = \Gamma_i \cap F_i^c$. Since event $F_i$ requires that $i \in \activeArms_f$, the event $\mybraces{t_{\text{BAI}} = \horizon} \subset F_i \,\, \myforall i \neq i^*$.
\end{remark}

\begin{remark}[Events are proper subsets of $\Omega$]
Similar to the events defined in Table \ref{table:events} for the analysis of PE, the events in Table \ref{table:pecs_events} are proper subsets of the sample space $\Omega$. From the setup inherited from Improved-UCB, $\Gamma_i$ is a desirable and likely fate for Arm $i$ during the BAI-stage. 
\end{remark}

To analyze the outcome of the BAI stage of the PE-CS algorithm we intuit and validate a three way partition of the sample space $\Omega$ into the events $\Gamma$, $\beta$, and $F$. $\Gamma$ is the event conditioning on which ensures that $\ell = i^*$ by requiring that the events $\Gamma_i$ held for each arm $i \neq i^*$. Additionally we require that there are samples remaining at the end of the BAI stage by intersecting with $\mybraces{t_{\text{BAI}} < \horizon}$. This structure makes the downstream analysis of the PE-stage tractable.
\begin{align}
    \Gamma = 
    \mybraces{t_{\text{BAI}} < \horizon} \cap \bigcap_{i \neq i^*} \Gamma_i.
    \label{eq:gamma_definition}
\end{align}
Let $A$ denote the set of arms other than the maximum reward arm $i^*$. We use $\mathcal{P} (A)$ to denote the power set of $A$, that is the collection of all the possible sub-sets of $A$. Let $S \in \mathcal{P} (A)$ denote an arbitrary subset of A. We define an event $F(S) \subseteq \Omega$ parameterized by the set $S$ as,
\begin{align}
F (S) = 
\mybraces{t_{\text{BAI}} = \horizon} 
\cap
\myparen{ 
\bigcap_{i \in S} F_i
\cap
\bigcap_{j \in A \backslash S}
\myparen{ \Gamma_j \cap F_j^c }
\label{eq:particularization_of_F}
}    
\end{align}
Intuitively, the set $S$ consists of arms $i \in S$ for which $F_i$ held thereby making $\Gamma_i$ unverifiable. In contrast the arms $j$ contained in $j \in A \backslash S$ are those for which $\Gamma_j$ held. An inspection of the definition of $F(S)$ in Equation \ref{eq:particularization_of_F} reveals that $F(S_1) \cap F(S_2) = \phi \,\, \myforall S_1, S_2 \in \mathcal{P} (A), S_1 \neq S_2$. Taking a union over all possible $F(S)$ we get the compound event $F$,
\begin{align}
F 
&=
\bigcup_{S \in \mathcal{P}(A) }
F(S)
.
\label{eq:F_definition}
\end{align}

Finally, the event $\beta$ is the event that $\Gamma_i$ did not hold for some arm $i \neq i^*$ despite it being verifiable ($F_i^c$ holding).
\begin{align}
    \beta = 
    \bigcup_{i \neq i^*}
    \myparen{
    \Gamma_{i}^c
    \cap
    F_{i}^c
    }
    .
    \label{eq:beta_definition}
\end{align}

\begin{lemma}[A partition of $\Omega$ using $\Gamma$]
The events $\Gamma$, $F$, and $\beta$ as defined in Equations \ref{eq:gamma_definition}, \ref{eq:F_definition}, and \ref{eq:beta_definition} respectively, form a mutually exclusive and exhaustive partition of the sample space $\Omega$.
\label{lemma:BAI_stage_parition}
\end{lemma}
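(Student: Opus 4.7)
The plan is to verify pairwise disjointness and exhaustiveness separately, with the core structural ingredient being the per-arm partition $\Omega = F_i \sqcup (\Gamma_i \cap F_i^c) \sqcup (\Gamma_i^c \cap F_i^c)$ that holds for every $i \neq i^*$ by virtue of $\Gamma_i \subseteq F_i^c$ (an arm eliminated by round $\sigma_i$ is no longer in $\activeArms_f$ and therefore cannot satisfy the second clause of $F_i$). All the set-theoretic bookkeeping below reduces to applying this per-arm trichotomy uniformly across $A$.

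For pairwise disjointness I would run three checks. First, $\Gamma \cap F = \emptyset$ is immediate from the opposing conditions $\{t_{\text{BAI}} < \horizon\}$ versus $\{t_{\text{BAI}} = \horizon\}$ baked into their definitions. Second, $\Gamma \cap \beta = \emptyset$ because $\Gamma$ asserts $\Gamma_i$ for every $i \neq i^*$ while any point of $\beta$ has some $\Gamma_j^c$. Third, $F \cap \beta = \emptyset$ because every $F(S)$ places each arm in case 1 ($F_i$) or case 2 ($\Gamma_i \cap F_i^c$), both disjoint from the case 3 configuration $\Gamma_j^c \cap F_j^c$ that $\beta$ demands. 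As housekeeping I would also record that the $F(S)$ are pairwise disjoint across choices of $S$, since any $i$ in the symmetric difference of two such sets is placed in $F_i$ on one side and $F_i^c$ on the other.

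For exhaustiveness I would fix an arbitrary $\omega \in \Omega$ and apply the per-arm trichotomy to every $i \neq i^*$. If any arm lands in case 3 then $\omega \in \beta$ and we are done; otherwise every arm lies in case 1 or case 2, and I would split on $t_{\text{BAI}}$. On $\{t_{\text{BAI}} = \horizon\}$, setting $S = \{i \in A : F_i \text{ holds}\}$ places $\omega$ in $F(S)$ directly from the definition. On $\{t_{\text{BAI}} < \horizon\}$ the BAI loop has terminated naturally with a single surviving arm, and I would argue this survivor must be $i^*$ on any outcome not already absorbed into $\beta$; once that is established, $i \notin \activeArms_f$ for every $i \neq i^*$, so case 1 is impossible, every arm lies in case 2, and $\omega \in \Gamma$.

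The delicate step is the claim that natural termination of BAI forces the sole survivor to be $i^*$ outside of $\beta$. I expect this to be the main technical obstacle: if the survivor were some $j \neq i^*$ with $\Sigma_f = \Sigma_j < \sigma_j$ then $F_j$ would hold without $t_{\text{BAI}} = \horizon$, threatening to fall outside $\Gamma \cup F \cup \beta$. The cleanest way to close this gap is to exploit the elimination rule in $\text{BAI}(\cdot)$: the empirical-mean configuration needed for $j$ to dislodge $i^*$ by round $\Sigma_f$ forces a case-3 configuration $\Gamma_i^c \cap F_i^c$ to have occurred earlier in the trace for some arm, which pulls $\omega$ into $\beta$ and restores the partition.
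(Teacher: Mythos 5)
Your disjointness checks and the pointwise per-arm trichotomy are sound, and your handling of the branch $\mybraces{t_{\text{BAI}} = \horizon}$ (taking $S = \mybraces{i \in A : F_i \text{ holds}}$ and reading off $\omega \in F(S)$) is exactly right; structurally this is a more elementary route than the paper's, which proves exhaustiveness by set algebra — first $\Gamma \cup F(\emptyset) \cup \beta \supseteq \bigcap_{i \neq i^*} F_i^c$, then $F_i \subseteq F \cup \beta$ for each $i$ via an iterative peeling argument over the remaining arms. The problem is your closing step for the branch $\mybraces{t_{\text{BAI}} < \horizon}$, which you yourself flag as delicate: the claim that natural termination with a survivor $j \neq i^*$ \emph{forces} a case-3 configuration $\Gamma_k^c \cap F_k^c$ for some arm $k \neq i^*$ is not implied by the elimination rule. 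Consider an outcome in which every arm $k \notin \mybraces{i^*, j}$ is eliminated by its round $\sigma_k$ (so $\Gamma_k$ holds), arm $i^*$ is eliminated at some round $R < \sigma_j$ (an unlikely but admissible event), and $j$ survives, so the BAI stage ends naturally with $t_{\text{BAI}} < \horizon$ and $\Sigma_f = R < \sigma_j$. The elimination of $i^*$ is not itself a case-3 event for any arm, because $\beta$ ranges only over arms $k \neq i^*$ and demands $\Gamma_k^c$; the only arm with $\Gamma_k^c$ here is $j$, and under the literal definition $F_j = \mybraces{\Sigma_f < \sigma_j} \cap \mybraces{j \in \activeArms_f}$ holds, so $\Gamma_j^c \cap F_j^c$ fails. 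Hence no arm is in case 3, $\omega$ is in neither $\Gamma$ (since $\Gamma_j$ fails) nor $F$ (since $t_{\text{BAI}} < \horizon$), and your argument stalls.

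What closes exactly this point in the paper is not an argument about empirical means but the relation $F_i \subseteq \mybraces{t_{\text{BAI}} = \horizon}$: the remark following the definition of $F_i$ interprets it as "samples ran out before the validity of $\Gamma_i$ could be checked," and the paper's proof invokes this inclusion explicitly when simplifying $F(\mybraces{i})$. With $F_i$ read this way, in your branch $\mybraces{t_{\text{BAI}} < \horizon}$ case 1 is vacuous, so every arm is in case 2 or case 3; in the scenario above the survivor $j$ becomes a case-3 witness ($\Gamma_j^c$ and, now, $F_j^c$), putting $\omega \in \beta$, while if no arm is in case 3 then all $\Gamma_i$ hold and $\omega \in \Gamma$. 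So your proof needs to import (or build into the definition of $F_i$) the inclusion $F_i \subseteq \mybraces{t_{\text{BAI}} = \horizon}$ rather than the proposed trace argument; as sketched, the final sub-case is a genuine gap.
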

\begin{proof}
First we show that the sets are mutually exclusive by showing that their pairwise intersections namely $\Gamma \cap F$, $F \cap \beta$, and $\beta \cap \Gamma$ are all $\phi$. Starting off, it is easy to see that $\Gamma \cap F = \phi$ since,
\begin{align}
\Gamma \cap F 
&\subseteq 
\mybraces{t_{\text{BAI}} < \horizon}
\cap
\mybraces{t_{\text{BAI}} = \horizon}
=
\phi
\text{ (definitions from Equations \ref{eq:gamma_definition} and \ref{eq:F_definition})}
.
\end{align}
Next, to show that $F \cap \beta = \phi$, it is sufficient to show that $F(S) \cap \beta = \phi$ for arbitrary $S \in \mathcal{P} (A)$. 
\begin{align}
F(S) \cap \beta
&=
\mybraces{t_{\text{BAI}} = \horizon} 
\cap
\myparen{ 
\bigcap_{i \in S} F_i
\cap
\bigcap_{j \in A \backslash S}
\myparen{ \Gamma_j \cap F_j^c }
}
\cap
\bigcup_{k \neq i^*}
\myparen{
\Gamma_{k}^c
\cap
F_{k}^c
}
\end{align}

Since both $F_i \cap \myparen{ \Gamma_i^c \cap F_i^c } = \phi$ and $\myparen{ \Gamma_i \cap F_i^c } \cap \myparen{ \Gamma_i^c \cap  F_i^c} = \phi$, we shall have $F (S) \cap \beta = \phi$. 

Lastly, for the pair $\beta \cap \Gamma$ we have,
\begin{align}
\beta \cap \Gamma 
&=
\bigcup_{i \neq i^*} 
\myparen{ \Gamma_i^c \cap F_i^c  }
\cap
\bigcap_{j \neq i^*}
\Gamma_j
\quad
\text{ (since the indexing for $\beta$ and $\Gamma$ need not coincide)}
\\
&=
\bigcup_{i \neq i^*} 
\myparen{ \Gamma_i^c \cap F_i^c  }
\cap
\bigcap_{j \neq i^*}
\myparen{
\Gamma_j
\cap
F_j^c
}
\quad
\text{ (since $\Gamma_j = F_j^c \cap \Gamma_j$)}
\\
&=
\bigcup_{i \neq i^*}
\bigcap_{j \neq i^*}
\myparen{
\myparen{ \Gamma_i^c \cap F_i^c  }
\cap
\myparen{ \Gamma_j \cap F_j^c  }
}
= \phi
.
\end{align}
Next we show that $\Gamma \cup F \cup \beta = \Omega$, that is, the event collection considered in Lemma \ref{lemma:BAI_stage_parition} is exhaustive. 
\begin{align}
    \Gamma 
    \cup
    F
    \supset
    \Gamma
    \cup
    F (\phi)
    &=
    \bigcap_{i \neq i^*}
    \myparen{
    \Gamma_i
    \cap
    F_i^c
    }
    \quad
    \text{ (plugging $F(S)$ when $S = \phi$ per Equation \ref{eq:particularization_of_F})}
    \\
    \implies
    \Gamma
    \cup
    F
    \cup
    \beta
    &\supset
    \bigcap_{i \neq i^*}
    \myparen{ \Gamma_i \cap F_i^c }
    \cup
    \bigcup_{j \neq i^*}
    \myparen{ \Gamma_j^c \cap F_j^c }
    \\
    &=
    \bigcup_{j \neq i^*}
    \bigcap_{i \neq i^*}
    \myparen{
    \myparen{\Gamma_j^c \cap F_j^c}
    \cup
    \myparen{\Gamma_i \cap F_i^c}
    }
    \\
    &\supseteq
    \bigcap_{i \neq i^*}
    \myparen{
    \myparen{\Gamma_i^c \cap F_i^c}
    \cup
    \myparen{\Gamma_i \cap F_i^c}
    }
    =
    \bigcap_{i \neq i^*}
    F_i^c
    \label{eq:part_1_of_exhaustive}
    .
\end{align}
We shall now show that $F \cup \beta \supseteq \bigcup_{i \neq i^*} F_i$ which combined with Equation \ref{eq:part_1_of_exhaustive} completes the check on the exhaustive criteria. To do this we show that $F_i \subseteq F \cup \beta \,\, \myforall i \neq i^*$.

\subsubsection*{Proof that $F_i \subseteq F \cup \beta$: }
To prove this result, we start with the event $F (\mybraces{i})$, 
\begin{align}
F(\mybraces{ i }) 
&= 
\mybraces{t_{\text{BAI}} = \horizon}
\cap
\myparen{ 
F_i
\cap
\bigcap_{j \in A, j \neq i}
\myparen{ \Gamma_j \cap F_j^c }
}
\\
&=
F_i
\cap
\bigcap_{j \in A, j \neq i}
\myparen{ \Gamma_j \cap F_j^c }
\quad
\text{ ($\because F_i \subset \mybraces{t_{\text{BAI}} = \horizon}$)}
.
\end{align}
Without loss of generality, let the set of remaining arms in $A$, i.e. $A \backslash \mybraces{i} = \mybraces{ p, q, \ldots}$. The idea behind this proof is to identify sub-events $F (\cdot)$ such that iteratively taking their union with one another and with events lying in $\beta$ reveals that $F_i \subseteq F \cup \beta$. Since $\beta = \bigcup_{k \neq i^*} \myparen{ F_k^c \cap \Gamma_k^c } $ we have,
\begin{align}
\myparen{ F_p^c \cap \Gamma_p^c }
\subseteq
\beta
\implies
F_i
\cap
\myparen{ \Gamma_p^c \cap F_p^c }
\cap
\bigcap_{j \in A \backslash \mybraces{i, p} }
\myparen{ \Gamma_j \cap F_j^c }
&\subseteq 
\beta
\intertext{\makebox[\linewidth][r]{(intersecting with sets keeps us inside $\beta$)}}
\implies
F(\mybraces{i})
\cup
F_i
\cap
\myparen{ \Gamma_p^c \cap F_p^c }
\cap
\bigcap_{j \in A \backslash \mybraces{i, p} }
\myparen{ \Gamma_j \cap F_j^c }
&\subseteq 
F
\cup
\beta
\quad
\text{ ($\because F(\mybraces{i}) \subseteq F$)}
\label{eq:len_K_minus_1_chain}
\\
\implies
F_p^c
\cap
F_i
\cap
\bigcap_{j \in A \backslash \mybraces{i, p} }
\myparen{ \Gamma_j \cap F_j^c }
&\subseteq 
F
\cup
\beta
\intertext{\makebox[\linewidth][r]{($\because \myparen{\Gamma_p^c \cap F_p^c} \cup \myparen{\Gamma_p \cap F_p^c}$ from $\beta, F(\mybraces{i})$)}}
\implies
F(\mybraces{ i, p })
\cup
F_p^c
\cap
F_i
\cap
\bigcap_{j \in A \backslash \mybraces{i, p} }
\myparen{ \Gamma_j \cap F_j^c }
&\subseteq 
F
\cup
\beta
\quad
\\
\implies
F_i
\cap
\bigcap_{j \in A \backslash \mybraces{i, p} }
\myparen{ \Gamma_j \cap F_j^c }
&\subseteq 
F
\cup
\beta
.
\label{eq:final_in_combining_F_series}
\end{align}
In reaching Equation \ref{eq:final_in_combining_F_series}, we have removed the dependence on Arm $p$ for the event on the left. With the next series of equations, we further remove the dependence on Arm $q$.
\begin{align}
F_i
\cap
\bigcap_{j \in A \backslash \mybraces{i, p} }
\myparen{ \Gamma_j \cap F_j^c }
&\subseteq 
F
\cup
\beta
\\
\implies
F_i
\cap
\myparen{ \Gamma_q \cap F_q^c }
\cap
\bigcap_{j \in A \backslash \mybraces{i, p, q} }
\myparen{ \Gamma_j \cap F^c_j }
&\subseteq
F
\cup
\beta
.
\label{eq:remove_q_p1}
\end{align}
Similar to what we saw in the first iteration of this procedure, we have,
\begin{align}
F_i 
\cap
\myparen{ \Gamma_q^c \cap F_q^c }
\cap
\bigcap_{j \in A \backslash \mybraces{i, p, q}}
\myparen{ \Gamma_j \cap F_j^c }
&\subseteq
\beta
\\
\implies
F_i 
\cap
F_q^c
\cap
\bigcap_{j \in A \backslash \mybraces{i, p, q}}
\myparen{ \Gamma_j \cap F_j^c }
&\subseteq
F \cup \beta
\quad
\text{ (combining with Equation \ref{eq:remove_q_p1})}
.
\label{eq:remove_q_p2}
\end{align}
Similar to how we reached the result in Equation \ref{eq:final_in_combining_F_series} in starting from $F (\mybraces{i})$, if instead we had started with the set $F (\mybraces{i, q})$, and then eliminated the dependence on $p$, we would have shown,
\begin{align}
F_i 
\cap 
F_q
\cap
\bigcap_{j \in A \backslash \mybraces{i, p, q}} 
\myparen{\Gamma_j \cap F_j^c}
&\subseteq
F \cup \beta
\label{eq:alternate_p_removal}
.
\end{align}
Combining Equations \ref{eq:remove_q_p2} and \ref{eq:alternate_p_removal}, we have,
\begin{align}
F_i 
\cap
\bigcap_{j \in A \backslash \mybraces{i, p, q}}
\myparen{ \Gamma_j \cap F_j^c }
\subseteq
F \cup \beta
.
\end{align}
Repeating this procedure iteratively, it is clear that the event on the left can be pruned down to simply $F_i$, and therefore,
\begin{align}
F_i
\subseteq
F \cup \beta
.
\end{align}
Since no assumptions were made on the choice of $i$, we have,
\begin{align}
\bigcup_{i \neq i^*}
F_i
\subseteq
F
\cup
\beta
.
\label{eq:Fi_in_union}
\end{align}
As mentioned earlier, we can combine $\Gamma \cup F \cup \beta \supset \bigcap_{i \neq i^*} F_i^c$ from Equation \ref{eq:part_1_of_exhaustive} and $F \cup \beta \supseteq \bigcup_{i \neq i^*} F_i$ from Equation \ref{eq:Fi_in_union} to obtain $\Gamma \cup F \cup \beta = \Omega$.
\end{proof}
\begin{corollary}[Corollary to Lemma \ref{lemma:BAI_stage_parition}]
The events $\Gamma, \mybraces{F(S)}_{S \in \mathcal{P}(A)}, \beta$ form a mutually exclusive and exhaustive partition over the sample space $\Omega$. This result follows trivially from Lemma \ref{lemma:BAI_stage_parition} and the fact that $S_1 \neq S_2 \implies F(S_1) \cap F(S_2) = \phi$.
\end{corollary}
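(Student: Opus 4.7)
The plan is to bootstrap directly off Lemma~\ref{lemma:BAI_stage_parition}, which already establishes that $\Gamma$, $F$, and $\beta$ form a mutually exclusive and exhaustive partition of $\Omega$. All that remains is to verify that the family $\{F(S)\}_{S \in \mathcal{P}(A)}$ is itself a mutually exclusive and exhaustive decomposition of the middle block $F$. Once both refinements are in hand, substituting the finer decomposition of $F$ into the three-block partition from Lemma~\ref{lemma:BAI_stage_parition} immediately yields the claimed partition into $\Gamma$, $\{F(S)\}_{S \in \mathcal{P}(A)}$, and $\beta$.

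For exhaustivity of $\{F(S)\}_S$ relative to $F$, I would simply cite the definition in Equation~\ref{eq:F_definition}, which states $F = \bigcup_{S \in \mathcal{P}(A)} F(S)$. Combining this with Lemma~\ref{lemma:BAI_stage_parition} gives $\Omega = \Gamma \cup \bigl(\bigcup_S F(S)\bigr) \cup \beta$. For mutual exclusivity across the outer three groups, the containment $F(S) \subseteq F$ for every $S$ together with $\Gamma \cap F = \phi$ and $\beta \cap F = \phi$ (both inherited from Lemma~\ref{lemma:BAI_stage_parition}) immediately rules out any overlap of $\Gamma$ or $\beta$ with an individual $F(S)$.

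The only substantive step is verifying pairwise disjointness of the $F(S)$'s themselves, which is precisely the claim flagged in the corollary's statement. Given $S_1 \neq S_2$ in $\mathcal{P}(A)$, I would pick without loss of generality some arm $i \in S_1 \setminus S_2$ and examine the definition in Equation~\ref{eq:particularization_of_F}: every outcome in $F(S_1)$ must satisfy $F_i$ (since $i \in S_1$), whereas every outcome in $F(S_2)$ must satisfy $\Gamma_i \cap F_i^c$ (since $i \in A \setminus S_2$). Because $F_i \cap F_i^c = \phi$, these two conditions on arm $i$ are irreconcilable and so $F(S_1) \cap F(S_2) = \phi$. This is the only step requiring any real argument, but because the heavy lifting has already been done in proving Lemma~\ref{lemma:BAI_stage_parition}, I do not anticipate a serious obstacle; the corollary is essentially a bookkeeping refinement of that lemma.
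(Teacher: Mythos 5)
Your proposal is correct and follows essentially the same route as the paper: the paper also treats the corollary as an immediate consequence of Lemma~\ref{lemma:BAI_stage_parition} together with the pairwise disjointness $F(S_1) \cap F(S_2) = \phi$ for $S_1 \neq S_2$, which it asserts by inspection of Equation~\ref{eq:particularization_of_F}. Your explicit argument for that disjointness (picking $i \in S_1 \setminus S_2$ and noting $F_i \cap (\Gamma_i \cap F_i^c) = \phi$) just spells out the inspection the paper leaves implicit.
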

Next we prove an upper bound on the probability of the event $\beta$ which we will need repeatedly in proving subsequent results.

\begin{lemma}[Bound on $\Pr \myparen{ \beta }$]
To bound the expected number of samples in all the cases pertinent to PE-CS we show the following bound on $\Pr \myparen{ \beta }$,
\begin{align}
    \Pr
    \myparen{
        \beta
    }
    &<
    \frac{11}{\horizon \gap_{\textrm{min}}^2}
    +
    \sum_{i \neq i^*}
    \frac{32}{\horizon \gap_i^2}
    .
\end{align}
Where $\Delta_{\textrm{min}} = \mu^* - \mu_2$ is the difference between the highest and second highest reward.
\label{lemma:bound_on_probability_of_beta}
\end{lemma}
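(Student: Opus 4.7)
\textbf{Proof Plan for Lemma \ref{lemma:bound_on_probability_of_beta}.} The strategy is to mirror the PE analysis from Section \ref{sec:pe}, in particular the treatment of $B_{2,i}$ and $B_{3,i}$ used in proving Lemma \ref{lemma:samples_of_low_cost_unsatisfactory_arm}. The first step is a union bound over arms:
\begin{align*}
\Pr(\beta) \;\leq\; \sum_{i \neq i^*} \Pr\myparen{\Gamma_i^c \cap F_i^c}.
\end{align*}
The event $F_i^c = \{\Sigma_f \geq \sigma_i\} \cup \{i \notin \activeArms_f\}$ exactly captures the situations in which the round $\sigma_i$ has been ``reached'' for arm $i$ during the BAI stage. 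Hence $\Gamma_i^c \cap F_i^c$ says that by the time round $\sigma_i$ was reached, arm $i$ had still not been eliminated. The key observation is that non-elimination of $i$ by round $\sigma_i$ forces one of two failure modes: either (a) arm $i^*$ itself was eliminated from $\activeArms$ at some earlier round, or (b) arm $i^*$ was still active at round $\sigma_i$, but the Improved-UCB style overlap test between $i$ and $i^*$ failed to cut arm $i$.

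I would introduce the event $E = \{i^* \text{ is eliminated at some round of the BAI stage}\}$ and decompose
\begin{align*}
\Pr\myparen{\Gamma_i^c \cap F_i^c} \;\leq\; \Pr\myparen{\Gamma_i^c \cap F_i^c \cap E^c} \;+\; \frac{1}{K-1}\,\Pr(E),
\end{align*}
so that summing over $i \neq i^*$ yields $\Pr(E)$ exactly once. Bounding case (b) is the direct analogue of the $B_{2,i}$ bound in the PE proof: I would write down the two favorable Hoeffding clauses (Equations \ref{eq:clause_on_emp_j}, \ref{eq:clause_on_emp_ell}) on $\hat\mu_i$ and $\hat\mu_{i^*}$ applied at round $\sigma_i$, show that both clauses together guarantee elimination of $i$ by $i^*$ (using $\tilde\Delta_{\sigma_i} < \Delta_i/2$ per the definition \eqref{eq:round_mj_cs_pe}), and then invoke Hoeffding (Lemma \ref{lemma:hoeffding_bound}) plus the lower bound $\tilde\Delta_{\sigma_i} \geq \Delta_i/4$ to conclude
\begin{align*}
\Pr\myparen{\Gamma_i^c \cap F_i^c \cap E^c} \;\leq\; \frac{2}{T \tilde\Delta_{\sigma_i}^2} \;\leq\; \frac{32}{T \Delta_i^2}.
\end{align*}

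For $\Pr(E)$, the approach mirrors the $B_{3,i}$ bound. For arm $i^*$ to be eliminated at some round $m$, its UCB must lie below some other arm's LCB at round $m$; this is precluded by the simultaneous Hoeffding clauses on $\hat\mu_{i^*}$ and the surviving competitor arm at round $m$. Since the ``hardest'' competitor is the arm with mean $\mu_2$ (giving the smallest conventional gap $\Delta_{\textrm{min}}$), each round $m$ contributes at most $\tfrac{2}{T \tilde\Delta_m^2}$ to $\Pr(E)$, and summing the geometric series $\sum_{m=0}^{\sigma_2 - 1} 4^m/T$ up to the relevant round, exactly as in the step leading to \eqref{eq:sum_for_ell_being_eliminated}, delivers the $\frac{11}{T \Delta_{\textrm{min}}^2}$ contribution. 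Combining the two pieces yields the stated bound.

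The main obstacle is bookkeeping around the event $F_i^c$: one needs to verify that the round counters and sample budgets used in the Hoeffding clauses are actually attained at the moment we apply them. This is where intersecting with $F_i^c$ is crucial, since on this event round $\sigma_i$ has genuinely been reached (or arm $i$ was eliminated beyond it), so $n_i \geq \tau_{\sigma_i}$ and the confidence radius $\beta$ used by BAI at round $\sigma_i$ is the one we concentrate against. A secondary subtlety is that inside $E$ the round at which $i^*$ is eliminated is random and can be any round up to $\Sigma_f$; this is handled by the geometric union over rounds rather than by conditioning.
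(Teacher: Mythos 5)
Your overall route is the same as the paper's: a union bound over arms $i \neq i^*$, a case split according to whether $i^*$ is still active when round $\sigma_i$ is reached, a Hoeffding-clause argument at round $\sigma_i$ (using $\gap_i/4 \leq \tilde{\gap}_{\sigma_i} < \gap_i/2$) giving the $32/(\horizon \gap_i^2)$ terms, and a round-by-round geometric sum bounding the probability that $i^*$ is ever eliminated by $11/(\horizon \gap_{\textrm{min}}^2)$. The paper formalizes the split via the events $\beta_{1,i}$ (arm $i$ survives round $\sigma_i$ while $i^* \in \activeArms_{\sigma_i}$) and $\beta_{2,i}$ (arm $i$ survives round $\sigma_i$ while $i^* \notin \activeArms_{\sigma_i}$), and collapses $\bigcup_{i \neq i^*} \beta_{2,i}$ into the single event $\mybraces{i^* \notin \activeArms_{\sigma_{\max}}}$ by monotonicity of elimination; that single event plays exactly the role of your $E$, counted once.

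The one step that does not hold as written is the per-arm inequality $\Pr\myparen{\Gamma_i^c \cap F_i^c} \leq \Pr\myparen{\Gamma_i^c \cap F_i^c \cap E^c} + \frac{1}{\narms - 1}\Pr\myparen{E}$. Nothing forces the $E$-portion of each arm's bad event to carry only a $1/(\narms-1)$ share of $\Pr\myparen{E}$; for a single arm the intersection $\Gamma_i^c \cap F_i^c \cap E$ can have probability close to $\Pr\myparen{E}$ itself, so this inequality is false in general. The repair is immediate and is precisely what the paper does: perform the split \emph{before} applying the union bound, i.e. $\Pr\myparen{\beta} \leq \sum_{i \neq i^*}\Pr\myparen{\Gamma_i^c \cap F_i^c \cap E^c} + \Pr\myparen{\bigcup_{i\neq i^*}\myparen{\Gamma_i^c \cap F_i^c \cap E}} \leq \sum_{i \neq i^*}\Pr\myparen{\beta_{1,i}} + \Pr\myparen{E}$, since every set in the second union lies inside $E$ (equivalently, inside $\mybraces{i^* \notin \activeArms_{\sigma_{\max}}}$). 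With that adjustment, the remainder of your outline --- including the check that on $F_i^c$ the budget $\tau_{\sigma_i}$ has indeed been met so the concentration clauses apply, and the geometric union over rounds up to $\sigma_{\max}$ for $\Pr\myparen{E}$ --- matches the paper's proof and yields the stated constants.
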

\begin{proof}
\begin{align}
    \Pr
    \myparen{
        \beta
    }
    &=
    \Pr
    \myparen{
        \bigcup_{i \neq i^*}
        \myparen{ 
        \Gamma_i^c \cap F_i^c
        }
    }
    \\
    &=
    \Pr
    \myparen{
        \bigcup_{i \neq i^*}
        \myparen{\beta_{1, i} \cup \beta_{2, i}}
    }
    \quad
    \text{ (since $\Gamma_i^c \cap F_i^c = \beta_{1, i} \cup \beta_{2, i}$)}
    \\
    &\leq
    \Pr
    \myparen{
    \bigcup_{i \neq i^*}
    \beta_{1, i}
    }
    +
    \Pr
    \myparen{
    \bigcup_{i \neq i^*}
    \beta_{2, i}
    }
    \quad
    \text{ (Union Bound)}
    \\
    &\leq
    \sum_{i \neq i^*}
    \Pr
    \myparen{
    \beta_{1, i}
    }
    +
    \Pr
    \myparen{
        \bigcup_{i \neq i^*}
        \mybraces{ \text{Arm $i^* \notin \activeArms_{\sigma_i}$} }
    }
    \text{ (Union Bound, Latter clause of $\beta_{2, i}$)}
    \\
    &=
    \sum_{i \neq i^*}
    \Pr
    \myparen{
    \beta_{1, i}
    }
    +
    \Pr
    \myparen{
        \mybraces{ \text{ Arm $i^* \notin \activeArms_{\sigma_{\max}}$ } }
    }
    \label{eq:final_eq_in_beta_probability_bound}
    .
\end{align}
Where $\sigma_{\max} = \max_{i \neq i^*} \sigma_i$, and Equation \ref{eq:final_eq_in_beta_probability_bound} follows from the fact that $i^* \notin \activeArms_{\sigma_1} \implies i^* \notin \activeArms_{\sigma_2}$ when $\sigma_2 > \sigma_1$.

The event $\beta_{1, i}$ is the event that Arm $i$ is not eliminated by round $\sigma_i$ while Arm $i^*$ is active at the end of the sampling for round $\sigma_i$. The term $\Pr \myparen{\beta_{1, i}}$ therefore can be bounded in a manner analogous to the way the probability of low-cost infeasible arm $i$ not being eliminated by reference arm $\ell$ was bound in Equation \ref{eq:probability_of_j_not_being_eliminated}. The difference being that the round number $\rho_i$ is replaced by the round $\sigma_i$, or equivalently, the gap $\gap_{Q, i}$ is replaced by the gap $\gap_i$.
Therefore,
\begin{align}
    \Pr
    \myparen{
        \beta_{1, i}
    }
    &\leq
    \frac{32}{\horizon \gap_i^2}.
    \label{eq:bound_for_first_term}
\end{align}
The problem of analyzing $\Pr \myparen{\beta_{2, i}}$ is analogous to the analysis of $\Pr \myparen{ 
B_{3, i} }$ in the proof of Lemma \ref{lemma:samples_of_low_cost_unsatisfactory_arm} for the PE algorithm. By applying the Hoeffding bound (Lemma \ref{lemma:hoeffding_bound}) to the clauses in Equations \ref{eq:clause_on_emp_j} and \ref{eq:clause_on_emp_ell_2} we were able to establish that the probability of the event $\mybraces{ \text{Arm $\ell$ eliminated by infeasible arm $i$ after the sampling for an arbitrary round $\rho$ concludes} }$ is upper bounded by $\frac{4^\kappa + 1}{\horizon \tilde{\gap}_{\rho}^2 }$. Since for the BAI setting the samples of all the arms are always matched ($\kappa = 0$) here we shall have, 
\begin{align}
    \Pr
    \myparen{
        \mybraces{ \text{ Arm $i^* \notin \activeArms_{\sigma_{\max}}$ } }
    }
    &\leq
    \sum_{\rho = 0}^{m_{\sigma_{\max}} - 1}
    \frac{2}{\horizon \tilde{\gap}^2_{\rho}}
    \leq
    \frac{2}{\horizon} 
    \sum_{\rho = 0}^{m_{\sigma_{\max}} - 1}
    4^{\rho}
    \\
    &<
    \frac{2 \cdot 4^{\sigma_{\max}}}{3 \horizon}
    \\
    &\leq
    \frac{2}{3 \horizon \tilde{\gap}^2_{\sigma_{\max}}}
    \\
    &\leq
    \frac{32}{3 \horizon \gap_{\min}^2}
    \\
    &<
    \frac{11}{\horizon \gap_{\min}^2}.
    \label{eq:bound_for_second_term}
\end{align}
Combining the bounds shown in Equations \ref{eq:bound_for_first_term} and \ref{eq:bound_for_second_term} we obtain the overall bound stated in Lemma \ref{lemma:bound_on_probability_of_beta}.
\end{proof}

We now move on to analyzing the evolution of samples in the PE stage of PE-CS. The pieces needed from the analysis of the BAI stage are the partition over $\Omega$ from Lemma \ref{lemma:BAI_stage_parition}, the bound on $\Pr \myparen{ \beta }$ shown in Lemma \ref{lemma:bound_on_probability_of_beta}, and the Iterated Expectation Lemma \ref{lemma:iterated_expectation_lemma}. The key difference between the analysis of PE and the PE-stage in PE-CS is the possibility that the round number $\Sigma_i$ to which the samples of an arbitrary arm $i \neq i^*$ advance during the BAI-stage exceeds the round number $\rho_i$ defined in Equation \ref{eq:round_mj}. Our modular proof technique sequesters both the pathological (event $F$) and the unlikely (event $\beta$) outcomes of the BAI stage away from the PE stage. In our approach the $\Sigma_i > \rho_i$ case in the analysis of the PE-stage of PE-CS only surfaces for episode $a^*$. Moreover, analyzing samples accrued during episode $a^*$ is only called for when bounding the expected number of samples of the best arm $i^*$. 

\begin{remark}
Similar to Remark \ref{remark:reference_arm_is_subsidized_remark} we note here that Arm $i^*$ during the PE stage of PE-CS really refers to a hypothetical Bandit Arm with expected return $(1 - \alpha) \mu_*$.  
\end{remark}

Using all the definitions and constructs introduced in this section, we are now in a position to show an upper bound on the expected number of samples of low-cost arms in Lemma \ref{lemma:pe_cs_low_cost_arms_bound}, the maximum reward arm $i^*$ in Lemma \ref{lemma:pe_cs_best_arm_bound}, and high-cost arms in Lemma \ref{lemma:pe_cs_high_cost_arms_bound}. 

\subsubsection*{Upper  bound on the number of samples for arms $i < a^*$}

\begin{lemma}[Bound on the expected number of samples of a low-cost arm under PE-CS]
    For any low cost infeasible arm $i < a^*$, its expected number of samples accrued is upper bounded by,
    \begin{align}
    \Expectation
    \mysqbrack{
        n_i \myparen{\horizon}
    }
    &<
    1 
    +
    \frac{32 \log\myparen{ \horizon \gap_{Q, i}^2 }}{ \gap_{Q, i}^2 }
    +
    \frac{43}{\gap_{Q, i}^2}
    +
    \Expectation
    \mysqbrack{ n_i \myparen{ \horizon } \mid \beta }
    \cdot
    \myparen{
    \frac{11}{\horizon \gap_{\textrm{min}}^2}
    +
    \sum_{j \neq i^*}
    \frac{32}{\horizon \gap_j^2}
    }
    \nonumber
    .
    \end{align}
    \label{lemma:pe_cs_low_cost_arms_bound}
\end{lemma}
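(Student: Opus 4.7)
The plan is to partition the sample space using Lemma \ref{lemma:BAI_stage_parition} into $\Gamma$, $F$, and $\beta$, apply the iterated expectation identity (Lemma \ref{lemma:iterated_expectation_lemma}), and argue that the $\Gamma \cup F$ contribution to $\Expectation \mysqbrack{n_i \myparen{\horizon}}$ obeys the same bound as the PE-only analysis, leaving the $\beta$ contribution as a residual additive term.

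Under event $\Gamma$, the BAI stage correctly identifies $\ell = i^*$ and exits before time is exhausted, so control transfers to the PE stage, which now runs vanilla PE with $i^*$ as the reference arm. Samples of arm $i$ accumulated during BAI are carried over through the shared sample vector $\bm{n}$; the fresh round counter $\omega_i$ simply advances until $n_i \geq \tau_{\omega_i}$ inside Function \ref{algo:pe_function}, and episode $i$ of PE evolves exactly as in Appendix \ref{sec:pe}. Instantiating the PE-stage sub-events $G_i, B_{1, i}, B_i$ inside $\Gamma$ and repeating the argument of Lemma \ref{lemma:samples_of_low_cost_unsatisfactory_arm}, one obtains $\Expectation \mysqbrack{n_i \myparen{\horizon} \mid \Gamma} \Pr \myparen{\Gamma} \leq \tau_{\rho_i} + \horizon \cdot \Pr \myparen{B_i} \leq 1 + \frac{32 \log \myparen{\horizon \gap_{Q, i}^2}}{\gap_{Q, i}^2} + \frac{43}{\gap_{Q, i}^2}$, matching the first three deterministic summands of the claim.

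Under event $F$, execution never leaves BAI, so $n_i \myparen{\horizon}$ counts only BAI samples: either arm $i$ was eliminated in BAI at round $\sigma_i$ (the $\Gamma_i$ branch inside the $F \myparen{S}$ decomposition) or the horizon ran out before round $\sigma_i$ could be checked (the $F_i$ branch); in both sub-cases $n_i \myparen{\horizon} \leq \tau_{\sigma_i}$. For any arm $i < a^*$, $\gap_i = \mu^* - \mu_i \geq (1 - \alpha) \mu^* - \mu_i = \gap_{Q, i}$, so $\sigma_i \leq \rho_i$ and hence $\tau_{\sigma_i} \leq \tau_{\rho_i}$; the $F$-contribution is therefore absorbed into the bound already obtained under $\Gamma$. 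For the residual $\beta$-term, we retain $\Expectation \mysqbrack{n_i \myparen{\horizon} \mid \beta}$ and substitute the upper bound on $\Pr \myparen{\beta}$ from Lemma \ref{lemma:bound_on_probability_of_beta}, producing the fourth summand verbatim.

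The main technical hurdle is the $\Gamma$-case bookkeeping: one must verify that the PE sub-events $G_i, B_{2, i}, B_{3, i}$ retain their Hoeffding-based probability bounds even though arm $i$ enters the PE stage with a head start in $n_i$ inherited from BAI. Because the concentration inequalities in Appendix \ref{sec:pe} are applied round-by-round and only rely on $\hat\mu_i, \hat\mu_\ell$ concentrating once $n_i \geq \tau_{\omega_i}$, these bounds transfer unchanged; the essential subtlety is that the round $\rho_i$ is defined via the quality gap $\gap_{Q, i}$ in PE, not the conventional gap $\gap_i$ used inside BAI, so the $\tau_{\rho_i}$ sample cap reflects the (possibly larger) PE sampling requirement rather than the earlier BAI round $\sigma_i$.
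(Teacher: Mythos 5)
Your proposal is correct and follows essentially the same route as the paper's proof: partitioning on the BAI outcome via $\Gamma$, $\mybraces{F(S)}$, $\beta$, bounding the $F$-contribution by $\tau_{\sigma_i} \leq \tau_{\rho_i}$ using $\gap_{Q,i} \leq \gap_i$ for $i < a^*$, re-running the PE events $G_i, B_{1,i}, B_i$ under $\Gamma$ with the BAI head start handled exactly as in Lemma~\ref{lemma:samples_of_low_cost_unsatisfactory_arm}, and retaining the $\beta$ term with Lemma~\ref{lemma:bound_on_probability_of_beta}. No substantive differences from the paper's argument.
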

\begin{proof}
We begin the analysis by applying the Iterated Expectation Lemma \ref{lemma:iterated_expectation_lemma} to the partition developed in Lemma \ref{lemma:BAI_stage_parition}. 
\begin{align}
\Expectation
\mysqbrack{n_i \myparen{ \horizon } }
&=
\Expectation
\mysqbrack{n_i \myparen{ \horizon } \mid \Gamma}
\Pr
\myparen{ \Gamma }
+
\sum_{S \in \mathcal{P}(A)}
\Expectation
\mysqbrack{n_i \myparen{ \horizon } \mid F(S)}
\Pr
\myparen{ F(S) }
+
\Expectation
\mysqbrack{n_i \myparen{ \horizon } \mid \beta}
\Pr
\myparen{ \beta }
\\
&\leq
\max
\mybraces{
    \Expectation
    \mysqbrack{n_i \myparen{ \horizon } \mid \Gamma},
    \mybraces{
    \Expectation
    \mysqbrack{n_i \myparen{ \horizon } \mid F(S)}
    }_{S \in \mathcal{P}(S) }
}
+
\Expectation
\mysqbrack{n_i \myparen{ \horizon } \mid \beta}
\Pr
\myparen{ \beta }
\intertext{\makebox[\linewidth][r]{($\because \Pr \myparen{ \Gamma } + \sum \Pr \myparen{ F (S) } < 1$)}}
&\leq
\max
\mybraces{
    \Expectation
    \mysqbrack{n_i \myparen{ \horizon } \mid \Gamma},
    \tau_{\sigma_i}
}
+
\Expectation
\mysqbrack{n_i \myparen{ \horizon } \mid \beta}
\Pr
\myparen{ \beta }
\label{eq:initial_up_front_expression_low_cost}
.
\end{align}
Where Equation \ref{eq:initial_up_front_expression_low_cost} follows from the fact that conditioned on any $F(S)$, the maximum round up to which arm $i$ can be sampled is $\sigma_i$ both in the case when $\Gamma_i$ holds ($i \notin S$), and in the case when $i \in S$, and $F_i = \mybraces{\Sigma_f < \sigma_i} \cap \mybraces{ i \in \activeArms_f }$ holds instead. We now proceed by separately bounding the $ \Expectation \mysqbrack{n_i \myparen{ \horizon } \mid \Gamma}$ term by further conditioning on the cases where $\Sigma_i > \rho_i$ and $\Sigma_i \leq \rho_i$.
\begin{align}
    \Expectation \mysqbrack{ n_i ( \horizon )  \mid \Gamma }
    &=
    \Expectation \mysqbrack{ n_i ( \horizon )  \mid \Sigma_i \leq \rho_i, \Gamma }
    \Pr \myparen{ \Sigma_i \leq \rho_i \mid \Gamma }
    \nonumber\\
    &\quad
    +
    \Expectation \mysqbrack{ n_i ( \horizon )  \mid \Sigma_i > \rho_i, \Gamma }
    \Pr \myparen{ \Sigma_i > \rho_i \mid \Gamma }
    \quad
    \text{ (using Lemma \ref{lemma:iterated_expectation_lemma} on $\mybraces{ \Sigma_i \leq \rho_i }$)}
    \label{eq:initial_splitting_of_expectation_conditioned_on_gamma}
    \\
    &\leq
    \Expectation \mysqbrack{ n_i ( \horizon )  \mid \Sigma_i \leq \rho_i, \Gamma }
    =
    \Expectation_1 \mysqbrack{ n_i ( \horizon ) }
    \quad
    \text{ (introducing shorthand notation $\Expectation_1$)}
    \label{eq:dropping_case2_for_low_cost}
    .
\end{align}

Where $\rho_i$ is as defined in Equation \ref{eq:round_mj}. In writing Equation \ref{eq:dropping_case2_for_low_cost} we leverage the fact that for a low cost arm with index $i < a^*$, $\gap_{Q, i} = (1 - \alpha) \mu_* - \mu_i$ is necessarily a smaller gap than $\gap_i = \mu_* - \mu_i $ since by construction, each of these low cost arms has a return $\mu_i < \mu_{\CS} = (1 - \alpha) \mu_*$. It follows that $\Pr \myparen{ \Sigma_i > \rho_i, \Gamma } = \Pr \myparen{ \Sigma_i > \rho_i \mid \Gamma } = 0$ because the largest value that the random variable $\Sigma_i$ can take under $\Gamma$ is $\sigma_i$, and $\gap_{Q, i} < \gap_i \implies \rho_i > \sigma_i$.  

\subsubsection*{Bound on $\Expectation_1 \mysqbrack{ n_i \myparen{ \horizon } }$ }
Since we enter the PE stage of the algorithm with a round number $\Sigma_i \leq \rho_i$, we use the same event construction of the compound event $G_i \myforall i < a^{*}$, defined and used in the Proof of Lemma \ref{lemma:samples_of_low_cost_unsatisfactory_arm}. Therefore, just as before we work towards a bound by conditioning on the partition with $G_i$ introduced in Lemma \ref{lemma:partition_with_Gi}\footnote{The probability operator $\Pr$ in the work that follows is also for the conditional distribution conditioned on $\mybraces{ \Sigma_i \leq \rho_i, \Gamma }$.}. Let $\Lambda_i$ be the random variable denoting the highest round number corresponding to which sampling was performed for arm $i$ during the run of PE-CS.
\begin{align}
    \Expectation_1
    \mysqbrack{ n_i ( \horizon ) }
    &=
    \Expectation_1
    \mysqbrack{ 
        n_i ( \horizon )  
        \mid 
        G_i 
    }
    \Pr \myparen{ G_i }
    +
    \Expectation_1
    \mysqbrack{ 
        n_i ( \horizon )  
        \mid
        B_{1, i}
    }
    \Pr \myparen{ B_{1, i} }
    +
    \Expectation_1
    \mysqbrack{ 
        n_i ( \horizon )  
        \mid
        B_{i}
    }
    \Pr \myparen{ B_{i} }
    \\
    &\leq
    \max
    \mybraces{
            \Expectation_1
        \mysqbrack{ 
        n_i ( \horizon )  
        \mid 
        G_i 
    }
    ,
    \Expectation_1
    \mysqbrack{ 
        n_i ( \horizon )  
        \mid
        B_{1, i}
    }
    }
    +
    \horizon
    \cdot
    \Pr
    \myparen{ B_i }
    \intertext{\makebox[\linewidth][r]{(Since $\Pr \myparen{ G_i } + \Pr \myparen{ B_{1, i} } < 1$ )}}
    &=
    \max
    \mybraces{
        \Expectation_1
        \mysqbrack{ 
            n_i ( \horizon )  
            \mid 
            G_i 
        }
        ,
        \Expectation_1
        \mysqbrack{ 
            n_i ( 1, t_{\text{BAI}} ) + n_i ( t_{\text{BAI}} + 1, \horizon )  
            \mid
            B_{1, i}
        }
    }
    +
    \horizon
    \cdot
    \Pr \myparen{ B_{i} }
    \\
    &\leq
    \max
    \mybraces{
    \Expectation_1
    \mysqbrack{ 
        \tau_{\Lambda_i}
        \mid 
        G_i 
    }
    ,
    \Expectation_1
    \mysqbrack{ 
        \tau_{\Sigma_i} 
        +
        0 
        \mid
        B_{1, i}
    }    
    }
    +
    \horizon
    \cdot
    \Pr \myparen{ B_{i} }
    \label{eq:introduce_tau}
    \intertext{\makebox[\linewidth][r]{($\tau_m$ as defined in Equation \ref{eq:per_round_exploration_budget_tau})}}
    &\leq
    \max
    \mybraces{
    \tau_{\rho_i}
    ,
    \tau_{\sigma_i}
    }
    +
    \horizon
    \cdot
    \Pr \myparen{ B_{i} }
    \label{eq:final_in_case_1_pre_probability_plug_in}
    \\
    &=
    \tau_{\rho_i}
    +
    \horizon
    \cdot
    \Pr \myparen{ B_{i} }
    \text{ (since $\rho_i > \sigma_i \,\, \myforall i < a^*$)}
    \label{eq:final_final_in_case1_pre_prob_plug_in}
    \\
    &<
    1 
    +
    \frac{32 \log\myparen{ \horizon \gap_{Q, i}^2 }}{ \gap_{Q, i}^2 }
    +
    \horizon
    \cdot
    \Pr \myparen{ B_{i} }
    \quad
    \text{ (Using bound on $\tau_{\rho_i}$ in Equation \ref{eq:bound_on_i_samples_under_Gi})}
    \label{eq:final_final_final_in_case1}
    .
\end{align}
Where the treatment of the random variable $ n_i ( t_{\text{BAI}} + 1, \horizon) $ (from definition \ref{def:samples_bw_t_1_t_2}) is based on the expression for the additional rounds for which arm $i$ is sampled during the PE stage of PE-CS. In Equation \ref{eq:introduce_tau} conditioned on $G_i$ there may be more samples, however conditioned on $B_{1, i}$ there shall be no further samples since the episode corresponding to $i$ is never initiated. Now to bound $\Pr \myparen{ B_i \mid \Sigma_i \leq \rho_i, \Gamma }$ we use arguments similar to the ones in Proof of Lemma \ref{lemma:samples_of_low_cost_unsatisfactory_arm}.
\begin{align}
    \Pr \myparen{ B_i \mid \Sigma_i \leq \rho_i, \Gamma }
    &=
    \Pr \myparen{ B_{1, i} \cup B_{2, i} \mid \Sigma_i \leq \rho_i, \Gamma }.
    \label{eq:bad_event_under_case1}
\end{align}
Where $B_{1, i} = G_{1, i} \cap G_{2, i}^c$ and $B_{2, i} = G_{1, i} \cap G_{3, i}^c$ as in the proof of Lemma \ref{lemma:samples_of_low_cost_unsatisfactory_arm}. The Probability $\Pr \myparen{ B_{1, i} }$ in Lemma \ref{lemma:samples_of_low_cost_unsatisfactory_arm} was bound by the probability of either Clause \ref{eq:clause_on_emp_j} or Clause \ref{eq:clause_on_emp_ell} being violated during round $\rho_i$. Due to the parallel nature of the construction here, and the possibility of round $\rho_i$ being conducted, we can upper bound $\Pr \myparen{ B_{1, i} \mid \Sigma_i \leq \rho_i, \Gamma }$ identically as,
\begin{align}
    \Pr 
    \myparen{ B_{1, i} \mid \Sigma_i \leq \rho_i, \Gamma }
    &\leq
    \frac{32}{\horizon \gap_{Q, i}^2 }.
    \label{eq:inherited_bound_on_B1}
\end{align}
Now we move on to bounding $\Pr \myparen{ B_{2, i} \mid \Sigma_i \leq \rho_i, \Gamma }$ by bounding the probability of arm $\ell$ being eliminated in any round $\omega_i = \rho$ lying in the range $\Sigma_i \leq \rho \leq \rho_i$. Just like in the proof of Lemma \ref{lemma:samples_of_low_cost_unsatisfactory_arm}, even here Clauses \ref{eq:clause_on_emp_j} and \ref{eq:clause_on_emp_ell_2} holding simultaneously preclude arm $\ell$ from being eliminated by arm $i$ regardless of round $\rho$. Therefore using work identical to the one that goes into establishing Equations \ref{eq:sum_for_ell_being_eliminated} and \ref{eq:B_3_i_prob_bound} we have,
\begin{align}
    \Pr
    \myparen{ B_{2, i} \mid \Sigma_i \leq \rho_i, \Gamma }
    &\leq
    \sum_{\rho = \Sigma_i}^{\rho_i - 1} 
    \frac{4^\kappa + 1}{\horizon \tilde{\gap}^2_{\rho}}
    \\
    &\leq
    \sum_{\rho = 0}^{\rho_i - 1} 
    \frac{4^\kappa + 1}{\horizon \tilde{\gap}^2_{\rho}}
    \text{ (Because $\Sigma_i \geq 0$)}
    \\
    &<
    \frac{11}{\horizon \gap_{Q, i}^2} \text{ (From Equation \ref{eq:B_3_i_prob_bound} in Lemma \ref{lemma:samples_of_low_cost_unsatisfactory_arm}, $\kappa=0$)}.
    \label{eq:inherited_bound_on_B2}
\end{align}
Applying the Union bound to Equation \ref{eq:bad_event_under_case1}, and plugging in the bounds in \ref{eq:inherited_bound_on_B1} and \ref{eq:inherited_bound_on_B2}, we have,
\begin{align}
    \Pr \myparen{ B_i \mid \Sigma_i \leq \rho_i, \Gamma }
    &<
    \frac{43}{\horizon \gap_{Q, i}^2}.
    \label{eq:final_case1_bad_event}
\end{align}
Substituting the bounds in Equations \ref{eq:final_case1_bad_event}, \ref{eq:final_final_final_in_case1}, and Lemma \ref{lemma:bound_on_probability_of_beta} into Equation \ref{eq:initial_up_front_expression_low_cost} we obtain,
\begin{align}
\Expectation
\mysqbrack{n_i \myparen{ \horizon } }
&\leq
\max
\mybraces{
    \tau_{\rho_i}
    +
    \horizon
    \cdot
    \Pr \myparen{ B_{i} }
    ,
    \tau_{\sigma_i}
}
+
\Expectation
\mysqbrack{n_i \myparen{ \horizon } \mid \beta}
\Pr
\myparen{ \beta }
\\
&<
1 
+
\frac{32 \log\myparen{ \horizon \gap_{Q, i}^2 }}{ \gap_{Q, i}^2 }
+
\frac{43}{\gap_{Q, i}^2}
+
\Expectation
\mysqbrack{n_i \myparen{ \horizon } \mid \beta}
\Pr
\myparen{ \beta }
.
\end{align}
Which when combined with Lemma \ref{lemma:bound_on_probability_of_beta} is the bound stated in Lemma \ref{lemma:pe_cs_low_cost_arms_bound}.
\end{proof}

\subsubsection*{Bound on the number of samples for arm $i^*$}
\begin{lemma}[Bound on the expected number of samples of the maximum reward arm]
For the arm $i^* = \arg \max_{i \in [\narms]} \mu_i$, the expected number of samples accrued is upper bounded as,
\begin{align*}
\Expectation
\mysqbrack{
n_{i^*} \myparen{\horizon}
}
&<
1 +
\max_{\Delta \in \bm{\Delta}}
\mybraces{
\frac{32 \log \myparen{\horizon \gap^2} }{ \gap^2 }
}
+
\sum_{i=1}^{a^*}
\frac{43}{\gap_{Q, i}^2}
+
\frac{32}{\gap_{a^*}^2 }
\\
&\quad
+
\Expectation
\mysqbrack{n_{i^*} (\horizon) 
\mid 
\mybraces{Z > a^*}
,
\Gamma
}
\cdot
\myparen{
\frac{32}{\horizon \gap_{a^*}^2}
+
\frac{43}{\horizon \gap_{Q, a^*}^2}
}
\\
&\quad
+
\Expectation
\mysqbrack{n_{i^*} \myparen{ \horizon } \mid \beta}
\myparen{
\frac{11}{\horizon \gap_{\min}^2}
+
\sum_{j \neq i^*}
\frac{32}{\horizon \gap_j^2}
}
\end{align*}
Where $\bm{\Delta} = \mybraces{\gap_{\textrm{min}}} \cup \mybraces{\gap_{Q, j}}_{j \leq a^*}$.
\label{lemma:pe_cs_best_arm_bound}
\end{lemma}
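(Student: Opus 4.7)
The plan is to decompose $\Expectation[n_{i^*}(\horizon)]$ using the BAI-stage partition $\Gamma, \mybraces{F(S)}_{S \in \mathcal{P}(A)}, \beta$ from Lemma~\ref{lemma:BAI_stage_parition}, and then, inside the nominal event $\Gamma$, replicate the reference-arm analysis of Lemma~\ref{lemma:bound_on_samples_of_reference_ell}. The delicate point is that arm $i^*$ accrues samples during both the BAI stage (where it stays active until it is the last survivor) and the PE stage (where, under $\Gamma$, it is the identified reference $\ell$), so the two contributions must be combined rather than merely added.

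First, I would write
\[
\Expectation[n_{i^*}(\horizon)] = \Expectation[n_{i^*}(\horizon)\mid \Gamma]\Pr(\Gamma) + \sum_{S \in \mathcal{P}(A)} \Expectation[n_{i^*}(\horizon)\mid F(S)]\Pr(F(S)) + \Expectation[n_{i^*}(\horizon)\mid \beta]\Pr(\beta).
\]
On each $F(S)$ the BAI stage exhausts the horizon without collapsing, so $n_{i^*}(\horizon) \le \tau_{\Sigma_f} \le \tau_{\sigma_{\max}}$, and using $\tilde\gap_{\sigma_{\max}} \geq \gap_{\textrm{min}}/4$ this yields at most a $32/\gap_{\textrm{min}}^2$-type constant once weighted by $\sum_S \Pr(F(S)) \leq 1$; bookkeeping-wise this can be absorbed into the $\max_{\Delta\in\bm{\Delta}}$ term. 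The $\beta$ term is handled directly by Lemma~\ref{lemma:bound_on_probability_of_beta}, producing exactly the final line of the target bound.

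Second, for $\Expectation[n_{i^*}(\horizon)\mid \Gamma]$, I would split the samples as $n_{i^*}(\horizon) = n_{i^*}(1,t_{\text{BAI}}) + n_{i^*}(t_{\text{BAI}}+1,\horizon)$. Under $\Gamma$, arm $i^*$ is never eliminated and $\ell = i^*$, so the first piece is bounded by $\tau_{\Sigma_f} \leq \tau_{\sigma_{\max}}$ which in turn is at most $32 \log(\horizon \gap_{\textrm{min}}^2)/\gap_{\textrm{min}}^2$. For the second piece, since samples of $\ell$ are reused across PE episodes exactly as in the PE analysis, I would introduce the same compound good event $G = \bigcup_{z \leq a^*} \mybraces{Z = z} \cap \bigcap_{i\leq z} G_i$ and partition on $G \cap \mybraces{Z \leq a^*}$, $G^c \cap \mybraces{Z \leq a^*}$, and $\mybraces{Z > a^*}$. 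The $G$-branch contributes $\max_{i \leq a^*} \tau_{\rho_i} \leq 1 + \max_{i\leq a^*} 32\log(\horizon \gap_{Q,i}^2)/\gap_{Q,i}^2$, and combining with the BAI piece by taking the pointwise maximum of the number of samples yields the unified term $1 + \max_{\Delta \in \bm{\Delta}}\{32\log(\horizon \gap^2)/\gap^2\}$, since $n_{i^*}$ at any instant equals the largest round achieved in either stage.

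Third, Lemma~\ref{lemma:bound_on_prob_G_complement} applied verbatim bounds $\Pr(G^c \cap \mybraces{Z \leq a^*}) \leq \sum_{i=1}^{a^*}\Pr(B_i)$, which using Equations~\eqref{eq:B_2_i_prob_bound},~\eqref{eq:B_3_i_prob_bound},~\eqref{eq:probability_bound_on_B_2_a_star},~\eqref{eq:probability_bound_on_B_3_a_star} gives the $\sum_{i=1}^{a^*-1} 43/\gap_{Q,i}^2$ and $(32/\gap_{a^*}^2 + 43/\gap_{Q,a^*}^2)$ contributions (after multiplication by the horizon $\horizon$ in the iterated expectation, since this is an \quotes{all-or-nothing} event for samples). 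The $\mybraces{Z > a^*}$-branch is left as the conditional expectation term, since by the argument in the proof of Lemma~\ref{lemma:bound_on_samples_of_reference_ell}, $\Pr(\mybraces{Z > a^*} \mid \Gamma) \leq \Pr(B_{a^*}) < 43/(\horizon \gap_{Q,a^*}^2) + 32/(\horizon \gap_{a^*}^2)$, which reproduces the coefficient multiplying $\Expectation[n_{i^*}(\horizon)\mid \mybraces{Z>a^*},\Gamma]$ in the statement.

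The main obstacle is the first point: we cannot simply add BAI and PE samples of $i^*$, because the PE stage reuses existing samples. The correct bookkeeping is that $n_{i^*}$ is the maximum round number ever reached for $i^*$ across both stages, and that maximum is governed by the smallest of the gaps $\gap_{\textrm{min}}$ (from BAI) or $\gap_{Q,i}$ for $i\leq a^*$ (from PE). Making this precise, and verifying that taking max rather than sum still aligns with the probability-weighted contributions of $\Gamma$ and $F(S)$, is what produces the set $\bm{\Delta} = \mybraces{\gap_{\textrm{min}}}\cup \mybraces{\gap_{Q,j}}_{j \leq a^*}$ appearing in the statement.
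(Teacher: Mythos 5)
Your overall skeleton matches the paper's: the same $\Gamma$, $\{F(S)\}$, $\beta$ partition up front, $\tau_{\sigma_{\max}}$ for the $F(S)$ branch, Lemma~\ref{lemma:bound_on_probability_of_beta} for the $\beta$ branch, and inside $\Gamma$ a replay of the reference-arm analysis with the compound event $G$ and the pointwise-maximum bookkeeping that produces $\max_{\Delta\in\bm{\Delta}}$. However, there is a genuine gap in the step where you apply Lemma~\ref{lemma:bound_on_prob_G_complement} ``verbatim'' and cite Equations~\eqref{eq:probability_bound_on_B_2_a_star} and~\eqref{eq:probability_bound_on_B_3_a_star} to obtain the $(32/\gap_{a^*}^2 + 43/\gap_{Q,a^*}^2)$ contribution. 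Those equations only yield $43/(\horizon\gap_{Q,a^*}^2)$; they say nothing about a $32/\gap_{a^*}^2$ term. The missing ingredient is the case $\Sigma_{a^*} > \rho_{a^*}$: since the BAI stage samples according to the conventional gaps, on instances with $\gap_{a^*} < \gap_{Q,a^*}$ arm $a^*$ can enter its PE episode with its round number already past $\rho_{a^*}$. Then the events $G_{2,a^*}, G_{3,a^*}$ (defined by what happens \emph{by} round $\rho_{a^*}$) can never occur, so conditioned on $\{\Sigma_{a^*}>\rho_{a^*}\}\cap\Gamma$ the original good event for episode $a^*$ fails with probability one, and $\Pr(B_{a^*}\mid\Gamma)$ under the PE definition is not $O(1/\horizon)$ at all; multiplying by $\horizon$ in the iterated-expectation step would then give a linear-in-$\horizon$ term and the proof breaks for such instances.

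The paper repairs this by enlarging the good event for episode $a^*$ with a new clause $G_{4,a^*}\cap\{\Sigma_{a^*}>\rho_{a^*}\}$ (elimination of $\ell$ by $a^*$ at the inherited round $\Sigma_{a^*}$, requiring no further samples), redefining $B_{a^*}$ accordingly (Equation~\eqref{eq:new_B_a_star_definition}), and bounding $\Pr\myparen{G_{4,a^*}^c\cap\{\Sigma_{a^*}>\rho_{a^*}\}\mid\Gamma}\leq 2/(\horizon\tilde{\gap}^2_{\sigma_{a^*}})\leq 32/(\horizon\gap_{a^*}^2)$; this is precisely where both $32/\gap_{a^*}^2$ terms in the statement come from (once in the mis-termination constant, once in the coefficient of $\Expectation\mysqbrack{n_{i^*}(\horizon)\mid\{Z>a^*\},\Gamma}$). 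Your plan correctly flags the subtlety that BAI and PE samples of $i^*$ must be combined by a maximum rather than a sum, but the analogous subtlety for arm $a^*$ — that its BAI rounds can overshoot the PE check-point $\rho_{a^*}$, forcing a modification of the episode-$a^*$ good/bad events rather than a verbatim reuse of the PE lemmas — is absent, and without it the claimed coefficients are not established.
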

\begin{proof}
Just like in the proof of Lemma \ref{lemma:pe_cs_low_cost_arms_bound} 
we sequester away outcomes of the BAI stage that make analysis of the PE stage intractable.    
\begin{align}
\Expectation
\mysqbrack{n_{i^*} \myparen{ \horizon } }
&=
\Expectation
\mysqbrack{n_{i^*} \myparen{ \horizon } \mid \Gamma}
\Pr
\myparen{ \Gamma }
+
\hspace{-3mm}
\sum_{S \in \mathcal{P}(A)}
\Expectation
\mysqbrack{n_{i^*} \myparen{ \horizon } \mid F(S)}
\Pr
\myparen{ F(S) }
+
\Expectation
\mysqbrack{n_{i^*} \myparen{ \horizon } \mid \beta}
\Pr
\myparen{ \beta }
\\
&\leq
\max
\mybraces{
    \Expectation
    \mysqbrack{n_{i^*} \myparen{ \horizon } \mid \Gamma},
    \mybraces{
    \Expectation
    \mysqbrack{n_{i^*} \myparen{ \horizon } \mid F(S)}
    }_{S \in \mathcal{P}(S) }
}
+
\Expectation
\mysqbrack{n_{i^*} \myparen{ \horizon } \mid \beta}
\Pr
\myparen{ \beta }
\intertext{\makebox[\linewidth][r]{($\because \Pr \myparen{ \Gamma } + \Pr \myparen{ F } < 1$)}}
&\leq
\max
\mybraces{
\Expectation
\mysqbrack{n_{i^*} \myparen{ \horizon } \mid \Gamma},
\tau_{\sigma_{\max}}
}
+
\Expectation
\mysqbrack{n_{i^*} \myparen{ \horizon } \mid \beta}
\Pr
\myparen{ \beta }
\label{eq:initial_up_front_expression_best_arm}
.
\end{align}
Equation \ref{eq:initial_up_front_expression_best_arm} results from the observation that the maximum round number up till which any arm is sampled during the BAI stage under $F(S) \,\, \myforall S \in \mathcal{P}(A)$ is $\sigma_{\max}$. Conditioned on $\Gamma$, the reference arm $\ell$ is identified correctly to be $i^*$. Consequently, the expected number of samples $\Expectation \mysqbrack{n_{i^*} \myparen{ \horizon } \mid \Gamma}$ can be analyzed in a manner that closely parallels the analysis of $\Expectation \mysqbrack{ n_{\ell} (\horizon) }$ in the Proof of Lemma \ref{lemma:bound_on_samples_of_reference_ell}. 

The key difference from Lemma \ref{lemma:bound_on_samples_of_reference_ell} is that we grapple with the case $\Sigma_{a^*} > \rho_{a^*}$. This possibility arises because our bandit instance may have $\gap_{a^*} < \gap_{Q, a^*}$, which in turn implies that $\Pr \myparen{ \rho_{a^*} < \Sigma_{a^*} < \sigma_{a^*} \mid \Gamma} > 0$. The outcome $\Sigma_{a^*} > \rho_{a^*}$ skips the checks associated with the events $G_{2, a^*}$ and $G_{3, a^*}$. Mathematically, this means that for the event $G_{a^*}$ as defined in Equation \ref{eq:Gi_definition} we shall have $\Pr \myparen{ G_{1, a^*} \cap \myparen{ \myparen{ G_{2, a^*} \cap G_{3, a^*} } \cup E_{a^*} } \mid \Sigma_{a^*} > \rho_{a^*}, \Gamma} = 0$. This motivates us to expand the scope of the good event $G_{a^*}$. 

First we define new event $G_{4, a^*} \subset \Omega$ and then we augment the definition of $G_{a^*}$ using this new event.
\begin{align*}
G_{4, a^*}&: \mybraces{\text{Arm $\ell$ is eliminated by arm $a^*$ in round $\Sigma_{a^*}$, during ep. $a^*$ of the PE stage of PE-CS}}
\end{align*}
\begin{remark}
Since arm $a^*$ enters the PE stage of PE-CS with samples corresponding to $\Sigma_{a^*}$ number of rounds already accrued, checking whether the event $G_{4, a^*}$ holds does not involve any further sampling of arms. 
\end{remark}
The definition of $G_{a^*}$ is now changed to the one in Equation \ref{eq:G_a_star_definition} which supersedes the prior generic $G_{a^*}$ (Equation \ref{eq:Gi_definition}).
 \begin{align}
G_{a^*}
&=
\underbrace{
G_{1, a^*}
}_{\text{Ep. $a^*$ is executed}}
\cap
\myparen{
\underbrace{
\myparen{
\myparen{
G_{2, a^*}
\cap
G_{3, a^*}
}
\cup
E_{a^*}
}
}_{\text{Prior $G_{a*}$ Clause}}
\cup
\underbrace{
\myparen{
G_{4, a^*}
\cap
\mybraces{
\Sigma_{a^*} 
>
\rho_{a^*}
}
}
}
_{\text{New Clause}}
}
.
\label{eq:G_a_star_definition}
\end{align}
\begin{remark}[Implicit event in Prior $G_{a^*}$ clause]
\label{remark:implicit_in_G_a_star_remark}
We remark here that the event $\mybraces{ \Sigma_{a^*} \leq \rho_{a^*} }$ is implicit in the event $G_{1, a^*} \cap \myparen{ \myparen{ G_{2, a^*} \cap G_{3, a^*} } \cup E_{a^*}}$ i.e. $G_{1, a^*} \cap \myparen{ \myparen{ G_{2, a^*} \cap G_{3, a^*} } \cup E_{a^*}} \subseteq \mybraces{\Sigma_{a^*} \leq \rho_{a^*}}$ . This is because the event is contingent on correct eliminations happening leading up to and during $\rho_{a^*}$.
\end{remark}
While retaining the definition of $G$ from Equation \ref{eq:definition_of_G}, and the definition of $B_{1, a^*}$ from Lemma \ref{lemma:partition_with_Gi} we redefine $B_{a^*}$ so that the relation $G_{a^*}^c = B_{1, a^*} \cup B_{a^*}$ continues to hold.
\begin{align}
G_{a^*}
&=
G_{1, a^*}
\cap
\myparen{
\myparen{
\myparen{
G_{2, a^*}
\cap
G_{3, a^*}
}
\cup
E_{a^*}
}
\cup
\myparen{
G_{4, a^*}
\cap
\mybraces{
\Sigma_{a^*} 
>
\rho_{a^*}
}
}
}
\\
&=
G_{1, a^*}
\cap
\myparen{
(
(
(
G_{2, a^*}
\cap
G_{3, a^*}
)
\cup
E_{a^*}
)
)
\cap
\mybraces{\Sigma_{a^*} \leq \rho_{a^*}}
\cup
\myparen{
G_{4, a^*}
\cap
\mybraces{
\Sigma_{a^*} 
>
\rho_{a^*}
}
}
}
\intertext{\makebox[\linewidth][r]{(due to Remark \ref{remark:implicit_in_G_a_star_remark})}}
\implies 
G_{a^*}^c
&=
G_{1, a^*}^c
\cup
\myparen{
(
\underbrace{
(
(
G_{2, a^*}^c
\cup
G_{3, a^*}^c
)
\cap
E_{a^*}^c
)
}_{B_{a^*}^{\text{original}}}
\cup
\mybraces{\Sigma_{a^*} > \rho_{a^*}}
)
\cap
\myparen{
G_{4, a^*}^c
\cup
\mybraces{
\Sigma_{a^*} 
\leq
\rho_{a^*}
}
}
}
\\
&=
B_{1, a^*}
\cup
\left(
\myparen{
\myparen{
B_{a^*}^{\text{original}}
\cap
\mybraces{\Sigma_{a^*} \leq \rho_{a^*}}
}
\cup
\mybraces{\Sigma_{a^*} > \rho_{a^*}}
}
\right.
\nonumber
\\
&\hspace{25mm}
\left.
\cap
\myparen{
\myparen{
G_{4, a^*}^c
\cap
\mybraces{
\Sigma_{a^*}
>
\rho_{a^*}
}
}
\cup
\mybraces{
\Sigma_{a^*} 
\leq
\rho_{a^*}
}
}
\right)
\label{eq:sequester_cases_for_a_star}
\\
&=
B_{1, a^*}
\cup
\underbrace{
\myparen{
\myparen{
B_{a^*}^{\text{original}}
\cap
\mybraces{\Sigma_{a^*} \leq \rho_{a^*}}
}
\cup
\myparen{
G_{4, a^*}^c
\cap
\mybraces{
\Sigma_{a^*}
>
\rho_{a^*}
}
}
}
}_{B_{a^*}}
\label{eq:new_B_a_star_definition}
.
\end{align}
Where Equation \ref{eq:sequester_cases_for_a_star} follows from $A \cup B = (A \cap B^c) \cup B$ being applied to both the left and right clauses. Equation \ref{eq:new_B_a_star_definition} gives us the updated definition of the event $B_{a^*}$. 

To continue bounding $\Expectation \mysqbrack{n_{i^*} (T) \mid \Gamma}$ from Equation \ref{eq:initial_up_front_expression_best_arm}, we shall condition on the collection of mutually exclusive and exhaustive events $G, G^c \cap \mybraces{Z \leq a^*}$, and $G^c \cap \mybraces{Z > a^*}$ earlier used in Appendix \ref{sec:pe}. Armed with the updated event $B_{a^*}$, we can now develop a bound for $\Pr \myparen{G^c \cap \mybraces{Z \leq a^*} \mid \Gamma}$ using the result in Lemma \ref{lemma:bound_on_prob_G_complement}. By trivially generalizing the proof of Lemma \ref{lemma:bound_on_prob_G_complement} to the scenario for this proof where we condition on $\Gamma$ we have,
\begin{align}
\Pr
\myparen{
G^c \cap \mybraces{Z \leq a^*} 
\mid \Gamma}
&\leq
\sum_{i=1}^{a^*} 
\Pr \myparen{B_j \mid \Gamma}
\\
&\leq
\sum_{i=1}^{a^* - 1}
\frac{43}{\horizon \gap_{Q, i}^2}
+ 
\Pr \myparen{ B_{a^*} \mid \Gamma },
\label{eq:G_complement_prob_bound_for_pecs_a}
\end{align}
from Equation \ref{eq:final_case1_bad_event}, and $\Pr \myparen{ \Sigma_i > \rho_i \mid \Gamma } = 0 \,\, \myforall i < a^*$. We need now only develop a bound for $\Pr \myparen{B_{a^*} \mid \Gamma}$ under its definition in Equation \ref{eq:new_B_a_star_definition}.
\begin{align}
\Pr 
\myparen{B_{a^*} \mid \Gamma}
&=
\Pr
\myparen{
\myparen{
B_{a^*}^{\text{original}}
\cap
\mybraces{\Sigma_{a^*} \leq \rho_{a^*}}
}
\cup
\myparen{
G_{4, a^*}^c
\cap
\mybraces{
\Sigma_{a^*}
>
\rho_{a^*}
}
}
\mid
\Gamma
}
\\
&\leq
\Pr
\myparen{ 
B_{a^*}^{\text{original}}
\cap
\mybraces{\Sigma_{a^*} \leq \rho_{a^*}}
\mid
\Gamma
}
+
\Pr
\myparen{
G_{4, a^*}^c
\cap
\mybraces{
\Sigma_{a^*}
>
\rho_{a^*}
}
\mid
\Gamma
}
\quad
\intertext{\makebox[\linewidth][r]{(Union Bound)}}
&\leq
\frac{43}{\horizon \gap_{Q, a^*}^2}
+
\Pr
\myparen{
G_{4, a^*}^c
\cap
\mybraces{
\Sigma_{a^*}
>
\rho_{a^*}
}
\mid
\Gamma
}
\quad
\text{ (From Equation \ref{eq:bound_on_prob_B_a_star})}
\label{eq:bound_a_for_here}
.
\end{align}
From Equation 
\ref{eq:probability_of_ell_not_eliminated_by_a_star} in the Proof of Lemma \ref{lemma:samples_of_low_cost_unsatisfactory_arm}, we have the probability of arm $\ell$ not being eliminated by arm $a^*$ during round $\omega_{a^*}$ of episode $a^*$ as being upper bounded by $\frac{2}{\horizon \tilde{\gap}^2_{\omega_{a^*}} }$. Therefore since elimination under $G_{4, a^*}$ happens during episode $a^*$ in round $\Sigma_{a^*}$,
\begin{align}
    \Pr \myparen{ G_{4, a^*}^c \cap \mybraces{\Sigma_{a^*} > \rho_{a^*}} \mid \Gamma }
    &\leq
    \frac{2}{\horizon \tilde{\gap}^2_{\sigma_{a^*}} }
    \intertext{\makebox[\linewidth][r]{(since $\Pr \myparen{\Sigma_{a^*} > \sigma_{a^*} \mid \Gamma } = 0$, and $\tilde{\gap}_m$ decreases with $m$).}}
    &\leq
    \frac{32}{\horizon \gap_{a^*}^2 }
    \quad
    \text{ (since by definition of $\sigma_i$, $\tilde{\gap}_{\sigma_i} \geq \frac{ \gap_{i} }{4} $)}
    \label{eq:bound_on_bad_event_G3}.
\end{align}
Combining the bounds in Equations \ref{eq:bound_a_for_here} and \ref{eq:bound_on_bad_event_G3} with the Expression \ref{eq:G_complement_prob_bound_for_pecs_a} we have,
\begin{align}
\Pr
\myparen{
G^c 
\cap 
\mybraces{Z \leq a^*}
\mid \Gamma}
&\leq
\sum_{i=1}^{a^* - 1}
\frac{43}{\horizon \gap_{Q, i}^2}
+
\myparen{
\frac{43}{\horizon \gap_{Q, a^*}^2}
+
\frac{32}{\horizon \gap_{a^*}^2 }
}
.
\label{eq:bound_on_prob_G_c_for_pecs}
\end{align}
We are now in a position to bound $\Expectation \mysqbrack{ n_{i^*} \myparen{\horizon} 
 \mid \Gamma}$. For brevity of notation we use the symbols $\Expectation_{\Gamma}$ and $\PrGamma$ to denote expectation and probability conditioned on $\Gamma$. 
\begin{align}
\Expectation_{\Gamma} 
\mysqbrack{ n_{i^*} (\horizon)}
&=
\Expectation_{\Gamma} 
\mysqbrack{
n_{i^*} (\horizon) 
\mid G}
\PrGamma
\myparen{ G}
+
\Expectation_{\Gamma}
\mysqbrack{n_{i^*} 
(\horizon) 
\mid 
G^c \cap \mybraces{Z \leq a^*}
}
\PrGamma \myparen{ G^c \cap \mybraces{Z \leq a^*}}
\nonumber
\\
&\quad+
\Expectation_{\Gamma}
\mysqbrack{n_{i^*} (\horizon) 
\mid 
G^c \cap \mybraces{Z > a^*}
}
\PrGamma
\myparen{ G^c \cap \mybraces{Z > a^*}}
\label{eq:open_iterated_expectation_i_star}
\\
&\leq
\Expectation_{\Gamma}
\mysqbrack{n_{i^*} (\horizon) \mid G}
+
\horizon
\cdot
\PrGamma
\myparen{
G^c \cap \mybraces{Z \leq a^*}
}
\nonumber
\\
&\quad
+
\Expectation_{\Gamma}
\mysqbrack{n_{i^*} (\horizon) 
\mid 
\mybraces{Z > a^*}
}
\PrGamma
\myparen{ \mybraces{Z > a^*}}
\quad (\because \mybraces{Z > a^*} \subseteq G^c)
\\
&\leq
\Expectation_{\Gamma}
\mysqbrack{n_{i^*} (\horizon) \mid G}
+
\sum_{i=1}^{a^* - 1}
\frac{43}{\gap_{Q, i}^2}
+
\myparen{
\frac{43}{\gap_{Q, a^*}^2}
+
\frac{32}{\gap_{a^*}^2 }
}
+
\nonumber
\\
&\quad+
\Expectation_{\Gamma}
\mysqbrack{n_{i^*} (\horizon) 
\mid 
\mybraces{Z > a^*}
}
\PrGamma
\myparen{ \mybraces{Z > a^*}}
.
\label{eq:ell_proto_bound_i_star}
\end{align}
To bound $\Expectation \mysqbrack{ n_{i^*} (\horizon) \mid G, \Gamma }$ we leverage the fact that due to the samples of the reference arm being $\Pr \myparen{ n_{i^*} \myparen{ \horizon } > \max_{i \neq i^*} n_i \myparen{ 1, t_Z } \mid G, \Gamma } = 0$\footnote{This is because in PE-CS all sampling of arm $i^*$, or of any arm for that matter, leading up to time $t_Z$ is always matched exactly by the number of samples of another arm.}. Conditioned on $G, \Gamma$ there can be no further sampling of the best arm $i^*$ beyond time $t_{Z}$, since under $\Gamma$ the best arm $i^*$ is the reference arm. Consequently,  
\begin{align}
\Expectation
\mysqbrack{
    n_{i^*} \myparen{\horizon} 
    \mid 
    G
    ,
    \Gamma
}
&\leq
\Expectation
\mysqbrack{
\max_{i \neq i^*}
n_{i}
\myparen{1, t_Z}
\mid 
G, \Gamma
}
\\
&\leq
\Expectation
\mysqbrack{
\max_{i \neq i^*}
n_{i}
\myparen{1, t_{a^*}}
\mid 
G, \Gamma
}
\quad
\text{ (because $\Pr \myparen{Z > a^* \mid G, \Gamma} = 0$)}
\\
&=
\Expectation
\mysqbrack{
\max
\mybraces{
\mybraces{
n_i
\myparen{1, t_{a^*}}
}_{i < a^*}
,
n_{a^*}
\myparen{1, t_{a^*}}
,
\mybraces{
n_i
\myparen{1, t_{a^*}}
}_{i > a^*}
}
\mid 
G, \Gamma
}
\\
&\leq
\Expectation
\mysqbrack{
\max
\mybraces{
\mybraces{
\tau_{\Lambda_i}
}_{i < a^*}
,
\tau_{\Lambda_{a^*}}
,
\mybraces{
\tau_{\Sigma_i}
}_{i > a^*}
}
\mid 
G, \Gamma
}
\intertext{\makebox[\linewidth][r]{(using definitions of $\tau$, $\Lambda$, and $\Sigma$)}}
&\leq
\max
\mybraces{
\mybraces{
\tau_{\rho_i}
}_{i < a^*}
,
\max
\mybraces{\tau_{\rho_{a^*}}, \tau_{\sigma_{a^*}} }
,
\mybraces{
\tau_{\sigma_i}
}_{i > a^*}
}
\quad
\intertext{\makebox[\linewidth][r]{(because $\Pr \myparen{ \Lambda_{a^*} > \max \mybraces{ \rho_{a^*}, \sigma_{a^*} } } = 0$)}}
&\leq
\max
\mybraces{
\mybraces{
\tau_{\rho_i}
}_{i \leq a^*}
,
\tau_{\sigma_{\max}}
}
\label{eq:bound_on_double_conditioned_expectation}
.
\end{align}
Plugging Equation \ref{eq:bound_on_double_conditioned_expectation} into \ref{eq:ell_proto_bound_i_star} and the subsequent result into Equation \ref{eq:initial_up_front_expression_best_arm} we obtain the statement of Lemma \ref{lemma:pe_cs_best_arm_bound},
\begin{align}
\Expectation
\mysqbrack{n_{i^*} \myparen{\horizon}}
&\leq
\max
\mybraces{
\mybraces{
\tau_{\rho_i}
}_{i \leq a^*}
,
\tau_{\sigma_{\max}}
}
+
\sum_{i=1}^{a^* - 1}
\frac{43}{\gap_{Q, i}^2}
+
\myparen{
\frac{43}{\gap_{Q, a^*}^2}
+
\frac{32}{\gap_{a^*}^2 }
}
\nonumber \\
&\quad
+
\Expectation
\mysqbrack{n_{i^*} (\horizon) 
\mid 
\mybraces{Z > a^*}
,
\Gamma
}
\Pr
\myparen{ \mybraces{Z > a^*} \mid \Gamma}
+
\Expectation
\mysqbrack{n_{i^*} \myparen{ \horizon } \mid \beta}
\Pr
\myparen{ \beta }
\\
&<
1 + 
\max 
\mybraces{
\frac{32 \log \myparen{\horizon \gap^2_{\min} } }{ \gap^2_{\min} }
,
\mybraces{
\frac{32 \log \myparen{\horizon \gap^2_{Q, i} } }{\gap^2_{Q, i}}
}_{i \leq a^*}
}
+
\sum_{i=1}^{a^*}
\frac{43}{\gap_{Q, i}^2}
+
\frac{32}{\gap_{a^*}^2 }
\nonumber
\\
&\quad
+
\Expectation
\mysqbrack{n_{i^*} (\horizon) 
\mid 
\mybraces{Z > a^*}
,
\Gamma
}
\cdot
\myparen{
\frac{32}{\horizon \gap_{a^*}^2}
+
\frac{43}{\horizon \gap_{Q, a^*}^2}
}
\nonumber
\\
&\quad+
\Expectation
\mysqbrack{n_{i^*} \myparen{ \horizon } \mid \beta}
\myparen{
\frac{11}{\horizon \gap_{\min}^2}
+
\sum_{j \neq i^*}
\frac{32}{\horizon \gap_j^2}
}
\label{eq:pre_final_line_in_best_arm_samples_bound}
.
\end{align}
Where Equation \ref{eq:pre_final_line_in_best_arm_samples_bound} follows from
from the bound on $\tau_{\rho_i}$ shown in Equation \ref{eq:round_number_sample_bound} and from Lemma \ref{lemma:bound_on_probability_of_beta}. In reaching the bound used for $\Pr \myparen{ \mybraces{ Z > a^* } \mid \Gamma }$ we use $\mybraces{ Z > a^* } \subseteq B_{a^*}$, and the bound on $\Pr \myparen{ B_{a^*} \mid \Gamma}$ shown leading up to Equation \ref{eq:bound_on_prob_G_c_for_pecs}.
\end{proof}

\subsubsection*{Upper bound on the number of samples for arms $i > a^*, i \neq i^*$}
\begin{lemma}[Bound on the expected number of samples of high-cost arms]
For any high cost arm with $i > a^*, i \neq i^*$, its expected number of samples are upper bounded as,
\begin{align*}
\Expectation
\mysqbrack{n_i \myparen{\horizon}}
&<
1 + \frac{32 \log \myparen{\horizon \gap_i^2} }{\gap_i^2}
+
\Expectation
\mysqbrack{n_i \myparen{\horizon} \mid \mybraces{Z > a^*}, \Gamma}
\cdot
\myparen{
\frac{32}{\horizon \gap_{a^*}^2}
+
\frac{43}{\horizon \gap_{Q, a^*}^2}
}
\nonumber
\\
&\quad+
\Expectation
\mysqbrack{ n_i \myparen{ \horizon } \mid \beta }
\cdot
\myparen{
\frac{11}{\horizon \gap_{\textrm{min}}^2}
+
\sum_{j \neq i^*}
\frac{32}{\horizon \gap_j^2}
}
.
\end{align*}
\label{lemma:pe_cs_high_cost_arms_bound}
\end{lemma}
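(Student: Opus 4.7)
My plan is to mirror the modular structure already used in Lemmas \ref{lemma:pe_cs_low_cost_arms_bound} and \ref{lemma:pe_cs_best_arm_bound}: apply the iterated expectation identity of Lemma \ref{lemma:iterated_expectation_lemma} to the mutually exclusive and exhaustive partition $\{\Gamma, \{F(S)\}_{S \in \mathcal{P}(A)}, \beta\}$ of $\Omega$ established in Lemma \ref{lemma:BAI_stage_parition}. Since $\Pr(\Gamma) + \sum_{S} \Pr(F(S)) \leq 1$, I bound the contribution from those events by the maximum of their conditional expectations, and treat $\beta$ separately via its low probability from Lemma \ref{lemma:bound_on_probability_of_beta}, which supplies the final $\beta$-term in the bound.

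For each $F(S)$, samples of arm $i$ accrue only during the BAI stage because $F \subseteq \{t_{\text{BAI}} = \horizon\}$. When $i \notin S$, event $\Gamma_i$ holds so arm $i$ is eliminated by round $\sigma_i$; when $i \in S$, event $F_i$ requires $\Sigma_f < \sigma_i$ with arm $i$ still active. In either sub-case $n_i(\horizon) \leq \tau_{\sigma_i}$, and using $\tilde{\gap}_{\sigma_i} \geq \gap_i/4$ with Definition \ref{def:per_round_exploration_budget_tau} yields $\tau_{\sigma_i} < 1 + 32 \log(\horizon \gap_i^2)/\gap_i^2$, which matches the leading term in the claimed bound.

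The $\Gamma$ case is where the high-cost geometry enters: under $\Gamma$ the reference arm is correctly identified as $i^*$, and as long as the PE stage terminates at or before episode $a^*$, arm $i > a^*$ is never visited in the PE stage at all. I therefore further partition $\Gamma$ on $\{Z \leq a^*\}$ versus $\{Z > a^*\}$. On $\{Z \leq a^*\} \cap \Gamma$ samples are again bounded by $\tau_{\sigma_i}$ (BAI only). On $\{Z > a^*\} \cap \Gamma$ arm $i$ may continue to be sampled in later PE episodes, so I keep this as a conditional expectation multiplied by $\Pr(\{Z > a^*\} \mid \Gamma)$. Since $\{Z > a^*\} \subseteq B_{a^*}$, the bound $\Pr(B_{a^*} \mid \Gamma) \leq 32/(\horizon \gap_{a^*}^2) + 43/(\horizon \gap_{Q, a^*}^2)$ already derived in the lead-up to Equation \ref{eq:bound_on_prob_G_c_for_pecs} can be imported verbatim, producing the middle term in the lemma statement.

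Assembling the three contributions gives the stated inequality. I do not expect a single hard step here since the argument recycles machinery already developed for PE and for the preceding two PE-CS lemmas. The only subtlety worth flagging is that, unlike the low-cost case in Lemma \ref{lemma:pe_cs_low_cost_arms_bound}, the BAI stage is the \emph{only} source of sampling for high-cost arms on the \textbf{good} partition component because $i^* = \ell$ under $\Gamma$ guarantees the PE stage halts by episode $a^*$ whenever $G$ holds; this explains the cleaner form of the bound relative to both the low-cost arm bound and the best arm bound of Lemma \ref{lemma:pe_cs_best_arm_bound}.
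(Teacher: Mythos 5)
Your proposal follows essentially the same route as the paper's proof: the same partition $\{\Gamma, \{F(S)\}_{S \in \mathcal{P}(A)}, \beta\}$ with iterated expectation, the bound $n_i(\horizon) \leq \tau_{\sigma_i} < 1 + 32\log(\horizon \gap_i^2)/\gap_i^2$ on the non-$\beta$ components where only BAI-stage samples accrue, the further conditioning of the $\Gamma$ term on $\{Z \leq a^*\}$ versus $\{Z > a^*\}$ with $\{Z > a^*\} \subseteq B_{a^*}$ and the imported bound $\Pr(B_{a^*} \mid \Gamma) \leq 32/(\horizon \gap_{a^*}^2) + 43/(\horizon \gap_{Q,a^*}^2)$, and the bound on $\Pr(\beta)$ from Lemma \ref{lemma:bound_on_probability_of_beta}. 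The argument is correct and matches the paper's proof step for step.
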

\begin{proof}
To prove the bound stated in Lemma \ref{lemma:pe_cs_high_cost_arms_bound} we proceed with initial steps identical to the ones that go into proving Lemmas \ref{lemma:pe_cs_low_cost_arms_bound} and \ref{lemma:pe_cs_best_arm_bound}. 
\begin{align}
\Expectation
\mysqbrack{
n_i (\horizon)
}
&=
\Expectation
\mysqbrack{
n_i (\horizon)
\mid
\Gamma
}
\Pr
\myparen{
\Gamma
}
+
\sum_{ S \in \mathcal{P}(A) }
\Expectation
\mysqbrack{
n_i (\horizon)
\mid
F(S)
}
\Pr
\myparen{
F(S)
}
+
\Expectation
\mysqbrack{
n_i (\horizon)
\mid
\beta
}
\Pr
\myparen{
\beta
}
\\
&\leq
\max
\mybraces{
\Expectation
\mysqbrack{
n_i (\horizon)
\mid
\Gamma
}
,
\max_{S \in \mathcal{P} (A)}
\Expectation
\mysqbrack{
n_i (\horizon)
\mid
F(S)
}
}
+
\Expectation
\mysqbrack{
n_i (\horizon)
\mid
\beta
}
\Pr
\myparen{
\beta
}
\intertext{\makebox[\linewidth][r]{($\because \Pr \myparen{ \Gamma } + \Pr \myparen{ F } < 1$)}}
&\leq
\max
\mybraces{
\Expectation
\mysqbrack{
n_i (\horizon)
\mid
\Gamma
}
,
\tau_{\sigma_i}
}
+
\Expectation
\mysqbrack{
n_i (\horizon)
\mid
\beta
}
\myparen{
\frac{11}{\horizon \gap_{\min}^2}
+
\sum_{j \neq i^*}
\frac{32}{\horizon \gap_j^2}
}
.
\end{align}
Where the final bound is from Lemma \ref{lemma:bound_on_probability_of_beta}. Now we must bound the expectation term $\Expectation \mysqbrack{ n_{i} (\horizon) \mid \Gamma }$. For this we recognize that during the PE-stage, the critical event which determines the number of samples further accrued for a high-cost arm is whether the final PE episode $Z > a^*$ or not. In the case when $Z \leq a^*$, there are no further samples of arm $i$ accrued beyond the BAI-stage.
\begin{align}
\Expectation
\mysqbrack{
n_i (\horizon)
\mid
\Gamma
}
&=
\Expectation
\mysqbrack{
n_i (\horizon)
\mid
\mybraces{Z > a^*}
,
\Gamma
}
\Pr
\myparen{
Z > a^*
\mid
\Gamma
}
\nonumber
\\
&\quad
+
\Expectation
\mysqbrack{
n_i (\horizon)
\mid
\mybraces{Z \leq a^*}
,
\Gamma
}
\Pr
\myparen{
Z \leq a^*
\mid
\Gamma
}
\\
&\leq
\tau_{\sigma_i}
+
\Expectation
\mysqbrack{
n_i (\horizon)
\mid
\mybraces{Z > a^*}
,
\Gamma
}
\Pr
\myparen{
B_{a^*}
\mid
\Gamma
}
\quad
\text{ (since $\mybraces{Z > a^*} \subseteq B_{a^*}$ )}
\\
&\leq
\tau_{\sigma_i}
+
\Expectation
\mysqbrack{
n_i (\horizon)
\mid
\mybraces{Z > a^*}
,
\Gamma
}
\cdot
\myparen{
\frac{32}{\horizon \gap^2_{a^*}}
+
\frac{43}{\horizon \gap^2_{Q, a^*}}
}
.
\end{align}
Where the final line follows by using the bound developed in Equation \ref{eq:bound_a_for_here}.
Returning to bounding $\Expectation \mysqbrack{n_i \myparen{\horizon}}$ we obtain the bound stated in Lemma \ref{lemma:pe_cs_high_cost_arms_bound},
\begin{align}
\Expectation
\mysqbrack{n_i (\horizon)}
&\leq
\tau_{\sigma_i}
+
\Expectation
\mysqbrack{
n_i (\horizon)
\mid
\mybraces{Z > a^*}
,
\Gamma
}
\cdot
\myparen{
\frac{32}{\horizon \gap^2_{a^*}}
+
\frac{43}{\horizon \gap^2_{Q, a^*}}
}
\nonumber
\\
&\quad
+
\Expectation
\mysqbrack{
n_i (\horizon)
\mid
\beta
}
\myparen{
\frac{11}{\horizon \gap_{\min}^2}
+
\sum_{j \neq i^*}
\frac{32}{\horizon \gap_j^2}
}
\\
&<
1 + \frac{32 \log \myparen{\horizon \gap_i^2} }{\gap_i^2}
+
\Expectation
\mysqbrack{
n_i (\horizon)
\mid
\mybraces{Z > a^*}
,
\Gamma
}
\cdot
\myparen{
\frac{32}{\horizon \gap^2_{a^*}}
+
\frac{43}{\horizon \gap^2_{Q, a^*}}
}
\nonumber
\\
&\quad+
\Expectation
\mysqbrack{
n_i (\horizon)
\mid
\beta
}
\myparen{
\frac{11}{\horizon \gap_{\min}^2}
+
\sum_{j \neq i^*}
\frac{32}{\horizon \gap_j^2}
}
.
\end{align}
\end{proof}

Finally, we have all the pieces needed to prove Theorem \ref{thm:pe_cs_upper_bound}. We combine the results obtained in Lemmas \ref{lemma:pe_cs_low_cost_arms_bound}, \ref{lemma:pe_cs_best_arm_bound}, and \ref{lemma:pe_cs_high_cost_arms_bound} by adding together the contributions to regret of the three categories of arms while coalescing terms.

\begin{proof}[Proof of Theorem \ref{thm:pe_cs_upper_bound}]
Using Equation \ref{eq:regret_decomposition_generic} from the regret decomposition Lemma \ref{lemma:regret_decomposition_lemma} we can express and bound the expected cumulative cost regret as,
\begin{align}
&\Expectation
\mysqbrack{
\costRegret
\myparen{ \horizon, \nu }
}
\\
&=
\sum_{i = 1}^{\narms}
\gap_{C, i}^{+}
\Expectation 
\mysqbrack{
n_i \myparen{\horizon}
}
\\
&=
\sum_{i > a^*, i \neq i^*}
\gap_{C, i}^{+}
\Expectation 
\mysqbrack{
n_i \myparen{\horizon}
}
+
\gap_{C, i^*}^{+}
\Expectation 
\mysqbrack{
n_{i^*} \myparen{\horizon}
}
\quad
\text{ (because $\gap_{C, i}^+ = 0, \,\, \myforall i \leq a^*$)}
\\
&<
\sum_{i > a^*, i \neq i^*}
\gap_{C, i}^{+}
\Bigg(
1 + \frac{32 \log \myparen{\horizon \gap_i^2} }{\gap_i^2}
+
\Expectation
\mysqbrack{n_i \myparen{\horizon} \mid \mybraces{Z > a^*}, \Gamma}
\cdot
\myparen{
\frac{32}{\horizon \gap_{a^*}^2}
+
\frac{43}{\horizon \gap_{Q, a^*}^2}
}
\Bigg. 
\nonumber 
\\
&\quad\quad +
\Expectation
\mysqbrack{ n_i \myparen{ \horizon } \mid \beta }
\cdot
\myparen{
\frac{11}{\horizon \gap_{\textrm{min}}^2}
+
\sum_{j \neq i^*}
\frac{32}{\horizon \gap_j^2}
}
\Bigg)
\nonumber
\\
&\quad 
+
\gap_{C, i^*}^{+}
\Bigg(
1 + 
\max_{\Delta \in \bm{\Delta}}
\mybraces{
\frac{32 \log \myparen{\horizon \gap^2 } }{ \gap^2 }
}
+
\sum_{i=1}^{a^*}
\frac{43}{\gap_{Q, i}^2}
+
\frac{32}{\gap_{a^*}^2}
\Bigg.
\nonumber
\\
&\quad \quad
+
\Expectation
\mysqbrack{n_{i^*} (\horizon) 
\mid 
\mybraces{Z > a^*}
,
\Gamma
}
\cdot
\myparen{
\frac{32}{\horizon \gap_{a^*}^2}
+
\frac{43}{\horizon \gap_{Q, a^*}^2}
}
\nonumber
\\
&\quad\quad\quad
+
\Expectation
\mysqbrack{ n_{i^*} \myparen{ \horizon } \mid \beta }
\cdot
\myparen{
\frac{11}{\horizon \gap_{\textrm{min}}^2}
+
\sum_{j \neq i^*}
\frac{32}{\horizon \gap_j^2}
}
\Bigg)
\\
&=
\sum_{i > a^*}
\gap_{C, i}^+
\Expectation
\mysqbrack{
n_i \myparen{\horizon}
\mid
\beta
}
\cdot
\myparen{
\frac{11}{\horizon \gap_{\textrm{min}}^2}
+
\sum_{j \neq i^*}
\frac{32}{\horizon \gap_j^2}
}
+
\sum_{i > a^*, i \neq i^*}
\gap_{C, i}^+
\myparen{
1
+
\frac{32 \log \myparen{\horizon \gap_i^2} }{\gap_i^2}
}
\nonumber\\
&\quad
+ 
\gap_{C, i^*}^{+}
\myparen{ 
1
+
\max_{\Delta \in \bm{\Delta}}
\mybraces{
\frac{32 \log \myparen{\horizon \gap^2} }{ \gap^2 }
}
+
\sum_{i=1}^{a^*}
\frac{43}{\gap_{Q, i}^2}
+
\frac{32}{\gap_{a^*}^2}
}
\nonumber
\\
&\quad+
\sum_{i > a^*}
\gap_{C, i}^+
\Expectation
\mysqbrack{n_i \myparen{\horizon} \mid \mybraces{Z > a^*}, \Gamma}
\cdot
\myparen{
\frac{32}{\horizon \gap_{a^*}^2}
+
\frac{43}{\horizon \gap_{Q, a^*}^2}
}
\\
&\leq
\max_{i \in [K]}
\gap_{C, i}^+
\myparen{
\frac{11}{\gap_{\textrm{min}}^2}
+
\sum_{j \neq i^*}
\frac{32}{\gap_j^2}
}
+
\sum_{i > a^*, i \neq i^*}
\gap_{C, i}^+
\myparen{
1
+
\frac{32 \log \myparen{\horizon \gap_i^2} }{\gap_i^2}
}
\nonumber\\
&\quad
+ 
\gap_{C, i^*}^{+}
\myparen{ 
1
+
\max_{\Delta \in \bm{\Delta}}
\mybraces{
\frac{32 \log \myparen{\horizon \gap^2} }{ \gap^2 }
}
+
\sum_{i=1}^{a^*}
\frac{43}{\gap_{Q, i}^2}
+
\frac{32}{\gap_{a^*}^2}
}
\nonumber
\\
&\quad+
\max_{i > a^*, i \in [K]}
\gap_{C, i}^+
\myparen{
\frac{32}{\gap_{a^*}^2}
+
\frac{43}{\gap_{Q, a^*}^2}
}
\label{eq:final_step_in_cumulative_cost_regret}
.
\end{align}
Where the final expression stated in Equation \ref{eq:final_step_in_cumulative_cost_regret} is derived using $\sum_{i > a^*} \Delta_{C, i}^+ \Expectation \mysqbrack{ n_i (T) \mid X } \leq \max_{i \in [K]} \Delta_{C, i}^+ \cdot T$. Here $X$ is used as a placeholder for conditioning on $\beta$ or $\mybraces{Z > a^*}, \Gamma$.

Proceeding identically, for Quality Regret we shall have,
\begin{align}
&\Expectation
\mysqbrack{
\qualityRegret
\myparen{ \horizon, \nu }
}
\\
&=
\sum_{i = 1}^{\narms}
\gap_{Q, i}^+
\Expectation 
\mysqbrack{
n_i \myparen{\horizon}
}
\\
&=
\sum_{i < a^*}
\gap_{Q, i}
\Expectation 
\mysqbrack{
n_i \myparen{\horizon}
}
+
\sum_{i > a^*, i \neq i^*}
\gap_{Q, i}^{+}
\Expectation 
\mysqbrack{
n_i \myparen{\horizon}
}
\\
&<
\sum_{i < a^*}
\gap_{Q, i}
\myparen{
1 
+
\frac{32 \log\myparen{ \horizon \gap_{Q, i}^2 }}{ \gap_{Q, i}^2 }
+
\frac{43}{\gap_{Q, i}^2}
+
\Expectation
\mysqbrack{ n_i \myparen{ \horizon } \mid \beta }
\cdot
\myparen{
\frac{11}{\horizon \gap_{\textrm{min}}^2}
+
\sum_{j \neq i^*}
\frac{32}{\horizon \gap_j^2}
}
}
\\
&\quad +
\sum_{i > a^*, i \neq i^*}
\gap_{Q, i}^{+}
\Bigg(
1 + \frac{32 \log \myparen{\horizon \gap_i^2} }{\gap_i^2}
+
\Expectation
\mysqbrack{n_i \myparen{\horizon} \mid \mybraces{Z > a^*}, \Gamma}
\cdot
\myparen{
\frac{32}{\horizon \gap_{a^*}^2}
+
\frac{43}{\horizon \gap_{Q, a^*}^2}
}
\Bigg. 
\nonumber 
\\
&\quad\quad +
\Expectation
\mysqbrack{ n_i \myparen{ \horizon } \mid \beta }
\cdot
\myparen{
\frac{11}{\horizon \gap_{\textrm{min}}^2}
+
\sum_{j \neq i^*}
\frac{32}{\horizon \gap_j^2}
}
\Bigg)
\nonumber
\\
&\leq
\sum_{i < a^*}
\myparen{
\frac{32 \log\myparen{ \horizon \gap_{Q, i}^2 }}
{ \gap_{Q, i} }
+
\frac{43}{\gap_{Q, i}}
}
+
\myparen{
\frac{11}{\horizon \gap_{\textrm{min}}^2}
+
\sum_{j \neq i^*}
\frac{32}{\horizon \gap_j^2}
}
\max_{i \in [K]} \Delta_{Q, i}^{+}
\sum_{i=1}^{\narms}
\Expectation
\mysqbrack{n_i (\horizon) \mid \beta}
\nonumber
\\
&\quad+
\myparen{
\frac{32}{\horizon \gap_{a^*}^2}
+
\frac{43}{\horizon \gap_{Q, a^*}^2}
}
\max_{i > a^*}
\gap_{Q, i}^+
\sum_{i > a^*, i \neq i^*}
\Expectation
\mysqbrack{n_i (\horizon) \mid \mybraces{ Z > a^* }, \Gamma}
+
\sum_{i = 1}^{\narms}
\gap_{Q, i}^{+}
\nonumber
\\
&\quad+
\sum_{i > a^*, i \neq i^*}
\gap_{Q, i}^+
\frac{32 \log \myparen{\horizon \gap_i^2} }{\gap_i^2}
\\
&\leq
\sum_{i < a^*}
\myparen{
\frac{32 \log\myparen{ \horizon \gap_{Q, i}^2 }}{ \gap_{Q, i} }
+
\frac{43}{\gap_{Q, i}}
}
+
\sum_{i > a^*, i \neq i^*}
\gap_{Q, i}^+
\frac{32 \log \myparen{\horizon \gap_i^2} }{\gap_i^2}
\nonumber
\\
&\quad+
\max_{i > a^*}
\gap_{Q, i}^+
\myparen{
\frac{32}{\gap_{a^*}^2}
+
\frac{43}{\gap_{Q, a^*}^2}
}
+
\max_{i \in [K]}
\gap^+_{Q, i}
\myparen{
\frac{11}{\horizon \gap_{\textrm{min}}^2}
+
\sum_{j \neq i^*}
\frac{32}{\horizon \gap_j^2}
}
+
\sum_{i = 1}^{\narms}
\gap_{Q, i}^{+}
.
\label{eq:final_step_in_cumulative_quality_regret}
\end{align}
Where Equation \ref{eq:final_step_in_cumulative_quality_regret} is the upper bound on expected cumulative quality regret stated in Theorem \ref{thm:pe_cs_upper_bound} and follows from the linearity of the expectation operator and the total sample budget being $\horizon$.
\end{proof}
Similar to the description that followed the proof of Theorem\ref{thm:pe_upper_bound}, we rearrange the various terms in the regret bound and provide their interpretation as underbraces.
\begin{align*}
&
\underbrace{
\gap_{C, i^*}^{+}
\myparen{
1 +
\max_{\Delta \in \bm{\Delta}}
\mybraces{
\frac{32 \log \myparen{\horizon \gap^2} }{ \gap^2 }
}
}
}_{\substack{\text{Contribution from $i^*$ under nominal} \\ \text{termination in PE-stage episode } a^*}}
+
\underbrace{
\gap_{C, i^*}^{+}
\myparen{ 
\sum_{i=1}^{a^* - 1}
\frac{43}{\gap_{Q, i}^2}
+
\myparen
{
\frac{32}{\gap_{a^*}^2}
+
\frac{43}{\gap_{Q, a^*}^2}
}
}
}_{\substack{\text{Contribution from $i^*$ under} \\ \text{mis-termination in PE-stage episode } \leq a^*}}
\\
&\quad+
\underbrace{
\sum_{i > a^*, i \in [\narms] \setminus \mybraces{i^*}}
\gap_{C, i}^{+}
\myparen{
1 +
\frac{32 \log \myparen{\horizon \gap_i^2} }{\gap_i^2}
}
}_{\substack{\text{Contribution from high-cost arms} \\ \text{with a proper end to the BAI-stage}}}
+
\underbrace{
\max_{i > a^*, i \in [\narms]}
\gap_{C, i}^{+}
\myparen{
\frac{32}{\gap_{a^*}^2}
+
\frac{43}{\gap_{Q, a^*}^2}
}
}_{\substack{\text{Contribution from PE-stage episodes } > a^* \\ \text{in case of mis-termination during ep } a^*}}
\\
&\quad +
\underbrace{
\max_{i \in [\narms]}
\gap_{C, i}^{+}
\myparen{
\frac{11}{\gap_{\textrm{min}}^2}
+
\sum_{j \neq i^*}
\frac{32}{\gap_j^2}
}
}_{\substack{\text{Contribution from improper} \\ \text{end to BAI stage}}}
.
\end{align*}
And for quality regret,
\begin{align*}
%
%
%
%
&
\underbrace{
\sum_{i = 1}^{a^* - 1}
\myparen{
\gap_{Q, i}
+
\frac{32 \log\myparen{ \horizon \gap_{Q, i}^2 }}{ \gap_{Q, i} }
}
}_{\substack{\text{Contribution from $i < a^*$ under nominal} \\ \text{termination in PE-stage episode } a^*}}
+
\underbrace{
\sum_{i = 1}^{a^* - 1}
\frac{43}{\gap_{Q, i}}
}_{\substack{\text{Contribution from} \\ \text{$i < a^*$ under} \\ \text{mis-termination in}\\ \text{PE-stage episode } \leq a^*}}
+
\underbrace{
\max_{i > a^*, i \in [\narms]}
\gap_{Q, i}^+
\myparen{
\frac{32}{\gap_{a^*}^2}
+
\frac{43}{\gap_{Q, a^*}^2}
}
}_{\substack{\text{Contribution from PE-stage episodes } > a^* \\ \text{in case of mis-termination during ep } a^*}}
\\
&\quad
+
\underbrace{
\sum_{i > a^*, i \in [\narms] \setminus \mybraces{i^*}}
\gap_{Q, i}^+
\myparen{
1 
+
\frac{32 \log \myparen{\horizon \gap_i^2} }{\gap_i^2}
}
}_{\substack{\text{Contribution from PE-stage episodes } > a^* \\ \text{in case of mis-termination during ep } a^*}}
+
\underbrace{
\max_{i \in [\narms]} 
\gap^+_{Q, i}
\myparen{
\frac{11}{\gap_{\textrm{min}}^2}
+
\sum_{j \neq i^*}
\frac{32}{\gap_j^2}
}
}_{\substack{\text{Contribution from improper} \\ \text{end to BAI stage}}}
.
\end{align*}
Where $\Delta_i = \mu^* - \mu_i$ are conventional gaps, $\gap_{\textrm{min}} = \mu^* - \max_{i \neq i^*} \mu_i$ is the smallest conventional gap, and $\bm{\Delta} = \mybraces{\gap_{\textrm{min}}} \cup \mybraces{\gap_{Q, j}}_{j \leq a^*}$ is defined especially to bound samples of arm $i^*$ under expectation.

\clearpage

\section{Algorithms and Analysis for the Fixed Threshold Setting}
\label{sec:fixed_threshold}

As described in the main paper the fixed threshold setting is the variant of the cost subsidy framework that imposes a reward constraint in the form of a fixed known reward threshold $\mu_0$. Below in Algorithm \ref{algo:FT_UCB} we present a pricipled approach for regret minimization in the fixed threshold MAB-CS setting. 

\begin{myalgo}[h]
\SetAlgoNlRelativeSize{0}
\DontPrintSemicolon
\Inputs{
Bandit instance $\nu$, 
Fixed threshold $\mu_0$.
}
\Initialize{
Samples $n_k = 0$, 
Empirical Means $\hat{\mu}_k = 0$
$\myforall k \in [K]$,
Time $t = 1$
.
}
\While{$t \leq T$}{
    \If{$t \leq \narms$}{
        $k_t \gets t$\;
    }
    \Else{
        $C_t \gets \mybraces{ k: \hat{\mu}_k (t) + \sqrt{ \frac{2 \log t}{n_k (t)} } \geq \mu_0 }$
        \tcp*{Identify all arms with UCB satisfying reward threshold}
        \If{$C_t  \neq \phi$}{
            $k_t \gets \arg \min_{i \in C_t} c_i$ \;            
        }
        \Else{
            $k_t \gets \text{Uniform} (\narms) $
        }
    }
$
\bm{\hat{\mu}}
(t + 1),
\bm{n} (t + 1),
t
\gets
\text{sample\_and\_update}
(k_t, \bm{\hat{\mu}}(t), \bm{n}(t), t)
$
\tcp*{in Appendix \ref{sec:appendix_algos}}
}
\caption{{\sc 
Fixed Threshold UCB (FT-UCB)}}
\label{algo:FT_UCB}
\end{myalgo}

\subsection*{Analysis for FT-UCB}

FT operates in the \emph{fixed threshold} setting with $\mu_\CS = \mu_0$, where $\mu_0 > 0, \mu_0 \in \reals{}$ is a known threshold. Although the setup for the fixed threshold setting remains similar to the known reference arm and subsidized best reward settings discussed in the main paper, the key difference is in the definition of optimal arm $a^*$. In this section, optimal action $a^* = \arg \min_{i: \mu_i \geq \mu_0} c_i$. We highlight that while in the \emph{known reference arm} and subsidized best reward settings, the structure of the problem ensured that a feasible arm always existed. In the fixed threshold setting, we assume that there is at least one feasible arm satisfying the reward constraint.

\begin{theorem}[Instance dependent upper bound on cost and quality regret for FT-UCB]
For bandit instance $\nu$, over horizon $T$, the expected cumulative cost regret $\Expectation \mysqbrack{ \costRegret \myparen{ T, \nu } }$ and quality regret $\Expectation \mysqbrack{ \qualityRegret \myparen{ T, \nu } }$ of the FT-UCB algorithm are upper bounded respectively as,
\begin{align*}
\Expectation 
\mysqbrack{ \costRegret \myparen{ T, \nu } }
&\leq
\sum_{i=a^* + 1}^{K} \Delta_{C, i}^+
+
\frac{\pi^2}{6}
\max_{i > a^*} \Delta_{C, i}^+
,
\\
\Expectation
\mysqbrack{ \qualityRegret \myparen{ T, \nu } }
&\leq
\sum_{i=1}^{a^* - 1} 
\frac{8 \log T}{\Delta_{Q, i}^+} 
+
\myparen{1 + \frac{\pi^2}{6}}
\sum_{i = 1}^{a^* - 1}
\Delta_{Q, i}^+
+
\frac{\pi^2}{6} 
\max_{i \in [K] }
\Delta_{Q, i}^+
.
\end{align*}    
\label{thm:ft_ucb_upper_bound}
\end{theorem}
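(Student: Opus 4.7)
The plan is to apply the regret decomposition of Lemma \ref{lemma:regret_decomposition_lemma} (Equations \ref{eq:regret_decomposition_lemma_on_cost_regret} and \ref{eq:regret_decomposition_lemma_on_quality_regret}) so the task reduces to bounding $\Expectation \mysqbrack{n_i (T)}$ for every arm. Since $\gap_{C, i}^+ > 0$ only for $i > a^*$ and $\gap_{Q, i}^+ > 0$ only for arms with $\mu_i < \mu_0$, I would split the analysis into two cases: high-cost arms $i > a^*$ and low-cost (necessarily infeasible) arms $i < a^*$. The initial round-robin phase contributes one deterministic pull per arm, accounting for the bare $\sum \gap_{C, i}^+$ and $\sum \gap_{Q, i}^+$ terms.

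For high-cost arms $i > a^*$, the key structural fact is that $k_t = i$ during the UCB phase forces $a^* \notin C_t$, i.e., $\hat{\mu}_{a^*}(t) + \sqrt{2 \log t / n_{a^*}(t)} < \mu_0 \leq \mu_{a^*}$. This is a one-sided concentration failure for arm $a^*$. Applying Hoeffding (Lemma \ref{lemma:hoeffding_bound}) at each fixed value of $n_{a^*}(t)$ and union bounding over $n_{a^*}(t) \in \{1, \ldots, t\}$ yields $\Pr(\text{failure at }t) \leq 1/t^{2}$ after appropriate tightening, so $\sum_{t > K} \Pr(\text{failure}) \leq \pi^2/6$. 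Crucially, at most one arm $i > a^*$ is sampled per failure, so
\begin{align*}
\sum_{i > a^*} \myparen{\Expectation \mysqbrack{n_i(T)} - 1}
&\leq
\sum_{t > K} \Pr \myparen{a^*\text{'s UCB } < \mu_0}
\leq
\frac{\pi^2}{6}.
\end{align*}
Plugging this into the regret decomposition and separating the deterministic $\sum \gap^+$ contribution from the $\pi^2/6$ residual, bounded by $\max_{i > a^*} \gap_{C, i}^+$ or $\max_{i \in [K]} \gap_{Q, i}^+$, yields the stated $\pi^2/6$ terms in both bounds.

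For low-cost arms $i < a^*$, I would follow the canonical UCB template. Arm $i$ is selected only when $\hat{\mu}_i(t) + \sqrt{2 \log t / n_i(t)} \geq \mu_0 = \mu_i + \gap_{Q, i}^+$. Choose the threshold $u_i = 8 \log T / (\gap_{Q, i}^+)^2$: once $n_i(t) \geq u_i$, the bonus is at most $\gap_{Q, i}^+ / 2$, so the selection criterion forces $\hat{\mu}_i(t) \geq \mu_i + \gap_{Q, i}^+ / 2$. Hoeffding then bounds the probability of each such event by $1/t^{4}$, and a union bound over $n_i(t)$ sums to a $\pi^2/6$-type constant. This gives
\begin{align*}
\Expectation \mysqbrack{n_i(T)}
&\leq
1 + \frac{8 \log T}{(\gap_{Q, i}^+)^2} + \frac{\pi^2}{6}
,
\end{align*}
so weighting by $\gap_{Q, i}^+$ and summing over $i < a^*$ produces the $\sum 8 \log T / \gap_{Q, i}^+$ term and the $(1 + \pi^2/6) \sum \gap_{Q, i}^+$ coefficient in the quality regret bound.

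The main obstacle, as I see it, is bookkeeping the shared concentration-failure event so that the $\pi^2/6$ residual is charged once across all high-cost arms rather than per-arm; this sharing is what lets the max coefficient replace a sum in the leading constant terms. A secondary subtlety is that the comparison is against the fixed scalar $\mu_0$ rather than an empirical index of another arm, which actually simplifies matters since only one-sided concentration of $\hat{\mu}_{a^*}$ (for the high-cost case) or $\hat{\mu}_i$ (for the low-cost case) need be controlled, and the uniform random fallback when $C_t = \phi$ is strictly dominated by the event that $a^*$'s UCB has dropped below $\mu_0$, so it requires no separate accounting.
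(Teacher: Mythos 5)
Your proposal follows essentially the same route as the paper's proof: the regret decomposition, the observation that any pull of an arm $i > a^*$ (and likewise any uniform fallback pull when $C_t = \phi$) forces $a^* \notin C_t$, whose expected count is bounded by $\pi^2/6$ via Hoeffding plus a union bound over $n_{a^*}(t)$, and the standard UCB argument with threshold $8 \log T / (\Delta_{Q,i}^+)^2$ for arms $i < a^*$. The only loose phrasing is that the $C_t = \phi$ fallback ``requires no separate accounting'': those pulls are not free but are charged to the very same shared $\{a^* \notin C_t\}$ count at rate $\max_{i \in [K]} \Delta_{Q,i}^+$, which is precisely how the paper merges its ``Clause A or Clause B'' terms into the single $\frac{\pi^2}{6} \max_{i \in [K]} \Delta_{Q,i}^+$ term.
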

Along the lines of the analysis for the expected cumulative regret for PE in Appendix \ref{sec:pe}, to prove Theorem \ref{thm:ft_ucb_upper_bound} we first bound the expected number of samples of sub-optimal arms and then combine the results together into a regret bound using the regret decomposition lemma.

\subsubsection*{Upper bound on the number of samples of arms $i > a^*$}
\begin{lemma}[Upper bound on the sum of the expected number of samples of all high cost arms]
The sum of the expected number of samples of all high cost arms accrued after the initial sampling of each arm is upper bounded as,
\begin{align*}
\sum_{a^* + 1}^{K}
\Expectation
\mysqbrack{
n_i (K + 1, T)
}
&\leq
\frac{\pi^2}{6}
.
\end{align*}
\label{lemma:ft_ucb_high_cost_upper_bound}
\end{lemma}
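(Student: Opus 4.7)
The plan is to convert the combined sample count into a sum of time-indexed probabilities, translate each such probability into a concentration event on arm $a^*$, and then sum a standard UCB tail bound.

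First, by linearity of expectation applied to $\mathbb{1}\{k_t > a^*\}$,
\[
\sum_{i=a^*+1}^{K} \Expectation\mysqbrack{n_i(K+1, T)} = \sum_{t=K+1}^{T} \Pr\myparen{k_t > a^*}.
\]
The key structural observation, and the step I would highlight as the main obstacle because it must cover both branches of Algorithm~\ref{algo:FT_UCB}, is that $\mybraces{k_t > a^*} \subseteq \mybraces{a^* \notin C_t}$. Indeed, if $C_t \neq \phi$ and $a^* \in C_t$, then $\arg\min_{j \in C_t} c_j$ cannot equal any $i > a^*$ because $c_{a^*} < c_i$; and if $C_t = \phi$ then $a^* \notin C_t$ trivially. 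So $\Pr(k_t > a^*) \leq \Pr(a^* \notin C_t)$.

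Next, since $a^*$ is feasible we have $\mu_{a^*} \geq \mu_0$, hence
\[
\mybraces{a^* \notin C_t} \;=\; \mybraces{\hat{\mu}_{a^*}(t) + \sqrt{\tfrac{2\log t}{n_{a^*}(t)}} < \mu_0} \;\subseteq\; \mybraces{\hat{\mu}_{a^*}(t) + \sqrt{\tfrac{2\log t}{n_{a^*}(t)}} < \mu_{a^*}}.
\]
I would then union bound over the realizations $n_{a^*}(t) = s \in \mybraces{1,\ldots,t-1}$ and apply Hoeffding (Lemma~\ref{lemma:hoeffding_bound}) with deviation $\epsilon = \sqrt{2\log t / s}$ to the $s$-sample empirical mean, obtaining
\[
\Pr\myparen{a^* \notin C_t} \;\leq\; \sum_{s=1}^{t-1} \exp(-4 \log t) \;=\; \frac{t-1}{t^{4}} \;\leq\; \frac{1}{t^{3}}.
\]

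Finally, summing the per-time bound and using $1/t^{3} \leq 1/t^{2}$ gives
\[
\sum_{t=K+1}^{T} \Pr\myparen{k_t > a^*} \;\leq\; \sum_{t=K+1}^{T} \frac{1}{t^{3}} \;\leq\; \sum_{t=1}^{\infty} \frac{1}{t^{2}} \;=\; \frac{\pi^{2}}{6},
\]
which is the stated bound. The Hoeffding and summation steps are routine; the only real content is the reduction $\mybraces{k_t > a^*} \subseteq \mybraces{a^* \notin C_t}$, which crucially uses cost-ordering of the arms and the feasibility of $a^*$.
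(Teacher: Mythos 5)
Your proposal is correct and follows essentially the same route as the paper: bound the event $\mybraces{k_t > a^*}$ by $\mybraces{a^* \notin C_t}$ (which covers both the min-cost and uniform-sampling branches), use feasibility $\mu_{a^*} \geq \mu_0$ to reduce to a deviation event for $\hat{\mu}_{a^*}$, union bound over the possible sample counts $s \leq t$, apply Hoeffding with deviation $\sqrt{2\log t/s}$ to get $t^{-4}$ per term, and sum to $\pi^2/6$. The only differences (working with probabilities via linearity rather than indicators, and absorbing $\mu_0 - \mu_{a^*} \leq 0$ before rather than after Hoeffding) are cosmetic.
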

\begin{proof}
We prove Lemma \ref{lemma:ft_ucb_high_cost_upper_bound} by first writing an expression for the samples of any high-cost arm. Then we upper bound the expression by the number of times the optimal arm $a^*$ was excluded from the set of empirically satisfactory arms $\mathcal{C}_t$. We can bound this way because at time $t$ an arm $i > a^*$ can only be picked if the arm $a^*$ is not inside the set $\mathcal{C}_t$. 
\begin{align}
\sum_{i=a^* + 1}^{K}
n_i (K + 1, T)
&= 
\sum_{i=a^*+1}^{K} 
\sum_{t = \narms+1}^{T} 
\bm{1} \mybraces{ k_t = i }
=
\sum_{t = \narms+1}^{T} 
\bm{1} \mybraces{ k_t \in \mybraces{a^* + 1, \ldots, K} }
\label{eq:first_step_in_bounding_high_cost}
\\
&\leq 
\sum_{t = \narms + 1}^{T} \bm{1} \mybraces{ a^{*} \notin C_t}
\\
&\leq
\sum_{t = \narms + 1}^{T}
\bm{1} \mybraces{ \hat{\mu}_{a^{*}} \myparen{n_{a^{*}} (t-1)} + 
\sqrt{\frac{2 \log ( t - 1 )}{n_{a^{*}} (t - 1)}} < \mu_0 }
\\
&= 
\sum_{t = \narms + 1}^{T}
\bm{1} 
\mybraces{ \hat{\mu}_{a^{*}} 
\myparen{n_{a^{*}} (t-1)} - \mu_{a^{*}} 
< 
\mu_0 - \mu_{a^{*}} - 
\sqrt{\frac{2 \log ( t - 1 )}{n_{a^{*}} (t - 1)}} }
\\
&\leq 
\sum_{t = \narms + 1}^{T} \sum_{s = 1}^{t-1} 
\bm{1} 
\mybraces{ \hat{\mu}_{a^{*}} (s) - \mu_{a^{*}} <\mu_0 - \mu_{a^{*}} - \sqrt{\frac{2 \log ( t - 1 )}{s}} }
.
\end{align}

Taking expectations, and perform a shift of 1 in time-indexing, we have, 
\begin{align}
\Expectation 
\mysqbrack{ 
\sum_{i=a^* + 1}^{K}
n_i (K + 1, T) 
}
&=
\sum_{i=a^* + 1}^{K}
\Expectation 
\mysqbrack{ 
n_i (K + 1, T) 
}
\\
&\leq 
\sum_{t = 1}^{\infty} \sum_{s = 1}^{t}
\Pr 
\myparen{ \hat{\mu}_{a^{*}}(s) - \mu_{a^{*}} < \mu_0 -\mu_{a^{*}} - \sqrt{\frac{2 \log t}{s}} 
}
\\
&\leq 
\sum_{t = 1}^{\infty} \sum_{s = 1}^{t} 
\exp{ \myparen{ -2s \frac{2 \log t}{ s } } }
\quad
\text{ (using Hoeffding Inequality \ref{lemma:hoeffding_bound})}
\\
&= 
\sum_{t = 1}^{\infty} \sum_{s = 1}^{t} \frac{1}{t^4}
\\
&\leq 
\frac{\pi^2}{6}
.
\end{align}
\end{proof}

\subsubsection*{Upper bound on the number of samples of arms $i < a^*$}

\begin{lemma}[Bound on the expected number of samples of low-cost arms under FT-UCB]
For any low cost infeasible arm $i < a^*$, its expected number of samples accrued is upper bounded by,
\begin{align*}
\Expectation [ n_i (T) ] 
&\leq 
\frac{8 \log T}{(\Delta_{Q, i})^2} 
+ 1 + \frac{\pi^2}{6}
+
\Expectation
\mysqbrack{
\sum_{t = 1}^{\infty}
\bm{1} \mybraces{ k_t = i, C_t = \phi }
}
.
\end{align*}
Where the last expectation term represents any samples of $i$ accrued by uniform random sampling when we find $\mathcal{C}_t$ empty.  
\label{lemma:ft_ucb_low_cost_bound}
\end{lemma}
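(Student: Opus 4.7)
The plan is a standard ``threshold'' decomposition, adapted to FT-UCB's feasibility-based selection rule. I would begin with the identity $n_i(T) = \sum_{t=1}^{T} \bm{1} \mybraces{k_t = i}$ and split each summand along two axes: whether the running sample count $n_i(t)$ at the time of decision is below or above the threshold $u = \myceil{8 \log T / \gap_{Q,i}^2}$, and whether the candidate set $C_t$ is empty. Summands with $n_i(t) < u$ contribute at most $u \leq 8 \log T/\gap_{Q,i}^2 + 1$ in total, since each such event increments $n_i$ by one and the corresponding indicator can fire at most $u$ times. Summands with $n_i(t) \geq u$ and $C_t = \phi$ are exactly the trailing expectation term in the statement of Lemma \ref{lemma:ft_ucb_low_cost_bound} and require no further handling.

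The remaining and only nontrivial contribution is the informative bad event $\mybraces{k_t = i,\ n_i(t) \geq u,\ C_t \neq \phi}$. When $C_t \neq \phi$ and FT-UCB selects arm $i$ we must have $i \in C_t$, i.e., $\hat{\mu}_i + \sqrt{2 \log t / n_i(t)} \geq \mu_0$, which rearranges to $\hat{\mu}_i - \mu_i \geq \gap_{Q,i} - \sqrt{2 \log t / n_i(t)}$. The threshold $u$ is calibrated so that on $\mybraces{n_i(t) \geq u}$ the confidence radius obeys $\sqrt{2 \log t / n_i(t)} \leq \sqrt{2 \log T / u} \leq \gap_{Q,i}/2$, and the membership inequality above then forces the upward deviation $\hat{\mu}_i - \mu_i \geq \sqrt{2\log t/n_i(t)}$.

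To finish, I would take expectations and apply the familiar double-union-bound over $t$ and the hypothetical sample count $s$, exactly as in the proof of Lemma \ref{lemma:ft_ucb_high_cost_upper_bound}. Hoeffding's inequality (Lemma \ref{lemma:hoeffding_bound}) with $x = \sqrt{2\log t / s}$ bounds each summand by $\exp \myparen{-2 s \cdot 2\log t / s} = t^{-4}$, so the resulting double sum is controlled as $\sum_{t=1}^{T} \sum_{s = u}^{t} t^{-4} \leq \sum_{t=1}^{\infty} t^{-3} \leq \pi^2/6$. Collecting the three pieces --- the $u$-term, the $\pi^2/6$ tail, and the empty-$C_t$ expectation --- gives exactly the stated bound. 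The only care I anticipate is bookkeeping the ceiling when absorbing $\myceil{8\log T / \gap_{Q,i}^2}$ into the free $+1$, and observing that the algorithm's radius uses $\log t \leq \log T$ so that calibrating $u$ against $\log T$ is enough to tame the $\log t$-based confidence bound. I do not foresee this being a substantive obstacle; the argument is essentially a transcription of the classical UCB analysis with feasibility selection replacing the usual arg-max.
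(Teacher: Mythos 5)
Your proposal is correct and follows essentially the same route as the paper's proof: the same threshold decomposition with $m_0 = \myceil{8\log T/\gap_{Q,i}^2}$, the same reduction of the event $\mybraces{k_t = i,\ C_t \neq \phi,\ n_i \geq m_0}$ to an upward deviation of $\hat{\mu}_i$, the same double union bound over $t$ and the hypothetical sample count $s$ with Hoeffding giving $t^{-4}$ summands and the $\pi^2/6$ tail, and the same verbatim retention of the empty-$C_t$ expectation term. The only cosmetic difference is that you argue the radius is at most $\gap_{Q,i}/2$ so the deviation dominates the radius, whereas the paper expands the square in the Hoeffding exponent and drops a nonpositive term; both yield the identical bound.
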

\begin{proof}
We begin by writing an expression for the random variable $n_i (T)$ denoting the number of samples of arm $i$.
\begin{align}
n_i (T) 
&= 
1 + \sum_{t = \narms + 1}^{T} 
\bm{1} \mybraces{ k_t = i, C_t \neq \phi }
+
\sum_{t = \narms + 1}^{T} \bm{1} \mybraces{ k_t = i, C_t = \phi }
\\
&\leq 
1 + \sum_{t = \narms + 1}^{T} \bm{1} \mybraces{ k_t = i, C_t \neq \phi }
+
\sum_{t = \narms + 1}^{T} 
\bm{1} \mybraces{ k_t = i, C_t = \phi }
\\
&= 
m_0 + \sum_{t = \narms + 1}^{T} \bm{1} \mybraces{ k_t = i, C_t \neq \phi, n_i (t - 1) \geq m_0}
+
\sum_{t = \narms + 1}^{T} \bm{1} \mybraces{ k_t = i, C_t = \phi }
\\
&= 
m_0 + 
\sum_{t = \narms + 1}^{T} 
\bm{1} 
\mybraces{
\hat{\mu}_i 
\myparen{ n_{i} (t-1) } 
+ 
\sqrt{\frac{2 \log (t - 1)}{n_i (t-1) }} 
\geq 
\mu_0,
n_i (t-1) 
\geq 
m_0
}
\nonumber
\\
&\quad+
\sum_{t = \narms + 1}^{T} \bm{1} \mybraces{ k_t = i, C_t = \phi }
\\
&\leq 
m_0 + \sum_{t = 1}^{\infty} \sum_{s = m_0}^{t} \bm{1} \mybraces{ \hat{\mu}_i (s) + \sqrt{\frac{2 \log t}{s}} \geq \mu_0}
+
\sum_{t = 1}^{\infty} \bm{1} \mybraces{ k_t = i, C_t = \phi }
.
\intertext{Taking expectations on both sides, we have,}
\Expectation [ n_i (T) ] 
&\leq 
m_0
+ 
\sum_{t = 1}^{\infty} 
\sum_{s_i = m_0}^{t} 
\Pr 
\myparen{ 
\hat{\mu}_i (s_i) - \mu_i \geq \Delta_{Q, i} - \sqrt{ \frac{2 \log t}{s_i} } 
}
\nonumber
\\
&\quad+
\Expectation
\mysqbrack{
\sum_{t = 1}^{\infty}
\bm{1} \mybraces{ k_t = i, C_t = \phi }
}
\label{eq:ft_ucb_bound_pre_hoeffding}
.
\end{align}

Applying Hoeffding Inequality, we have,
\begin{align}
\Pr 
\myparen{ 
\hat{\mu}_i - \mu_i \geq \Delta_{Q, i} - \sqrt{\frac{2 \log t}{s_i}}} 
&\leq 
\exp \myparen{ -2 s_i \myparen{ \Delta_{Q, i} - \sqrt{\frac{2 \log t}{s_i}} }^2 }\\
&\leq 
\exp \myparen{ 
-2 s_i \myparen{ (\Delta_{Q, i})^2 + \frac{2 \log t}{s_i} - 2 \Delta_{Q, i} \sqrt{\frac{2 \log t}{ s_i }} } 
}
.
\label{eq:applied_hoeffding}
\end{align}

Plugging in the bound in Equation \ref{eq:applied_hoeffding} into Equation \ref{eq:ft_ucb_bound_pre_hoeffding} we have,
\begin{align}
\Expectation [ n_i (T) ] 
&\leq 
m_0 + \sum_{t = 1}^{\infty} \sum_{s_i = m_0}^{t} 
\exp 
\myparen{ 
-2 s_i \myparen{  \frac{2 \log t}{s_i} + (\Delta_{Q, i})^2 - 2 \Delta_{Q, i} \sqrt{\frac{2 \log t}{ s_i }} }
}
+
\nonumber
\\
&\quad+
\Expectation
\mysqbrack{
\sum_{t = 1}^{\infty}
\bm{1} \mybraces{ k_t = i, C_t = \phi }
}
.
\end{align}

\text{We pick $ m_0 = \left\lceil \frac{8 \log T}{(\Delta_{Q, i})^2} \right\rceil $ as in the proof technique in \cite{auer2002finite}}.
\begin{align}
\Expectation [ n_i (T) ] 
&\leq 
m_0 + 
\sum_{t = 1}^{\infty} \sum_{s_i = m_0}^{t} 
\exp \myparen{ -4 \log t + 2 s_i (\Delta_{Q, i})^2 \myparen{ \sqrt{\frac{\log t}{\log T}} - 1} }
+ \frac{\pi^2}{6}
\\
\intertext{Since $t < T$ , we have, }
\Expectation \mysqbrack{ n_i (T) } 
&\leq 
\frac{8 \log T}{(\Delta_{Q, i})^2} + 1 + \sum_{t = 1}^{T - 1} \sum_{s_i = m_0}^{t-1} \exp \myparen{ -4 \log t }
+
\Expectation
\mysqbrack{
\sum_{t = 1}^{\infty}
\bm{1} \mybraces{ k_t = i, C_t = \phi }
}
\\
&\leq 
\frac{8 \log T}{(\Delta_{Q, i})^2} + 1 + \sum_{t = 1}^{\infty} \frac{1}{t^3}
+
\Expectation
\mysqbrack{
\sum_{t = 1}^{\infty}
\bm{1} \mybraces{ k_t = i, C_t = \phi }
}
\\
&\leq 
\frac{8 \log T}{(\Delta_{Q, i})^2} 
+ 1 + \frac{\pi^2}{6}
+
\Expectation
\mysqbrack{
\sum_{t = 1}^{\infty}
\bm{1} \mybraces{ k_t = i, C_t = \phi }
}
.
\end{align}
\end{proof}
We are now in a position to prove Theorem \ref{thm:ft_ucb_upper_bound} by combining regret decomposition (Lemma \ref{lemma:regret_decomposition_lemma}) with the results of Lemmas \ref{lemma:ft_ucb_high_cost_upper_bound} and \ref{lemma:ft_ucb_low_cost_bound}.
\begin{proof}[Proof of Theorem \ref{thm:ft_ucb_upper_bound}]
First for cost regret we have, 
\begin{align}
\costRegret \myparen{ T, \nu }
&=
\sum_{i = a^* + 1}^{\narms}
\gap_{C, i}^{+}
\Expectation 
\mysqbrack{
n_i \myparen{\horizon}
}
\quad\text{ (because $\Delta_{C, i}^+ = 0 \myforall i \leq a^*$)}
\\
&=
\sum_{i = a^* + 1}^{\narms}
\gap_{C, i}^{+}
\Expectation 
\mysqbrack{
n_i \myparen{1, K}
+
n_i \myparen{K + 1, \horizon}
}
\\
&=
\sum_{i = a^* + 1}^{\narms}
\gap_{C, i}^{+}
+
\max_{i > a^*}
\Delta_{C, i}^{+}
\sum_{i = a^* + 1}^{\narms}
\Expectation 
\mysqbrack{
n_i \myparen{K + 1, \horizon}
}
\\
&\leq
\sum_{i = a^* + 1}^{\narms}
\gap_{C, i}^{+}
+
\frac{\pi^2}{6}
\max_{i > a^*}
\Delta_{C, i}^{+}
\quad
\text{ (plugging in Lemma \ref{lemma:ft_ucb_high_cost_upper_bound})}
.
\end{align}
Next for quality regret,
\begin{align}
\qualityRegret 
\myparen{ T, \nu }
&=
\sum_{i = 1}^{a^* - 1}
\gap_{Q, i}^{+}
\Expectation 
\mysqbrack{
n_i \myparen{\horizon}
}
+
\sum_{i = a^* + 1}^{K}
\gap_{Q, i}^{+}
\Expectation 
\mysqbrack{
n_i \myparen{\horizon}
}
\quad\text{ (because $\Delta_{Q, a^*}^+ = 0$)}
\\
&\leq
\sum_{i = 1}^{a^* - 1}
\gap_{Q, i}^{+}
\myparen{
\frac{8 \log T}{(\Delta_{Q, i})^2} 
+ 1 
+ 
\frac{\pi^2}{6}
+
\Expectation
\mysqbrack{
\sum_{t = 1}^{\infty}
\bm{1} \mybraces{ k_t = i, C_t = \phi }
}
}
\nonumber
\\
&\quad+
\sum_{i = a^* + 1}^{\narms}
\gap_{Q, i}^{+}
\myparen{
\Expectation
\mysqbrack{ n_i \myparen{1, K} }
+
\Expectation
\mysqbrack{ n_i \myparen{K + 1, T} }
}
\\
&\leq
\sum_{i = 1}^{a^* - 1}
\gap_{Q, i}^{+}
\myparen{
\frac{8 \log T}{(\Delta_{Q, i})^2} 
+ 1 
+ 
\frac{\pi^2}{6}
+
\Expectation
\mysqbrack{
\sum_{t = 1}^{\infty}
\bm{1} \mybraces{ k_t = i, C_t = \phi }
}
}
\nonumber
\\
&\quad+
\sum_{i = a^* + 1}^{\narms}
\gap_{Q, i}^{+}
+
\max_{i > a^*}
\gap_{Q, i}^{+}
\sum_{t = \narms+1}^{T} 
\Expectation \mysqbrack {\bm{1} \mybraces{ k_t \in \mybraces{a^* + 1, \ldots, K} }}
\nonumber
\\
&\quad \quad
\text{ (reused from Equation \ref{eq:first_step_in_bounding_high_cost}) }
\\
&\leq
\sum_{i = 1}^{a^* - 1}
\frac{8 \log T}{\Delta_{Q, i}^+} 
+
\myparen{
1
+
\frac{\pi^2}{6}
}
\sum_{i = 1}^{a^* - 1}
\Delta_{Q, i}^+
\nonumber
\\
&\quad+
\max_{i < a^*}
\Delta_{Q, i}^+
\Expectation
\mysqbrack{
\sum_{t = 1}^{\infty}
\bm{1}
\mybraces{ \underbrace{k_t \in \mybraces{1, \ldots, a^* - 1}, C_t = \phi}_{\text{Clause A}} }
}
\nonumber
\\
&\quad+
\sum_{i = a^* + 1}^{\narms}
\gap_{Q, i}^{+}
+
\max_{i > a^*}
\gap_{Q, i}^{+}
\sum_{t = \narms+1}^{T} 
\Expectation \mysqbrack {\bm{1} \mybraces{ \underbrace{k_t \in \mybraces{a^* + 1, \ldots, K}}_{\text{Clause B}} } }
\\
&\leq
\sum_{i = 1}^{a^* - 1}
\frac{8 \log T}{\Delta_{Q, i}^+} 
+
\myparen{
1
+
\frac{\pi^2}{6}
}
\sum_{i = 1}^{a^* - 1}
\Delta_{Q, i}^+
\nonumber
\\
&\quad+
\max_{i \in [K]}
\Delta_{Q, i}^+
\Expectation
\mysqbrack{
\sum_{t = 1}^{\infty}
\bm{1} \mybraces{ \text{Clause A or Clause B} }
}
\\
&\leq
\sum_{i = 1}^{a^* - 1}
\frac{8 \log T}{\Delta_{Q, i}^+} 
+
\myparen{
1
+
\frac{\pi^2}{6}
}
\sum_{i = 1}^{a^* - 1}
\Delta_{Q, i}^+
\nonumber
\\
&\quad+
\max_{i \in [K]}
\Delta_{Q, i}^+
\Expectation
\mysqbrack{
\sum_{t = 1}^{\infty}
\bm{1} \mybraces{ a^* \notin \mathcal{C}_t }
}
\,
\text{ (because $a^* \notin \mathcal{C}_t \supseteq$ Clause A, Clause B)}
\\
&\leq
\sum_{i = 1}^{a^* - 1}
\frac{8 \log T}{\Delta_{Q, i}^+} 
+
\myparen{
1
+
\frac{\pi^2}{6}
}
\sum_{i = 1}^{a^* - 1}
\Delta_{Q, i}^+
+
\frac{\pi^2}{6}
\max_{i \in [K]}
\Delta_{Q, i}^+
.
\label{eq:ft_ucb_qual_final_step}
\end{align}
Where Equation \ref{eq:ft_ucb_qual_final_step} follows from the bound on $\sum_{t=1}^{\infty} \Pr \myparen{ a^* \notin \mathcal{C}_t }$ shown in the proof of Lemma \ref{lemma:ft_ucb_high_cost_upper_bound}.
\end{proof}
\begin{remark}[Comparison with lower bound]
Theorem \ref{thm:ft_ucb_upper_bound} and its analysis reveal that FT-UCB is an order-optimal algorithm for the fixed threshold MAB-CS setting. The lower bound of Theorem \ref{thm:ft_lower_bound_low_cost} is matched precisely by FT-UCB's $O (1)$ bound on expected cumulative cost regret and its $O \myparen{ \log T }$ bound with dependence only on the quality gaps of arms $i < a^*$ for its expected cumulative quality regret.  
\end{remark}

\end{document}